\documentclass{article}

\PassOptionsToPackage{numbers,compress}{natbib}
\usepackage[preprint]{neurips_2026}
\usepackage{layout}

\usepackage{tabularx}

\usepackage{amsfonts}
\usepackage{amsmath}
\usepackage{amsthm}
\usepackage{amssymb} % NOT COMATIBLE WITH svjour3
\usepackage{float}
\usepackage{mdframed} 
\usepackage{thmtools}
\usepackage{soul}
\usepackage{mathtools}

\usepackage{subcaption}
\usepackage{graphicx}
\usepackage{adjustbox}
\usepackage{algorithm}
\usepackage{array}
\newcolumntype{C}[1]{>{\centering\arraybackslash}p{#1}}

\usepackage{makecell}

\definecolor{shadecolor}{gray}{0.95}
\declaretheoremstyle[
headfont=\normalfont\bfseries,
notefont=\mdseries, notebraces={(}{)},
bodyfont=\normalfont,
postheadspace=0.5em,
spaceabove=1pt,
mdframed={
  skipabove=8pt,
  skipbelow=8pt,
  hidealllines=true,
  backgroundcolor={shadecolor},
  innerleftmargin=4pt,
  innerrightmargin=4pt}
]{shaded}

\usepackage{xcolor}
\usepackage{color}
\usepackage{algorithmic}
\usepackage{verbatim}
\usepackage{caption} % for \captionof
\newsavebox{\GenBoundsTableBox}

\newcommand{\R}{\mathbb{R}} % Reals
\newcommand{\N}{\mathbb{N}} % Naturals
\newcommand{\cA}{{\cal A}}

\newcommand{\cG}{{\cal G}}

\newcommand{\cK}{{\cal K}}
\newcommand{\cL}{{\cal L}}

\newcommand{\cT}{{\cal T}}

\newcommand{\cV}{{\cal V}}
\newcommand{\cX}{{\cal X}}
\newcommand{\cY}{{\cal Y}}

\usepackage[colorinlistoftodos,bordercolor=orange,backgroundcolor=orange!20,linecolor=orange,textsize=scriptsize]{todonotes}

\DeclareMathOperator{\argmin}{argmin}        % argmin
\newcommand{\w}{\textbf{W}}

\usepackage{mdframed} 
\newcommand{\myNum}[1]{(\emph{#1})}
\newcommand{\smartparagraph}[1]{\vspace{0.5pt} \noindent {\bf #1}}
\newcommand{\inLineComment}[1]{}

\theoremstyle{plain}

\newtheorem{theorem}{Theorem} %[section]
\newtheorem{lemma}{Lemma} %[section]
\newtheorem{proposition}{Proposition} %[section]
\newtheorem{corollary}{Corollary} %[section]
\newtheorem{assumption}{Assumption} %[section]

\theoremstyle{definition}
\newtheorem{definition}{Definition}% [section]
\newtheorem{remark}{Remark} %[section]
\usepackage{bbm}

\newcommand{\wdelw}{\textbf{W}_0+\Delta \textbf{W}}

\usepackage[utf8]{inputenc} % allow utf-8 input
\usepackage[T1]{fontenc}    % use 8-bit T1 fonts
\usepackage{hyperref}       % hyperlinks
\usepackage{url}            % simple URL typesetting
\usepackage{booktabs}       % professional-quality tables
\usepackage{amsfonts}       % blackboard math symbols
\usepackage{nicefrac}       % compact symbols for 1/2, etc.
\usepackage{microtype}      % microtypography
\usepackage{xcolor}         % colors
\usepackage{subcaption}
\usepackage{booktabs} % for professional tables
\usepackage{caption}
\usepackage{multirow}
\usepackage{graphicx}
\definecolor{codeblue}{rgb}{0.21,0.49,0.74}
\definecolor{codegreen}{rgb}{0,0.6,0}

\newcommand{\reddown}{\textcolor{red}{$\downarrow$}}

\expandafter\def\expandafter\normalsize\expandafter{%
    \normalsize%
    \setlength\abovedisplayskip{0pt}%
    \setlength\belowdisplayskip{4pt}%
    \setlength\abovedisplayshortskip{-1.2pt}%
    \setlength\belowdisplayshortskip{1pt}%
}

\title{Beyond LoRA: Is Sparsity-Induced Adaptation Better?}

\author{%
Elijah Cadenhead\textsuperscript{1}
\quad Cristian McGee\textsuperscript{1, 3}
\quad Xin Li\textsuperscript{1}
\quad El Houcine Bergou\textsuperscript{2}
\quad Aritra Dutta\textsuperscript{1,3}
\\[0.5em]
\textsuperscript{1}School of Data, Mathematical and Statistical Sciences, University of Central Florida, United States
\\
\textsuperscript{2}College of Computing, Mohammed VI Polytechnic University (UM6P), Morocco
\\
\textsuperscript{3}Department of Computer Science, University of Central Florida, United States
\\
}

\begin{document}
\maketitle
% \twocolumn[
% \icmltitle{Beyond LoRA: Is Sparsity-Induced Adaptation Better?}
% %

% \begin{icmlauthorlist}
% \icmlauthor{Anonymous Author(s)}{anon}
% \end{icmlauthorlist} 

% \icmlaffiliation{anon}{Anonymous Institution}

% \vskip 0.3in
% ]

% \printAffiliationsAndNotice{}

\begin{abstract}
%Full fine-tuning of large pre-trained models is constrained by computational and memory overhead, motivating parameter-efficient fine-tuning approaches, such as 
Low-rank adaptation (LoRA) and its variants provide a memory- and compute-efficient alternative to full fine-tuning of pre-trained models. However, questions remain about the comparative generalizability of these approaches and how the structural restrictions on low-rank updates preserve effective adaptation performance. We present a historical framing, covering the past (full fine-tuning and original LoRA), the present (different variants of LoRA), and propose simpler, cheaper, parameter-efficient extensions by inducing sparsity within existing LoRA variants: Cheap LoRA (cLA), training a single low-rank factor with the other fixed (deterministically or, in its randomized variant, stochastically), and the chained circulant variant, ${c}^3$LA. {We frame cLA as a structured instance of asymmetric LoRA, serving as a controlled column-subspace restriction of full fine-tuning}. We derive information-theoretic generalization error bounds for these variants, marking one of the first endeavors in this area. Empirically, \emph{we evaluate \textbf{11 fine-tuning methods} across \textbf{10 pre-trained models and 14 datasets}, analyzing the fine-tuned models' performance and generalization using tools such as loss landscapes and spectral analysis.} Despite the sensitivity of fine-tuned models to the pre-trained model, datasets, and other factors, our study suggests that restricting LoRA-based PEFT methods' adaptation to a sparse, structured columnspace remains competitive across tasks with their parameter-matched baselines while reducing \emph{up to 10\% training time} and \emph{peak GPU memory up to 15\%}, even with a naïve, non-optimized, sparse implementation. Our theoretical and empirical generalization measures provide a more consistent and principled approach to their cost-effective adaptation than commonly used analytical tools. GitHub repo available at: \href{https://github.com/EliCaden/Beyond_LoRA}{\texttt{github.com/EliCaden/Beyond\_LoRA}}.
\end{abstract}

\begin{center}
\small
\textbf{Keywords:} Parameter-Efficient Fine-Tuning $\cdot$ Low-Rank Adaptation $\cdot$ LoRA Variants $\cdot$ Sparse LoRA $\cdot$ Column-Subspace Adaptation $\cdot$ Generalization Bounds $\cdot$ Pre-trained Models
\end{center}
\tableofcontents

\vspace{0mm}
\section{Introduction}\label{sec:Introduction}
\vspace{0mm}
Full fine-tuning~(FFT)~\cite{bert} modifies a pre-trained neural network's parameters on new datasets %that might be relatively expensive to curate, 
and adapts the network to new downstream tasks. As model sizes and datasets grow, FFT is often computationally infeasible or prohibitively expensive. %Another important paradigm in this direction is that 
Additionally, the growth of these complex models and the hardware's compute capacity are incoherent~\cite{fei2021efficient, xu2021grace}. %Large multimodal models (LMMs) such as OpenAI's GPT series~\cite{gpt3}, Meta's LLaMA~\cite{Llama3}, Google's Gemini~\cite{team2023gemini}, image-text model CLIP~\cite{CLIP-radford2021learning}, video-text model DeepMind's Flamingo~\cite{alayrac2022flamingo}, etc., are pre-trained on massive high-quality data corpora and fine-tuned to adapt to different tasks or domains.
E.g., The smallest variant of Llama-3 \cite{Llama3, grattafiori2024llama} has 8B parameters; it requires 32 GB of GPU memory for inference and 64 GB for training with modern protocols. In contrast, the half-precision performance of the NVIDIA H100 is only about $2.4\times$ that of the NVIDIA A100, while their memory capacity remains unchanged \cite{Graphcore-GPU}. %, Graphcore-GPU-2}. 

%%%%% PLACING TEASER FIGURE

\begin{figure*}[t]
\vspace{-2mm}
    \label{fig:roberta-officehome-scapes}
    \centering
     \scalebox{0.90}{\includegraphics[width=\textwidth, height=1.6in]{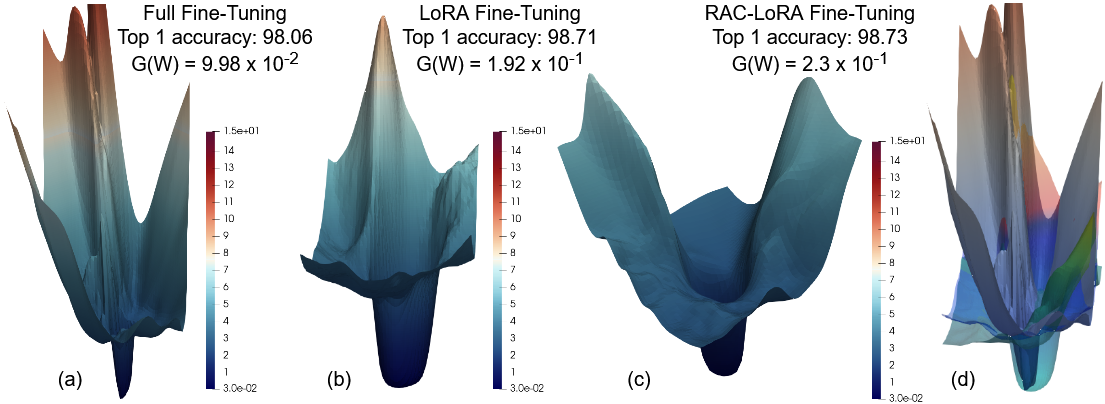}}
    \vspace{-1mm}
  \caption{\small{3D loss landscapes of ViT-Base pretrained on ImageNet-21K and fine-tuned on ImageNet-1K. We fine-tuned this model on CIFAR-10 using different strategies, including FFT. FFT has the narrowest local minima among the other PEFT methods, and yields the worst test accuracy. However, it has the least generalization error, $\cG(\textbf{W})$, among all the methods; see Definition \ref{definition:generalization} and Table \ref{tab:combined}. In (d), when we superimpose the loss landscapes, FFT shows the spikiest landscape; RAC has the smoothest landscape with the highest $\cG(\textbf{W})$. According to \cite{losslandscape}, this is counterintuitive; a model with a spiky landscape and small-volume local minima does not generalize well.}}\label{fig:teaser-fig-ViT-Cifar10}
  \vspace{0mm}
  \end{figure*}
%%%% PLACING TEASER FIGURE

Alternatively, parameter-efficient fine-tuning (PEFT) saves space and time, circumvents overfitting, and is widely used. Low-rank adaptation (LoRA)~\cite{lora} is a PEFT method that achieves performance on par with FFT, by reducing trainable parameters. To mitigate LoRA's flaws, researchers proposed numerous variants, including the chain of LoRA (CoLA)~\cite{cola}, asymmetric LoRA~\cite{fixA}, randomized asymmetric chain of LoRA~\cite{rac}, LoRA+~\cite{plus}, adaptive LoRA~\cite{adalora}, among many; see \cite{PEFT,han2024peft}. 

{Despite the existing LoRA variants, structured restrictions on low-rank updates that preserve effective adaptation under similar parameter counts remain unclear.} Recent works, \cite{PEFT, loravfft, rac}, analyzed and compared these PEFT methods with full fine-tuning, but these benchmarks are inconclusive. Figure \ref{fig:teaser-fig-ViT-Cifar10} {casts one such example, where generalization and loss landscape sharpness contradict our prior understanding --- FFT's resultant model, despite having the spikiest landscape and narrowest valley, has the smallest generalization error, conflicting with the well-known heuristic that models with sharper minima should generalize worse \cite{losslandscape, pmlr-v137-huang20a}.}~Current literature has a limited theoretical grasp on how these methods {behave in parameter-matched comparisons, that is, which extreme sparsity and structured low-rank constraints preserve effective adaptation across tasks and models and offer better generalization, and how far these restrictions can be pushed before adaptation degrades.}

%We further ask how far these restrictions can be pushed before adaptation capacity degrades.%to the point that storage, memory bandwidth, and hardware interface constraints are satisfied during adaptation, and that inference latency benefits from optimized sparse or structured libraries \cite{deepspeed, parseGPU} have become a practical imperative. 

In the era of resource-constrained IoTs and edge deployments \cite{MobileNet, Lottery}, pushing parameter efficiency for sparse or structured libraries \cite{deepspeed, parseGPU} has become a practical imperative. %Can sparse training be the new trend? 
E.g., New OpenAI LLM, GPT-4.5, requires a 10$\times$ increase in compute than GPT-4. Still, it only obtained a marginal performance improvement, and could be indicative that effective parameter reduction may benefit these models \cite{Gpt4o-vsGPT4.5}. Moreover, to reduce activation memory and improve sequential processing of the adapter and pretrained LoRA layers, \cite{paca} introduced partial connection adaptation (PaCA).

These ideas motivate us to explore different structured instances of LoRA that explicitly restrict learning to an established column subspace, allowing for a clearer examination of how far restricted subspace updates can be taken while maintaining competitiveness in performance. At this end, \emph{we propose 4 simpler, cheaper, and parameter-efficient extensions of the existing SOTA LoRA variants}: Cheap LoRA (cLA), which trains only one low-rank factor and sets the other low-rank factor deterministically, its randomized variant, random-cLA, its chain circulant variant, $c^3$LA, and its randomized chain variant, random-$c^3$LA. cLA and r-cLA can be interpreted as structured instances of Asymmetric LoRA that confine learning to an $r$-column subspace, enabling a stark contrast between how partial column-space adaptation compares to alternative low-rank updates. Alternatively, they can be seen as the LoRA adaptation of PaCA, where the restricted fine-tuned columns are set to $r$ columns of the pretrained model; see Figure~\ref{fig:lora_vs_paca}. Therefore, \emph{our proposed sparsity-induced SOTA LoRA variants act as a bridge between the two families of adapters, LoRA and PaCA}; see \S\ref{sec:future} and \S\ref{appendix:paca-cla}. 

But which structured restrictions of low-rank updates remain sufficient for competitive adaptation? Can confining learning to a small, structured fraction of the column space provide performance comparable to fine-tuning all columns? Or, are there significant performance differences among these sparsified LoRA variants? If so, how do these differences vary across PEFT methods, hyperparameter configurations, and models? To answer these questions, we make the following contributions:

%\hl{Let us check this paragraph carefully --- Do we need it? Or, how much do we need?} In practice, are there significant differences and trade-offs in terms of convergence behavior and performance of the reduced-parameter LoRA variants? And if there are, how do these differences vary across PEFT methods, hyperparameter configurations, and DNN models? To answer these questions, we make the following contributions:

% \aritra{Think one compelling line and add why do we think that generalizability will answer some of the questions we posed in the previous paragraph? Same comments for the benchmarking, although less needed.}

\smartparagraph{Theoretical insights through generalization (\S\ref{sec:theory}).}~\emph{Generalizability} measures how well a model's loss on its training dataset represents its loss over the entire feature space, reflecting the model's capacity to avoid overfitting. {Since our questions concern when parameter-reduced fine-tuning subspaces remain competitive, we use generalization bounds to connect structural restrictions (such as adapter rank, chain length, if any, layerwise input-output dimensions, training bitwidth, fine-tuning dataset size, etc.), to overfitting risk.} To this end, we use an \emph{information-theoretic approach} to measure the generalization error bounds of the PEFT methods discussed in this paper, including PaCA. See summary of results in Table \ref{tab:upperbounds}.
%~We also adapt the optimization framework of \cite{rac}, and present the convergence analysis of the PEFT methods for smooth, nonconvex loss functions, under our \emph{layerwise setup}, where each layer's adapters are updated using gradient descent (GD) and show $O(T^{-1})$ convergence rate. % for these methods.%~Following \cite{xu2017information,russo2019much}. 

\smartparagraph{Quantitative evaluation~(\S\ref{sec:empirical results}).}~We evaluate FFT, 9 LoRA-based PEFTs and PaCA, %(LoRA~\cite{lora}, CoLA~\cite{cola}, Asymmetric LoRA~\cite{fixA}, RAC LoRA~\cite{rac}, LoRA Plus~\cite{plus}, cLA, r-cLA, $c^3$LA, and r-$c^3$LA), 
encompassing 10 different pretrained models, %: \myNum{i} GPT2-small~\cite{gpt2}, \myNum{ii} DeBERTa v3 Base~\cite{debertav3}, \myNum{iii} DeBERTa v2 XXL~\cite{deberta}, \myNum{iv} RoBERTa Base~\cite{roberta}, \myNum{v} RoBERTa Large~\cite{roberta}, \myNum{vi} Deepseek-Coder-1.3B-base~\cite{guo2024deepseekcoder}, \myNum{vii} TinyLlama-1.1B~\cite{zhang2024tinyllama}, \myNum{viii} Llama 3-8B~\cite{grattafiori2024llama}, and \myNum{ix} vision Transformers, (ViTs), tiny and base~\cite{dosovitskiy2020vit}, 
on 4 fine-tuning tasks: natural language processing, %on {PAWS}~\cite{zhang2019paws}, {TREC-50}~\cite{li2006learning}, and various {GLUE} benchmarks~\cite{wang2018glue}, 
image recognition, % on {OfficeHome}~\cite{officehome-dataset} and {CIFAR-10}~\cite{cifar10-dataset}, 
coding generation, % on {DJANGO}~\cite{oda2015learning}, 
and logical reasoning. % on {OpenBookQA}~\cite{openbookQA}, {FOLIO}~\cite{han2022folio}, {LogiQA}~\cite{liu2020logiqa}, and {CLUTRR}~\cite{sinha2019clutrr} datasets. %\hl{cite the datasets}.
We report a rich set of metrics, including accuracy, spectral behavior, 3D loss landscape, throughput, runtime, and empirical generalization error. While it is infeasible to be
exhaustive, our comprehensive benchmarking offers broadly applicable insights.

\vspace{0mm}
\section{Fine-Tuning: The Past, Present, and Future}\label{sec:background}
\vspace{0mm}
FFT updates all parameters of deep networks, an approach that becomes increasingly impractical as model size and deployment multiplicity grow. This leads to the advent of LoRA and its variants. Based on their evolutionary timeline, we divide this section into three phases. The \emph{past} contains FFT, and we introduce LoRA, while different LoRA variants dominate the \emph{present}. Finally, extreme compute efficiency characterizes the \emph{future}, where we induce sparsity to SOTA LoRA variants.

\vspace{0mm}
\subsection{The Past: Full fine-tuning (FFT) and LoRA}\label{sec:past}
\vspace{0mm}
\smartparagraph{Pre-training.} Without loss of generality, consider a $L$-layer, fully-connected, neural network whose layers are, $\{W^i\}_{i=1}^L$, where $W^i \in \mathbb{R}^{n_i \times m_i}$ are trainable weights. Let $x\in\R^{m_1}$ be the input and ${\mathbf{W}} = (W^1,...,W^L)$. The network $f_{{\textbf{W}}}(\cdot): \mathbb{R}^{d_{\rm in}}\rightarrow\mathbb{R}^{d_{\rm out}}$ is of the form: 
 \begin{align}\label{eq:FFN}
     \textstyle{f_{{\textbf{W}}}(x) = \sigma_L(W^L\ldots(\sigma_2(W^2\sigma_1(W^1(x))\ldots))},
 \end{align}
 where $\sigma^i(\cdot):\R^{n_i}\to \R^{n_i}$ is a nonlinear activation function for the $i^{\rm th}$ layer. Given a pre-training set, $N_{\rm pre}:=\{(x_i,y_i)\}\subset \mathbb{R}^{m_1}\times \mathbb{R}^{d_{\rm out}}$, and the loss function, $\mathcal{\ell}_{\rm pre}(\cdot): \mathbb{R}^{d_{\rm out}}\times \mathbb{R}^{d_{out}}\rightarrow \mathbb{R}$, we train the network by solving:
\begin{align}\label{eq:Pre-training}
    % {\textbf{W}}_0\approx\argmin_{{\textbf{W}}}\left[\mathcal{L}_{\rm pre}({\textbf{W}}) \stackrel{\text{def}}{=} \frac{1}{|N_{\rm pre}|}\sum_{i=1}^{|N_{\rm pre}|} \mathcal{\ell}_{\rm pre}(f_{{\textbf{W}}}(x_i),y_i)\right],
    {\textbf{W}}_0\approx \argmin_{\textbf{W}} \frac{1}{|N_{\rm pre}|}\sum_{i=1}^{|N_{\rm pre}|} \mathcal{\ell}_{\rm pre}(f_{\textbf{W}}(x_i),y_i),
 \end{align}
obtaining the trained weights ${\textbf{W}}_0 = [W_0^1,\cdots,W_0^L]$. Sophisticated DNNs, such as CNNs, RNNs, Transformers, etc., can be adapted with some modification to \eqref{eq:FFN}.

\smartparagraph{FFT~\cite{bert, peft-1, lora, PEFT}.} Given pre-trained weights, ${\textbf{W}}_0$, FFT updates each DNN layer with corresponding $\Delta W^i$ to adapt the model to a downstream task on a domain-specific training dataset, $N:=\{(x_i',y_i')\}$. Denote $\Delta{\textbf{W}}$ as the update, and define ${\textbf{W}}_0 + \Delta{\textbf{W}} := [W_0^1 + \Delta W^1,\cdots,W_0^L + \Delta W^L]$. Given a loss function, $\mathcal{\ell}(\cdot): \mathbb{R}^{d_{\rm out}\times d_{\rm out}}\rightarrow \mathbb{R}$, FFT updates the model weights via:
\begin{align}\label{eq:FFT}
    % \Delta\hat{\textbf{W}}\approx\argmin_{\Delta \textbf{W}}\left[\mathcal{L}(\textbf{W}_0 + \Delta\textbf{W}) \stackrel{\text{def}}{=} \frac{1}{|N|}\sum_{i=1}^{|N|} \mathcal{\ell}(f_{\textbf{W}_0 + \Delta\textbf{W}}(x_i'),y_i')\right],
    \Delta\hat{\textbf{W}}\approx\argmin_{\Delta \textbf{W}}\frac{1}{|N|}\sum_{i=1}^{|N|} \mathcal{\ell}(f_{\textbf{W}_0 + \Delta\textbf{W}}(x_i'),y_i'),
\end{align}
and obtains the fine-tuned model, $f_{\textbf{W}_0 + \Delta\hat{\textbf{W}}}$, adapted to the downstream task. The computational overhead for FFT can be prohibitively expensive. E.g., LLMs for task-specific fine-tuning. 
%In contrast, parameter-efficient fine-tuning (PEFT) requires orders of magnitude fewer trained parameters and attains similar performance to FFT 
In contrast, parameter-efficient fine-tuning (PEFT) trains orders of magnitude fewer parameters while often attaining performance comparable to FFT \cite{peft-1, PEFT}.

\smartparagraph{LoRA~\cite{lora}} is a popular PEFT method that replaces the layer-wise updates $\Delta W^i$ with a low-rank representation $B^iA^i$, such that $B^i\in \mathbb{R}^{n_i\times r}$, $A^i\in\mathbb{R}^{r\times m_i}$, $r \ll \min(m_i,n_i)$ for all $i\in[L]$.~Denote $\textbf{W}_0 + \frac{\alpha}{r}\textbf{BA} := [W_0^1 + \frac{\alpha}{r}B^1A^1,\cdots,W_0^L + \frac{\alpha}{r}B^LA^L]$, where ${\alpha> 0}$ is a scaling factor. LoRA initializes $B^i$ = 0, $A^i$ $\sim$ $\mathcal{N}(0, 0.02^2)$, and solves:
\begin{align}\label{eq:LoRA}
   % ({\hat{\textbf{B}}, \hat {\textbf{A}}}) \approx\argmin_{\textbf{B},\textbf{A}}\left[\mathcal{L}(\textbf{W}_0 + \frac{\alpha}{r}\textbf{BA}):= \frac{1}{|N|}\sum_{i=1}^{|N|} \mathcal{\ell}(f_{\textbf{W}_0 + \frac{\alpha}{r}\textbf{BA}}(x_i'),y_i')\right],
   ({\hat{\textbf{B}}, \hat {\textbf{A}}}) \approx\argmin_{\textbf{B},\textbf{A}}\frac{1}{|N|}\sum_{i=1}^{|N|} \mathcal{\ell}(f_{\textbf{W}_0 + \frac{\alpha}{r}\textbf{BA}}(x_i'),y_i'),
\end{align}
to obtain $B^i, A^i$ for each tuned layer. LoRA may not need to be applied to all layers; some layers can remain frozen. LoRA substantially reduces trainable parameters, saves training time, and the update $\textbf{BA}$ can be merged into the base weights to avoid additional inference latency. %\aritra{I did not understand the following line; why are we using $\Delta AB$? Maybe for chain, but for LoRA, Assymetric LoRA, we do not need them. When we mean to say update $\theta_0$ to $\Delta \theta$, it makes sense, but for LoRA, it is simply $W_0+AB.$ No need for $\Delta AB.$} 
%\eli{Dropping the $\Delta$s definitely can work. The next line was for switching from one trained LoRA module to another (both using the same base pre-trained model). Can write $\hat{B}_1\hat{A}_1$ for each instead. Will update.} 
%For adapting the same pre-trained model to multiple downstream $K$ tasks, each update, $\{\hat {\textbf{B}}_j\hat {\textbf{A}}_j\}_{j=1}^K$, is stored separately. Then each task can be switched to by taking the current model $f_{\textbf{W}_0+{ \frac{\alpha}{r}\hat {\textbf{B}}_j\hat {\textbf{A}}_j}}$, for $j\in[K]$, subtracting the current update ${\hat {\textbf{B}}_j\hat {\textbf{A}}_j}$, and adding the update corresponding to the new task. 
LoRA is compute- and storage-efficient, but renders worse generalization than FFT~\cite{loravfft}; LoRA may also fail~\cite{lora-loudly}. 

\vspace{0mm}
\subsection{The Present: Evolution of LoRA}\label{sec:present}
\vspace{0mm}

Many variants of LoRA exist to enhance efficiency while addressing weaknesses. 
%Many LoRA adaptations have been proposed to balance higher efficiency with mitigating identified weaknesses. 
They excel in certain tasks but are less optimal in others. Including FFT, empirical evidence suggests that no single fine-tuning method is the best fit for all cases, and that different variations are successful in varying circumstances~\cite{PEFT}. Thus, there exists compelling reasoning as to why new variants of LoRA emerge. Below, we discuss a few popular LoRA variants.

\smartparagraph{Chain of LoRA (CoLA)~\cite{cola}} increases LoRA's performance without substantially increasing compute or memory costs. After fine-tuning $\textbf{B}^1\textbf{A}^1$ for the downstream task to obtain ${\hat {\textbf{B}}^1\hat {\textbf{A}}^1}$, CoLA merges ${ \hat {\textbf{B}}^1\hat {\textbf{A}}^1}$ into the base weights and continues training with a new $\textbf{B}^2\textbf{A}^2$ on the same task, treating $\textbf{W}_0 + \frac{\alpha}{r}{\hat {\textbf{B}}^1\hat {\textbf{A}}^1}$ as the base weights. Denote $\textbf{W}^{(k,BA)}:= \textbf{W}_0 + \sum_{j=1}^k \frac{\alpha}{r}{\hat {\textbf{B}}^j\hat {\textbf{A}}^j}$ and $\textbf{W}^{(0,BA)} = \textbf{W}_0$ for convenience. CoLA of chain length $k$ solves:
\begin{align}\label{eq:COLA}
    \text{For }j\in[k],~~~~~\hat {\textbf{B}}^{j}\hat {\textbf{A}}^j\approx\argmin_{{\textbf{B}^j}\textbf{A}^j}\left[\mathcal{L}(\textbf{W}_0^{(j-1,BA)} + \frac{\alpha}{r}\hat {\textbf{B}}^j\hat {\textbf{A}}^j)\right]
\end{align}
to obtain the fine-tuned model, $f_{\textbf{W}^{(k,BA)}}$.~CoLA simulates a higher-rank approximation of a single LoRA update~\cite{relora} and claims to reduce LoRA's failure~\citep{lora-loudly}.

\smartparagraph{Asymmetric LoRA~\cite{fixA}} modifies LoRA adaptation for each layer by freezing one of the low-rank matrices, conventionally, $A$ to $A_0$, initializing the frozen matrix via a Normal distribution, and setting the trainable matrix to 0, and solves:
\begin{align}\label{eq:Asymm}
    \hat {\textbf{B}}\approx \argmin_{\textbf{B}} [\mathcal{L}(\textbf{W}_0 + \frac{\alpha}{r}\textbf{B}\textbf{A}_0) =\frac{1}{|N|}\sum_{i=1}^{|N|} \mathcal{\ell}(f_{\textbf{W}_0 + \frac{\alpha}{r}\textbf{B}\textbf{A}_0}(x_i),y_i)],
\end{align}
%\eli{Used subscript $0$ to indicate that $\textbf{A}$ is frozen, but it may look like it applies to B?}
to obtain the fine-tuned model $f_{\textbf{W}_0 + {\hat {\textbf{B}}\textbf{A}_0}}$. Under trainable-parameter constraints, Asymmetric LoRA competes with LoRA~\cite{fixA} and retains the Lipschitz smoothness of the loss function, which LoRA does not~\cite{LoRA-federated}.

\smartparagraph{Randomized Asymmetric Chain of LoRA (RAC-LoRA)~\cite{rac}} combines Asymmetric LoRA and CoLA. RAC-LoRA fixes one of the low-rank matrices (conventionally $A$), initializing via some fixed distribution of matrices $\mathcal{D}$, and sets the trainable one to 0. Like CoLA, the trained ${\hat {\textbf{B}}^1\textbf{A}^1_0}$ is then merged into the base weights, and a new $\textbf{BA}_0$ is trained on the same task. Denote $\textbf{W}^{(k,B)}:= \textbf{W}_0 + \sum_{j=1}^k \frac{\alpha}{r}{\hat {\textbf{B}}^j\textbf{A}_0^j}$ and $\textbf{W}^{(0,B)} = \textbf{W}_0$. RAC-LoRA of chain length $k$ solves:
\begin{align}\label{eq:rac}
    \text{For }j\in[k],~~~~~\hat {\textbf{B}}^{j}\approx\argmin_{{\textbf{B}^j}}\left[\mathcal{L}(\textbf{W}_0^{(j-1,B)} + \frac{\alpha}{r}\hat {\textbf{B}}^j{\textbf{A}}_0^j)\right]
\end{align}
to obtain the fine-tuned model $f_{\textbf{W}^{(k,{B)}}}$. 

\smartparagraph{{LoRA}$+$~\cite{plus}} applies separate learning rates $\{\gamma_B^i,\gamma_A^i\}$ to the adapter matrices, $\{B^i, A^i\}$ of each layer, respectively, and maintains the identical structure to LoRA.~${\rm LoRA}+$ prioritizes a substantially higher learning rate ($2-16 \times$) for $B$. 
We discuss some other LoRA variants in \S\ref{Appendix:present}.

\vspace{0mm}
\subsection{The Future: How Can We Achieve More Efficiency?}\label{sec:future}
\vspace{0mm}
With rapidly increasing model dimensionality, memory, and adaptation costs, we characterize this phase as a key next evolutionary step for LoRA: maximizing efficiency while maintaining parity with current LoRA variants. Training $B$ generally performs better~\cite{fixA}, together with insights from structured chaining methods \cite{cola,rac}, leads us to two simple, easy-to-analyze and implement variants, where we postulate that the update of the pre-trained parameter can be restricted to $r$ columns of $B$. 

\myNum{i}\smartparagraph{Cheap LoRA (cLA).} %is a simplified instance of Asymmetric LoRA \cite{fixA}, where only the low‑rank factor $B$ is optimized, while $A$ is kept fixed to the identity matrix of rank $r$ concatenated with zeros.
Stemmed from Asym LoRA \cite{fixA}, in cLA, the fixed matrix, $A^i$, for each layer $i$, is set to be an $r\times r$ identity matrix, concatenated with $\textbf{0}_{r\times m_i-r}$, that is, $A^i = \left[ I_r | \mathbf{0}_{r \times (m_i - r)} \right]\in\mathbb{R}^{r\times m_i}$. For each layer, with $W^i\in\mathbb{R}^{n_i\times m_i},$ and $B^i\in\mathbb{R}^{n_i\times r},$ we have $\Delta W^i=B^i \left[ I_r | \mathbf{0}_{r \times (m_i - r)} \right]= \left[ B^i | \mathbf{0}_{n_i \times (m_i - r)} \right].$ Denote $\textbf{B}^c$ as the layer-wise update with $B^i,A^i$ chosen above, and $\textbf{W}_0 + \frac{\alpha}{r}\textbf{B}^c := [W_0^1 + \frac{\alpha}{r}B^1\left[ I_r | \mathbf{0}_{r \times (m_1 - r)} \right],\cdots,W_0^L + \frac{\alpha}{r}B^L\left[ I_r | \mathbf{0}_{r \times (m_L - r)} \right] ]$. Then cLA solves:
\begin{align}\label{eq:cLA}
    \hat{\textbf{B}}^c \approx \argmin_{\textbf{B}^c}\frac{1}{|N|}\sum_{i=1}^{|N|} \mathcal{\ell}(f_{\textbf{W}_0 + \frac{\alpha}{r}\textbf{B}^c}(x_i'),y_i').
\end{align}
We consider two instantiations of the fixed factor, deterministic (\emph{cLA}) and random (\emph{random-cLA}), where we randomly permute the columns of $A^i$ on initialization. Empirical results show that the deterministic choice suffices, the randomized variant does not yield better performance. %, even though the random version is more convenient for convergence analysis.

\myNum{ii}\smartparagraph{Circulant Chain of Cheap LoRA~($c^3$LA).} As noted in CoLA~\cite{cola} and RAC-LoRA~\cite{rac}, chaining LoRA modules leverages repeated initializations to avoid poor minima. We extend this principle to cLA with a structured chaining, $c^3$LA. This method shifts the identity $I_r$ in each matrix $\left[ I_r | \mathbf{0}_{r \times (m_i - r)} \right]$ by $r$ columns to the left. That is, starting with $\left[ I_r | \mathbf{0}_{r \times (m_i - r)} \right]$, the next chain is $\left[ \mathbf{0}_{r \times r} \;\middle|\; I_r \;\middle|\; \mathbf{0}_{r \times (m_i - 2r)} \right]$, and so on. For $j\in[k]$, let $\textbf{B}^{c^3}$ denote $c^3$LA's update and denote $\textbf{W}^{(k,{B^{c^3}})}:= \textbf{W}_0 + \sum_{j=1}^k {\frac{\alpha}{r}\hat {\textbf{B}}^{c^3,j}}$, and $\textbf{W}^{(0,B^{c^3})}=\textbf{W}_0.$ Then $c^3$LA of chain length $k$ solves:
\begin{align}\label{eq:c^3LA}
    \hat {\textbf{B}}^{{c^3},j}\approx\argmin_{{\textbf{B}^{c^3,j}}}\mathcal{L}\left(\textbf{W}_0^{(j-1,B^{c^3})} + \frac{\alpha}{r}\hat {\textbf{B}}^{c^3,j}\right),
\end{align}
to obtain the fine-tuned model $f_{\textbf{W}^{(k,{B^{c^3})}}}$ for a chain of length $k$; see~\eqref{eq:Asymm} for the definition of $\mathcal{L}.$ Given sufficient epochs and chain length, this ensures we can update all elements in each $W_0$. We formalize this in the following proposition.
\begin{proposition}\label{prop:c3la}
Let $k\in\N$ be such that ${d_{in}}=kr$. Let $E$ be the total number of epochs used in $c^3{LA}$ fine-tuning. Then by creating a new chain in every $\left\lfloor\frac{E}{k} \right\rfloor$ epochs, $c^3{LA}$ updates each element in $W_0$.
\end{proposition}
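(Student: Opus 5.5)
The argument is a direct counting argument on the column supports of the chained updates. The claim is structural: we only need every entry of $W_0$ to lie in the support of at least one update. We do not need any particular update to be nonzero. Fix a layer $i$ with $W_0^i\in\R^{n_i\times m_i}$, where $m_i=d_{in}=kr$.

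For the $j$-th link of the chain, $j\in[k]$, write the fixed factor produced by the circulant shift as
\[
A^{i,j}=\left[\,\mathbf{0}_{r\times (j-1)r}\;\middle|\; I_r \;\middle|\; \mathbf{0}_{r\times (m_i-jr)}\,\right].
\]
The first step is to verify that $B^{i,j}A^{i,j}$ is the $n_i\times m_i$ matrix whose column block $\{(j-1)r+1,\dots,jr\}$ equals $B^{i,j}$ and whose remaining columns are zero. This follows in one line, exactly as in the computation given for cLA. Consequently, every entry $(p,q)$ with $q$ in that block receives gradient updates through the free parameter $(B^{i,j})_{p,\,q-(j-1)r}$, and no other entry is touched in link $j$.

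The second step uses $m_i=kr$ to show that the blocks $\mathcal{C}_j=\{(j-1)r+1,\dots,jr\}$ for $j=1,\dots,k$ partition $\{1,\dots,m_i\}$. They are disjoint, contiguous, each has size $r$, and together they have $kr=m_i$ elements. So every column index $q$ lies in exactly one $\mathcal{C}_{j(q)}$, namely $j(q)=\lceil q/r\rceil$. Because of the merge step $\textbf{W}^{(j,B^{c^3})}=\textbf{W}^{(j-1,B^{c^3})}+\frac{\alpha}{r}\hat{\textbf{B}}^{c^3,j}$, updates from earlier links are preserved. Hence after all $k$ links, every entry of $W_0^i$ has been updated by some link.

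The third step is the epoch bookkeeping, which I expect to be the only mildly delicate point: all $k$ links must actually be instantiated within the budget of $E$ epochs. If a new link starts every $\lfloor E/k\rfloor$ epochs, link $j$ starts at epoch $(j-1)\lfloor E/k\rfloor$. The last link starts at $(k-1)\lfloor E/k\rfloor\le E-\lfloor E/k\rfloor$, since $k\lfloor E/k\rfloor\le E$. So it receives at least $\lfloor E/k\rfloor$ epochs, and any leftover $E-k\lfloor E/k\rfloor<k$ epochs can be assigned to the final link. Each link therefore trains for at least one epoch, which requires the implicit assumption $E\ge k$; I would state this explicitly. With at least one gradient step, every coordinate of $B^{i,j}$ is updated.

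Applying the argument to each adapted layer completes the proof.
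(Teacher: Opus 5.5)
Your proof is correct and follows the reasoning the paper relies on. The paper gives no standalone proof, but its $c^3$LA generalization derivation uses the identity $\sum_{j=1}^k B_j[\mathbf{0}_{r\times r(j-1)}\mid I_r\mid \mathbf{0}_{r\times(m_i-rj)}]=[B_1\mid\cdots\mid B_k]$ for $m_i=kr$, which is exactly your partition of the columns into the blocks $\mathcal{C}_j$. Your epoch bookkeeping goes beyond what the paper states: you assume $E\ge k$ explicitly and fold the $E-k\lfloor E/k\rfloor$ leftover epochs into the last link, so no out-of-range $(k+1)$-st shift is created.
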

\vspace{-1mm}
The intuition behind $c^3$LA goes beyond chaining cheap LoRA modules; its structured shifts expand the representational capacity of the learned $B$ matrices.~{We provide pseudocode of our variants in}~\S\ref{subsec:algorithm}. 

\smartparagraph{Connection with partial connection adaptation~(PaCA) \cite{paca}.} Although stemming from different research queries, our sparsity-induced LoRA variants and PaCA are closely related as they study restricted column updates of a pre-trained model. While PaCA directly updates a randomly chosen subset of the pretrained model to address training-time and activation-memory costs, cLA, which is sparsity-induced Asymteric LoRA, remains within the LoRA PEFT family and uses a deterministic, structured column-subspace restriction. However, they differ in where the restriction is imposed (directly inside the pre-trained backbone vs. via a LoRA parameterization) and the motivation behind their construction. Their similarities suggest we can lift many results related to cLA and its variants, and apply them to PaCA; see a detailed discussion in \S\ref{appendix:paca-cla}. 

Empirically, our sparsity-induced LoRA variants and PaCA behave similarly; see Tables \ref{tab:vit_acc_all}-\ref{tab:vit_acc_qv}. The main benefit of PaCA is avoiding the additional cost of running the forward pass through the LoRA module entirely; in comparison to LoRA, per layer, PaCA computes $Wx$, not $Wx + B(Ax)$.
However, from a theoretical perspective, our sparsity-induced variants serve as a bridge between LoRA and PaCA (see Figure \ref{fig:lora_vs_paca}), facilitating bidirectional knowledge transfer between these two families. In \S\ref{sec:generalization}, we show that our LoRA-based generalization results can be used directly to PaCA. LoRA has a growing literature on different variants, their nonconvex convergence; see \cite{rac, LoRA-federated, Mu2025OnTC}. %Recently, \cite{Mu2025OnTC} proposed nonconvex convergence of LoRA by designing \emph{Lipschitz-like} reparametrized function; in LoRA, the Lipschitz smoothness is lost even if the loss, $\cL$, is Lipschitz-smooth \cite{rac}. 
These results can also be directly adapted to PaCA if one considers it as a sparsity-induced extension of the LoRA family. Interestingly, we can add a well-known performance enhancer of the LoRA PEFT family, chain construction, to PaCA; see \S\ref{sec:pacavariant}. Similarly, PaCA's loss convergence result in Theorem \ref{thm:paca_thm} can be adapted for cLA, and so on; see Theorem \ref{thm:cla_paca_thm}. We note that LoRI \cite{zhang2025lori} shares some similarities with cLA, as it keeps the projection matrices $A$ fixed as random projections while sparsifying the $B$ matrices with task-specific masks. However, the construction of cLA is general and task agnostic.

%\hl{I will discuss the recent literature suggestions from the Stanford review, and we will add 2 lines here.} % \hl{Additionally, our sparsity-induced variants can enjoy PaCA's computational advantage. --- Can we say this?}

%\eli{We should not be able to say this. While currently the time in the experiments is the same the main benefit of PaCA is avoiding the additional cost of running the forward pass through the LoRA module entirely (so they only, per layer, do Wx computation not Wx + B(Ax) ). Empirically when I ran PaCA it took the same time as cLA, but it should be ~20\% faster than LoRA so if LoRA took 67.4 seconds per epoch on OfficeHome in~\ref{tab:paca_cla_vit} we should expect PaCA to take closer to 54, rather than the 67.6 that I'm finding.}

\vspace{0mm}
\section{Theoretical Insights}\label{sec:theory}
\vspace{0mm}
\begin{table*}
\scriptsize
\centering
\sbox{\GenBoundsTableBox}{%
\begin{tabular}{lcccl}
\toprule
\textbf{Variant} & \textbf{Chaining?} & \textbf{$\cG(\mathbf{W}_0+\Delta \mathbf{W})\le\Phi_{\textbf{W}_0} +$}\\
\midrule
LoRA      \cite{lora} &$\times$  & $ \sqrt{\tfrac{2rq\sigma^2\ln 2\sum_{i=1}^L(m_i + n_i)}{|N|}}$\\
LoRA+      \cite{plus} & $\times$  & $\sqrt{\tfrac{2rq\sigma^2\ln2\sum_{i=1}^L(m_i+n_i)}{|N|}}$\\
Asym-LoRA      \cite{fixA}&$\times$  & $\sqrt{\tfrac{2rq\sigma^2\ln2\sum_{i=1}^Ln_i}{|N|}}$\\
\midrule
CoLA      \cite{cola}  & \checkmark & $\sqrt{\tfrac{2rq\sigma^2k\ln2\sum_{i=1}^L(m_i + n_i)}{|N|}}$\\
RAC       \cite{rac}   & \checkmark & $\sqrt{\tfrac{2rq\sigma^2k\ln2\sum_{i=1}^L n_i}{|N|}}$ \\
\midrule
\emph{cLA \& PaCA} &$\times$  & $\sqrt{\tfrac{2rq\sigma^2\ln2\sum_{i=1}^L n_i}{|N|}}$\\
(This paper, \cite{paca}) &&\\
$c^{3}$LA & \checkmark & $\sqrt{\tfrac{2rq\sigma^2k\ln2\sum_{i=1}^L n_i}{|N|}}$\\
\bottomrule
\end{tabular}%
}
\begin{minipage}[t]{\wd\GenBoundsTableBox}
\vspace{0pt}
\centering
\usebox{\GenBoundsTableBox}
\vspace{-0.1em}
\captionof{table}{\small{\textbf{Generalization error upper bounds} of LoRA variants. The expression, $\Phi_{\textbf{W}_0}$ is in Theorem \ref{theorem:nonlinearGenBound}. $r$ is the adapter rank, $k$ chain length, $|N|$ size of fine-tuned dataset, $q$ bitwidth, $(m_i,n_i)$ are the (input,output) dimensions for layer $i$. The loss, $\cL$ is $\sigma$-sub-Gaussian (Assumption \ref{assumption:sigma_subgaussian}). 
%PaCA, being similar to cLA, has the same generalization error upper bound.
}}
\label{tab:gen_bounds_lora_variants}
\label{tab:upperbounds}
\vspace{-2em}
\end{minipage}%
\hfill
\begin{minipage}[t]{\dimexpr\textwidth-\wd\GenBoundsTableBox-1em\relax}
\vspace{0pt} 
\centering
 \scalebox{0.96}{\includegraphics[width=\linewidth]{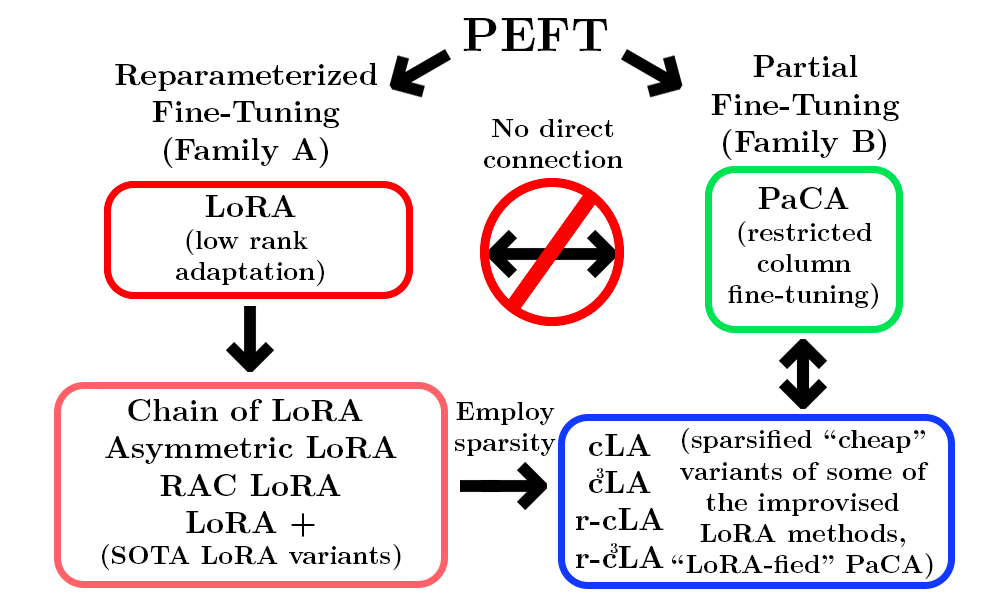}}
\captionof{figure}{\small{\textbf{Evolution of fine-tuning methods.} LoRA falls under the PEFT family (\textbf{Family A}). PaCA fine-tunes randomly selected columns within pretrained weights to improve training speed and reduce overhead (\textbf{Family B}). LoRA has different variants (e.g., Asym, CoLA); RAC is a combination of various techniques applied to LoRA. Our sparsity-induced variants (cLA, $c^3$LA, and their randomized forms) create a bridge between reparameterization (\textbf{Family A}) and partial fine-tuning (\textbf{Family B}).\vspace{-2mm}}}
\label{fig:lora_vs_paca}
\end{minipage}
\end{table*}

In this section, we measure the generalization error upper bounds of the PEFT methods of this paper under an information-theoretic framework \cite{xu2017information, russo2019much} with our layerwise setup, where each layer's adapters are updated using gradient descent (GD). 

%In this Section, we use an information-theoretic approach to measure the generalization error bounds under our layerwise setup, where each layer's adapters are updated using gradient descent (GD). %  \myNum{ii} we adapt the optimization framework of \cite{rac}, and present the convergence analysis of the PEFT methods for smooth, nonconvex loss functions, under our layerwise setup, where each layer's adapters are updated using gradient descent (GD).
\vspace{0mm}
\subsection{On the Generalization of Different LoRA Variants}\label{sec:generalization}
\vspace{0mm}

%\eli{Added comment on the benefit of generalization error bounds as an intro. Noted LoRA-unique bounds for when we use the assumptions of Assym-LoRA/Xu-Raginsky and the PAC-Bayes bound. Will update intro after if we get a similar result to Assym-LoRA paper which seems to imply that asymmetric methods have tighter gen-error bounds.}

%\aritra{Cristian: Work really hard to make this Section nice. Enough of running code and putting everything together without thought. This is the section that has material, but it is horribly written. The definitions are terrible. This is how you read a definition in the book? Go over to the paper \cite{raginsky}, read every word of the generalization error definition, and rephrase it 10 times. Read the section multiple times and see how they write it. Then put the best version here. It looks like a toilet now. Spend a lot of time; writing a paper is not a joke.}

% \aritra{Cristian, since for the fine-tuned model we are using, the convention $\theta_0 + \Delta\theta := [W_0^1 + \Delta W^1,\cdots,W_0^L + \Delta W^L]$, please follow the same convention in this section and replace $W$ and $\Delta W$, accordingly. To define the loss, please follow the convention Eli is using in formulations (2) and (3). }

 Let $\mathcal{X} \times \mathcal{Y}$ be an input space and label space with $\nu$ distribution of pairs $(x, y) \in \mathcal{X} \times     \mathcal{Y}$. Let $N= \{ (x_i,y_i)\}_{i=1}^{|N|}$ represent the training dataset, where each $(x_i, y_i)$ is i.i.d. from $\nu$ distribution of $\mathcal{X} \times \mathcal{Y}$. Given a \emph{hypothesis}, $f_\textbf{W}(\cdot):\mathcal{X} \to \mathcal{Y}$, and a nonnegative \emph{loss function}, $\ell(\cdot): \mathcal{Y} \times \mathcal{Y} \to \mathbb{R}$, the \emph{empirical risk} of a hypothesis on the dataset is defined as,
$\cL(\textbf{W}) := \tfrac{1}{|N|} \sum_{i=1}^{|N|}\ell(f_\textbf{W}(x_i),y_i).$
The \emph{true risk} of the hypothesis, $f_\textbf{W}(\cdot)$ is defined as, 
$\hat{\cL}_{\rm global}(\textbf{W}):=\mathbb{E}_{\mathcal{X}, \mathcal{Y} \sim \nu}[\ell(f_\textbf{W}(X),Y)].$ With the above setup, we define \emph{generalization error}, which tells us how well the hypothesis, $f_\textbf{W}$, generalizes from the training sample to the underlying population distribution.
%\vspace{-1mm}
\begin{definition}(\textbf{Generalization Error}~\cite{xu2017information})\label{definition:generalization}
    Generalization error, $\mathcal{G}(\textbf{W})$, is the difference between a hypothesis's true risk and its empirical risk on the training dataset, i.e., 
    $\mathcal{G}(\textbf{W}):= \hat{\cL}_{\rm global}(\textbf{W}) - \cL(\textbf{W})$.
\end{definition}
%Denote $\theta_0$ as our pretrained model, and $\Delta \theta$ as the update to our trained model. 
%Note that for any matrix $A$, $\sup \{\|Ax\|_2; x \in  \mathcal{X} \} \leq C\|A\|_{\rm op}$.
For our analysis, we make the following general assumptions.

\begin{assumption}\label{ass:bounded feature vector} \textbf{(Boundedness of input vectors)} The input vectors are bounded, i.e., there exists a constant $C\ge0$ such that $\|x\|\leq C$, for all $x \in \mathcal{X}.$
\end{assumption}
\begin{assumption}\label{ass:smoothness}
\textbf{(Lipschitz continuity of the loss)} The loss function, $\ell (\cdot): \R^d\to \R$ is $L_{\cL}$-Lipschitz continuous, i.e., $|\ell(f_\textbf{W}(x), y) - \ell(f_{\textbf{W}'}(x), y)| \leq L_{\cL}\|f_\textbf{W}(x) - f_{\textbf{W}'}(x)\|$ for all $\textbf{W}, \textbf{W}'\in\R^d \text{ and } (x,y) \in \mathcal{X} \times \mathcal{Y}$.
\end{assumption}
\begin{assumption}\label{ass:activationlipschitz}
    \textbf{(Lipschitz continuity of activation)}  
    The vector-valued activation function, $\sigma_{i}(\cdot):\R^{n_i}\to\R^{n_i}$, for each layer, $i$, is $L_{\sigma_i}$-Lipschitz continuous, i.e., $\|\sigma_{i}(u) - \sigma_{i}(v)\|\leq L_{\sigma_i}\|u - v\|,$ for all $u, v \in \R^{n_i}$.
\end{assumption}

%Now, we are set to prove Theorem \ref{theorem:nonlinearGenBound}. %However, first, we furnish the full version of the theorem, including the details of the correction terms, $\Phi_{\Delta \textbf{W}}$ and $\Phi_{ \textbf{W}_0}$, which we deliberately omitted in the main paper to not overwhelm the readers. 
Based on the assumptions, Theorem \ref{theorem:nonlinearGenBound} upper bounds the generalization error of a fine-tuned, $L$-layer fully connected DNN, parameterized by $\textbf{W}_0 + \Delta \textbf{W}$, by the better of two alternatives: the generalization error of $\textbf{W}_0$ and a correction term, or the generalization error of $\Delta \textbf{W}$ and a different correction term. 
%This \emph{general} error bound holds for any DNN with the setup above.
%\aritra{How are we considering the activation? Is the activation a vector-valued function or a scalar-valued function? It looks like we are using vector-valued convention. Then why in Assumption 6: $|\sigma^{i}(u) - \sigma^{i}(v)| \leq L_{\sigma_i}|u - v|,$ for all $u, v \in \R$. I would write for Assumption 6: The vector-valued activation function,  $\sigma^i(\cdot):\R^{m_i}\to\R^{m_i},$ for each layer $i$ is $L_\sigma$-Lipschitz continuous, i.e., $\|\sigma^{i}(u) - \sigma^{i}(v)\|\le L_{\sigma_i}\|u - v\|,$ for $u, v \in \R^m.$}
\begin{theorem}(\textbf{Generalization bounds})\label{theorem:nonlinearGenBound}  Let $f_{\textbf{W}_0+\Delta\textbf{W}}(x)=\sigma_{L}([{W_0}^{L}+\Delta W^{L}](\cdots\sigma_{2}([(W_0^{2}+\Delta W^{2}]\sigma_{1}([W_0^{1}+\Delta W^{1}]x))\cdots))$ be a $L$-layers fine-tuned DNN, where ${\textbf{W}_0+\Delta \textbf{W}}$ is a fine-tuned update. Let the loss function, $\mathcal{L}$ for fine-tuning, follow  Assumptions~\ref{ass:bounded feature vector}--\ref{ass:activationlipschitz}.
Then ${\mathcal{G}(\textbf{W}_0 + \Delta \textbf{W}) \leq\min\left(\mathcal{G}(\textbf{W}_0)+\Phi_{\Delta \textbf{W}},\mathcal{G}(\Delta \textbf{W})+\Phi_{ \textbf{W}_0} \right)},$ where 
$${\Phi_{\Delta \textbf{W}}:= 2L_{\cL}\left[ C\prod_{i=1}^L L_{\sigma_i}\sum_{i=1}^{2^L-1}\prod_{j=1}^L P(i,j)+\sum_{i\neq 2^a-1:a\in [L]}^{2^L-2} F(i)\right]} \text{ and }$$  
$${\Phi_{\textbf{W}_0}:= 2L_{\cL}\left[ C\prod_{i=1}^L L_{\sigma_i}\sum_{i=2}^{2^L}\prod_{j=1}^L P(i,j)
       + \sum_{i\neq 2^a:a\in [L]}^{2^L-1} {F}(i)\right]}, $$
       are the correction terms, ${F(i):=\|\sigma_{L-\psi(i)}(0)\|\prod_{j=1}^{\psi(i)}\![L_{\sigma_{L-j+1}}\,H(i,j)]},$
       ${\psi(i):=\lfloor \log_2(i)\rfloor}$, and 
\[
\begin{aligned}
P(i, j):=
\begin{cases}
\|W_{0}^{L-j+1}\|_2~\text{if}~\lfloor\tfrac{i-1}{2^{\,L-1}}\rfloor~\text{is~odd},\\
\|\Delta W^{L-j+1}\|_2~\text{if}~\lfloor\tfrac{i-1}{2^{\,L-1}}\rfloor~\text{is~even}
\end{cases},
\hspace{-3mm}
H(i, j):=
\begin{cases}
\|\Delta W^{L-j+1}\|_2~\text{if }\lfloor\tfrac{i}{2^{\,\psi(i)-j}}\rfloor~\text{is~odd},\\
\|W_0^{L-j+1}\|_2~\text{if }\lfloor\tfrac{i}{2^{\,\psi(i)-j}}\rfloor~\text{is~even}.
\end{cases}
\end{aligned}
\]
\end{theorem}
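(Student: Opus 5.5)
We will obtain the bound by comparing the fine-tuned network $f_{\textbf{W}_0+\Delta\textbf{W}}$ with each of the two ``component'' networks $f_{\textbf{W}_0}$ and $f_{\Delta\textbf{W}}$. The starting identity is
\[
\mathcal{G}(\textbf{W}_0+\Delta\textbf{W})=\mathcal{G}(\textbf{W}_0)+\big[\hat{\cL}_{\rm global}(\textbf{W}_0+\Delta\textbf{W})-\hat{\cL}_{\rm global}(\textbf{W}_0)\big]-\big[\cL(\textbf{W}_0+\Delta\textbf{W})-\cL(\textbf{W}_0)\big].
\]
Each bracket is an average, over $\nu$ or over $N$, of $\ell(f_{\textbf{W}_0+\Delta\textbf{W}}(x),y)-\ell(f_{\textbf{W}_0}(x),y)$. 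By Assumption~\ref{ass:smoothness}, both brackets are bounded in absolute value by $L_{\cL}\sup_{x\in\cX}\|f_{\textbf{W}_0+\Delta\textbf{W}}(x)-f_{\textbf{W}_0}(x)\|$. This gives $\mathcal{G}(\textbf{W}_0+\Delta\textbf{W})\le \mathcal{G}(\textbf{W}_0)+2L_{\cL}\sup_x\|f_{\textbf{W}_0+\Delta\textbf{W}}(x)-f_{\textbf{W}_0}(x)\|$, which accounts for the factor $2L_{\cL}$. The symmetric argument with $\Delta\textbf{W}$ as the reference network gives the second alternative, and taking the minimum proves the statement. It therefore remains to show
\[
\sup_x\|f_{\textbf{W}_0+\Delta\textbf{W}}(x)-f_{\textbf{W}_0}(x)\|\le\Phi_{\Delta\textbf{W}}/(2L_{\cL}),\qquad \sup_x\|f_{\textbf{W}_0+\Delta\textbf{W}}(x)-f_{\Delta\textbf{W}}(x)\|\le\Phi_{\textbf{W}_0}/(2L_{\cL}).
\]

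\emph{Crude bound and expansion.} To get the stated binary-indexed form, I will bound $\|f_{\textbf{W}_0+\Delta\textbf{W}}(x)\|$ and expand it. By Assumption~\ref{ass:activationlipschitz}, $\|\sigma_i(u)\|\le L_{\sigma_i}\|u\|+\|\sigma_i(0)\|$. By induction on the layers, with the triangle inequality $\|(W_0^i+\Delta W^i)h\|\le(\|W_0^i\|_2+\|\Delta W^i\|_2)\|h\|$ and Assumption~\ref{ass:bounded feature vector},
\[
\|f_{\textbf{W}_0+\Delta\textbf{W}}(x)\|\le C\prod_{i=1}^L L_{\sigma_i}\big(\|W_0^i\|_2+\|\Delta W^i\|_2\big)+\sum_{\text{layers}}\|\sigma_{\cdot}(0)\|\prod_{\text{later layers}}L_{\sigma}\big(\|W_0\|_2+\|\Delta W\|_2\big).
\]
Expanding $\prod_i(\|W_0^i\|_2+\|\Delta W^i\|_2)$ gives $2^L$ monomials, one for each binary string saying which factor is taken at each layer. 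These are exactly the terms $\prod_j P(i,j)$, with the bits of $i$ read off by the floor/parity conditions. The bias-type terms from $\sigma(0)$ expand in the same way into the $F(i)$ terms, with the depth $\psi(i)=\lfloor\log_2 i\rfloor$ and the bits $H(i,j)$.

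\emph{Comparing with the component networks.} The proof rests on the bound $\|f_{\textbf{W}_0+\Delta\textbf{W}}(x)-f_{\textbf{W}_0}(x)\|\le\|f_{\textbf{W}_0+\Delta\textbf{W}}(x)\|+\|f_{\textbf{W}_0}(x)\|$. In the expansion, the pure-$W_0$ monomial bounds $\|f_{\textbf{W}_0}\|$. I then drop or account for that monomial, which is the index that is excluded in the sum $i=1,\dots,2^L-1$ for $\Phi_{\Delta\textbf{W}}$ and in $i=2,\dots,2^L$ for $\Phi_{\textbf{W}_0}$. I likewise exclude the $F$-indices $2^a-1$, or $2^a$, which correspond to pure-$W_0$ (respectively pure-$\Delta W$) bias chains. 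The cleaner alternative is a telescoping or hybrid argument: at each layer, split $(W_0+\Delta W)h=W_0h+\Delta Wh$ and recurse, so that the difference collects exactly those monomials containing at least one $\Delta W$ factor. This explains why precisely one index is missing.

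\emph{Main obstacle.} I expect the hardest step to be the bookkeeping. I must verify that the binary-index conventions ($\lfloor (i-1)/2^{L-1}\rfloor$ parity, $\lfloor i/2^{\psi(i)-j}\rfloor$ parity, and the excluded indices) match exactly the set of monomials produced by the recursion. I will first do $L=1,2$ by hand to fix the correspondence, then prove the general case by induction on $L$, appending one bit per added layer. If the indexing of $P$ as written does not enumerate distinct subsets, I would still have a valid bound by overcounting, since every term is nonnegative. So the inequality survives as long as each needed monomial appears at least once.
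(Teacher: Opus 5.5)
Your outer argument matches the paper. You bound the risk differences over $\nu$ and over $N$ by $L_{\cL}\sup_{\|x\|\le C}\|f_{\textbf{W}_0+\Delta\textbf{W}}(x)-f_{\textbf{W}_0}(x)\|$ to get the factor $2L_{\cL}$, and you obtain the second alternative by exchanging the roles of $\textbf{W}_0$ and $\Delta\textbf{W}$.

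The gap is the step you say the proof rests on. Bounding the difference by $\|f_{\textbf{W}_0+\Delta\textbf{W}}(x)\|+\|f_{\textbf{W}_0}(x)\|$ yields all $2^L$ monomials \emph{plus} a second copy of the pure-$W_0$ monomial. It also keeps $\|\sigma_L(0)\|$ twice, whereas $F(1)=\|\sigma_L(0)\|$ is excluded from $\Phi_{\Delta\textbf{W}}$. Nothing licenses ``dropping'' these terms: that bound does not vanish at $\Delta\textbf{W}=0$, so it cannot be reduced to $\Phi_{\Delta\textbf{W}}$. That route has to be abandoned.

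What works, and what the paper does, is your ``cleaner alternative''. Let $h_k',h_k$ be the hidden states of the fine-tuned and pretrained networks, with $h_0'=h_0=x$, $a_k=\|W_0^k\|_2$ and $b_k=\|\Delta W^k\|_2$. Assumption~\ref{ass:activationlipschitz} and the split $(W_0^k+\Delta W^k)h_{k-1}'-W_0^kh_{k-1}=\Delta W^kh_{k-1}'+W_0^k(h_{k-1}'-h_{k-1})$ give the coupled recursions
\begin{align*}
\|h_k'-h_k\|&\le L_{\sigma_k}\big(b_k\|h_{k-1}'\|+a_k\|h_{k-1}'-h_{k-1}\|\big),\\
\|h_k'\|&\le L_{\sigma_k}(a_k+b_k)\|h_{k-1}'\|+\|\sigma_k(0)\|.
\end{align*}
Induction on $L$, appending one layer at the output, then yields
\begin{align*}
\|h_L'-h_L\|\le{}& C\prod_{k=1}^L L_{\sigma_k}\Big(\prod_{k=1}^L(a_k+b_k)-\prod_{k=1}^L a_k\Big)\\
&+\sum_{m=1}^{L-1}\|\sigma_m(0)\|\prod_{k=m+1}^L L_{\sigma_k}\Big(\prod_{k=m+1}^L(a_k+b_k)-\prod_{k=m+1}^L a_k\Big).
\end{align*}
These are exactly the monomials with at least one $\Delta W$ factor: $2^L-1$ in the $C$-part and $2^L-L-1$ bias chains. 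The $\Delta\textbf{W}$-referenced case is identical with $a_k\leftrightarrow b_k$. The paper only displays $L=1,2,3$ and asserts the pattern, so carrying out this induction explicitly is an improvement.

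Your overcounting fallback does not rescue the indices as printed, because needed monomials are \emph{absent}, not merely repeated.
\begin{itemize}
\item With exponent $2^{L-1}$, $P(i,j)$ does not depend on $j$. Each $\prod_jP(i,j)$ is then a pure-$\Delta W$ or pure-$W_0$ product, and cross terms such as $\|W_0^2\|_2\|\Delta W^1\|_2$ never occur. You need $2^{L-j}$, as in the paper's proof.
\item Under the stated parity rule for $H$, $i=2^p$ encodes the pure-$W_0$ bias chain and $i=2^{p+1}-1$ the pure-$\Delta W$ one. For $L=2$, the only term in the $F$-sum of $\Phi_{\Delta\textbf{W}}$ is $F(2)=\|\sigma_1(0)\|L_{\sigma_2}\|W_0^2\|_2$, whereas the recursion produces $\|\sigma_1(0)\|L_{\sigma_2}\|\Delta W^2\|_2$.
\end{itemize}
So the excluded index sets $\{2^a-1\}$ and $\{2^a\}$ must be interchanged between $\Phi_{\Delta\textbf{W}}$ and $\Phi_{\textbf{W}_0}$, or equivalently the parity in $H$ flipped. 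The paper's proof carries the same mismatch. Your planned $L=1,2$ check will expose both problems, and the remedy is to correct the indices, not to appeal to overcounting.
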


\vspace{-0.5mm}
\smartparagraph{Intuition behind Theorem~\ref{theorem:nonlinearGenBound}.} Theorem~\ref{theorem:nonlinearGenBound} provides a general framework to find the generalization error of a fine-tuned model using only the generalization properties of either the pretrained backbone or the parameter update. The standalone terms, $\Phi_{\Delta W}$ and $\Phi_{W_0}$, consist of Lipschitz constants of the loss and layerwise activation, spectral norms of $\{\|W_0^{i}\|_2, \|\Delta W^{i}\|_2\}_{i \in [L]},$ and offset terms, $\|\sigma_{i'}({0})\|$ based on the recursive collapse of the difference of $\|f_{\textbf{W}_0+\Delta\textbf{W}}-f_{\textbf{W}_0}\|$; see Figure \ref{fig:AB_recursivecollapse}. 

\smartparagraph{Tightness of the bounds.} {In Theorem \ref{theorem:nonlinearGenBound}, the combinatorial form may appear loose, given that it has $2^L$ components. However, this is simply the expansion of a product-of-sums; each layer contributes either its base spectral magnitude or its update spectral magnitude. This frames Theorem~\ref{theorem:nonlinearGenBound} within a spectral control perspective, where the generalization behavior is upper-bounded by the largest singular values among layers, offering insight by making spectral control a design handle when fine-tuning models. Additionally, it provides a comparison framework across variants, where spectral control can be combined with PAC-Bayes, information-theoretic, or other matrix-based generalization approaches; see \S\ref{appendix:special cases}. In \S\ref{appendix:tightness of bound}, we show that the bounds provided in Theorem \ref{theorem:nonlinearGenBound} are tight.}

\smartparagraph{Theorem~\ref{theorem:nonlinearGenBound} applied to attention mechanism.} Theorem~\ref{theorem:nonlinearGenBound} can handle advanced architectures, such as Transformers, whose main working component is attention, by considering inputs ${x\in\cX}$ after embedding. The key step is showing that multi-head attention (MHA) blocks can be expressed as compositions of linear maps 
and Lipschitz operators.~See~Theorem~\ref{thm:attn_head} in \S\ref{subsubsec:transformer} for the full derivation.

\smartparagraph{Theorem \ref{theorem:nonlinearGenBound} under special conditions.} The generalization upper bound $\cG(\wdelw)$ in Theorem \ref{theorem:nonlinearGenBound} contains two terms: \myNum{i} $\mathcal{G}(\textbf{W}_0)+\Phi_{\Delta \textbf{W}}$ and \myNum{ii} $\mathcal{G}(\Delta \textbf{W})+\Phi_{ \textbf{W}_0}.$ We can adapt some additional assumptions on loss, quantization bit-width, size of fine-tuning datasets, and layer dimensions; see \S\ref{subsubsec:genba} and bound  $\mathcal{G}(\textbf{W}_0)$ and $\mathcal{G}(\Delta \textbf{W})$. 

\myNum{i}\smartparagraph{Bounding $\mathcal{G}(\textbf{W}_0)$.} We use the PAC-Bayes generalization bound for fine-tuning by \cite{regularizationfinetuning}; see Theorem \ref{theorem:PAC-Bayes} in \S \ref{subsubsec:genba}. The loss function, $\cL$, is bounded by $C_2$. Since $\|W_0^{(i)}-W_0^{(i)}\| = 0$, for all $i\in[L]$, in Theorem \ref{theorem:PAC-Bayes}, we obtain $Q_i := 0$. Hence, $\mathcal{G}(\textbf{W}_0) \leq \epsilon + C_2\sqrt{|N|^{-1}(3\ln|N|\delta^{-1}+8)},$ holds with probability at least $1-2\delta$, where $\epsilon, \delta>0,$  are  arbitrary small numbers.~Together with Theorem \ref{theorem:nonlinearGenBound}, we arrive at 
$\mathcal{G}(\textbf{W}_0+\Delta \textbf{W}) \leq \epsilon + C_2\sqrt{|N|^{-1}(3\ln|N|\delta^{-1}+8)} +\Phi_{\Delta \textbf{W}}$; we quote this result formally in Theorem \ref{theorem:PAC-Extension} in \S\ref{subsubsec:genba}. 

\myNum{ii}\smartparagraph{Bounding $\mathcal{G}(\Delta \textbf{W})$.}
For a DNN, let $q$ be the training bitwidth; we use $q=32$ in this work. We assume $\mathcal{L}$ is $\sigma$-sub-gaussian for all $\mathbf{W}$ and use the generalization upper bound of $\cG(\Delta \textbf{W})$ as in Lemma 4.5 of~\cite{fixA}, for each PEFT method. Lemma 4.5 in~\cite{fixA} only bounds the generalization error of the fine-tuned update, $\Delta \mathbf{W}$, that is, $\cG(\Delta \textbf{W}),$ while keeping the pretrained weights $\mathbf{W}_0$ fixed. In contrast, Theorem \ref{theorem:nonlinearGenBound} is a standalone general result, not an extension. It gives an explicit general bound for the full model $\mathbf{W}_0+\Delta \mathbf{W}$ and tells how the pretrained backbone and the fine-tuned update interact, which is characterized by $\Phi_{\mathbf{W}_0}$. Together with Theorem~\ref{theorem:nonlinearGenBound}, we arrive at $\mathcal{G}(\textbf{W}_0+\Delta \textbf{W}) \leq \Phi_{\w_0}+\cG (\textbf{BA})$, where $\cG (\textbf{BA})$ represents the generalization error of different LoRA variants; see Table~\ref{tab:upperbounds} and \S~\ref{subsubsec:genba}. Table~\ref{tab:upperbounds} demonstrates the generalization error upper bounds of different PEFT methods. In practice, some DNN models may deviate from them; see Tables \ref{tab:gen-error-mini} and \ref{tab:combined}. %\hl{Do we want to keep this sentence here?} 
\begin{table*}[t]
  \centering
  \tiny
  \setlength{\tabcolsep}{2pt}
  \renewcommand{\arraystretch}{0.8}
  \caption{\small{Performance of fine-tuned models with adapter rank $r=16$.~We use \textcolor{codegreen}{green}, \textcolor{red}{red}, and \textcolor{blue}{blue} to indicate the best, second best, and third best result. For the sparse variants, \reddown~indicates the accuracy drop percentage compared to the best. Some results are deferred to the Appendix; see Table~\ref{tab:extendedMo-accuracy-table}.}}
  \label{tab:full-accuracy-table}
  \vspace{0mm}
\begin{tabular}{@{} ll *{2}{r} *{4}{r} | *{4}{r} @{} }
  \toprule
  \multirow{2}{*}{\textbf{Model}} & \multirow{2}{*}{\textbf{Dataset}} & \multicolumn{2}{c}{\textbf{The Past}} & \multicolumn{4}{c}{\textbf{The Present}} & \multicolumn{4}{c}{\textbf{The Future}} \\
  \cmidrule(lr){3-4} \cmidrule(lr){5-8} \cmidrule(lr){9-12}
   &  & FFT & LoRA & CoLA & Asym & RAC & LoRA+ & cLA & $c^3$LA & r-cLA & r-$c^3$LA \\
  \midrule
    ViT-Tiny \cite{dosovitskiy2020vit} & OfficeHome \cite{officehome-dataset} & \textcolor{red}{79.68} & \textcolor{codegreen}{80.13} & \textcolor{blue}{79.54} & 78.02 & 78.55 & 77.87 & 78.01 (\reddown2.65\%) & 78.69 (\reddown1.80\%) & 78.01 (\reddown2.65\%) & 79.32 (\reddown1.01\%) \\
    & CIFAR10 \cite{cifar10-dataset} & \textcolor{codegreen}{96.59} & \textcolor{red}{96.17} & \textcolor{blue}{95.85} & 94.80 & 95.36 & 95.29 & 94.94 (\reddown1.71\%) & 95.23 (\reddown1.41\%) & 95.12 (\reddown1.52\%) & 95.22 (\reddown1.42\%) \\
    \midrule
    ViT-Base \cite{dosovitskiy2020vit} & OfficeHome
      & 86.42 & 88.96 & 89.01 & 89.00 & \textcolor{codegreen}{89.33} & 87.87 & \textcolor{red}{89.21} & \textcolor{blue}{89.18} & 88.83 & 89.17 \\
    & CIFAR10
      & 98.06 & 98.71 & 98.48 & 98.68 & \textcolor{red}{98.73} & 98.36 & 98.63 & 98.54 & \textcolor{codegreen}{98.78} & \textcolor{blue}{98.72} \\
    \midrule
    DeBERTa v2 XXL \cite{deberta} & MRPC \cite{wang2018glue} & \textcolor{blue}{87.49} & \textcolor{codegreen}{88.28} & 87.47 & 87.03 & 86.97 & \textcolor{red}{87.53} & 86.13 (\reddown2.44\%) & 85.11 (\reddown3.59\%) & 85.55 (\reddown3.09\%) & 85.15 (\reddown3.55\%) \\
    & TREC-50 \cite{li2006learning} & \textcolor{blue}{91.99} & 91.47 & 85.65 & \textcolor{codegreen}{92.26} & \textcolor{red}{92.02} & 84.92 & 91.73 (\reddown0.57\%) & 90.87 (\reddown1.51\%) & 91.67 (\reddown0.64\%) & 91.07 (\reddown1.29\%) \\
    & PAWS \cite{zhang2019paws} & 94.69 & \textcolor{blue}{94.97} & \textcolor{codegreen}{95.22} & {94.95} & 94.66 & \textcolor{red}{95.20} & 94.77 (\reddown0.47\%) & 94.90 (\reddown0.34\%) & 94.38 (\reddown0.88\%) & 94.71 (\reddown0.54\%) \\
    \midrule
    DeBERTa v3 Base \cite{debertav3} & MRPC & {85.80} & \textcolor{codegreen}{88.33} & \textcolor{red}{87.91} & \textcolor{blue}{86.40} & {86.34} & 84.51 & 84.43 (\reddown4.42\%) & {80.22} (\reddown9.18\%) & 85.42 (\reddown3.29\%) & 84.17 (\reddown4.71\%) \\
    % & RTE \cite{wang2018glue} & 82.47 & \textcolor{codegreen}{86.34} & \textcolor{blue}{83.80} & 78.94 & 79.40 & \textcolor{red}{84.72} & 76.00 (\reddown11.98\%) & 75.08 (\reddown13.04\%) & 79.40 (\reddown8.04\%) & 79.40 (\reddown8.04\%) \\
    & STS-B \cite{wang2018glue} & \textcolor{codegreen}{89.52} & {89.09} & \textcolor{red}{89.34} & 89.04 & 88.71 & \textcolor{blue}{89.15} & 87.56 (\reddown2.19\%) & 87.90 (\reddown1.81\%) & 88.05 (\reddown1.64\%) & 87.90 (\reddown1.81\%) \\
    & TREC-50 & \textcolor{red}{90.15} & {89.29} & \textcolor{blue}{89.88} & \textcolor{codegreen}{90.67} & {89.22} & 85.52 & 86.04 (\reddown5.11\%) & 87.96 (\reddown2.99\%) & 86.04 (\reddown5.11\%) & {87.70} (\reddown3.28\%) \\
    & PAWS & \textcolor{codegreen}{94.76} & \textcolor{red}{94.62} & {94.40} & {94.48} & {94.45} & {94.44} & 94.23 & \textcolor{blue}{94.60} & 94.36 & 94.42\\
    \midrule
    RoBERTa-Base \cite{roberta} & MRPC & \textcolor{codegreen}{87.40} & 86.34 & \textcolor{red}{86.76} & 86.40 & \textcolor{blue}{86.67} & 84.29 & 84.83 (\reddown2.94\%) & 84.39 (\reddown3.44\%) & 85.08 (\reddown2.65\%) & 85.33 (\reddown2.37\%) \\
    %& CoLA \cite{wang2018glue} & \textcolor{blue}{56.08} & \textcolor{red}{57.33} & \textcolor{codegreen}{58.39} & 52.35 & 53.76 & 50.40 & 51.86 (\reddown11.18\%) & 53.29 (\reddown8.73\%) & 52.56 (\reddown9.98\%) & 53.10 (\reddown9.06\%) \\
    \midrule
    RoBERTa-Large \cite{roberta} & MRPC
      & 87.57 & \textcolor{codegreen}{88.46} & \textcolor{red}{88.43} & 87.56 & 87.69 & 72.91 & \textcolor{blue}{87.81} & 86.36 & 86.24 & 86.59 \\
   & CoLA & \textcolor{codegreen}{64.58} & \textcolor{blue}{62.42} & 60.03 & \textcolor{red}{63.42} & 59.84 & 28.80 & 59.47 (\reddown7.91\%) & 59.60 (\reddown7.71\%) & 58.60 (\reddown9.26\%) & 60.24 (\reddown6.72\%) \\
    \midrule
    TinyLlama \cite{zhang2024tinyllama} 
    %& OpenBookQA \cite{openbookQA} & \textcolor{codegreen}{55.47} & 52.41 & \textcolor{blue}{52.47} & 45.96 & 47.59 & \textcolor{red}{53.26} & 44.92(\reddown19.02\%) & 45.12(\reddown18.66\%) & 47.07(\reddown15.14\%) & 27.34(\reddown50.71\%) \\
    & FOLIO \cite{han2022folio} & \textcolor{codegreen}{60.71} & 57.59 & {59.40} & {58.33} & 55.45 & 54.17 & \textcolor{blue}{58.97} & 58.01 & 54.81 & \textcolor{red}{59.82}\\
    & LogiQA \cite{liu2020logiqa} & \textcolor{codegreen}{47.54} & 41.54 & \textcolor{blue}{43.70} & 41.50 & 40.86 & \textcolor{red}{45.83} & 39.09 (\reddown17.77\%) & 39.30 (\reddown17.33\%) & 39.09 (\reddown17.77\%) & 39.31 (\reddown17.31\%) \\
    & CLUTRR \cite{sinha2019clutrr} & \textcolor{codegreen}{42.01} & 37.44 & \textcolor{red}{39.38} & 37.98 & 37.98 & 38.10 & \textcolor{blue}{39.12} & 37.79 & 36.23 & 37.03\\
    \midrule
    Llama3-8B~\cite{grattafiori2024llama} & OpenBookQA & \textcolor{codegreen}{88.80} & \textcolor{blue}{87.53} & 86.47 & \textcolor{red}{88.47} & 87.33 & 86.87 & 87.33(\reddown1.65\%) & 85.07(\reddown4.20\%) & 86.07(\reddown3.07\%) & 53.69(\reddown39.54\%) \\
    & CLUTRR & 50.29 & 48.7 & 47.65 & 51.69 & 49.65 & \textcolor{blue}{52.89} & \textcolor{codegreen}{55.53} & {52.04} & \textcolor{red}{54.9} & 49.94\\
    \midrule
    % DeepseekCoder \cite{guo2024deepseekcoder} & DJANGO \cite{oda2015learning} & 22.73 & 23.60 & 19.79 & \textcolor{codegreen}{35.12} & \textcolor{red}{30.27} & \textcolor{blue}{27.27} & 7.83 (\reddown77.71\%) & 19.48 (\reddown44.53\%) & {19.36} (\reddown44.87\%) & 15.34 (\reddown56.32\%) \\
    % \midrule
    GPT2-Small \cite{gpt2} & E2E \cite{novikova2017e2e} & \textcolor{codegreen}{2.98} & \textcolor{blue}{3.18} & 3.29 & 3.36 & 3.34 & \textcolor{red}{3.23} & 3.34(\textcolor{red}{$\uparrow$}12.08\%) & 3.29(\textcolor{red}{$\uparrow$}10.4\%) & 3.30(\textcolor{red}{$\uparrow$}10.7\%) & 3.29(\textcolor{red}{$\uparrow$}10.4\%) \\
  %\midrule
  \bottomrule
  \end{tabular}
  \vspace{0mm}
\end{table*}
%\eli{Move these to appendix: DeepseekCoder on DJANGO, DeBERTa v3 Base on RTE, the performance of DeepSeekCoder is a surprising case of its own which requires much deeper understanding of the interplay between rank and column space expressivity, due to the complexity and space limitations we move it to the appendix. In the extra space add the comment about DeepseekCoder (and add a comment of full table in appendix, same idea with generalization). Maybe at the BEGINNING of the empirical study section bring a comment similar to the "no method works the best" rebuttal from ICML.}

%%% OLD WORKING CAN SWITCH TO IF CURRENT WORDING NOT CLEAR.
% let $\mathcal{D}^i=\{A^{1,i},A^{2,i},...,A^{k,i}\}$, $A^{j,i} = \left[ \mathbf{0}_{r \times jr} \;\middle|\; I_r \;\middle|\; \mathbf{0}_{r \times (d_\text{in} - (j+1)r)} \right]\in\R^{r\times n_i}.$ Then, we have the following result: 
% \begin{proposition}\label{cor: $C^3LA$ eigen value}
% Let $A^i\in\mathbb{R}^{r\times n_i}$ with $n_i = rk$ for some $k\in\mathbb{N}$. Then for $c^3$LA and \emph{random-cLA},$\lambda^{H,i}_{\rm min} = \frac{r}{n_i}.$
% \end{proposition}

%\eli{switched to proposition since I'm no longer using it to prove Theorem 4 but to apply it?}

%But can we do better, so that we can analyze all PEFT variants discussed in this paper under a unified framework?
% \aritra{Where is the convergence result? What can we comment on COLA, cLA, and Assym LoRA's  convergence?}

\vspace{0mm}
\section{Quantitative Evaluation}\label{sec:empirical results}
\vspace{0mm}
%It is well known that fine-tuning may or may not be optimal, depending on the actual pre-trained model, datasets used, and a multitude of other factors \cite{PEFT}. Our extensive experimental study of 11 fine-tuning methods confirms that. Moreover, the unpredictable performance of LoRA-based PEFT methods suggests that it would be advantageous to use their cheaper variants for cost reduction and better generalizability of pre-trained models.

Our extensive experimental study of 11 fine-tuning methods confirms that fine-tuning may or may not be optimal, depending on the actual pre-trained model, datasets used, and a multitude of other factors \cite{PEFT}. Hence, it is better to use the cheaper LoRA variants for cost reduction and better generalizability. % of pre-trained models.

\smartparagraph{Implementation details and models used.} Our empirical evaluation encompasses 10 pretrained models: \myNum{i} DeBERTa v3 Base, \myNum{ii} DeBERTa v2 XXL, \myNum{iii} GPT2-small, \myNum{iv} RoBERTa Base, \myNum{v} RoBERTa Large, \myNum{vi} DeepseekCoder-1.3B-base, \myNum{vii} TinyLlama-1.1B, \myNum{viii} Llama 3-8B, \myNum{ix} ViT Base, and \myNum{x} ViT-Tiny. See Table~\ref{tab:task-summary} in~\S\ref{subsec:implementation} for a detailed summary of the models and Table~\ref{tab:task-hyperparams} for implementation details and reproducibility. We report the lowest validation loss epoch for each model. We report additional ablation studies to justify the choices of hyperparameters in Table \ref{tab:full-accuracy-table}, such as learning rate, rank, scaling factor, and chain-reset in \S\ref{subsec:ablation-studies}, spanning Tables \ref{tab:deberta3ablation}--\ref{tab:vit-chain_reset}.

\smartparagraph{Fine-tuning tasks and datasets.} %We perform 4 different tasks:
\myNum{i} \smartparagraph{Natural Language Processing (NLP).} We use {PAWS}, {TREC-50}, and various {GLUE} benchmarks, including {MRPC}, {CoLA}, {STS-B}, and {RTE} for NLP tasks. \myNum{ii} \smartparagraph{Image Classification.} We fine-tuned on {OfficeHome} and {CIFAR-10}. \myNum{iii}\smartparagraph{Coding Generation.}~Code generation presents unique challenges; minor deviations can lead to runtime errors or semantic mismatches. There is relatively limited LoRA-focused literature on programming tasks; we evaluate how different LoRA variants adapt to these tasks on {DJANGO}, and report results using {Exact Match~(EM)}. \myNum{iv} \smartparagraph{Logical Reasoning.} We use {OpenBookQA} for elementary science multiple-choice reasoning, {FOLIO} for natural language reasoning with first-order logic, {LogiQA} for logical comprehension, and {CLUTRR} for compositional relational reasoning from text. %See Table~\ref{tab:task-summary} for a summary and Table~\ref{tab:task-hyperparams} for hyperparameter configuration. 

\begin{table*}[t]
  \centering
  \scriptsize
  \setlength{\tabcolsep}{2pt}
  \renewcommand{\arraystretch}{0.8}
  \caption{\small{Empirical generalization error, $\cG(\textbf{W})$, of the fine-tuning methods over various models and datasets.}}
  \label{tab:gen-error-mini}
  \vspace{0mm}
\begin{tabular}{@{} ll *{2}{r}  *{4}{r} | *{4}{r} @{} }
  \toprule
   \multirow{2}{*}{\textbf{Model}} & \multirow{2}{*}{\textbf{Dataset}} & \multicolumn{2}{c}{\textbf{The Past}} & \multicolumn{4}{c}{\textbf{The Present}} & \multicolumn{4}{c}{\textbf{The Future}} \\
  \cmidrule(lr){3-4} \cmidrule(lr){5-8} \cmidrule(lr){9-12}
   &  & FFT & LoRA & CoLA & Asym & RAC & LoRA+ & cLA & $c^3$LA & r-cLA & r-$c^3$LA \\
  \midrule
    ViT-Tiny \cite{dosovitskiy2020vit} & OfficeHome & 4.85$e^{-1}$ & 6.96$e^{-2}$ & \textcolor{codegreen}{9.55$e^{-3}$} & 7.22$e^{-2}$ & 6.17$e^{-2}$ & 7.39$e^{-2}$ & \textcolor{red}{1.98$e^{-2}$} & 3.40$e^{-2}$ & \textcolor{blue}{2.16$e^{-2}$} & 3.51$e^{-2}$ \\
    %& CIFAR-10 & \textcolor{codegreen}{1.42$e^{-1}$} & \textcolor{red}{2.64$e^{-1}$} & 2.87$e^{-1}$ & 3.36$e^{-1}$ & 3.18$e^{-1}$ & \textcolor{blue}{2.80$e^{-1}$} & 3.13$e^{-1}$ & 3.03$e^{-1}$ & 3.12$e^{-1}$ & 2.92$e^{-1}$ \\
    \midrule
   %ViT-Base \cite{dosovitskiy2020vit} & OfficeHome & 3.66$e^{-1}$ & 1.07$e^{-1}$ & \textcolor{blue}{1.43$e^{-2}$} & \textcolor{codegreen}{8.52$e^{-3}$} & \textcolor{red}{1.02$e^{-2}$} & 1.41$e^{-1}$ & 3.16$e^{-2}$ & 3.62$e^{-2}$ & 5.53$e^{-2}$ & 3.00$e^{-2}$ \\
    %& CIFAR-10 & \textcolor{codegreen}{9.98$e^{-2}$} & \textcolor{blue}{1.92$e^{-1}$} & 2.21$e^{-1}$ & 2.38$e^{-1}$ & 2.30$e^{-1}$ & \textcolor{red}{1.84$e^{-1}$} & 2.33$e^{-1}$ & 2.34$e^{-1}$ & 2.26$e^{-1}$ & 2.15$e^{-1}$ \\
    DeBERTa v2 XXL \cite{deberta} %& MRPC & 8.15$e^{-2}$ & \textcolor{red}{6.89$e^{-2}$} & \textcolor{codegreen}{6.53$e^{-2}$} & 8.09$e^{-2}$ & \textcolor{blue}{8.02$e^{-2}$} & 9.08$e^{-2}$ & 9.31$e^{-2}$ & 1.10$e^{-1}$ & 9.47$e^{-2}$ & 1.22$e^{-1}$\\
    %& TREC50 & 3.38$e^{-1}$& 2.36$e^{-1}$& \color{codegreen}{7.04$e^{-2}$} & \textcolor{blue}{1.53$e^{-1}$} & 2.24$e^{-1}$& \color{red}{1.36$e^{-1}$}& 1.85$e^{-1}$& 2.22$e^{-1}$& 1.93$e^{-1}$& 1.92$e^{-1}$\\
    & PAWS & 6.07$e^{-2}$& \textcolor{red}{1.99$e^{-2}$} & 3.63$e^{-2}$ & \textcolor{blue}{3.26$e^{-2}$} & 3.95$e^{-2}$ & 5.41$e^{-2}$ & 6.68$e^{-2}$ & 5.11$e^{-2}$ & \textcolor{codegreen}{1.98$e^{-2}$} & 6.99 $e^{-2}$\\
    \midrule
    DeBERTa v3 Base \cite{debertav3} & MRPC & 1.06$e^{-1}$ & 8.90$e^{-2}$ & {2.59$e^{-2}$} & 7.28$e^{-2}$ & 9.86$e^{-2}$ & \textcolor{red}{1.52$e^{-2}$} & {2.58$e^{-2}$} & \textcolor{codegreen}{8.52$e^{-3}$} & 1.16$e^{-1}$ & \textcolor{blue}{2.57$e^{-2}$}\\
    & TREC50 & 4.56$e^{-1}$ & 2.73$e^{-1}$ & 3.99$e^{-1}$ & \textcolor{blue}{2.16$e^{-1}$} & 2.67$e^{-1}$ & \textcolor{red}{2.61$e^{-2}$} & {2.25$e^{-1}$} & {3.70$e^{-1}$} & 3.36$e^{-1}$ & \textcolor{codegreen}{2.63$e^{-2}$}\\
%    & PAWS & 2.62$e^{-2}$ & 6.43$e^{-2}$ & \textcolor{codegreen}{2.40$e^{-2}$} & 6.27$e^{-2}$ & 8.17$e^{-2}$ & \textcolor{red}{5.55$e^{-2}$} & 7.39$e^{-2}$ & \textcolor{blue}{5.77$e^{-2}$} & 1.01$e^{-1}$ &  5.82$e^{-2}$\\
    \midrule
    %RoBERTa-Base \cite{roberta} %& MRPC & 9.48$e^{-1}$ & 6.01$e^{-1}$ & \textcolor{red}{2.05$e^{-1}$} & \textcolor{codegreen}{1.64$e^{-1}$} & \textcolor{blue}{2.20$e^{-1}$} & 5.33$e^{-1}$ & 4.37$e^{-1}$ & 3.78$e^{-1}$ & 3.35$e^{-1}$ & 3.21$e^{-1}$ \\
   % & CoLA & 1.39 & 7.74$e^{-1}$ & 4.04$e^{-1}$ & \textcolor{red}{2.22$e^{-1}$} & \textcolor{codegreen}{1.96$e^{-1}$} & 8.10$e^{-1}$ & 4.70$e^{-1}$ & 4.43$e^{-1}$ & 4.38$e^{-1}$ & \textcolor{blue}{4.01$e^{-1}$} \\
    %\midrule
    %RoBERTa-Large \cite{roberta} & MRPC & 7.29$e^{-1}$ & 4.64$e^{-1}$ & 4.71$e^{-1}$ & \textcolor{blue}{2.77$e^{-1}$} & \textcolor{red}{2.68$e^{-1}$} & \textcolor{codegreen}{2.64$e^{-1}$} & 6.54$e^{-1}$ & 5.57$e^{-1}$ & 5.27$e^{-1}$ & 3.84$e^{-1}$ \\
    %& CoLA & 8.06$e^{-1}$ & 4.25$e^{-1}$ & 4.18$e^{-1}$ & \textcolor{blue}{2.36$e^{-1}$} & \textcolor{codegreen}{1.75$e^{-1}$} & \textcolor{red}{2.28$e^{-1}$} & 4.96$e^{-1}$ & 4.56$e^{-1}$ & 6.14$e^{-1}$ & 4.05$e^{-1}$ \\
    TinyLlama  \cite{zhang2024tinyllama} & OpenBookQA & \textcolor{blue}{1.78$e^{-1}$} & 2.82$e^{-1}$ & 3.41$e^{-1}$ & {2.15$e^{-1}$} & 1.86$e^{-1}$ & 2.07$e^{-1}$ & \textcolor{red}{1.51$e^{-1}$} & 2.20$e^{-1}$ & 3.16$e^{-1}$ & \textcolor{codegreen}{7.59$e^{-2}$} \\
    & FOLIO & {1.82$e^{-1}$} & 2.37$e^{-1}$ & {2.17$e^{-1}$} &  \textcolor{blue}{1.75$e^{-1}$} & 1.93$e^{-1}$ & \textcolor{codegreen}{5.11$e^{-2}$} & 2.35$e^{-1}$ & 1.91$e^{-1}$ & \textcolor{red}{1.05$e^{-1}$} & {2.49$e^{-1}$}\\
    %& LogiQA & 3.61$e^{-1}$ & \textcolor{codegreen}{6.12$e^{-3}$} & 1.45$e^{-1}$ & \textcolor{red}{1.16$e^{-2}$} & 1.75$e^{-1}$ & 2.37$e^{-1}$ & 8.60$e^{-2}$ & 1.1$e^{-1}$ & 6.64$e^{-2}$ & \textcolor{blue}{6.25$e^{-2}$}\\
    & CLUTRR & 4.  29 & 2.25 & \textcolor{codegreen}{1.55} & 2.34 & 2.27 & 5.48 & \textcolor{red}{2.16} & \textcolor{blue}{2.19} & 2.59 & 4.23\\
    \midrule
    Llama3-8B~\cite{grattafiori2024llama}
    & CLUTRR & \textcolor{red}{2.53} & 2.66 & 2.97 & 2.9 & 5.49 & \textcolor{blue}{2.65} & 2.69 & 5.02 & \textcolor{codegreen}{2.51} & 4.33 \\
    \midrule
    DeepseekCoder \cite{guo2024deepseekcoder} & DJANGO & \textcolor{red}{3.48$e^{-2}$} & {4.65$e^{-2}$} & \textcolor{codegreen}{3.4$e^{-2}$} & 5.16$e^{-2}$ & 4.64$e^{-2}$ & {3.87$e^{-2}$} & {4.19$e^{-2}$} & {3.89$e^{-2}$} & {3.64$e^{-2}$} & \textcolor{blue}{3.62$e^{-2}$}\\
    \midrule
    GPT2-Small~\cite{gpt2} & E2E & \textcolor{codegreen}{1.65$e^{-1}$} & 1.93$e^{-1}$ & 1.85$e^{-1}$ & 1.83$e^{-1}$ & 1.85$e^{-1}$ & 1.87$e^{-1}$ & \textcolor{red}{1.77$e^{-1}$} & 1.82$e^{-1}$ & 1.88$e^{-1}$ & \textcolor{blue}{1.82$e^{-1}$} \\
  %\midrule
  \bottomrule
  \end{tabular}
  \vspace{0mm}
\end{table*}

\vspace{0mm}
\subsection{Quality of the Fine-Tuned Models}\label{subsec:quality_of_models}
\vspace{0mm}
In Table~\ref{tab:full-accuracy-table}, we present fine-tuning performance of various models with FFT and LoRA-based PEFTs. For the CoLA dataset, we report the Matthews Correlation Coefficient (higher is better)~\cite{mcc}. For reporting GPT2-small's results, we use perplexity (lower is better); for the rest of the models and datasets, we report test accuracies (higher is better). Each model is trained over 3 seeds, and we average the results. We find that no single method substantially outperforms the others for adapting the model to their downstream tasks, including FFT, which confirms the previous findings in \cite{PEFT}. In many cases, FFT performs rather poorly (e.g., ViT-Base on OfficeHome, DeBERTa v3 on RTE, DeepseekCoder on DJANGO). The sparsity-induced SOTA LoRA variants outperform FFT and LoRA in some tasks by a large margin~(e.g., ViT-Base on OfficeHome, DeBERTA v3 on MRPC); in many cases, their performance drop is modest. By reducing the memory footprint, the sparse PEFT methods perform well for large models (e.g., Llama 3), even with low batch sizes and short sequence lengths. However, the sparse variants cannot always produce the best accuracy in low-epoch fine-tuning, but they still generalize well; see Table~\ref{tab:combined}. Even when the sparse PEFT methods perform undesirably, their performance improves significantly by increasing the rank; see cLA's substantial EM improvement when fine-tuning DeepseekCoder for Django with a higher rank. cLA's parent method, Asym LoRA, performs well in a lower rank budget; this trend switches at a higher rank. It is a surprising case of its own, which requires a deeper understanding of the rank and column space interplay; we have a dedicated discussion on DeepSeek's performance in~\S\ref{subsec:deepseekcoder_perf}. This suggests that when fine-tuning a model for a downstream task, it may be optimal to select a fine-tuning method based on its other characteristics and user-specific needs, rather than just the generated accuracy.~To highlight this, in \S\ref{sec:performance analysis}, we analyze the performance of each method based on training time, generalizability, and robustness for adapting to further downstream tasks. Although the sparse variants do not reduce the number of trainable parameters compared to their non-sparse LoRA counterparts, they \emph{reduce training time by 5-10\% and peak GPU memory by 5-15\%, with a naïve, non-optimized, sparse implementation}; see throughput, peak memory, and runtime in \S\ref{subsec:efficiency}. %, Tables \ref{tab:combined_metrics_all}-\ref{tab:combined_metrics_qv}. 

\subsection{Generalizability of the Fine-Tuned Models}\label{sec:generalizability} 
%\vspace{-1mm}

The generalization error, $\cG(\textbf{W})$ (Definition \ref{definition:generalization}), is hard to realize in practice, as the true distribution of a feature space and label space, $\mathcal{X}\times\mathcal{Y}$, cannot be obtained. Therefore, we cannot use the theoretical bounds on $\cG(\textbf{W})$ in Table \ref{tab:upperbounds} without modification. Since test samples are i.i.d. from ($\mathcal{X} \times \mathcal{Y}$), as an alternative, the difference between the loss of a model on a collection of unseen test samples and the loss on its training set approximates how well the model generalizes to the true distribution of the instance space it was trained on. Therefore, we approximate ${\cG}(\textbf{W})\approx\mathbb{E}(\mathcal{L_{\rm test}}) - \mathcal{L_{\rm train}}.$ As the size of the test set increases, the difference approaches the actual ${\cG}(\textbf{W})$ of the model; see discussion in \S\ref{subsec:genconnect}. {We report the approximate generalizability of all fine-tuned models in Tables \ref{tab:gen-error-mini} via the epoch with the lowest validation loss averaged over three seeds to avoid overfitting; see additional results in \S\ref{subsec:genconnect}, Table \ref{tab:combined}.} {Since loss scales differently across various tasks, we interpret the loss gaps within the same task and model setting; any cross-task comparisons are presented for trend inspection rather than validation of the bounds. We report the normalized scores and their observed trends in \S~\ref{subsubsec:gen_norm}}. %\hl{Eli and Cristian, we need to understand the normalized G(W) results and see if they are a better fit in the main paper.}\aritra{12/24: ELi, please write down your comments that we discussed on 12/22 about normalized $\cG(W).$}
Drawing a connection from our theoretical upper bounds in Table \ref{tab:upperbounds}, we find PEFT methods with the same upper bounds perform similarly in practice. More precisely, cLA has a smaller upper bound on $\cG(\textbf{W})$ than r-$c^3$LA in practice, indicating the validity of theoretical upper bounds. This observation also holds for cLA and RAC, and c$^{3}$LA and Asymmetric LoRA pairs. On the other hand, cLA and r-cLA have the same upper bound on $\cG(\textbf{W})$, and they also perform almost similarly in practice. Nevertheless, there are some discrepancies, and we attribute them to the fact that Table \ref{tab:upperbounds} gives us an upper bound on $\cG(\textbf{W}).$ E.g., \myNum{i} although the upper bound on $\cG(\textbf{W})$ of Asymmetric LoRA is smaller than RAC by a factor of $\sqrt{k}$, they behave similarly in practice. \myNum{ii} Similarly, r-cLA performs marginally worse than RAC, although RAC has a higher theoretical bound on $\cG(\textbf{W})$. \myNum{iii} $c^3$LA and RAC-LoRA have similar theoretical bounds; in practice, we notice stronger generalization trends for $c^3$LA in comparison to all other variants. \myNum{iv} In an extreme case, r-$c^3$LA empirically outperforms r-cLA while having a higher theoretical bound on $\cG(\textbf{W}).$

\subsection{Performance Analysis}\label{sec:performance analysis}

With trained model quality and empirical $\cG(\textbf{W})$, we were curious to dissect the performance of different LoRA variants using two popular and practically useful tools, {\myNum{i} loss landscape \cite{losslandscape}, and \myNum{ii} intruder dimensions \cite{loravfft}, that assess a model's quality. We examine whether these tools can aid in our understanding of which fine-tuning method to use when a model and task are at hand, given that we measured their performance and empirical $\cG(\textbf{W})$ beforehand and have a basis for comparison. Instead of limiting these tools to a single task, we unleash them across tasks, models, and modalities.}

\myNum{i}\smartparagraph{3D-loss landscape} visualizes how a model's empirical loss differs under small parameter perturbations; see details in~\S\ref{appendix:loss-landscape}.~A sharper loss landscape indicates worse generalization, smoother landscapes indicate the PEFT method is more robust to initialization~\cite{losslandscape}.~In Figure~\ref{fig:roberta-officehome-landscapes}, the top row shows the loss-landscapes of ViT-Base, pretrained on {Imagenet-21K}, and fine-tuned on {OfficeHome}, % by cLA, $c^3$LA, LoRA, FFT, and RAC, 
while the bottom row shows the loss-landscapes of RoBERTa-Base, pretrained on a large corpus of English data and fine-tuned on CoLA. %by r-$c^3$LA, LoRA, CoLA, Asymmetric LoRA, and FFT.
~For ViT-Base, we used PCA directions, whereas for RoBERTa-Base, we used random directions; see~\S\ref{appendix:loss-landscape}, for their comparison. % of these two implementations. 
We present a direct comparison of non-chain LoRA methods (LoRA, Asymmetric LoRA, cLA) with their chain counterparts (CoLA, RAC-LoRA, $c^3$LA) in Figure~\ref{fig:direct-chain-method-comparison}. In \S\ref{appendix:loss-landscape}, we plotted the optimizer path in 2D contour plots.

Based on the loss landscapes' characteristics in \cite{losslandscape}, {for ViT-Base fine-tuned on OfficeHome, FFT would generalize worse with the worst test accuracy; Figure \ref{fig:roberta-officehome-landscapes} confirms this. But recall from Figure \ref{fig:teaser-fig-ViT-Cifar10}, for ViT-Base fine-tuned on CIFAR-10, FFT produced the lowest $\cG(\textbf{W})$, and yielded the worst test accuracy. In both cases, FFT produced the spikiest loss and small-volume minima.} We witnessed {from Figures \ref{fig:teaser-fig-ViT-Cifar10} and \ref{fig:roberta-officehome-landscapes}, chain methods (e.g., RAC, c$^3$LA), sharpen the minima of the fine-tuned DNN models,} and these sharper valleys indicate that these models should generalize worse. However, in practice, this is not the case. E.g., For ViT-Base, in Figure \ref{fig:roberta-officehome-landscapes}, RAC {and c$^3$LA have the lowest $\cG(\textbf{W})$, although cLA has a wider valley, its $\cG(\textbf{W})$ is higher, indicating it should generalize worse. Contrastingly, all three PEFT methods produced similar high-quality fine-tuned models on OfficeHome and CIFAR-10, albeit with slight differences.} %\hl{Eli, based on the above two paragraphs, write the results for Roberta-Base in Blue text.}
{This discrepancy between practice and theory is consistent across video and text model modalities. For RoBERTa-Base in Figure~\ref{fig:roberta-officehome-landscapes}, the chain methods, CoLA and r-$c^3$LA, produce sharper landscapes than the non-chain methods. Still, CoLA has a lower $\cG(\textbf{W}).$ }

% The loss landscape rhetoric, as we also observed in Figure \ref{fig:teaser-fig-ViT-Cifar10}, \textcolor{blue}{does not always match with our empirically evaluated $\cG(\textbf{W})$ of the fine-tuned models.}

\smartparagraph{Key Takeaway.} {Generalizability and model performance are two sides of one coin, as is generally agreed \cite{losslandscape}. However, we observed that this perspective can be conflicting for fine-tuned models, as the narrative surrounding loss-landscape sharpness versus empirically measured $\cG(\textbf{W})$ of these models is mostly contradictory. }  

%This is consistent with the intruder dimension analysis of the PEFT methods. %\hl{You will write the Messages. We also plotted the 2D contour plots to show the optimizer path; see Appendix.}

%\aritra{1. Loss landscape for ViT-Base on OfficeHome for r-cLA, r-C3LA, LoRA, and RAC. The rest of the methods we will put in the Appendix.}
\begin{figure*}
  \centering
  % SET ONE (OfficeHome PCA Fourset, LoRA + CoLA, Asym-LoRA + RAC-LoRA)
  \scalebox{0.99}{\includegraphics[width=0.95\textwidth, height=2in]{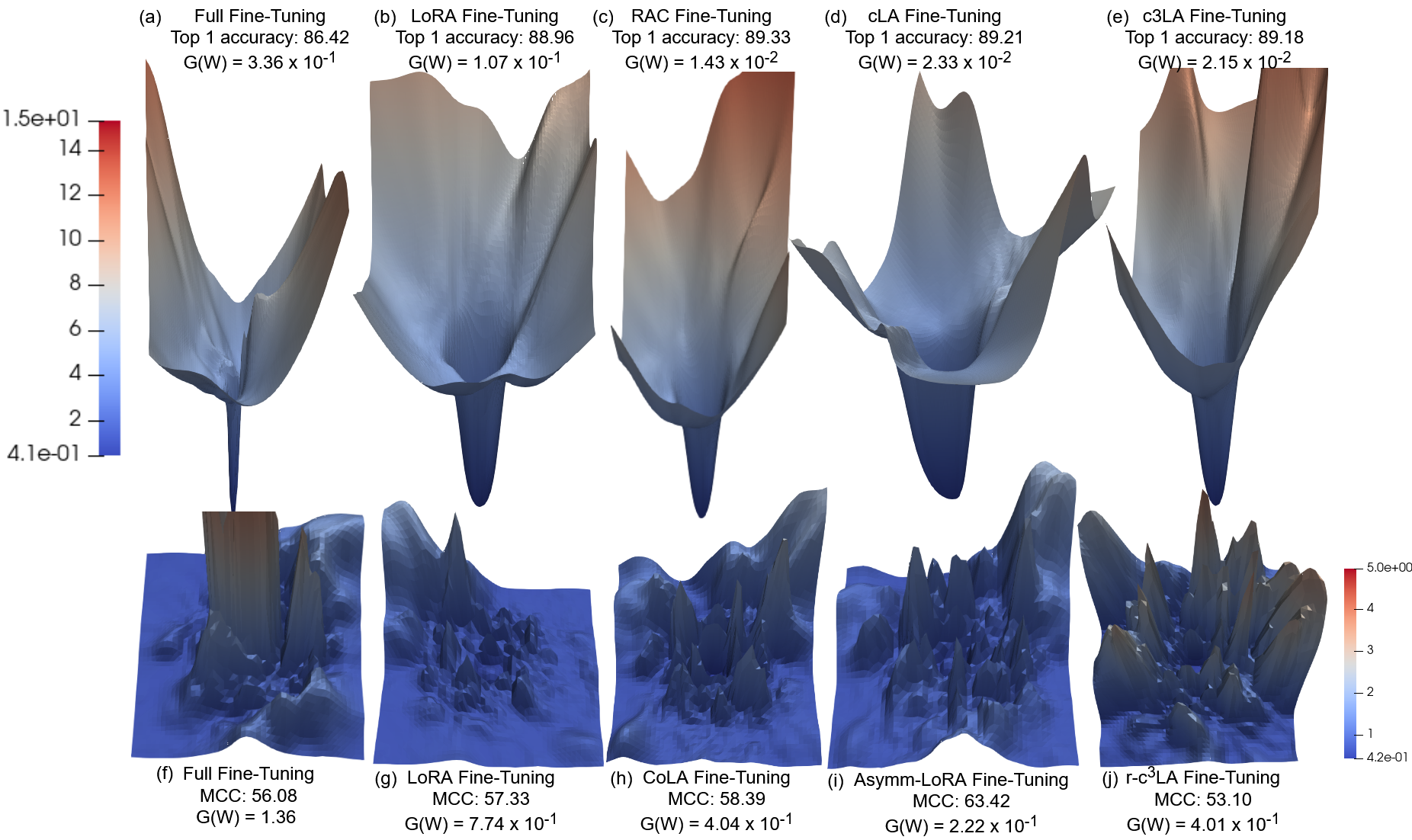}}
  \vspace{-2mm}
\caption{\small{Loss landscapes of {ViT-Base} fined tuned on {OfficeHome} (top row) with PCA directions, and {RoBERTa-Base} fine-tuned on {CoLA} (bottom row) with random directions. In both cases, we observe the worst generalization error, $\cG(\textbf{W})$, in (a) and (f), respectively, which are the spikiest landscapes in their class of models. Additionally, chain methods consistently produce spikier landscapes. }}
  \label{fig:roberta-officehome-landscapes}
\end{figure*}
\begin{figure*}
%\vspace{-2ex}
      \centering
	 \scalebox{0.95}{\begin{subfigure}{0.32\linewidth}
		\includegraphics[width=\linewidth]{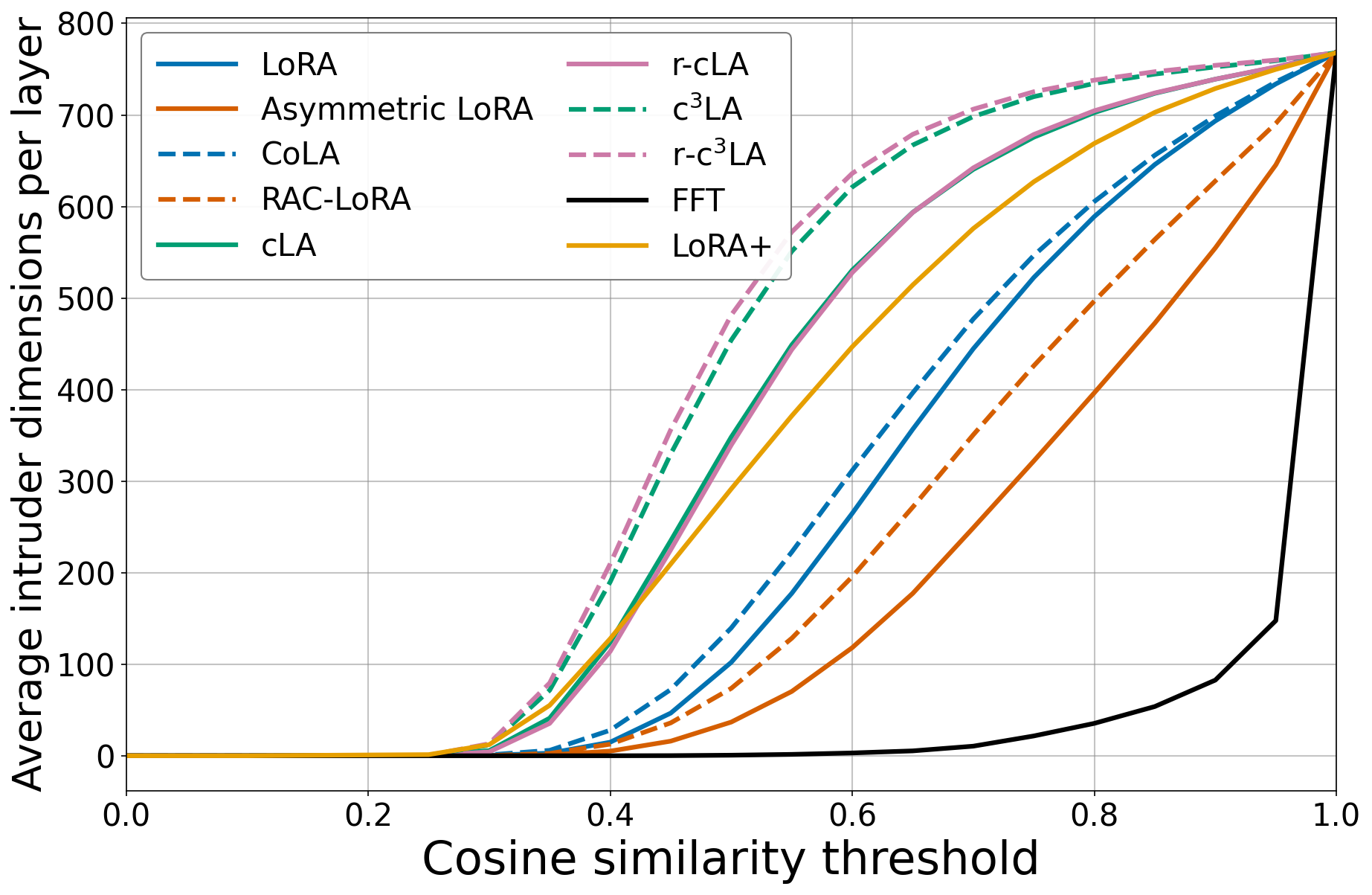}
		\caption{\tiny{RoBERTa-Base~(CoLA)}}
		\label{fig:RoBERTa-intruder-figures}
	\end{subfigure}
	\begin{subfigure}{0.32\linewidth}
		\includegraphics[width=\linewidth]{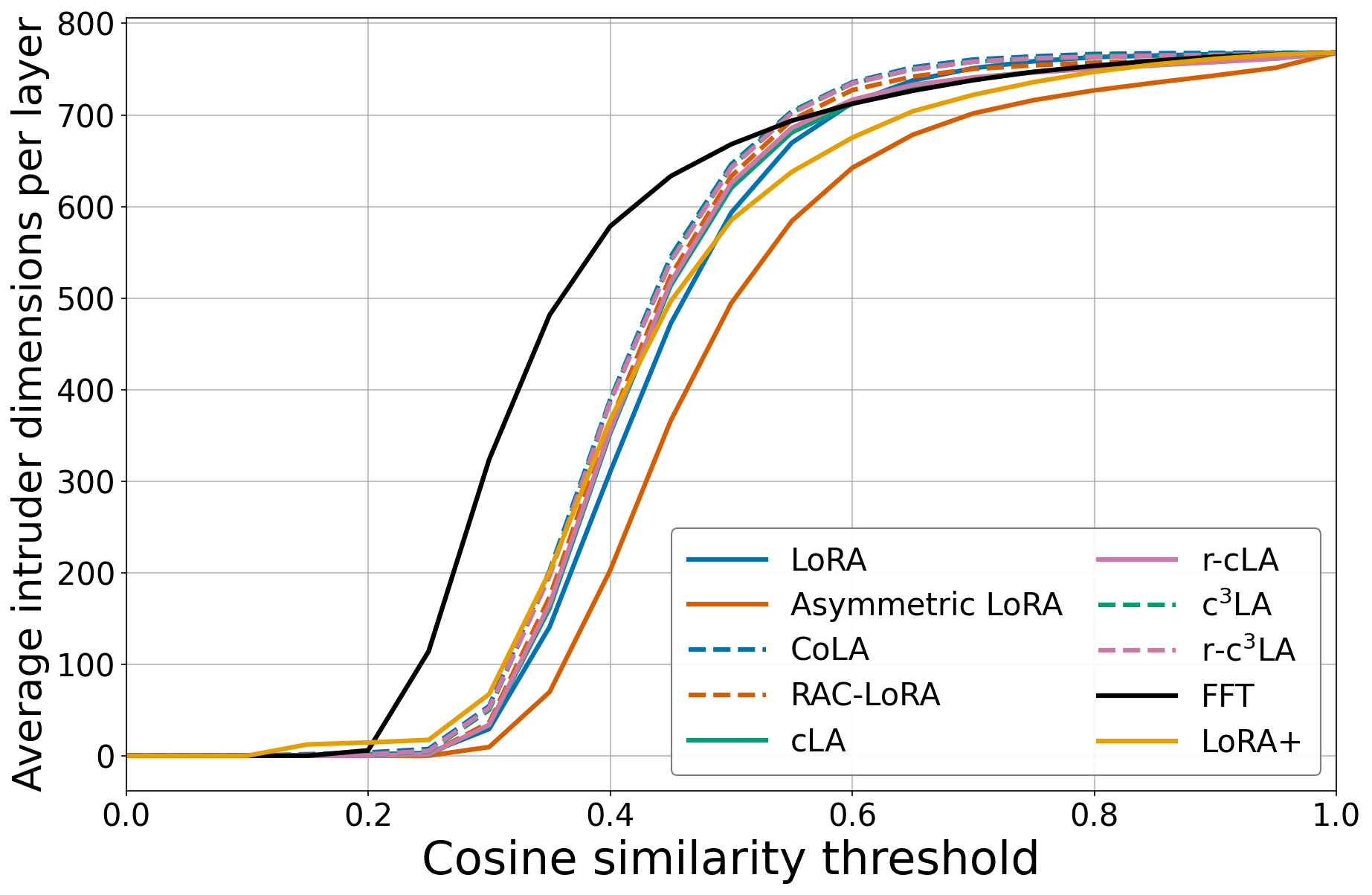}
		\caption{\tiny{ViT-Base~(OfficeHome)}}
		\label{fig:ViT-intruder-figures}
	\end{subfigure}
    \begin{subfigure}{0.32\linewidth}
		\includegraphics[width=\linewidth]{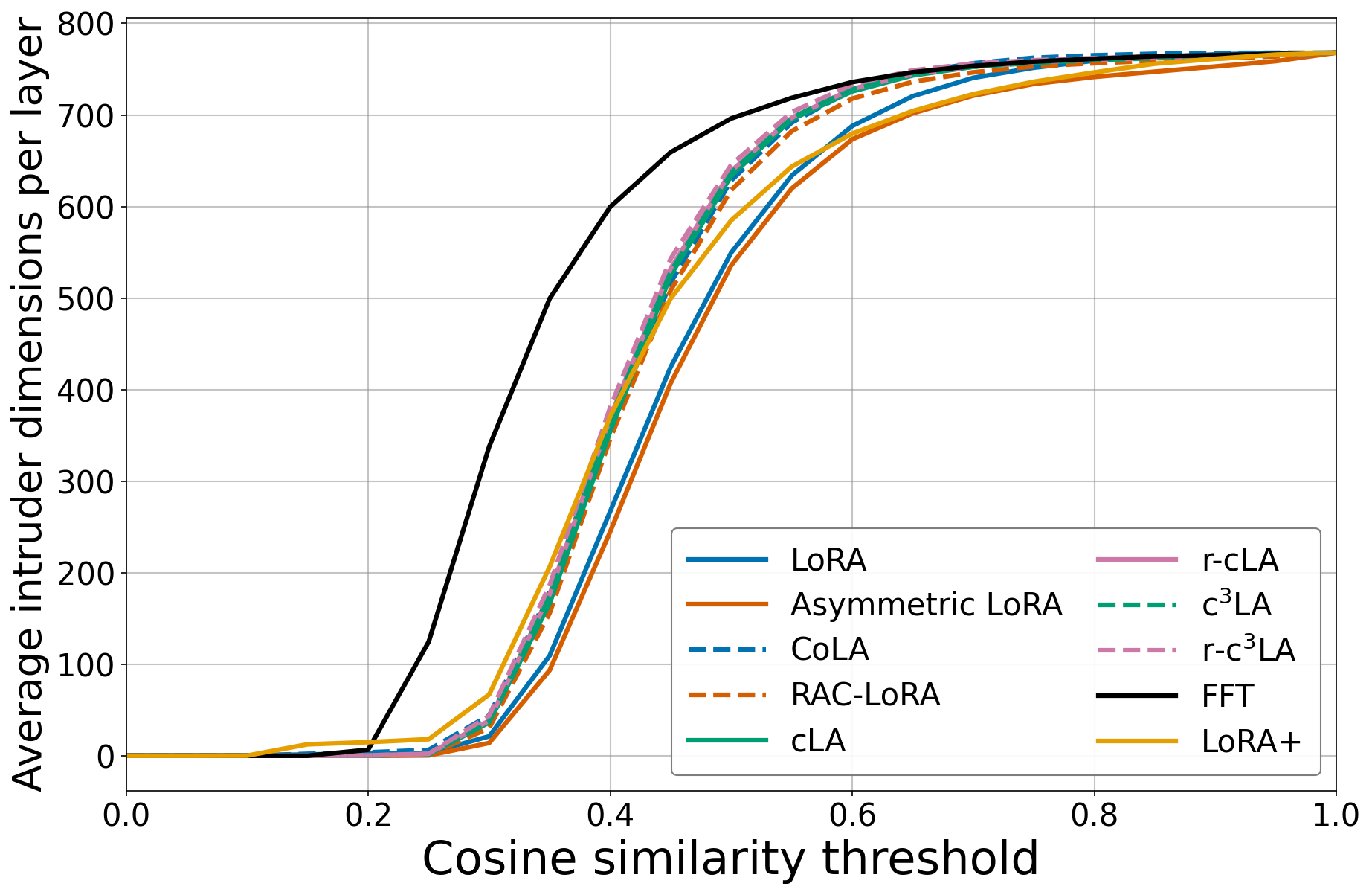}
		\caption{\tiny{ViT-Base~(CIFAR-10)}}
		\label{fig:vit-c10-intruder}
	\end{subfigure}}
	\caption{\small{The average number of intruder dimensions present in different fine-tuned models at the end of training. For each method that has a corresponding chain variant (LoRA to CoLA, Asymmetric LoRA to RAC-LoRA), their colors are the same where the chain method is a dashed line.}
    }\label{fig:intruder_dimension}
\end{figure*}

\myNum{ii}\smartparagraph{Intruder dimension} 
{\cite{loravfft} compares the performance between the fine-tuned models of LoRA and FFT.} Given the pretrained and fine-tuned models, $\textbf{W}_0$ and $\textbf{W}_0 + \Delta \textbf{W}$, the number of \emph{intruder dimensions} correlates with their performance on the pretraining task; \emph{higher intruder dimensions correlate to a worse performance.} % (see \S\ref{appendix:Cosine Similarity})}~\cite{loravfft}. 
We ask: \emph{Will forgetting less of the more diverse dataset indicate better generalizability?} and examine LoRA variants and FFT via this perspective, and how intruder count aligns with our empirical $\cG(\textbf{W})$. In Figure~\ref{fig:intruder_dimension}, we present the number of intruder dimensions present in FFT and various LoRA-based PEFT methods for RoBERTa-Base fine-tuned on CoLA and ViT-Base fine-tuned on OfficeHome and CIFAR-10 over varying threshold ranges, ${\varepsilon\in(0,1]}$. 
%As shown in the Figures~\ref{fig:RoBERTa-intruder-figures} and \ref{fig:ViT-intruder-figures},

%\hl{12/24: I do not understand most of this paragraph; we need to rewrite.} 
{In Figure~\ref{fig:intruder_dimension}(a), RoBERTa-Base fine-tuned on FFT has some of the least intruders yet has the worst generalizability; see Table~\ref{tab:gen-error-mini}. The methods that produce the least generalization error for RoBERTa-Base, Asymmetric LoRA, and RAC both produce relatively few intruders. Conversely, ViT-Base fine-tuned on both OfficeHome and CIFAR-10 via FFT produces some of the most intruders while having the best generalization for Cifar10, and relatively poor for OfficeHome; see Table~\ref{tab:combined}. For all three experiments, the chain variant of any LoRA PEFT method produces more intruders than its non-chain counterpart; see LoRA compared to CoLA, Asymmetric LoRA to RAC, cLA to c$^3$LA in Figure~\ref{fig:intruder_dimension}. This correlates with our loss landscapes, where chain variants produce sharper landscapes. However, the expected worse generalizability of these chain methods is not observed empirically.}

%The chain variant of each LoRA-based PEFT method produces more intruders than its non-chain counterpart. This effect is least pronounced in LoRA and CoLA, which produced almost the same number of intruders for RoBERTa. This is consistent with results in Figure \ref{fig:roberta-officehome-landscapes}---If CoLA produced more intruders than LoRA, it would never have a better $\cG(\textbf{W})$ than LoRA. But, for ViT-Base, this observation does not hold. Also, we note that RAC has the best $\cG(\textbf{W})$ on ViT-Base, while producing substantially more intruders than LoRA and LoRA+. Additionally, from Figure~\ref{fig:vit-c10-intruder}, we find that FFT has the highest intruder dimensions but the least $\cG(\textbf{W})$ (Table \ref{tab:combined}), LoRA+ has more than average intruder dimensions but the second best $\cG(\textbf{W})$ (Table \ref{tab:combined}); only LoRA's intruder dimensions and $\cG(\textbf{W})$ follow the correct trend for OfficeHome and CIFAR-10. 

%\hl{For Roberta-Base fine-tuned on COLA, FFT has fewer intruder dimensions than LoRA, and this follows the observation from Shuttleworth et al., 2024; LoRA has better $\cG(W)$ than FFT. When considering the extreme cases of this experiment, LoRA+ has the highest intruder dimensions and the second-worst $\cG(W)$, Asym has the lowest intruders, and the second-best $\cG(W)$, RAC has the best $\cG(W)$, but contains more intruders than FFT.}

\smartparagraph{Key Takeaway.}~Although a more promising tool than loss-landscape, and an excellent addition to analyze FFT and PEFT, intruder count does not always match with our empirically evaluated $\cG(\textbf{W})$ either. This is more prominent in the vision task. The original paper reports experiments involving only language models; we encourage the community to explore diverse modalities and models.

%and see how different DNN models fine-tuned for different tasks respond to this tool.} 

\subsubsection{Discussion}
{Using alternative diagnostics such as loss landscapes and intruder dimensions to analyze the fine-tuned model quality, we observe that these tools do not consistently align the empirical trends between test accuracy (Table~\ref{tab:full-accuracy-table}) and generalizability (Table~\ref{tab:gen-error-mini}). %They offer valuable insights in some cases. 
While they provide informative post hoc rationalization in some settings, their results are often ambiguous, producing false positive scenarios, suggesting good generalization even when $\cG(\mathbf{W})$ and test accuracy do not support it.
%For some models and modalities, these tools perform as desired. In other cases, the results are ambiguous. In particular, across tasks and datasets, these perspectives yield false positives, i.e., they may suggest that a method generalizes even when the observed $\cG(\mathbf{W})$ and obtained test accuracy do not support that. 
These discrepancies motivate us to present an alternative analysis that better aligns with empirically observed $\cG(\textbf{W})$ and the obtained accuracy. This perspective produces fewer false positives.} {We observe that the theoretical results in Table~\ref{tab:upperbounds} and the empirical results in Table~\ref{tab:gen-error-mini} are relatively comparable, and the strong generalizable models in Table~\ref{tab:gen-error-mini} typically perform well in terms of rendering higher accuracy in Table~\ref{tab:full-accuracy-table}. Consequently, %while proxy diagnostics can be informative, 
jointly measuring test accuracy and experimentally evaluating the resulting model's $\cG(\mathbf{W})$ provides a more consistent predictive criterion for selecting whether a fine-tuning method is suitable for a particular model, task, and specific dataset. In the era of artificial general intelligence, when we want a model to behave in a human-like way across many tasks, we conclude that it is better to choose PEFT methods that generalize well and are computationally efficient.}

%After using alternative diagnostics such as loss landscapes and intruder dimensions to analyze model quality, we observe that these tools do not consistently connect the empirical trends between test accuracy (Table 2) and generalizability (Table 3). They offer valuable insights in some cases. For some models and modalities, these tools perform as desired. In other cases, the results are ambiguous. In particular, across tasks and datasets, these perspectives yield false positives, i.e., they may suggest that a method generalizes even when the observed G(W) and obtained test accuracy do not support that. These discrepancies motivate us to present an alternative analysis that aligns well with our empirical observation on the G(W) and the obtained accuracy, and produces fewer false-positive implications. We observe that the theoretical results in Table 1 and the empirical results from Table 3 are relatively comparable. We also observe that the more generalizable models in Table 3 typically perform well in terms of rendered test accuracy in Table 2. Hence, while proxy diagnostics can be informative, jointly measuring test accuracy and experimentally evaluating the resulting model’s G(W) provides a more consistent criterion for selecting whether a fine-tuning method is suitable for a particular model and task on a specific dataset. In the advent of artificial general intelligence, when we want a model to behave human-like across many tasks, it is better to choose PEFT methods that generalize well and are computationally efficient.

%\myNum{iii}

\section{Conclusion}\label{sec:conclusion}
Through extensive benchmarking %spanning four fine-tuning tasks, ten models, and fourteen datasets, 
we show the complex, task-dependent nature of PEFT performance, including FFT. This aligns with prior findings. % that dissect LoRA's efficacy. 
%However, our core contributions are two-fold. 
%As the future of computing and hardware interfaces moves towards memory- and compute-efficiency, 
Our proposed sparse extensions of SOTA LoRA variants perform well across multiple modalities and models while substantially reducing training time and memory requirements. From a theoretical perspective, our sparsity-induced variants serve as a bridge between LoRA and PaCA, two different families of PEFT methods. While these sparse variants may require larger budgets to maintain robustness in certain settings (e.g., code generation with DeepSeekCoder), they remain overall effective, highlighting the importance of selecting fine-tuning methods based on task characteristics and user constraints. To support this, we analyzed various common LoRA PEFT variants through the lens of generalizability. %; {this is our second major contribution}. 
We show that, in theory, the sparse methods have the same generalization error upper bounds as their non-sparse counterparts, and closely track the empirical generalization trend across most models and modalities. %When comparing our theoretical generalization error bounds with experimentally observed generalization error, we find that our upper bounds closely track the empirical generalization trend across most models and modalities. 
This insight provides a more consistent and guided pathway for selecting PEFT methods, complementing existing diagnostic tools such as loss-landscape and intruder-dimension analyses.

\smartparagraph{Acknowledgment.} Aritra Dutta is partially supported by the Florida Department of Health Grant, AWD00007072, and the National Science Foundation Grant, 2321986.

%\addcontentsline{toc}{section}{Impact Statement}
%\input{Sections/Impact_statement.tex}

%%%%%%%%%%%%%%%%%%%%%%%%%%%%%%%%%%%%%%%%%%%%%%%%%%%%%%%%%%%%

\bibliographystyle{plain}
\bibliography{References}
\clearpage

%
%\appendix

\appendix
%\tableofcontents

%\phantomsection
%\addcontentsline{toc}{section}{Appendix}

\textbf{Organization of Appendix.} We organize the Appendix with the following structure:In \S\ref{Appendix:present}, we discuss the popular contemporary LoRA variants; this is a continuation of \S\ref{sec:present} of the main paper. In \S\ref{subsec:algorithm}, we give the pseudocode of our proposed LoRA variants, cLA, random-cLA, and $c^3$LA. \S\ref{Appendix:Theoretical_Results} contains the proofs to the theorems in \S\ref{sec:theory}. Particularly, it contains the proofs for %Theorem~\ref{theorem:layer-wise convergence}, 
Theorem~\ref{theorem:nonlinearGenBound} and Theorem~\ref{theorem:PAC-Extension}. 
%and Theorem~\ref{theorem:LoRa smoothness}. 
Additionally, this section contains the statement and proof of Theorem \ref{theorem:nonlinearGenBound} adapted to the attention mechanism; see Theorem \ref{thm:attn_head} in \S\ref{subsubsec:transformer}.
In \S\ref{appendix:benchmarking}, we discuss the implementation details and extend our empirical study by including various ablation studies and developing discussion topics. This section acts as an addendum to \S\ref{sec:empirical results} of the main paper. For notations used in this paper, we refer to Table \ref{tab:notation} in \S\ref{app:notation}. 

\section{The Present: Evolution of LoRA --- Continued}\label{Appendix:present}

\smartparagraph{Other LoRA variants.} There are other popular LoRA variants, such as HydraLoRA \cite{hydra}, designed for fine-tuning on datasets with high heterogeneity. LoRA-SB~\cite{lora-sb} simulates the FFT process within low-rank subspaces by adding a trainable $r\times r$ matrix $R$, initializing $BRA$ based on the SVD of the first step of FFT, and freezing $B, A$. QLoRA~\cite{qlora} fine-tunes quantized LLMs. AdaLoRA~\cite{adalora} uses varying rank by layer and uses an SVD initialization. SoRA~\cite{sora} introduces sparsity in the low-rank updates. DoRA~\cite{dora} separates fine-tuning the direction and magnitude components of the model. AutoLoRA~\cite{autolora} trains each LoRA update as a sum of rank-one matrices and learns which to discard during training. DyLoRA~\cite{dylora} concentrates the more important features in the first columns and rows of $B$ and $A$, respectively. LoRA-GA~\cite{wang2024lora} improves LoRA convergence by initializing low-rank adapters using a gradient approximation of full fine-tuned updates. LoRTA~\cite{hounie2024lorta} replaces matrix-based adapters with a low rank tensor factorization that unifies model updates across layers and attention heads. Flat-LoRA~\cite{li2024flat}  seeks flat minima in the full-parameter space to improve model generalization. BayesLoRA~\cite{doyle2025bayeslora} learns the effective adapter rank through Bayesian low-rank variational dropout. rsLoRA~\cite{rsLoRA} stabilizes LoRA training across ranks by correcting the scaling factor from $1/r$ to $1/\sqrt{r}$, preventing gradient collapse. GoRA~\cite{gora} is a gradient-based LoRA variant that performs adaptive rank allocation and initializes adapter weights using compressed gradients. {PaCA~\cite{paca} proposes fine-tuning randomly selected connections within pretrained weights to improve training speed and reduce overhead.} 

{The above-mentioned PEFT methods are important extensions of the LoRA family of PEFTs. Although we do not claim any technical or algorithmic novelty in designing another PEFT method in this paper, our research question is still a well-timed and important one. We asked, instead of task-specific tailoring, can we induce simple sparsity, which is easy to implement, in the present SOTA LoRA variants and witness their superior performance across diverse tasks; See Tables \ref{tab:full-accuracy-table}, \ref{tab:gen_bounds_lora_variants}, and Figure \ref{fig:cla_deepseek}.  At the same time, through a unified information-theoretic generalization error bound analysis framework, we demonstrate that the sparsity-induced variants share similar upper bounds with their parent PEFT methods. However, in practice, they may differ and show substantially better performance in many cases compared to their parent PEFT methods. In summary, except for PaCA, we share little to no conceptual proximity to other LoRA variants. In the next section, we draw this connection deeper, and based on our understanding of the SOTA LoRA variants, we introduce a few new artifacts to PaCA.}

\section{Relationship between PaCA and cLA}\label{appendix:paca-cla}
% \textcolor{blue}{PaCA and cLA are closely related, as they both exist as restricted column updates to a pre-trained model. However, they differ in where the restriction is imposed (directly inside the pre-trained backbone vs. via a LoRA parameterization) and the motivation behind their construction.}

% \begin{figure}
%     \centering
%     \includegraphics[width=0.8\linewidth]{Images/first_round_graph.png}
%     \caption{Representation of the evolution of fine-tuning methods under PEFT~\cite{PEFT}, connecting reparameterization and partial fine-tuning through our sparsity-induced LoRA variants.}
%     \label{fig:evolution_of_models}
% \end{figure}

{During fine-tuning, PEFT methods, such as LoRA adapters, must be processed with the backbone and cannot be merged, which limits their hardware utilization. PaCA~\cite{paca} is motivated by the training-time inefficiencies of adapter-based PEFT. PaCA fine-tunes a random subset of the pretrained model's columns explicitly and uses partial activations to form gradients, improving throughput and lowering activation-memory cost~\cite{paca}. In contrast, cLA stays within the LoRA family to analyze extreme sparse LoRA structures and for simple LoRA-based adapter deployment. cLA fixes its $A$ matrix to $[I_r|\textbf{0}]$, forcing a column update through $B$'s projection of $A$; $\Delta W=B[I_r|\textbf{0}]=[B|\textbf{0}].$ Additionally, an alternate variant of cLA has the opportunity to update the $A$ matrix and freeze the $B$ matrix, forcing cLA to fine-tune a restricted subset of the row space of the pretrained model. As fine-tuning $B$ is inherently more effective than fine-tuning $A$~\cite{fixA}, this suggests that applying PaCA to a fixed subset of the rows of the pretrained matrix instead of the columns would degrade performance.}

\subsection{Introducing New Artifacts to PaCA}\label{sec:pacavariant} 

Since we established a connection between the LoRA PEFT family and PaCA via the sparsity-induced LoRA variants, we were interested to see how some performance-enhancing artifacts of LoRA PEFT methods migrate to PaCA, such as chain construction to PaCA, which we call C-PaCA, where we periodically resample the columns PaCA is updating. To have a fairer comparison to our sparsity-induced method, cLA, we introduce a deterministic PaCA variant, D-PaCA, which updates the same columns as cLA. PaCA and r-cLA both update a random subset of $r$ columns of each adapted layer, and C-PaCA and r-c$^3$LA effectively implement chaining behavior to PaCA and r-cLA, respectively.

\smartparagraph{Experiments.} To showcase the performance of PaCA and the sparsity-induced LoRA variants compared to baseline SOTA LoRA variants and full fine-tuning, we fine-tune ViT-Tiny and ViT-Base on the OfficeHome and CIFAR-10 datasets, and RoBERTa-Base and RoBERTa-Large on the MRPC and CoLA datasets using the aforementioned methods with rank $r = 8$ and 30 epochs, averaged over the other three seeds. To align our experiments with those done in PaCA's introduction paper~\cite {paca}, we adapt all layers of the models, and fully fine-tune the classification heads, then report the accuracy in Table~\ref{tab:vit_acc_all}. To align with the other experiments in our paper as well as LoRA's introduction paper~\cite{lora}, we adapt only the query and value matrices of each attention head, fully fine-tune the classification heads of the models, and report those results in Table~\ref{tab:vit_acc_qv}.

\smartparagraph{Discussion of results.} In both Tables~\ref{tab:vit_acc_all} and~\ref{tab:vit_acc_qv}, the difference in accuracy of each sparsity-induced LoRA method and their PaCA counterpart (cLA and D-PaCA, r-cLA and PaCA, r-c$^3$LA and C-PaCA) is negligible, empirically validating how these methods connect LoRA and PaCA. In Table~\ref{tab:vit_acc_qv}, C-PaCA's relative performance to PaCA was consistent with applying chain construction to the LoRA PEFT family. E.g., if CoLA outperformed LoRA for a particular row, then often C-PaCA outperformed PaCA. This effect was less consistent in Table~\ref{tab:vit_acc_all}. We realized this inconsistency is due to adapting more layers; the increased capability of updating more columns throughout training was unnecessary, as we were already updating far more columns due to adapting more layers. For efficiency details of PaCA and the sparse-induced LoRA variants compared to SOTA variants, see \S\ref{subsec:efficiency}.

\begin{table*}
  \centering
  \tiny
  \setlength{\tabcolsep}{5pt}
  \renewcommand{\arraystretch}{1.25}
  \caption{\small{Test accuracy (\%) of ViT-Tiny and ViT-Base models fine-tuned on OfficeHome and CIFAR-10, and RoBERTa-Base and RoBERTa-Large models fine-tuned on MRPC and CoLA, averaged over three seeds (0,1,2) for various LoRA and PaCA methods. The value in parentheses is normalized to LoRA's test accuracy, for $r = 16$, adapting the token and map embeddings, query, key, value, and output matrices of the attention layers, and both fully connected layers. We \underline{underline} sparsity-induced variants that remain very competitive with the best performing methods for each model and dataset combination.}}
  \label{tab:vit_acc_all}
  \vspace{0mm}
  \scalebox{0.95}{
  \begin{tabular}{@{} ll c c c c c c c c c c c c @{}}
    \toprule
    \multirow{2}{*}{\textbf{Model}} & \multirow{2}{*}{\textbf{Dataset}} & \multicolumn{1}{c}{} & \multicolumn{4}{c}{\textbf{LoRA Variants}} & \multicolumn{4}{c}{\textbf{Sparsity-Induced Variants}} & \multicolumn{3}{c}{\textbf{PaCA Variants}} \\
    \cmidrule(lr){4-7} \cmidrule(lr){8-11} \cmidrule(lr){12-14}
     &  & FFT & LoRA & CoLA & Asym & RAC & cLA & c$^3$LA & r-cLA & r-c$^3$LA & D-PaCA & PaCA & C-PaCA \\
    \midrule
    \multirow{2}{*}{ViT-Tiny} & OfficeHome & \shortstack{54.2\\(0.810)} & \shortstack{\textcolor{red}{66.9}\\(\textcolor{red}{1.000})} & \shortstack{\textcolor{red}{66.9}\\(\textcolor{red}{1.000})} & \shortstack{\textcolor{codegreen}{67.4}\\(\textcolor{codegreen}{1.008})} & \shortstack{\textcolor{codegreen}{67.4}\\(\textcolor{codegreen}{1.008})} & \shortstack{\textcolor{blue}{64.7}\\(\textcolor{blue}{0.968})} & \shortstack{\textcolor{blue}{64.7}\\(\textcolor{blue}{0.968})} & \shortstack{\underline{64.6}\\(0.965)} & \shortstack{64.4\\(0.963)} & \shortstack{63.7\\(0.952)} & \shortstack{64.0\\(0.956)} & \shortstack{63.9\\(0.956)} \\
    \cmidrule(lr){2-14}
      & CIFAR-10 & \shortstack{92.7\\(0.990)} & \shortstack{\textcolor{red}{93.6}\\(\textcolor{red}{1.000})} & \shortstack{\textcolor{blue}{93.5}\\(\textcolor{blue}{0.998})} & \shortstack{\textcolor{codegreen}{94.8}\\(\textcolor{codegreen}{1.013})} & \shortstack{\textcolor{codegreen}{94.8}\\(\textcolor{codegreen}{1.013})} & \shortstack{92.5\\(0.988)} & \shortstack{92.4\\(0.987)} & \shortstack{93.2\\(0.996)} & \shortstack{92.1\\(0.984)} & \shortstack{\textcolor{blue}{93.5}\\(\textcolor{blue}{0.999})} & \shortstack{92.9\\(0.992)} & \shortstack{92.4\\(0.987)} \\
    \midrule
    \multirow{2}{*}{ViT-Base} & OfficeHome & \shortstack{68.4\\(0.862)} & \shortstack{79.3\\(1.000)} & \shortstack{79.3\\(1.000)} & \shortstack{\textcolor{codegreen}{80.4}\\(\textcolor{codegreen}{1.013})} & \shortstack{\textcolor{codegreen}{80.4}\\(\textcolor{codegreen}{1.013})} & \shortstack{\underline{79.9}\\(1.007)} & \shortstack{\underline{79.9}\\(1.007)} & \shortstack{\textcolor{red}{80.3}\\(\textcolor{red}{1.011})} & \shortstack{\underline{79.9}\\(1.007)} & \shortstack{\textcolor{blue}{80.0}\\(\textcolor{blue}{1.008})} & \shortstack{79.7\\(1.004)} & \shortstack{\textcolor{blue}{80.0}\\(\textcolor{blue}{1.009})} \\
    \cmidrule(lr){2-14}
      & CIFAR-10 & \shortstack{95.0\\(0.972)} & \shortstack{97.8\\(1.000)} & \shortstack{98.0\\(1.002)} & \shortstack{\textcolor{codegreen}{98.8}\\(\textcolor{codegreen}{1.011})} & \shortstack{\textcolor{codegreen}{98.8}\\(\textcolor{codegreen}{1.011})} & \shortstack{\underline{98.5}\\(1.007)} & \shortstack{\underline{98.5}\\(1.007)} & \shortstack{\textcolor{blue}{98.6}\\(\textcolor{blue}{1.008})} & \shortstack{\textcolor{red}{98.7}\\(\textcolor{red}{1.009})} & \shortstack{98.2\\(1.005)} & \shortstack{98.4\\(1.006)} & \shortstack{98.5\\(1.007)} \\
    \midrule
    \multirow{2}{*}{RoBERTa-Base} & MRPC & \shortstack{\textcolor{red}{90.8}\\(\textcolor{red}{0.998})} & \shortstack{\textcolor{codegreen}{91.0}\\(\textcolor{codegreen}{1.000})} & \shortstack{90.3\\(0.992)} & \shortstack{\textcolor{blue}{90.6}\\(\textcolor{blue}{0.995})} & \shortstack{90.2\\(0.990)} & \shortstack{89.3\\(0.981)} & \shortstack{89.3\\(0.981)} & \shortstack{90.1\\(0.989)} & \shortstack{89.8\\(0.987)} & \shortstack{89.4\\(0.982)} & \shortstack{90.2\\(0.991)} & \shortstack{89.7\\(0.985)} \\
    \cmidrule(lr){2-14}
      & CoLA & \shortstack{\textcolor{blue}{62.7}\\(\textcolor{blue}{1.031})} & \shortstack{60.9\\(1.000)} & \shortstack{\textcolor{codegreen}{65.1}\\(\textcolor{codegreen}{1.069})} & \shortstack{60.7\\(0.997)} & \shortstack{62.1\\(1.021)} & \shortstack{59.3\\(0.974)} & \shortstack{59.2\\(0.973)} & \shortstack{62.3\\(1.024)} & \shortstack{\textcolor{red}{64.3}\\(\textcolor{red}{1.056})} & \shortstack{60.5\\(0.994)} & \shortstack{60.3\\(0.990)} & \shortstack{62.1\\(1.020)} \\
    \midrule
    \multirow{2}{*}{RoBERTa-Large} & MRPC & \shortstack{\textcolor{codegreen}{91.3}\\(\textcolor{codegreen}{1.000})} & \shortstack{\textcolor{codegreen}{91.3}\\(\textcolor{codegreen}{1.000})} & \shortstack{90.9\\(0.996)} & \shortstack{\textcolor{blue}{91.1}\\(\textcolor{blue}{0.997})} & \shortstack{90.8\\(0.995)} & \shortstack{\underline{91.0}\\(0.997)} & \shortstack{90.9\\(0.996)} & \shortstack{90.9\\(0.995)} & \shortstack{90.9\\(0.996)} & \shortstack{\textcolor{codegreen}{91.3}\\(\textcolor{codegreen}{1.000})} & \shortstack{\textcolor{blue}{91.1}\\(\textcolor{blue}{0.998})} & \shortstack{\textcolor{red}{91.2}\\(\textcolor{red}{0.999})} \\
    \cmidrule(lr){2-14}
      & CoLA & \shortstack{\textcolor{red}{69.7}\\(\textcolor{red}{1.032})} & \shortstack{67.6\\(1.000)} & \shortstack{64.2\\(0.950)} & \shortstack{67.0\\(0.992)} & \shortstack{67.2\\(0.994)} & \shortstack{64.3\\(0.952)} & \shortstack{65.8\\(0.974)} & \shortstack{64.0\\(0.947)} & \shortstack{66.8\\(0.988)} & \shortstack{66.3\\(0.982)} & \shortstack{\textcolor{blue}{69.0}\\(\textcolor{blue}{1.021})} & \shortstack{\textcolor{codegreen}{70.1}\\(\textcolor{codegreen}{1.037})} \\
    \bottomrule
  \end{tabular}}
  \vspace{-2mm}
\end{table*}

\begin{table*}
  \centering
  \tiny
  \setlength{\tabcolsep}{5pt}
  \renewcommand{\arraystretch}{1.25}
  \caption{\small{Test accuracy (\%) of ViT-Tiny and ViT-Base models fine-tuned on OfficeHome and CIFAR-10, and RoBERTa-Base and RoBERTa-Large models fine-tuned on MRPC and CoLA, averaged over three seeds (0,1,2) for various LoRA and PaCA methods. The value in parentheses is normalized to LoRA's test accuracy, for $r = 16$, adapting the query and value matrices of the attention layers as well as the classification head. We \underline{underline} sparsity-induced variants that remain very competitive with the best performing methods for each model and dataset combination.}}
  \label{tab:vit_acc_qv}
  \vspace{0mm}
  \scalebox{0.95}{
  \begin{tabular}{@{} ll c c c c c c c c c c c c @{}}
    \toprule
    \multirow{2}{*}{\textbf{Model}} & \multirow{2}{*}{\textbf{Dataset}} & \multicolumn{1}{c}{} & \multicolumn{4}{c}{\textbf{LoRA Variants}} & \multicolumn{4}{c}{\textbf{Sparsity-Induced Variants}} & \multicolumn{3}{c}{\textbf{PaCA Variants}} \\
    \cmidrule(lr){4-7} \cmidrule(lr){8-11} \cmidrule(lr){12-14}
     &  & FFT & LoRA & CoLA & Asym & RAC & cLA & c$^3$LA & r-cLA & r-c$^3$LA & D-PaCA & PaCA & C-PaCA \\
    \midrule
    \multirow{2}{*}{ViT-Tiny} & OfficeHome & \shortstack{54.7\\(0.803)} & \shortstack{\textcolor{red}{68.2}\\(\textcolor{red}{1.000})} & \shortstack{\textcolor{red}{68.2}\\(\textcolor{red}{1.000})} & \shortstack{\textcolor{codegreen}{68.4}\\(\textcolor{codegreen}{1.003})} & \shortstack{\textcolor{codegreen}{68.4}\\(\textcolor{codegreen}{1.003})} & \shortstack{67.4\\(0.989)} & \shortstack{67.4\\(0.989)} & \shortstack{\underline{67.8}\\(0.995)} & \shortstack{\textcolor{blue}{67.9}\\(\textcolor{blue}{0.995})} & \shortstack{66.6\\(0.976)} & \shortstack{67.4\\(0.989)} & \shortstack{67.2\\(0.985)} \\
    \cmidrule(lr){2-14}
      & CIFAR-10 & \shortstack{93.5\\(0.997)} & \shortstack{93.7\\(1.000)} & \shortstack{93.8\\(1.001)} & \shortstack{\textcolor{codegreen}{94.2}\\(\textcolor{codegreen}{1.005})} & \shortstack{\textcolor{codegreen}{94.2}\\(\textcolor{codegreen}{1.005})} & \shortstack{\textcolor{blue}{94.0}\\(\textcolor{blue}{1.003})} & \shortstack{\underline{93.9}\\(1.002)} & \shortstack{\underline{93.9}\\(1.001)} & \shortstack{93.8\\(1.001)} & \shortstack{\textcolor{red}{94.1}\\(\textcolor{red}{1.004})} & \shortstack{\textcolor{blue}{94.0}\\(\textcolor{blue}{1.003})} & \shortstack{\textcolor{blue}{94.0}\\(\textcolor{blue}{1.003})} \\
    \midrule
    \multirow{2}{*}{ViT-Base} & OfficeHome & \shortstack{68.4\\(0.850)} & \shortstack{\textcolor{codegreen}{80.5}\\(\textcolor{codegreen}{1.000})} & \shortstack{\textcolor{codegreen}{80.5}\\(\textcolor{codegreen}{1.000})} & \shortstack{80.2\\(0.995)} & \shortstack{\textcolor{red}{80.4}\\(\textcolor{red}{0.998})} & \shortstack{79.9\\(0.992)} & \shortstack{79.9\\(0.992)} & \shortstack{79.8\\(0.991)} & \shortstack{79.8\\(0.991)} & \shortstack{\textcolor{blue}{80.3}\\(\textcolor{blue}{0.997})} & \shortstack{80.1\\(0.994)} & \shortstack{80.0\\(0.994)} \\
    \cmidrule(lr){2-14}
      & CIFAR-10 & \shortstack{95.0\\(0.963)} & \shortstack{98.6\\(1.000)} & \shortstack{98.6\\(1.000)} & \shortstack{\textcolor{codegreen}{99.0}\\(\textcolor{codegreen}{1.004})} & \shortstack{\textcolor{codegreen}{99.0}\\(\textcolor{codegreen}{1.004})} & \shortstack{\underline{98.6}\\(0.999)} & \shortstack{\underline{98.6}\\(0.999)} & \shortstack{\textcolor{blue}{98.7}\\(\textcolor{blue}{1.001})} & \shortstack{\textcolor{red}{98.8}\\(\textcolor{red}{1.001})} & \shortstack{\textcolor{red}{98.8}\\(\textcolor{red}{1.002})} & \shortstack{98.6\\(1.000)} & \shortstack{98.5\\(0.999)} \\
    \midrule
    \multirow{2}{*}{RoBERTa-Base} & MRPC & \shortstack{\textcolor{codegreen}{90.8}\\(\textcolor{codegreen}{1.010})} & \shortstack{\textcolor{blue}{89.9}\\(\textcolor{blue}{1.000})} & \shortstack{\textcolor{red}{90.1}\\(\textcolor{red}{1.003})} & \shortstack{\textcolor{red}{90.1}\\(\textcolor{red}{1.002})} & \shortstack{89.7\\(0.997)} & \shortstack{88.7\\(0.986)} & \shortstack{88.7\\(0.987)} & \shortstack{89.5\\(0.996)} & \shortstack{89.5\\(0.996)} & \shortstack{88.9\\(0.988)} & \shortstack{89.3\\(0.993)} & \shortstack{88.8\\(0.987)} \\
    \cmidrule(lr){2-14}
      & CoLA & \shortstack{\textcolor{red}{62.7}\\(\textcolor{red}{0.953})} & \shortstack{\textcolor{codegreen}{65.8}\\(\textcolor{codegreen}{1.000})} & \shortstack{\textcolor{blue}{61.2}\\(\textcolor{blue}{0.931})} & \shortstack{59.3\\(0.901)} & \shortstack{60.0\\(0.911)} & \shortstack{58.1\\(0.883)} & \shortstack{58.4\\(0.887)} & \shortstack{59.5\\(0.904)} & \shortstack{57.6\\(0.876)} & \shortstack{57.1\\(0.868)} & \shortstack{57.9\\(0.880)} & \shortstack{57.6\\(0.875)} \\
    \midrule
    \multirow{2}{*}{RoBERTa-Large} & MRPC & \shortstack{\textcolor{codegreen}{91.3}\\(\textcolor{codegreen}{1.019})} & \shortstack{89.6\\(1.000)} & \shortstack{90.8\\(1.014)} & \shortstack{\textcolor{blue}{91.0}\\(\textcolor{blue}{1.016})} & \shortstack{\textcolor{red}{91.2}\\(\textcolor{red}{1.019})} & \shortstack{90.1\\(1.006)} & \shortstack{90.5\\(1.010)} & \shortstack{89.5\\(1.000)} & \shortstack{90.0\\(1.005)} & \shortstack{90.4\\(1.010)} & \shortstack{89.6\\(1.000)} & \shortstack{90.3\\(1.008)} \\
    \cmidrule(lr){2-14}
      & CoLA & \shortstack{\textcolor{codegreen}{69.7}\\(\textcolor{codegreen}{1.047})} & \shortstack{66.6\\(1.000)} & \shortstack{\textcolor{blue}{68.4}\\(\textcolor{blue}{1.027})} & \shortstack{\textcolor{red}{69.3}\\(\textcolor{red}{1.042})} & \shortstack{68.1\\(1.023)} & \shortstack{65.0\\(0.976)} & \shortstack{66.4\\(0.997)} & \shortstack{67.1\\(1.008)} & \shortstack{66.4\\(0.997)} & \shortstack{67.1\\(1.009)} & \shortstack{67.3\\(1.012)} & \shortstack{65.4\\(0.982)} \\
    \bottomrule
  \end{tabular}}
  \vspace{-2mm}
\end{table*}

\subsection{Applying PaCA's Convergence Result to cLA}

PaCA updates only a subset of columns in each layer. Writing the $l$-th layer weight as a list of columns
$W_l = [ w^l_1, w^l_2, \ldots, w_{d_{\rm out}}^l]$ and letting
$P_l = [i_{1}^l, i_{2}^l,\cdots,i_{r}^l]$ denote the selected column indices,
PaCA's masked column update is 
$$
W_l^{t+1}
= W_l^t - \eta \Delta W_l^t
= W_l^t - \eta [0, \nabla_{{i_1^l} w_l^t}, \ldots, \nabla_{{i_r^l} w_l^t}, \ldots, 0].
$$
\begin{theorem} ({Loss Convergence of Partial Connections} [Theorem 1~\cite{paca}]) If the gradient of the loss $f(W, X)$ is Lipschitz continuous with Lipschitz Constant $L_\cL$, and only the partial connections are updated, then
$$f(W^{t+1}, X^{t+1}) 
\le 
f(W^t, X^t)
-\eta (1 - \frac{\eta L_{\cL}}{2}) 
\|\nabla_{P^t}
\|^2,
$$
where $\eta$ is the learning rate and $\|\nabla_{P^t}\|^2$ is the masked/restricted gradient only consisting of the gradient for the columns in $P_l$ for each layer, $l \in [L]$.
    \label{thm:paca_thm}
\end{theorem}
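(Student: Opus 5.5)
The plan is the standard descent-lemma argument for $L_{\cL}$-smooth functions, specialized to a masked gradient step. Regard $f(\cdot, X)$ as a function of the stacked weights $W=(W_1,\dots,W_L)$, viewed as one vector with the Frobenius inner product. Lipschitz continuity of $\nabla f$ with constant $L_{\cL}$ gives the quadratic upper bound
\[
f(W', X) \le f(W, X) + \langle \nabla f(W,X), W'-W\rangle + \tfrac{L_{\cL}}{2}\|W'-W\|^2
\]
for all $W, W'$. I obtain it by integrating $\nabla f$ along the segment from $W$ to $W'$ and applying Cauchy--Schwarz together with the Lipschitz bound on $\nabla f(W+s(W'-W))-\nabla f(W)$.

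Next I apply this with $W=W^t$ and $W'=W^{t+1}=W^t-\eta\,\Delta W^t$. Write $M^t$ for the coordinate projection that keeps, in layer $l$, only the columns indexed by $P_l$ and zeroes the rest. Then $\Delta W^t = M^t(\nabla f(W^t,X^t))=:\nabla_{P^t}$.

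The key observation is that $M^t$ is an orthogonal projection. It is self-adjoint and idempotent, since it simply zeroes coordinates. Hence
\[
\langle \nabla f, M^t\nabla f\rangle = \|M^t\nabla f\|^2=\|\nabla_{P^t}\|^2 .
\]
The displacement norm is $\|W^{t+1}-W^t\|^2=\eta^2\|\nabla_{P^t}\|^2$. Substituting both into the quadratic bound gives
\[
f(W^{t+1},X^t)\le f(W^t,X^t)-\eta\|\nabla_{P^t}\|^2+\tfrac{L_{\cL}\eta^2}{2}\|\nabla_{P^t}\|^2 ,
\]
which is exactly $f(W^t,X^t)-\eta(1-\tfrac{\eta L_{\cL}}{2})\|\nabla_{P^t}\|^2$.

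The main obstacle is the switch from the batch $X^t$ to $X^{t+1}$ on the left-hand side, since the descent lemma naturally yields $f(W^{t+1},X^t)$. I would handle this the way \cite{paca} implicitly does. Either take $f$ to be the full-data objective, so that $X^{t+1}=X^t$ and the gradient is deterministic, or interpret the inequality in expectation over the sampling of $X^{t+1}$ under i.i.d. batches. I would state explicitly that the argument is a deterministic, full-gradient one on a fixed loss. A secondary point is that the constant named $L_{\cL}$ here is the smoothness constant of $f$, not the loss Lipschitz constant of Assumption~\ref{ass:smoothness}, and the proof should flag this reuse of notation. With these conventions fixed, the remaining steps are routine and carry over verbatim to cLA. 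There, $B$ is updated by plain gradient descent and $\Delta W=[B\,|\,\mathbf{0}]$ is a masked column update, which is what Theorem~\ref{thm:cla_paca_thm} exploits.
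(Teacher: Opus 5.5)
The paper does not prove this statement. It is imported verbatim as Theorem~1 of \cite{paca} and used only as a black box in the proof of Theorem~\ref{thm:cla_paca_thm}, so there is no in-paper argument to compare with.

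Your proof is the standard one and is correct under the full-batch convention $X^{t+1}=X^t$ that you adopt. You apply the descent lemma for an $L_{\cL}$-smooth objective to the step $W^{t+1}=W^t-\eta M^t\nabla f(W^t,X^t)$. Because the column mask $M^t$ is an orthogonal projection, $\langle \nabla f, M^t\nabla f\rangle=\|M^t\nabla f\|^2$ and $\|W^{t+1}-W^t\|^2=\eta^2\|M^t\nabla f\|^2$, which gives the bound.

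Your two caveats are accurate, and the paper glosses over both. First, with two different mini-batches the inequality does not follow from smoothness alone. Second, $L_{\cL}$ here is a gradient-Lipschitz (smoothness) constant, not the loss-Lipschitz constant of Assumption~\ref{ass:smoothness}. Your remark on cLA also matches the paper's proof of Theorem~\ref{thm:cla_paca_thm}, which uses the mask $A_0^\top A_0$ and effective step $\eta=\gamma\alpha^2/r^2$.

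One correction: the alternative you float, reading the inequality ``in expectation over i.i.d.\ batches,'' does not yield the stated bound. $W^{t+1}$ depends on $X^t$ but not on $X^{t+1}$. Hence $\mathbb{E}f(W^{t+1},X^{t+1})=\mathbb{E}F(W^{t+1})$ for the population loss $F$, which in general differs from $\mathbb{E}f(W^{t+1},X^t)$. Assume unbiased mini-batch gradients $g^t$ and a mask chosen independently of the current batch. Applying the descent lemma to $F$ and taking conditional expectations gives
\[
\mathbb{E}F(W^{t+1})\le \mathbb{E}F(W^t)-\eta\Bigl(1-\tfrac{\eta L_{\cL}}{2}\Bigr)\mathbb{E}\|M^t\nabla F(W^t)\|^2+\tfrac{L_{\cL}\eta^2}{2}\,\mathbb{E}\|M^t(g^t-\nabla F(W^t))\|^2 .
\]
This carries an extra variance term that the stated inequality lacks. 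So either drop the expectation option and present the result as a deterministic full-gradient bound, as you ultimately do, or state the stochastic version with the variance term included.
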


In particular, the significance of Theorem~\ref{thm:paca_thm} can be accredited to the fact that a learning rate of  $0<\eta<2/L_{\cL}$ ensures that the loss decreases after each iteration. We use PaCA's Theorem~\ref{thm:paca_thm} to demonstrate similar convergence behavior as cLA and formalize the result in Theorem~\ref{thm:cla_paca_thm}. 

\begin{theorem}\label{thm:cla_paca_thm}
    (Loss Convergence of cLA [Theorem~\ref{thm:paca_thm} applied to cLA]) If the gradient of the loss $f(W, X)$ is Lipschitz continuous with Lipschitz Constant $L_\cL$, and only the columns from cLA are updated, then
$$f(W^{t+1}, X^{t+1}) 
\le 
f(W^t, X^t)
-\eta (1 - \frac{\eta L_{\cL}}{2}) 
\|\nabla_{W}\cL(W^t) A_0^TA_0
\|^2.
$$
\end{theorem}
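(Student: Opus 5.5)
The plan is to recognize a cLA step on $B$ as a PaCA-style masked column update of the merged weight, and then either invoke Theorem~\ref{thm:paca_thm} directly or repeat its descent-lemma argument. For one layer, the effective weight is $W = W_0 + \frac{\alpha}{r}BA_0$ with $A_0 = [I_r|\mathbf{0}]$. By the chain rule, $\nabla_B \cL = \frac{\alpha}{r}\nabla_W\cL(W)A_0^T$. A gradient step $B^{t+1} = B^t - \eta\nabla_B\cL$ therefore moves the merged weight by
$$W^{t+1}-W^t = \tfrac{\alpha}{r}(B^{t+1}-B^t)A_0 = -\eta\left(\tfrac{\alpha}{r}\right)^2\nabla_W\cL(W^t)A_0^TA_0 .$$
Since $A_0^TA_0 = \mathrm{diag}(I_r,\mathbf{0})$ is the orthogonal projector onto the first $r$ coordinates, $\nabla_W\cL\,A_0^TA_0$ is exactly the gradient with every column outside $P=\{1,\dots,r\}$ zeroed. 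The scalar $(\alpha/r)^2$ is absorbed into the effective step size. Under the convention used in Theorem~\ref{thm:paca_thm}, I take $\alpha=r$ or simply rename $\eta$. The cLA iteration is then precisely PaCA's masked column update with the fixed index set $P_l=\{1,\dots,r\}$ in every layer $l$.

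Next I verify that the masked gradient norm matches. By definition, $\nabla_{P^t}$ collects the columns of $\nabla_W\cL$ indexed by $P$. Because the mask is an orthogonal projection, $\|\nabla_{P^t}\|^2 = \|\nabla_W\cL(W^t)A_0^TA_0\|_F^2$, summed over layers. Substituting this identity into Theorem~\ref{thm:paca_thm} gives the claimed inequality.

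For self-containedness I would also rederive the inequality with the descent lemma, since $L_\cL$-smoothness of $f$ in $W$ gives
$$f(W^{t+1}) \le f(W^t) + \langle \nabla f, \Delta\rangle + \tfrac{L_\cL}{2}\|\Delta\|^2, \qquad \Delta = -\eta G\Pi,$$
where $G=\nabla f(W^t)$ and $\Pi = A_0^TA_0$. Idempotence and symmetry of $\Pi$ give $\langle G, G\Pi\rangle = \|G\Pi\|^2$, so the right-hand side collapses to $f(W^t) - \eta(1-\tfrac{\eta L_\cL}{2})\|G\Pi\|^2$.

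The main obstacle is bookkeeping rather than substance. First, the $(\alpha/r)^2$ factor must be absorbed cleanly into $\eta$. Second, the change of batch $X^t\to X^{t+1}$ in the statement has to be handled the same way PaCA does, namely by assuming the smoothness bound holds uniformly over batches and reading the descent inequality at the current batch. I would state this as inherited verbatim from Theorem~\ref{thm:paca_thm} rather than re-prove it.
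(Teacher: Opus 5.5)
Your proposal is correct and follows the paper's proof essentially step for step. The chain rule gives $\nabla_B\cL=\frac{\alpha}{r}\nabla_W\cL(W^t)A_0^\top$, the induced merged-weight step is $-\eta\,\nabla_W\cL(W^t)A_0^\top A_0$ once $(\alpha/r)^2$ is absorbed into the step size (the paper writes $\eta=\gamma\alpha^2/r^2$), and $A_0^\top A_0$ is identified with PaCA's fixed column mask $P_l=\{1,\dots,r\}$, so Theorem~\ref{thm:paca_thm} applies directly. Your extra descent-lemma rederivation, using symmetry and idempotence of $A_0^\top A_0$, is a correct self-contained check that the paper omits; like the paper, it leaves the change $X^t\to X^{t+1}$ to be inherited from Theorem~\ref{thm:paca_thm} rather than proved.
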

\begin{proof}
In cLA, we consider the frozen factor as
$A_0=[I_r|0]$, so the effective weight can be represented by
$$
W^t = W_0 + \frac{\alpha}{r} B^t A_0.
$$
Observe that only the first $r$ columns of $W$ can change ($\Delta W = B[I_r|0]=[B|0]$). Since $W$ depends linearly on $B$, $W^t = W_0+\frac{\alpha}{r}B^tA$, by the chain rule we have:
$$\nabla_B \cL(W^t)=\nabla_B \cL(W_0+\frac{\alpha}{r}B^tA)=\frac{\alpha}{r}\nabla_W \cL(W^t)A^\top.$$
Through gradient descent, we have:
$$B^{t+1}=B^{t}-\gamma \nabla_B\cL(W^t)=B^t-\gamma\frac{\alpha}{r}\nabla_W \cL(W^t)A^\top$$
Denote $\cK = A_0^TA_0=\operatorname{Diag}(\overbrace{1,...,1}^{1\text{ to } r},0,...,0).$Observe the following:
$$W^{t+1}-W^t=\frac{\alpha}{r}(B^{t+1}-B^t)A=-\gamma\frac{\alpha^2}{r^2}\nabla_W \cL(W^t) A^\top A=-\gamma\frac{\alpha^2}{r^2}\nabla_W \cL(W^t)\cK=-\eta \nabla_W \cL(W^t)\cK$$

In particular, for cLA we have $\cK$ acting as a mask that selects only the first $r$ columns in each layer. Thus, if we define $P_l^{\text{cLA}}=\{1,2,\ldots,r\}$ for every layer $l$, then the masked gradient $\nabla_W \cL(W^t)\cK$ is exactly the restricted gradient $\nabla_{P^t}$ in Theorem~\ref{thm:paca_thm} (with a deterministic choice of $P_l$ rather than a random subset). Therefore, under the same Lipschitz-gradient assumption on $\cL$ with constant $L_\cL$, Theorem~\ref{thm:paca_thm} applies directly to the effective weights $W^t$ induced by cLA, resulting in
$$
\cL(W^{t+1},X^{t+1})
\le
\cL(W^{t},X^{t})
-\eta\Big(1-\frac{\eta L_\cL}{2}\Big)\big\|\nabla_W \cL(W^t)\cK\big\|^2.
$$
Hence the result. 
\end{proof}
The same step-size condition $0<\eta<2/L_\cL$ ensures a decrease in the loss for cLA updates. Equivalently, in terms of the $B$ update step size, using $\gamma=\eta\frac{r^2}{\alpha^2}$ implies the condition
$$
0<\gamma<\frac{2r^2}{L_\cL \alpha^2},
$$
which guarantees that each iteration decreases the loss.

\section{Pseudo Code of sparsity-induced LoRA variants}\label{subsec:algorithm}
 In this Section, we present the pseudocode of our proposed LoRA variants, cLA (Algorithm \ref{alg:cLA}), random-cLA (Algorithm \ref{alg:random-cLA}), $c^3$LA (Algorithm \ref{alg:c^3LA}), and r-$c^3$LA (Algorithm \ref{alg:rc^3LA}).

\begin{algorithm}
\caption{\textbf{Cheap LoRA (cLA)}}
\label{alg:cLA}
\begin{algorithmic}[1]
\STATE \textbf{Parameters:} Loss function $\mathcal{L}$ and model $f_{\mathbf{W}}(\cdot)$. Pretrained weights $\mathbf{W}_0 = (W^1_0,...,W^L_0)$, where $W^i_0\in\mathbb{R}^{n_i\times m_i}$. rank $r\ll\min\{m_i,n_i\}_{i\in[L]}$, learning rate $\gamma>0$, scaling factor $\alpha>0$, total training iterations $T$.
\STATE \textbf{Initialize} $A_0^j = [I_{r}\,|\,\mathbf{0}_{\,r\times(m_j-r)}];\ B^{0,j} = \mathbf{0}$ for $j\in[L]$
\FOR{$t = 1,...,T$}
    \STATE forward pass with LoRA modules
    \STATE backward pass then update $\mathbf{B}^{t}$
    \FOR{$j = 1,...,L$}
        \STATE $B^{t,j} = B^{t-1,j} - \gamma\frac{\alpha}{r}\nabla_j\mathcal{L}(\mathbf{W}_0 + \frac{\alpha}{r}\mathbf{B}^{t-1}\mathbf{A}_0)\operatorname{Diag}(\overbrace{1,...,1}^{1\text{ to } r},0,...,0)$
    \ENDFOR
    % \FOR{$j = 1,...,L$}
    %     \STATE $B^{t,j} = B^{t-1,j} - \gamma\frac{\alpha}{r}\nabla_j\mathcal{L}(\mathbf{W}_0 + \frac{\alpha}{r}\mathbf{B}^{t-1}\mathbf{A}_0)\left[\begin{array}{c|c}
    %     I_r & \mathbf{0}_{r\times(n_i-r)}\\\hline \mathbf{0}_{(n_i-r)\times r} & \mathbf{0}_{(n_i-r)\times(n_i-r)}\end{array}\right]$
    % \ENDFOR
\ENDFOR
\STATE $\hat{j}=\argmin_{j\in[T]}\mathcal{L}(\mathbf{W}_0 + \frac{\alpha}{r}\mathbf{B}^j\mathbf{A}_0)$ or task-based metric.
\STATE \textbf{Return} Fine-tuned weights $\mathbf{W}_0 + \frac{\alpha}{r}\mathbf{B}^{\hat{j}}\mathbf{A}_0$
\end{algorithmic}
\end{algorithm}

\begin{algorithm}
\caption{\textbf{random Cheap LoRA (r-cLA)}}
\label{alg:random-cLA}
\begin{algorithmic}[1]
\STATE \textbf{Parameters:} Loss function $\mathcal{L}$ and model $f_{\mathbf{W}}(\cdot)$. Pretrained weights $\mathbf{W}_0 = (W^1_0,...,W^L_0)$, where $W^i_0\in\mathbb{R}^{m_i\times n_i}$. rank $r\ll\min\{m_i,n_i\}_{i\in[L]}$, learning rate $\gamma>0$, scaling factor $\alpha>0$, total training iterations $T$.
\STATE \textbf{Initialize}
    \STATE \hspace*{1em} $\xi_j = \operatorname{randint}(0,\lfloor \frac{n_j}{r}\rfloor-1)\text{ for }j\in[L]$
    \STATE \hspace*{1em} $A_0^j = \left[\mathbf{0}_{r\times\xi_j}\mid I_{r}\mid\mathbf{0}_{r\times(n_j-\xi_j-r)}\right];\ B^{0,j} = \mathbf{0}$ for $j\in[L]$
\FOR{$t = 1,...,T$}
    \STATE forward pass with LoRA modules
    \STATE backward pass then update $\mathbf{B}^{t}$
    \FOR{$j = 1,...,L$}
        \STATE $B^{t,j} = B^{t-1,j} - \gamma\frac{\alpha}{r}\nabla_j\mathcal{L}(\mathbf{W}_0 + \frac{\alpha}{r}\mathbf{B}^{t-1}\mathbf{A}_0)\operatorname{Diag}(0,...,0,\overbrace{1,...,1}^{\xi_j + 1\text{ to }\xi_j + r},0,...,0)$
    \ENDFOR
\ENDFOR
\STATE $\hat{j}=\argmin_{j\in[T]}\mathcal{L}(\mathbf{W}_0 + \frac{\alpha}{r}\mathbf{B}^j\mathbf{A}_0)$ or task-based metric.
\STATE \textbf{Return} Fine-tuned weights $\mathbf{W}_0 + \frac{\alpha}{r}\mathbf{B}^{\hat{j}}\mathbf{A}_0$
\end{algorithmic}
\end{algorithm}

\begin{algorithm}
\caption{\textbf{Circulant Chain of Cheap LoRA ($c^3$LA)}}
\label{alg:c^3LA}
\begin{algorithmic}[1]
\STATE \textbf{Parameters:} Loss function $\mathcal{L}$ and model $f_{\mathbf{W}}(\cdot)$. Pretrained weights $\mathbf{W}_0^{(0)} = (W^1,...,W^L)$, where $W^i\in\mathbb{R}^{m_i\times n_i}$. rank $r\ll\min\{m_i,n_i\}_{i\in[L]}$, learning rate $\gamma>0$, scaling factor $\alpha>0$, total training iterations $T$, chain-length $k <= T$.
\STATE \textbf{Initialize} $A_0^j = [I_{r}\,|\,\mathbf{0}_{\,r\times(n_j-r)}];\ B^{0,j} = \mathbf{0}$ for $j\in[L]$, current chain $c = 0$.
\FOR{$t = 1,...,T$}
    \IF{$t\equiv0\pmod{\lfloor\frac{T}{k}\rfloor}$}
        \STATE $c = c + 1$
        \STATE Merge LoRA to backbone weights $\mathbf{W}_0^{(c)} = \mathbf{W}_0^{(c-1)} + \frac{\alpha}{r}\mathbf{B}^{t-1}\mathbf{A}_0$
        \STATE Re-initialize with $\mathbf{A}_0$ shifted by $r$:
            \STATE \hspace*{1em} $A^j_0 = \left[\mathbf{0}_{r\times cr}\mid I_{r}\mid \mathbf{0}_{r\times{n_i-r-cr}}\right]$; $B^{t-1,j} = \mathbf{0}$ for $j\in[L]$
    \ENDIF
    \STATE forward pass with LoRA modules
    \STATE backward pass then update $\mathbf{B}^{t}$
    \FOR{$j = 1,...,L$}
        \STATE $B^{t,j} = B^{t-1,j} - \gamma\frac{\alpha}{r}\nabla_j\mathcal{L}(\mathbf{W}_0^{(c)} + \frac{\alpha}{r}\mathbf{B}^{t-1}\mathbf{A}_0)\operatorname{Diag}(0,...,0,\overbrace{1,...,1}^{cr\text{ to }(c+1)_r},0,...,0)$
    \ENDFOR
\ENDFOR
\STATE $\hat{c},\hat{j}=\argmin_{j\in[\lfloor\frac{T}{k}\rfloor],c\in[k]}\mathcal{L}(\mathbf{W}_0^{(c)} + \frac{\alpha}{r}\mathbf{B}^{cj}\mathbf{A}_0)$ or task-based metric.
\STATE \textbf{Return} Fine-tuned weights $\mathbf{W}_0^{\hat{c}} + \frac{\alpha}{r}\mathbf{B}^{\hat{c}\hat{j}}\mathbf{A}_0$
\end{algorithmic}
\end{algorithm}%\newpage

\begin{algorithm}
\caption{\textbf{Random Circulant Chain of Cheap LoRA (r-$c^3$LA)}}
\label{alg:rc^3LA}
\begin{algorithmic}[1]
\STATE \textbf{Parameters:} Loss function $\mathcal{L}$ and model $f_{\mathbf{W}}(\cdot)$. Pretrained weights $\mathbf{W}_0^{(0)} = (W^1,...,W^L)$, where $W^i\in\mathbb{R}^{m_i\times n_i}$. rank $r\ll\min\{m_i,n_i\}_{i\in[L]}$, learning rate $\gamma>0$, scaling factor $\alpha>0$, total training iterations $T$, chain-length $k <= T$.
\STATE \textbf{Initialize}
\STATE  $\xi_j = \operatorname{randint}(0,\lfloor \frac{n_j}{r}\rfloor-1)\text{ for }j\in[L].$
\STATE $A_0^j = \left[\mathbf{0}_{r\times\xi_j}\mid I_{r}\mid\mathbf{0}_{r\times(n_j-\xi_j-r)}\right];\ B^{0,j} = \mathbf{0}$ for $j\in[L]$, current chain $c = 0$.

\FOR{$t = 1,...,T$}
    \IF{$t\equiv0\pmod{\lfloor\frac{T}{k}\rfloor}$}
        \STATE $c = c + 1$
        \STATE Merge LoRA to backbone weights $\mathbf{W}_0^{(c)} = \mathbf{W}_0^{(c-1)} + \frac{\alpha}{r}\mathbf{B}^{t-1}\mathbf{A}_0$
        \STATE Re-initialize with $\mathbf{A}_0$ shifted by a new random variable $\xi'_j$:
            \STATE \hspace*{1em} $A_0^j = \left[\mathbf{0}_{r\times\xi'_j}\mid I_{r}\mid\mathbf{0}_{r\times(n_j-\xi'_j-r)}\right]$; $B^{t-1,j} = \mathbf{0}$ for $j\in[L]$
    \ENDIF
    \STATE forward pass with LoRA modules
    \STATE backward pass then update $\mathbf{B}^{t}$
    \FOR{$j = 1,...,L$}
        \STATE $B^{t,j} = B^{t-1,j} - \gamma\frac{\alpha}{r}\nabla_j\mathcal{L}(\mathbf{W}_0 + \frac{\alpha}{r}\mathbf{B}^{t-1}\mathbf{A}_0)\operatorname{Diag}(0,...,0,\overbrace{1,...,1}^{\xi_j + 1\text{ to }\xi_j + r},0,...,0)$
    \ENDFOR
\ENDFOR
\STATE $\hat{c},\hat{j}=\argmin_{j\in[\lfloor\frac{T}{k}\rfloor],c\in[k]}\mathcal{L}(\mathbf{W}_0^{(c)} + \frac{\alpha}{r}\mathbf{B}^{cj}\mathbf{A}_0)$ or task-based metric.
\STATE \textbf{Return} Fine-tuned weights $\mathbf{W}_0^{\hat{c}} + \frac{\alpha}{r}\mathbf{B}^{\hat{c}\hat{j}}\mathbf{A}_0$
\end{algorithmic}
\end{algorithm}
% ELI - I had left that \newpage there since it stopped the algorithm from, for some reason, choosing to jump forward into the nonconvex convergence section. This may not end up mattering if we add shuffle's pseudocode. Leaving it commented out for now.

% \aritra{Use Algorithm mode, not Table. Look internet: https://www.overleaf.com/learn/latex/Algorithms}

\section{Theoretical Results}\label{Appendix:Theoretical_Results}

This section complements Section~\ref{sec:theory} in the main paper.

\subsection{Generalization}\label{Appendix:generalization}
Theorem~\ref{theorem:nonlinearGenBound} provides a point-wise deterministic bound that relates the generalization error of the pretrained model $\textbf{W}_0$ to that of the fine-tuned model $\textbf{W}_0+\Delta\textbf{W},$ rather than a uniform probabilistic PAC bound over a hypothesis class. This construction matches our intended use so that practitioners can be theoretically guided by the structural characteristics of a fine-tuning procedure.  

In this section, we give a detailed proof of the generalization error bound. We start by listing the inequalities used in this section. 
\subsubsection{Inequalities used}
\begin{enumerate}
    \item If $A,B \in \mathbb{R}^{m\times n}$ and $x \in \mathbb{R}^{n}$, then the Triangle-Inequality gives:
    \begin{eqnarray}\label{eq:triangle_inequality}
    \|(A+B)x\| \leq \|Ax\| + \|Bx\|.
    \end{eqnarray}
    \item For $A \in \mathbb{R}^{m\times n}$ and $x \in \mathbb{R}^n$, we have:
    \begin{eqnarray}\label{eq:norm_bound_inequality}
    \|Ax\| \le \|A\|_2\|x\|.
    \end{eqnarray}
    \item By Assumption~\ref{ass:activationlipschitz}, we have:
    \begin{eqnarray}\label{eq:zero_inequality}
    \|\sigma(Ax) \| \leq \|\sigma(Ax) - \sigma({0})\| + \|\sigma({0})\| \leq L_{\sigma}\| Ax\| + \|\sigma({0})\|.
    \end{eqnarray}
    \item For a finite collection of matrices, $\{A_1, \cdots, A_k\}; A_i\in\mathbb{R}^{m\times n}$, we have:
\begin{eqnarray}\label{eq:rank_inequality}
        \text{rank}(\sum_{i=1}^kA_i) \leq \sum_{i=1}^k\text{rank}(A_i).
    \end{eqnarray}
    \item Let $\mathbf{I}(X;Y)$ denote the mutual information between random variables $X$ and $Y$. It measures how much the knowledge of one random variable reveals about measuring the other, i.e., $$\mathbf{I}(X;Y)=D(P_{(X,Y)}\|P_X \otimes P_Y)=\sup_F\{\int FdP_{XY}-\log\int e^{F}d(P_X\otimes P_Y)\},$$ where $F$ is a bounded, measurable function \cite{xu2017information}. Let $T$ be a deterministic map for $A \in \mathbb{R}^{m \times n}$. Then the \textbf{Data Processing Inequality (DPI)} gives us $\mathbf{I}(T(A); N )\le\textbf{I}(A; N),$ where $N$ denotes the training dataset. If $T$ is a bijective mapping then \textbf{DPI} gives us \cite{fixA}:
    \begin{eqnarray}\label{eq:information_inequality}
        \mathbf{I}(A; N )=\textbf{I}(T(A); N).
    \end{eqnarray}
\end{enumerate}

% We make the following general assumptions to prove our generalization result. 

% \begin{assumption}\label{ass:bounded feature vector} \textbf{(Boundedness of input vectors)} The input vectors are bounded, i.e., there exists a constant $C\ge0$ such that $\|x\|\leq C$, for all $x \in \mathcal{X}.$
% \end{assumption}
% \begin{assumption}\label{ass:smoothness}
% \textbf{(Lipschitz continuity of the loss)} The loss function, $\ell (\cdot): \R^d\to \R$ is $L_{\cL}$-Lipschitz continuous, i.e., $|\ell(f_\textbf{W}(x), y) - \ell(f_{\textbf{W}'}(x), y)| \leq L_{\cL}\|f_\textbf{W}(x) - f_{\textbf{W}'}(x)\|$ for all $\textbf{W}, \textbf{W}'\in\R^d \text{ and } (x,y) \in \mathcal{X} \times \mathcal{Y}$.
% \end{assumption}
% \begin{assumption}\label{ass:activationlipschitz}
%     \textbf{(Lipschitz continuity of activation)}  
%     The vector-valued activation function, $\sigma_{i}(\cdot):\R^{n_i}\to\R^{n_i}$, for each layer, $i$, is $L_{\sigma_i}$-Lipschitz continuous, i.e., $\|\sigma_{i}(u) - \sigma_{i}(v)\|\leq L_{\sigma_i}\|u - v\|,$ for all $u, v \in \R^{n_i}$.
% \end{assumption}

Now, we are set to prove Theorem \ref{theorem:nonlinearGenBound}. %However, first, we furnish the full version of the theorem, including the details of the correction terms, $\Phi_{\Delta \textbf{W}}$ and $\Phi_{ \textbf{W}_0}$, which we deliberately omitted in the main paper to not overwhelm the readers. 

\subsubsection{Proof of Theorem~\ref{theorem:nonlinearGenBound}}
\label{subsubsec:proof_of_theorem}
\setcounter{theorem}{0}
\begin{theorem}(\textbf{Generalization bounds}) Let $f_{\textbf{W}_0+\Delta\textbf{W}}(x)=\sigma_{L}([{W_0}^{L}+\Delta W^{L}](\cdots\sigma_{2}([(W_0^{2}+\Delta W^{2}]\sigma_{1}([W_0^{1}+\Delta W^{1}]x))\cdots))$ be a $L$-layers fine-tuned DNN, where ${\textbf{W}_0+\Delta \textbf{W}}$ is a fine-tuned update. Let the loss function, $\mathcal{L}$ for fine-tuning, follow  Assumptions~\ref{ass:bounded feature vector}--\ref{ass:activationlipschitz}.
Then ${\mathcal{G}(\textbf{W}_0 + \Delta \textbf{W}) \leq\min\left(\mathcal{G}(\textbf{W}_0)+\Phi_{\Delta \textbf{W}},\mathcal{G}(\Delta \textbf{W})+\Phi_{ \textbf{W}_0} \right)},$ where 
$${\Phi_{\Delta \textbf{W}}:= 2L_{\cL}\left[ C\prod_{i=1}^L L_{\sigma_i}\sum_{i=1}^{2^L-1}\prod_{j=1}^L P(i,j)+\sum_{i\neq 2^a-1:a\in [L]}^{2^L-2} F(i)\right]} \text{ and }$$  
$${\Phi_{\textbf{W}_0}:= 2L_{\cL}\left[ C\prod_{i=1}^L L_{\sigma_i}\sum_{i=2}^{2^L}\prod_{j=1}^L P(i,j)
       + \sum_{i\neq 2^a:a\in [L]}^{2^L-1} {F}(i)\right]}, $$
       are the correction terms, ${F(i):=\|\sigma_{L-\psi(i)}(0)\|\prod_{j=1}^{\psi(i)}\![L_{\sigma_{L-j+1}}\,H(i,j)]},$
       ${\psi(i):=\lfloor \log_2(i)\rfloor}$, and 
\[
\begin{aligned}
P(i, j):=
\begin{cases}
\|W_{0}^{L-j+1}\|_2~\text{if}~\lfloor\tfrac{i-1}{2^{\,L-1}}\rfloor~\text{is~odd},\\
\|\Delta W^{L-j+1}\|_2~\text{if}~\lfloor\tfrac{i-1}{2^{\,L-1}}\rfloor~\text{is~even}
\end{cases},
\hspace{-3mm}
H(i, j):=
\begin{cases}
\|\Delta W^{L-j+1}\|_2~\text{if }\lfloor\tfrac{i}{2^{\,\psi(i)-j}}\rfloor~\text{is~odd},\\
\|W_0^{L-j+1}\|_2~\text{if }\lfloor\tfrac{i}{2^{\,\psi(i)-j}}\rfloor~\text{is~even}.
\end{cases}
\end{aligned}
\]
\end{theorem}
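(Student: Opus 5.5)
The plan is a perturbation argument applied to each of the two terms inside the minimum. Since $\mathcal{G}(\textbf{W}) = \hat{\cL}_{\rm global}(\textbf{W}) - \cL(\textbf{W})$, for any two parameter tuples $\textbf{W}, \textbf{W}'$ we can write
\[
\mathcal{G}(\textbf{W}) = \mathcal{G}(\textbf{W}') + \big(\hat{\cL}_{\rm global}(\textbf{W}) - \hat{\cL}_{\rm global}(\textbf{W}')\big) - \big(\cL(\textbf{W}) - \cL(\textbf{W}')\big).
\]
By Assumption~\ref{ass:smoothness}, each bracket is at most $L_{\cL}\sup_{x\in\cX}\|f_{\textbf{W}}(x)-f_{\textbf{W}'}(x)\|$ in absolute value. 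This gives
\[
\mathcal{G}(\textbf{W})\le \mathcal{G}(\textbf{W}')+2L_{\cL}\sup_{\|x\|\le C}\|f_{\textbf{W}}(x)-f_{\textbf{W}'}(x)\|,
\]
which is the source of the factor $2L_{\cL}$. We apply this once with $\textbf{W}'=\textbf{W}_0$ and once with $\textbf{W}'=\Delta\textbf{W}$, where $f_{\Delta\textbf{W}}$ means the same architecture run with weights $\Delta W^i$ only. The minimum of the two resulting bounds is then the claim. It remains to show
\[
\sup_x\|f_{\textbf{W}_0+\Delta\textbf{W}}(x)-f_{\textbf{W}_0}(x)\|\le \Phi_{\Delta\textbf{W}}/(2L_{\cL})
\quad\text{and}\quad
\sup_x\|f_{\textbf{W}_0+\Delta\textbf{W}}(x)-f_{\Delta\textbf{W}}(x)\|\le \Phi_{\textbf{W}_0}/(2L_{\cL}).
\]

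For the output differences, I would not try a telescoping Lipschitz argument layer by layer. Instead I would bound the norm of the fine-tuned output by recursively collapsing it, following the combinatorial shape of $\Phi$.

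Write $h_\ell$ for the fine-tuned hidden state after layer $\ell$. Using \eqref{eq:zero_inequality}, \eqref{eq:triangle_inequality} and \eqref{eq:norm_bound_inequality},
\[
\|h_\ell\|\le L_{\sigma_\ell}\big(\|W_0^\ell\|_2+\|\Delta W^\ell\|_2\big)\|h_{\ell-1}\|+\|\sigma_\ell(0)\|,
\qquad \|h_0\|\le C.
\]
Unrolling from layer $L$ downward produces two kinds of terms:
\begin{itemize}
\item A leading term $C\prod_i L_{\sigma_i}\prod_j(\|W_0^{L-j+1}\|_2+\|\Delta W^{L-j+1}\|_2)$. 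Expanded as a sum of $2^L$ products, each layer contributes either its base or its update spectral norm; this is what the index $P(i,j)$ encodes.
\item Offset terms $\|\sigma_{L-\psi}(0)\|\prod_{j\le\psi}L_{\sigma_{L-j+1}}(\cdot)$, one for each depth $\psi$ at which the recursion stops at a bias $\sigma(0)$. The choice of base or update at each of the top $\psi$ layers is read off the binary digits of $i$, which is what $H(i,j)$ and $\psi(i)=\lfloor\log_2 i\rfloor$ encode.
\end{itemize}

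To bound the difference $f_{\textbf{W}_0+\Delta\textbf{W}}-f_{\textbf{W}_0}$ I would use the triangle inequality, and similarly for the other difference. Identify within the full binary expansion of the fine-tuned network's bound the single monomial that corresponds to the pure-$\textbf{W}_0$ network. For the leading term, this is the index where every $P(i,j)$ selects $W_0$; for the offsets, it is the indices $i=2^a$ (respectively $2^a-1$), where all $H$ choices select the same factor. The difference is then bounded by the remaining monomials. That matches exactly the index ranges in the statement: $i\in\{1,\dots,2^L-1\}$ with $i\neq 2^a-1$ for $\Phi_{\Delta\textbf{W}}$, and $i\in\{2,\dots,2^L\}$ with $i\ne 2^a$ for $\Phi_{\textbf{W}_0}$.

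\textbf{Main obstacle.} The delicate step is the bookkeeping. I must verify that the binary indexing conventions in $P$, $H$ and $\psi$ really enumerate each monomial of the product-of-sums once, and that precisely the all-base (or all-update) monomials are removed. I would check this by induction on $L$: the case $L=1$ is written out explicitly, and the inductive step appends one leading binary digit, which doubles the index range and corresponds to the choice at the new top layer.

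A further subtlety is that removing a monomial is not literally justified by the triangle inequality. Rigorously, I would bound the difference by the sum over all terms in which at least one layer uses the "other" weight. To get this, I would expand $\sigma_\ell((W_0^\ell+\Delta W^\ell)h)-\sigma_\ell(W_0^\ell h')$ through Lipschitzness, splitting as $\Delta W^\ell h+W_0^\ell(h-h')$. The resulting recursion generates exactly these cross terms. If the paper's indexing deviates, I would still obtain an upper bound of this spectral product-of-sums form, which dominates.
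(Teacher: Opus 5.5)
Your rigorous route is the paper's proof. You reduce to $2L_{\cL}\sup_{\|x\|\le C}\|f_{\mathbf{W}}(x)-f_{\mathbf{W}'}(x)\|$ around $\mathbf{W}'=\mathbf{W}_0$ and around $\mathbf{W}'=\Delta\mathbf{W}$. You then peel layers with the split $(W_0^\ell+\Delta W^\ell)h-W_0^\ell h'=\Delta W^\ell h+W_0^\ell(h-h')$ and bound the fine-tuned hidden states via the $\|\sigma_\ell(0)\|$-recursion. So drop the ``subtract the pure-$\mathbf{W}_0$ monomial'' framing, and drop the remark that you avoid layer-by-layer telescoping. A bound on $\|a\|$ minus a term attributed to $b$ is not a bound on $\|a-b\|$, and the split you fall back on \emph{is} the telescoping.

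Written out, let $D_\ell$ be the hidden-state difference and $N_\ell$ the bound on the fine-tuned hidden state. Then $D_\ell\le L_{\sigma_\ell}(\|\Delta W^\ell\|_2N_{\ell-1}+\|W_0^\ell\|_2D_{\ell-1})$ with $D_0=0$, and $N_\ell\le L_{\sigma_\ell}(\|W_0^\ell\|_2+\|\Delta W^\ell\|_2)N_{\ell-1}+\|\sigma_\ell(0)\|$ with $N_0=C$. Unroll and group by the topmost layer that uses $\Delta W$. This gives $C\prod_i L_{\sigma_i}$ times every layer pattern except all-$W_0$. It also gives $\|\sigma_k(0)\|\prod_{m>k}L_{\sigma_m}$ times every pattern on layers $k+1,\dots,L$ containing at least one $\Delta W$. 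That is the induction the paper only asserts after writing out $L\le 3$.

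The step that fails is ``That matches exactly the index ranges in the statement.'' With $H$ as defined (odd $\mapsto\|\Delta W\|_2$), $i=2^a$ encodes the all-$W_0$ offset pattern and $i=2^a-1$ the all-$\Delta W$ one; your own identification says the same. Hence the recursion gives $\Phi_{\Delta\mathbf{W}}$ with the $F$-sum over $3\le i\le 2^L-1$, $i\neq 2^a$, and $\Phi_{\mathbf{W}_0}$ with $i\neq 2^a-1$. These are the statement's two exclusion sets interchanged. Concretely, for $L=2$ your recursion produces the offset $L_{\sigma_2}\|\Delta W^2\|_2\|\sigma_1(0)\|$, as does the paper's own $L=2$ display. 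The stated $\Phi_{\Delta\mathbf{W}}$ instead contains only $F(2)=L_{\sigma_2}\|W_0^2\|_2\|\sigma_1(0)\|$.

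Likewise, the statement's $P(i,j)$ uses $\lfloor (i-1)/2^{L-1}\rfloor$, which does not depend on $j$. Only the proof's $\lfloor (i-1)/2^{L-j}\rfloor$ encodes a per-layer choice. These are indexing defects in the statement, not in your method; the $F$ one is shared by the paper's proof.

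Your fallback ``which dominates'' does not bridge them. The derived and stated offset sums are not comparable in general: above, one has $\|\Delta W^2\|_2$ where the other has $\|W_0^2\|_2$. State explicitly that you prove the bound with $P$ read with $2^{L-j}$ and the two $F$-index sets swapped. Equivalently, swap the two cases of $H$ so that it matches $P$'s convention $W_0\mapsto$ odd.
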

\setcounter{theorem}{2}

\begin{proof} 
    Let $$f_{\mathbf{W}_0 + \Delta \mathbf{W}} := \sigma_{L}([W_0^{L}+\Delta W^{L}]\sigma_{L-1}(...\sigma_{1}([W_0^{1}+\Delta W^{1}]x)...))$$
    represent our fine-tuned model and
    $$f_{\mathbf{W}_0} := \sigma_{L}(W_0^{L}\sigma_{L-1}(...\sigma_{1}(W_0^{1}x)...))$$ 
    represent our pretrained model. First, we upper bound the quantity $\|f_{\mathbf{W}_0 + \Delta \mathbf{W}}-f_{\mathbf{W}_0}\|$. We have
    \begin{align*}
&\|f_{\mathbf{W}_0 + \Delta \mathbf{W}}-f_{\mathbf{W}_0}\|\\
   &= \big\|\sigma_{L}([W_0^{L}+\Delta W^{L}]\,
      \sigma_{L-1}(\cdots\sigma_{1}([W_0^{1}+\Delta W^{1}]x)\cdots)) \nonumber \\
   &\quad - \sigma_{L}(W_0^{L}\,
      \sigma_{L-1}(\cdots\sigma_{(1)}(W_0^{1}x)\cdots))\big\| \\
   &\overset{{\rm Assumption~\ref{ass:activationlipschitz}}}{\leq} \;\; L_{\sigma_L}\big \|[W_0^{L} + \Delta W^{L}]\sigma_{L-1}(\cdots\sigma_{1}([W_0^{1}+\Delta W^{1}]x)\cdots)) \nonumber\\
   &\quad - [W_0^{L}] \sigma_{L-1}(\cdots\sigma_{1}(W_0^{1}x)\cdots)) \big\| \nonumber \\
   &= L_{\sigma_L}\big\|\Delta W^{L} \sigma_{L-1}(\cdots\sigma_{1}([W_0^{1}+\Delta W^{1}]x)\cdots)) \nonumber \\
   &\quad + W_0^{L}[(\sigma_{L-1}(\cdots\sigma_{1}([W_0^{1}+\Delta W^{1}]x)\cdots)) - \sigma_{L-1}(\cdots\sigma_{1}(W_0^{1}x)\cdots)))]\big\| \nonumber \\
   & \overset{{\rm Triangle~Inequality~and~Inequality~\eqref{eq:norm_bound_inequality}}}{\leq} L_{\sigma_L}[\|\Delta W^{L}\|_2 \|\sigma_{L-1}(\cdots\sigma_{1}([W_0^{1}+\Delta W^{1}]x)\cdots))\| \nonumber \\ 
   & \quad + \| W_0^{L}\|_2 \|(\sigma_{L-1}(\cdots\sigma_{1}([W_0^{1}+\Delta W^{1}]x)\cdots) - \sigma_{L-1}(\cdots\sigma_{1}(W_0^{1}x)\cdots))\|].
\end{align*}

If $f_{\mathbf{W}_0}$ and $f_{\mathbf{W}_0 + \Delta \mathbf{W}}$ are both 1-layer, we can expand out their difference by:
$$\|f_{\mathbf{W}_0 + \Delta \mathbf{W}} -  f_{\mathbf{W}_0}\| \leq CL_{\sigma_1}\|\Delta W^{1}\|_2.$$
If $f_{\mathbf{W}_0}$ and $f_{\mathbf{W}_0 + \Delta \mathbf{W}}$ are both 2-layer, we can expand out their difference by:
\begin{align*}
\|f_{\mathbf{W}_0+\Delta\mathbf{W}}-f_{\mathbf{W}_0}\|
&\le
C\,L_{\sigma_2}L_{\sigma_1}\,\|W_0^{2}\|_2\,\|\Delta W^{1}\|_2  +\;
CL_{\sigma_2}L_{\sigma_1}\|\Delta W^{2}\|_2\|W_0^{1}\|_2 \\
&\quad +CL_{\sigma_2}L_{\sigma_2}\|\Delta W^{2}\|_2 \|\Delta W^{1}\|_2 + L_{\sigma_2}\|\Delta W^{2}\|_2\|\sigma_1({0})\|.
\end{align*}
If $f_{\mathbf{W}_0}$ and $f_{\mathbf{W}_0 + \Delta \mathbf{W}}$ are both 3-layer, we can expand out their difference by:
\begin{align*}
\|f_{\mathbf{W}_0+\Delta\mathbf{W}}-f_{\mathbf{W}_0}\|
&\le
C\,L_{\sigma_1}L_{\sigma_2}L_{\sigma_3}\,\Big(
 \|W_0^{3}\|_2\,\|W_0^{2}\|_2\,\|\Delta W^{1}\|_2
 \\& + \|W_0^{3}\|_2\,\|\Delta W^{2}\|_2\,\|W_0^{1}\|_2 
 + \|W_0^{3}\|_2\,\|\Delta W^{2}\|_2\,\|\Delta W^{1}\|_2
 \\
&+ \|\Delta W^{3}\|_2\,\|W_0^{2}\|_2\,\|W_0^{1}\|_2 
 + \|\Delta W^{3}\|_2\,\|W_0^{2}\|_2\,\|\Delta W^{1}\|_2
  \\
&+ \|\Delta W^{3}\|_2\,\|\Delta W^{2}\|_2\,\|W_0^{1}\|_2 
 + \|\Delta W^{3}\|_2\,\|\Delta W^{2}\|_2\,\|\Delta W^{1}\|_2
\Big) \\
& +\;
L_{\sigma_3}L_{\sigma_2} \|\sigma_{1}({0})\| \,\Big(
\|W_0^{3}\|_2\,\|\Delta W^{2}\|_2\,
 +\;\,
\|\Delta W^{3}\|_2\,\|W_0^{2}\|_2\, \\
& +\;
\,
\|\Delta W^{3}\|_2\,\|\Delta W^{2}\|_2\Big)  +\;
L_{\sigma_3}\,
\|\Delta W^{3}\|_2\,\|\sigma_{2}({0})\|,
\end{align*}
and so on. Thus a proof by induction indicates the difference between $f_{\mathbf{W}_0}$ and $f_{\mathbf{W}_0 + \Delta \mathbf{W}}$ for a $L$-layered model can be upper bounded by:
$$\|f_{\mathbf{W}_0 + \Delta \mathbf{W}} - f_{\mathbf{W}}\| \leq C\prod_{i=1}^LL_{\sigma_i}[\sum_{i=1}^{2^L-1}\prod_{j=1}^LP_L(i, j)] + \sum_{i=2; i\neq2^a-1, a\in [L]}^{2^L-2}F(i).$$
If we treat $\Delta W^{i}$ and $W_0^{i}$ as binary classes, we can give each identity 0 and 1 respectively; thus $W_0^{3}W_0^{2}W_0^{1}$ corresponds to $111_2$ or 7 and $\Delta W^{3}W_0^{2} \Delta W^{1}$ corresponds to $010_2$ or 2. Thus, using this pattern, we can expand our summation using the following expression:
\[
P_L(i,j) =
\begin{cases}
\mathrm\|{W_0}^{L-j+1}\|_2, & \text{if } \left\lfloor \dfrac{i-1}{2^{\,L-j}} \right\rfloor \bmod 2 = 1, \\[6pt]
\mathrm\|\Delta W^{L-j+1}\|_2, & \text{if } \left\lfloor \dfrac{i-1}{2^{\,L-j}} \right\rfloor \bmod 2 = 0.
\end{cases},
\]
\[
F(i) =
\|\sigma_{(L-\lfloor \log_2(i)\rfloor)}({0})|| \prod_{j=1}^{\lfloor \log_2(i)\rfloor}[L_{\sigma_{(L-j+1)}}H(i,j)]
\]
and
\[
H(i, j) =
\begin{cases}
\mathrm \|\Delta W^{L-j+1}\|_2 &\text{if } \lfloor \frac{i}{2^{\lfloor \log_2(i)\rfloor - j}} \rfloor \mod 2 = 1,\\[6pt]
\mathrm \|W_0^{L-j+1}\|_2 &\text{if }  \lfloor \frac{i}{2^{\lfloor \log_2(i)\rfloor - j}} \rfloor \mod2=0.
\end{cases},
\]
where $F(i)$ and $H(i, j)$ are index functions that can be visualized in Figure \ref{fig:AB_recursivecollapse}. For representational purposes, every vertex that has three red edges adds the $\ell_2$ norm of the layer below its activation function on the zero vector. When a vertex has two different colored edges strictly below it, it collapses into an $A$ and $B$ sub-component. When this occurs, no additional offset term is added to our summation. A total of $2^L-(L+1)$ of these offset terms will be added. Both $P(i,j)$ and $H(i, j)$ can also take cases by even and odd inputs as their indexing requires modulus arithmetic over binary classifications $(\| W_0^{i}\|\text{ and } \|\Delta W^{i}\|).$

Now that we have an upper bound for the difference of our hypotheses, we estimate the difference in terms of true loss and empirical loss:
\begin{align*}
\cL_{\rm global}(\mathbf{W}_0 + \Delta \mathbf{W}) - \cL_{\rm global}(\mathbf{W}_0)
&= \mathbb{E}_{ \mathcal{X}, \mathcal{Y}\sim\nu}[\ell(f_{\mathbf{W}_0 + \Delta \mathbf{W}}(X), Y)]
 - \mathbb{E}_{ \mathcal{X}, \mathcal{Y}\sim\nu }[\ell(f_{\mathbf{W}_0}(X),Y)] \\
&= \mathbb{E}_{ \mathcal{X}, \mathcal{Y}\sim\nu}[\ell(f_{\mathbf{W}_0 + \Delta \mathbf{W}}(X),Y)
 - \ell(f_{\mathbf{W}_0}(X),Y)].\\
 &\leq \mathbb{E}_{ \mathcal{X}, \mathcal{Y}\sim\nu}[L_{\cL} \|f_{\mathbf{W}_0 + \Delta \mathbf{W}}(X)-f_{\mathbf{W}_0}(X)\|] \\
 &\leq  \mathbb{E}_{ \mathcal{X}, \mathcal{Y}\sim\nu}\left[L_{\cL}\left(C\prod_{k=1}^LL_{\sigma_i}\left[\sum_{i=1}^{2^L-1}\prod_{j=1}^LP_L(i, j)\right] + \sum_{i\neq2^a-1}^{2^L-2}F(i)\right)\right] \\
 &= L_{\cL}\left[C\prod_{k=1}^LL_{\sigma_i}\left[\sum_{i=1}^{2^L-1}\prod_{j=1}^LP_L(i, j)\right] + \sum_{i\neq 2^a-1}^{2^L-2}F(i)\right].
\end{align*}

 \begin{figure}
    \centering
    \includegraphics[width=0.6\linewidth]{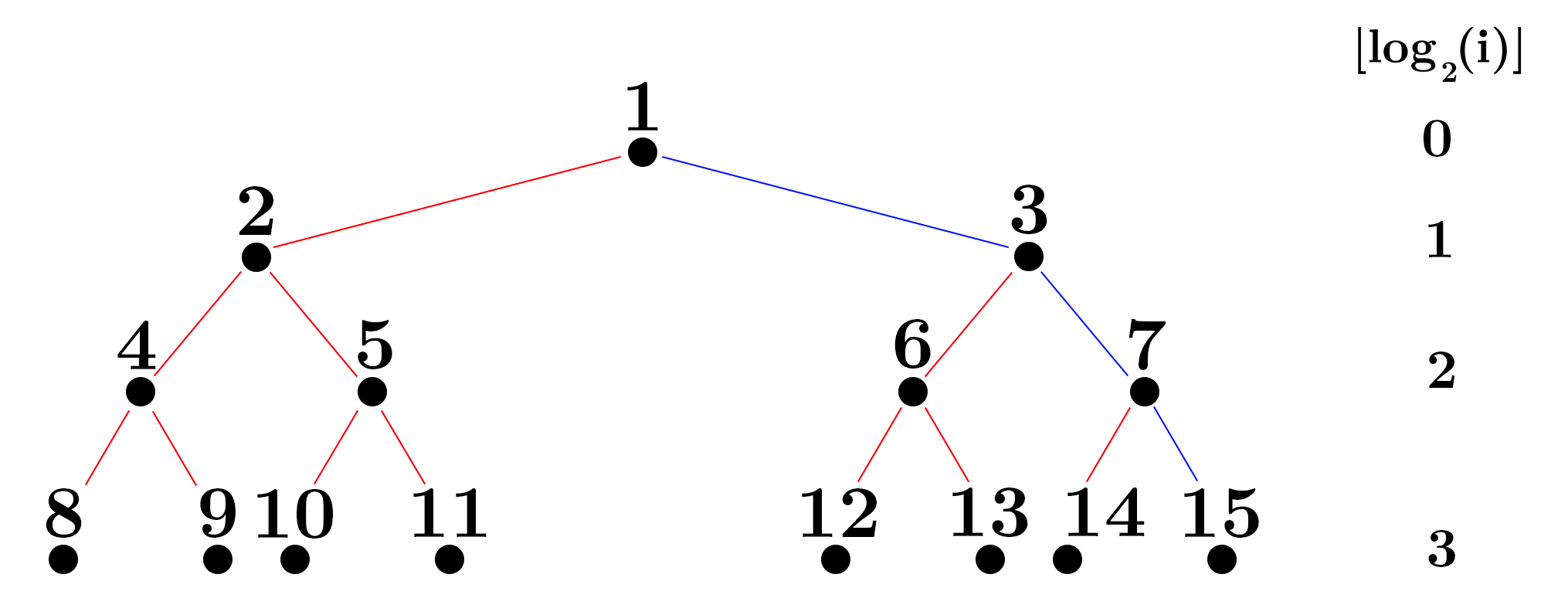}
    \caption{$\|f_{\textbf{W}_0+\Delta \textbf{W}} - f_{\textbf{W}_0} \|$ Visual representation of the recursive collapse of differences.}
    \label{fig:AB_recursivecollapse}
\end{figure}
Similarly,
\begin{align*}
\cL(\mathbf{W}_0 + \Delta \mathbf{W}) - \cL(\mathbf{W}_0)
&= \frac{1}{L}\sum_{i'=1}^{L} \ell\!\big(f_{\mathbf{W}_0 + \Delta \mathbf{W}}(x_i'),\, y_i'\big)
 \;-\; \frac{1}{L}\sum_{i'=1}^{L} \ell\!\big(f_{\mathbf{W}_0}(x_i'),\, y_i'\big) \\
&= \frac{1}{L}\sum_{i'=1}^{L}
\Big[\ell\!\big(f_{\mathbf{W}_0 + \Delta \mathbf{W}}(x_i'),\, y_i'\big)
      - \ell\!\big(f_{\mathbf{W}_0}(x_i'),\, y_i'\big)\Big] \\
&\le \frac{1}{L}\sum_{i'=1}^{L}
L_{\cL}\,\big\|f_{\mathbf{W}_0 + \Delta \mathbf{W}}(x_i') - f_{\mathbf{W}_0}(x_i')\big\| \\
&\le \frac{1}{L}\sum_{i'=1}^{L}
L_{\cL}\,\Big(
 C\,\prod_{k=1}^{L} L_{\sigma_k}\,
 \Big[\sum_{i=1}^{2^{L}-1}\,\prod_{j=1}^{L} P_L(i,j)\Big]
 \;+\; \sum_{i\neq2^a-1}^{2^{L}-2} F(i)\Big) \\
&= L_{\cL}\,\Big(
 C\,\prod_{k=1}^{L} L_{\sigma_k}\,
 \Big[\sum_{i=1}^{2^{L}-1}\,\prod_{j=1}^{L} P_L(i,j)\Big]
 \;+\; \sum_{i\neq2^a-1}^{2^{L}-2} F(i)\Big).
\end{align*}
Using the triangle inequality, we reach:
\begin{align*}
\big|\mathcal{G}(\mathbf{W}_0 + \Delta \mathbf{W}) - \mathcal{G}(\mathbf{W}_0)\big|
&= \big|\cL_{\rm global}(\mathbf{W}_0 + \Delta \mathbf{W}) - \cL(\mathbf{W}_0 + \Delta \mathbf{W})
      - \cL_{\rm global}(\mathbf{W}_0) + \cL(\mathbf{W}_0)\big| \\
&\le \big|\cL_{\rm global}(\mathbf{W}_0 + \Delta \mathbf{W}) - \cL_{\rm global}(\mathbf{W}_0)\big|
   + \big|\cL(\mathbf{W}_0 + \Delta \mathbf{W}) - \cL(\mathbf{W}_0)\big| \\
&\le 2\,L_{\cL}\,\Big(
 C\,\prod_{k=1}^{L} L_{\sigma_k}\,
 \Big[\sum_{i=1}^{2^{L}-1}\,\prod_{j=1}^{L} P_L(i,j)\Big]
 \;+\; \sum_{i\neq 2^a-1}^{2^{L}-2} F(i)\Big).
\end{align*}
Finally, we obtain the inequality:
$$\mathcal{G}(\mathbf{W}_0 + \Delta \mathbf{W}) \leq \mathcal{G}\left(\mathbf{W}_0\right) + 2\,L_{\cL}\,\left(
 C\,\prod_{k=1}^{L} L_{\sigma_k}\,
 \Big[\sum_{i=1}^{2^{L}-1}\,\prod_{j=1}^{L} P_L(i,j)\Big]
 \;+\; \sum_{i=1; i\neq2^a-1;a\in[L]}^{2^{L}-2} F(i)\right).$$

\smartparagraph{Bound around $f_{\Delta\mathbf{W}}$.} We can also perturb around $\mathcal{G}(\Delta \textbf{W})$
by swapping the roles or conditions of $W_0^{(i)}$ and $\Delta W^{(i)}$ in the zero–activation bookkeeping function $H(i, j)$. This requires us to ignore the indices $2^a, a\in [L]$ as opposed to $2^a-1, a\in[L]$ as viewable in Figure~\ref{fig:AB_recursivecollapse}. Similarly, the function $P_L(\cdot,\cdot)$ can be kept unchanged by shifting the summation index range from $1\!:\!2^{L}\!-\!1$ to $2\!:\!2^{L}$. Thus
\[
\mathcal{G}(\mathbf{W}_0+\Delta\mathbf{W})
\;\le\;
\mathcal{G}(\Delta\mathbf{W})
\;+\;
2\,L_{\cL}\,\left(
 C\,\prod_{k=1}^{L} L_{\sigma_k}\,
 \Big[\sum_{i=2}^{2^{L}}\,\prod_{j=1}^{L} P_L(i,j)\Big]
 \;+\; \sum_{i=3;i\neq2^a; a \in [L]}^{2^{L}-1} {F}(i)\right).
\]

Consequently, we can conclude with:
$$\mathcal{G}(\mathbf{W}_0 + \Delta \mathbf{W}) \leq \min\left( \mathcal{G}(\mathbf{W}_0) + \Phi_{\Delta \mathbf{W}}, \mathcal{G}(\Delta \mathbf{W}) + \Phi_{\mathbf{W}_0}\right)$$

\[
\Phi_{\mathbf{W}} =
\begin{cases}
2\,L_{\cL}\,\left(
 C\,\prod_{k=1}^{L} L_{\sigma_k}\,
 \Big[\sum_{i=1}^{2^{L}-1}\,\prod_{j=1}^{L} P_L(i,j)\Big]
 \;+\; \sum_{i\neq 2^a-1}^{2^{L}-2} F(i)\right),
& \text{for }\mathbf{W}=\Delta\mathbf{W}, \\[0.6em]
2\,L_{\cL}\,\left(
 C\,\prod_{k=1}^{L} L_{\sigma_k}\,
 \Big[\sum_{i=2}^{2^{L}}\,\prod_{j=1}^{L} P_L(i,j)\Big]
 \;+\; \sum_{i\neq2^a}^{2^{L}-1} {F}(i)\right),
& \text{for }\mathbf{W}=\mathbf{W}_0.
\end{cases}
\]
Hence, the result. 
\end{proof}

\subsubsection{Neural Network with No activation Function---Special case of Theorem \ref{theorem:nonlinearGenBound}}\label{appendix:special cases}
We can upper bound the generalization error of a neural network with no nonlinear activation functions, i.e., $\sigma_i=I_{n_i}$ for all $i\in[L]$. We additionally include the simplest case of a one-layer linear network.

\begin{corollary}\label{corollary:lineargen} 
Let Assumption \ref{ass:bounded feature vector} hold, and $\cL$ follow Assumption \ref{ass:smoothness}. Let $\sigma_i=I_{n_i},$ for all $i\in[L]$, and $f_{\mathbf{W}_0+BA}(x)=(W_0^{L}+B^LA^{L}(\cdots(W_0^{2}+B^2A^{2}(W_0^{1}+B^1A^{1})x)\cdots))$. Then we have:
$${\mathcal{G}(\mathbf{W}_0 + \Delta\textbf{W}) \leq \min(\mathcal{G}(\mathbf{W}_0) + 2CL_{\cL}\sum_{i=1}^{2^L-1}\prod_{j=1}^L P_L(i, j), \mathcal{G}(\Delta\textbf{W}) +2CL_{\cL}\sum_{i=2}^{2^L}\prod_{j=1}^LP_L(i, j)}).$$
\end{corollary}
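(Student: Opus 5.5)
The plan is to obtain Corollary~\ref{corollary:lineargen} as a direct specialization of Theorem~\ref{theorem:nonlinearGenBound}, with $\Delta W^i = B^iA^i$ and $\sigma_i = I_{n_i}$ for every layer.

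\textbf{Step 1: check the hypotheses.} The identity map is $1$-Lipschitz, so Assumption~\ref{ass:activationlipschitz} holds with $L_{\sigma_i}=1$ for all $i\in[L]$. Assumption~\ref{ass:bounded feature vector} and the Lipschitz loss (Assumption~\ref{ass:smoothness}) are assumed in the corollary. Hence Theorem~\ref{theorem:nonlinearGenBound} applies to $f_{\mathbf{W}_0+\mathbf{BA}}$, and it gives
\[
\mathcal{G}(\mathbf{W}_0+\Delta\mathbf{W})\le\min\big(\mathcal{G}(\mathbf{W}_0)+\Phi_{\Delta\mathbf{W}},\ \mathcal{G}(\Delta\mathbf{W})+\Phi_{\mathbf{W}_0}\big).
\]

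\textbf{Step 2: simplify the correction terms.} The product $\prod_{i=1}^L L_{\sigma_i}$ equals $1$. Every offset term $F(i)$ carries the factor $\|\sigma_{L-\psi(i)}(0)\|$, and for the identity activation $\sigma(0)=0$, so every $F(i)$ vanishes. What remains is
\[
\Phi_{\Delta\mathbf{W}} = 2CL_{\cL}\sum_{i=1}^{2^L-1}\prod_{j=1}^L P(i,j),\qquad
\Phi_{\mathbf{W}_0} = 2CL_{\cL}\sum_{i=2}^{2^L}\prod_{j=1}^L P(i,j).
\]
Substituting these into the minimum from Step 1 gives exactly the displayed bound of the corollary, with $P_L(i,j)$ as defined in the proof of Theorem~\ref{theorem:nonlinearGenBound}.

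\textbf{Step 3: the one-layer case.} Set $L=1$. The first correction reduces to $2CL_{\cL}\|\Delta W^1\|_2$, and the second to $2CL_{\cL}\|W_0^1\|_2$. This is consistent with the one-layer expansion $\|f_{\mathbf{W}_0+\Delta\mathbf{W}}-f_{\mathbf{W}_0}\|\le C\|\Delta W^1\|_2$ recorded in that proof.

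\textbf{Main obstacle.} The only delicate point is not algebraic. It is making sure that the $F(i)$ terms genuinely disappear rather than being merely bounded. This relies on $\sigma_i(0)=0$, which holds for the identity.

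A fallback is available if the index conventions for $P$ (which differ slightly between the statement and the proof) cause trouble. One can redo the telescoping directly for the linear network: the recursion $\|(W_0+\Delta W)u-W_0v\|\le\|\Delta W\|_2\|u\|+\|W_0\|_2\|u-v\|$ with $\|u\|\le C\prod\|W_0^j+\Delta W^j\|_2$ expands the products of sums into the same $2^L$-term sum. The remaining argument then repeats the global/empirical risk comparison and the triangle inequality from the proof of Theorem~\ref{theorem:nonlinearGenBound}.
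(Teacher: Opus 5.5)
Your proposal is correct and follows the paper's own route: the corollary is just Theorem~\ref{theorem:nonlinearGenBound} specialized to $\sigma_i=I_{n_i}$, where $\prod_i L_{\sigma_i}=1$ and every offset term $F(i)$ vanishes because $\sigma_i(0)=0$. You are also right to use $P_L(i,j)$ with the bit test $\lfloor (i-1)/2^{L-j}\rfloor$ from the proof rather than the $2^{L-1}$ printed in the theorem statement; read literally, the latter does not depend on $j$ and would not give a valid bound, and your direct expansion into $\prod_j(\|W_0^j\|_2+\|\Delta W^j\|_2)-\prod_j\|W_0^j\|_2$ (respectively $-\prod_j\|\Delta W^j\|_2$) is a sound self-contained check of both correction terms.
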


\begin{remark} \label{rmk:lineargenbound}
Let Assumption \ref{ass:bounded feature vector} hold, and $\cL$ follow Assumption \ref{ass:smoothness}. If $L=1$, i.e., the model consists of only 1 layer, then we have:
$${\mathcal{G}(\mathbf{W}_0 + \Delta \textbf{W}) \leq \min(\mathcal{G}(\mathbf{W}_0) + 2CL_{\cL}\| \Delta W\|_2, \mathcal{G}(\Delta \textbf{W}) + 2CL_{\cL}\|W_0\|_2)}.$$
\end{remark}
\begin{table*}
  \centering
  \scriptsize
  \setlength{\tabcolsep}{2pt}
  \renewcommand{\arraystretch}{0.8}
  \caption{\small{\textbf{Summary of the benchmarks, quality metrics, and trainable parameters.} For LoRA and Asymmetric LoRA methods, we report their percentage of trainable parameters relative to FFT. }}
  \label{tab:task-summary}
\scalebox{0.93}{\begin{tabular}{@{} l l l r l l l l @{} }
  \toprule
 \textbf{Task} & \textbf{Model} & \makecell[l]{\textbf{Pretrained}\\\textbf{On}} & \makecell[l]{\textbf{Fine-Tuned}\\\textbf{On}} & \makecell[l]{\textbf{Trainable}\\\textbf{Parameters (FFT)}} & \textbf{LoRA} & \makecell[l]{\textbf{Asymmetric}\\\textbf{LoRA}} & \makecell[l]{\textbf{Quality}\\\textbf{Metric}}\\
  \midrule
  \multirow{13}{*}{Natural Language Processing}
      & \multirow{2}{*}{RoBERTa-Base}
        & \multirow{2}{*}{English language corpora} & MRPC & 124.6M & 0.944 & 0.708 & Accuracy\\
        && & CoLA & 124.6M & 0.944 & 0.708 & MCC\\
        \cmidrule(lr){2-8}
      & \multirow{2}{*}{RoBERTa-Large}
        & \multirow{2}{*}{English language corpora} & MRPC & 355.4M & 0.735 & 0.515 & Accuracy\\
        && & CoLA & 355.4M & 0.735 & 0.515 & MCC\\
        \cmidrule(lr){2-8}
      & \multirow{3}{*}{DeBERTa v2 XXL}
        & \multirow{3}{*}{English language corpora} & MRPC & 1.56B & 0.301 & 0.151 & Accuracy\\
        && & TREC-50 & 1.56B & 0.301 & 0.151 & Accuracy\\
        && & PAWS & 1.56B & 0.301 & 0.151 & Accuracy\\
        \cmidrule(lr){2-8}
      & \multirow{5}{*}{DeBERTa v3 Base}
        & \multirow{5}{*}{English language corpora} & MRPC & 184.4M & 0.641 & 0.481 & Accuracy\\
        && & RTE & 184.4M & 0.641 & 0.481 & Accuracy\\
        && & STS-B & 184.4M & 0.640 & 0.481 & Accuracy\\
        && & TREC-50 & 184.4M & 0.661 & 0.501 & Accuracy\\
        && & PAWS & 184.4M & 0.641 & 0.481 & Accuracy\\
        \cmidrule(lr){2-8}
      & GPT2-Small & WebText & E2E & 124.4M & 6.140 & 5.904 & Accuracy\\
  \midrule
  \multirow{4}{*}{Image Classification}
      & \multirow{2}{*}{ViT-Tiny}
            & \multirow{2}{*}{ImageNet-1K} & OfficeHome & 5.54M & 2.815 & 1.518 & Accuracy\\
            && & Cifar10 & 5.53M & 2.633 & 1.333 & Accuracy\\
            \cmidrule(lr){2-8}
      & \multirow{2}{*}{ViT-Base}
            & \multirow{2}{*}{ImageNet-21K then ImageNet-1K} & OfficeHome & 85.8M & 0.740 & 0.399 & Accuracy\\
            && & Cifar10 & 85.8M & 0.692 & 0.350 & Accuracy\\
  \midrule
  Coding Generation
      & DeepSeek-Coder-Base & Repo-Level Code Corpus & DJANGO & 1.35B & 0.233 & 0.117 & Exact Match\\
  \midrule
  \multirow{6}{*}{Logical Reasoning}
      & \multirow{4}{*}{TinyLlama} & \multirow{4}{*}{SlimPajama}
            & OpenBookQA & 1.03B & 0.218 & 0.079 & Accuracy\\
            && & FOLIO & 1.03B & 0.218 & 0.079 & Accuracy\\
            && & LogiQA & 1.03B & 0.218 & 0.079 & Accuracy\\
            && & CLUTRR & 1.03B & 0.221 & 0.082 & Accuracy\\
        \cmidrule(lr){2-8}
      & \multirow{2}{*}{LlaMA3-8B} & \multirow{2}{*}{Large-scale multilingual corpora}
            & OpenBookQA & 8.03B & 0.091 & 0.035 & Accuracy\\
            && & CLUTRR & 8.03B & 0.092 & 0.036 & Accuracy\\
  \bottomrule
\end{tabular}}
\end{table*}

\subsubsection{Tightness of the bounds in Theorem \ref{theorem:nonlinearGenBound}}\label{appendix:tightness of bound} 
% \aritra{Christian: Rewrite the section such that every sentence makes sense. Please do a good job this time. You should now understand what I want. The first sentence does not make any sense. Please work sensibly.}

% \hl{We do not wish or attempt, we confirm. Use a direct sentence. Stop hyperbolizing.} 

% \hl{Write small sentences. Poor people write long sentences.}

We demonstrate Theorem \ref{theorem:nonlinearGenBound} as an appropriate upper bound on the generalization error. We show the case where $f_{\textbf{W}_0 + \Delta\textbf{W}}=f_{\textbf{W}_0}$ and guarantee that $\cG (\textbf{W}_0+\Delta \textbf{W})=\cG(\textbf{W}_0).$

Assume $\Delta \textbf{W}$ was never trained, i.e., $\|\Delta W^{i}\|=0,$ for all $i \in [L]$. Denote $\hat{F}:=\sum_{i=1|i\neq2^a-1;i\in[L]}^{2^L-2} F(i)$ Then we have:
\begin{align*}
    |\cG(\textbf{W}_0 + \Delta \textbf{W})-\cG(\textbf{W}_0)| 
    &\overset{\rm Theorem~\ref{theorem:nonlinearGenBound}}{\leq} 
    2L_{\cL}(C\prod_{i=1}^LL_{\sigma_i}\sum_{i=1}^{2^L-1}\prod_{j=1}^LP_L(i,j) 
    + \sum_{i \neq 2^a-1:a\in[L]}^{2^L-2}F(i)) \\
    &\overset{}{=} 2L_{\cL}\big(C\prod_{i=1}^LL_{\sigma_i}(\|W_0^{i}\|_2+ \|\Delta W^{i}\|_2) 
    - C\prod_{i=1}^LL_{\sigma_i}\| W_0^{i} \|_2 + \hat{F}\big)\\
    &\overset{\rm \| \Delta W^{i}\|_2=0;}{=} 2L_{\cL}\big(C\prod_{i=1}^LL_{\sigma_i}\|W_0^{i}\|_2 - C\prod_{i=1}^LL_{\sigma_i}\|W_0^{i}\|_2+\hat{F}\big)\\
    &= 2L_{\cL}\hat{F}.
\end{align*}
Since each $F(i)$ does not take entries from $2^a-1,$ where $a\in[L]$, at least one $H(i,j)$ returns the spectral norm of one of the $\Delta \textbf{W}$ layers, returning 0 by construction. Hence, each $F(i)$ returns 0 and we obtain the result: $|\cG(\textbf{W}_0+\Delta\textbf{W})-\cG(\textbf{W}_0)|\leq0$ confirming that $\cG(\textbf{W}_0+\Delta\textbf{W})=\cG(\textbf{W}_0),$ if $\Delta \textbf{W}$ was never trained. This way, we make sure the generalization measure would be unchanged and does not risk including unnecessary terms.

\subsubsection{Adapting Theorem~\ref{theorem:nonlinearGenBound} to Attention Mechanism} 
\label{subsubsec:transformer}
Theorem~\ref{theorem:nonlinearGenBound} applies to any architecture that can be written as a composition of linear maps and Lipschitz maps, under bounded input. We therefore view transformer blocks as fitting the theorem. Let embedded inputs be bounded, and let $X$ denote the current input sequence.

The MLP sub-blocks follow the same structure as standard DNNs, so it suffices if their activation functions are Lipschitz~\cite{goodfellow2016deep}. Transformers also include residual connections and normalization. Residual connections take the form $X + F(X)$, which preserves the same decomposition up to a constant~\cite{he2016deep}. LayerNorm is applied elementwise across features; under bounded activations, LayerNorm is Lipschitz on the bounded set, and therefore it can be treated as another Lipschitz map in the composition~\cite{xu2019understanding}. For the multi-head attention (MHA) sub-blocks, the sequence $X$ is projected into queries, keys, and values such that
$Q=XW_Q$, $K=XW_K$, $V=XW_V$, and the attention output is projected by $W_O$. Each head forms attention weights from $QK^\top$ using softmax and masks and then directly applies them to $V$~\cite{vaswani2017attention}. 

Define the linear map $\cT(X) := [XW_Q,| XW_K| XW_V]$ and define
$\sigma(\cT(X)) =\sigma_{(Q,K,V)}(\cdot) = \mathrm{Softmax}(\frac{QK^\top}{\sqrt{d_k}}+M)V$,
where $M$ is an attention mask. Then one attention head can be written as
$\cV(X) = (\sigma_{(Q,K,V)}\circ \cT(X))\,W_O$.
We treat $\sigma_{(Q,K,V)}$ as a Lipschitz operator and denote the Lipschitz constant by $L_\sigma$. In particular, Softmax is $\frac{1}{2}$-Lipschitz uniformly across all $\ell_p$ norms~\cite{nair2025softmax}.

Consider $\|\sigma(\cT_{\textbf{W}_0+\Delta\textbf{W}}(X))W^{\textbf{W}_0+\Delta\textbf{W}}_O - \sigma(\cT_{\textbf{W}_0}(X))W^{\textbf{W}_0}_O\|$. We expand it in the same way as in the proof of Theorem~\ref{theorem:nonlinearGenBound}. Denote $Z_\Delta = \sigma_{(Q, K, V)}\circ(\cT_{\textbf{W}_0+\Delta \textbf{W}}(X))$ and $Z_0 = \sigma_{(QKV)}\circ(\cT_{\textbf{W}_0}(X))$. Let $W_O^{\textbf{W}_0+\Delta \textbf{W}} = W_{O, 0} + \Delta W_{O}$. Then, 

\begin{align*}
\|\sigma(\cT_{\textbf{W}_0+\Delta\textbf{W}}(X))W^{\textbf{W}_0+\Delta\textbf{W}}_O - \sigma(\cT_{\textbf{W}_0}(X))W^{\textbf{W}_0}_O\| 
& \overset{\rm By\; construction}{=}
\| Z_\Delta W_O^{\mathbf W_0+\Delta \mathbf W} - Z_0 W_O^{\mathbf W_0}\| \\ 
& \overset{By \;W_O~\rm Definition}{=}\| Z_\Delta (W_{O,0}+\Delta W_O) - Z_0 W_{O,0} \| \\ 
%&= \| (Z_\Delta - Z_0) W_{O,0} + Z_\Delta \Delta W_O \| \\ 
& \overset{\rm By\; Triangle~Inequality}{\le}
\| (Z_\Delta - Z_0) W_{O,0} \| + \| Z_\Delta \Delta W_O \| \\ 
& \overset{\rm Inequality~\eqref{eq:norm_bound_inequality}}{\le}
\|Z_\Delta - Z_0\| \, \|W_{O,0}\|_2 + \|Z_\Delta\| \, \|\Delta W_O\|_2. &(\star)
\end{align*}
% \[
% \|\sigma(\cT_{\textbf{W}_0+\Delta\textbf{W}}(X))W^{\textbf{W}_0+\Delta\textbf{W}}_O - \sigma(\cT_{\textbf{W}_0}(X))W^{\textbf{W}_0}_O\|
% \overset{\rm Construction}{=}
% \| Z_\Delta W_O^{\mathbf W_0+\Delta \mathbf W} - Z_0 W_O^{\mathbf W_0} \|
% \]

% \[
% \| Z_\Delta W_O^{\mathbf W_0+\Delta \mathbf W} - Z_0 W_O^{\mathbf W_0} \|
% \overset{W_O~\rm Definition}{=}
% \| Z_\Delta (W_{O,0}+\Delta W_O) - Z_0 W_{O,0} \|
% \]

% \[
% =
% \| Z_\Delta W_{O,0} + Z_\Delta \Delta W_O - Z_0 W_{O,0} \|
% =
% \| (Z_\Delta - Z_0) W_{O,0} + Z_\Delta \Delta W_O \|
% \]

% \[
% \overset{\rm Triangle~Inequality}{\le}
% \| (Z_\Delta - Z_0) W_{O,0} \| + \| Z_\Delta \Delta W_O \|
% \overset{\rm Inequality~\ref{eq:norm_bound_inequality}}{\le}
% \|Z_\Delta - Z_0\| \, \|W_{O,0}\|_2 + \|Z_\Delta\| \, \|\Delta W_O\|_2 ~(\star)
% \]
Now, we consider bounding the term $\|Z_\Delta-Z_0\|$ as follows:
\begin{align*}
    \|Z_\Delta - Z_0\| & \overset{\rm By\;construction}{=}
\|\sigma(\cT_{\mathbf W_0+\Delta \mathbf W}(X))-\sigma(\cT_{\mathbf W_0}(X))\| 
\\ & \overset{\rm Assumption~\ref{ass:activationlipschitz}}{\le}
L_\sigma \|\cT_{\mathbf W_0+\Delta \mathbf W}(X)-\cT_{\mathbf W_0}(X)\|.
\end{align*}

% \[
% \|Z_\Delta - Z_0\|
% \overset{\rm Construction}{=}
% \|\sigma(\cT_{\mathbf W_0+\Delta \mathbf W}(X))-\sigma(\cT_{\mathbf W_0}(X))\|
% \overset{\rm Assumption~\ref{ass:activationlipschitz}}{\le}
% L_\sigma \|\cT_{\mathbf W_0+\Delta \mathbf W}(X)-\cT_{\mathbf W_0}(X)\|
% \]
Expanding $\cT_{\textbf{W}_0+\Delta \textbf{W}}(X) - \cT_{\textbf{W}_0}(X)$ we obtain:
\begin{align*}
    \cT_{\mathbf W_0+\Delta \mathbf W}(X)-\cT_{\mathbf W_0}(X) & \overset{\rm By\;construction}{=}
X(W_{QKV,0}+\Delta W_{QKV})-XW_{QKV,0}\\ &= X\Delta W_{QKV}, 
\end{align*}
which implies
\begin{align*}
\|\cT_{\mathbf W_0+\Delta \mathbf W}(X)-\cT_{\mathbf W_0}(X)\| & \overset{\rm Inequality~\ref{eq:norm_bound_inequality}}{\le}
\|X\| \, \|\Delta W_{QKV}\|_2. &(\triangle)
\end{align*}

% \[
% \cT_{\mathbf W_0+\Delta \mathbf W}(X)-\cT_{\mathbf W_0}(X)
% \overset{\rm Construction}{=}
% X(W_{QKV,0}+\Delta W_{QKV})-XW_{QKV,0}
% =
% X\Delta W_{QKV}
% \]

% \[
% \|\cT_{\mathbf W_0+\Delta \mathbf W}(X)-\cT_{\mathbf W_0}(X)\|
% \overset{\rm Inequality~\ref{eq:norm_bound_inequality}}{\le}
% \|X\| \, \|\Delta W_{QKV}\|_2~(\triangle)
% \]
Finally, we obtain the result: 
\begin{align*}
    &\| \sigma(\cT_{\textbf{W}_0+\Delta\textbf{W}}(X))W^{\textbf{W}_0+\Delta\textbf{W}}_O - \sigma(\cT_{\textbf{W}_0}(X))W^{\textbf{W}_0}_O\|\\& \overset{\rm(\star)~and~(\triangle)}{\le}
L_\sigma \|X\| \, \|\Delta W_{QKV}\|_2 \, \|W_{O,0}\|_2
+
\|Z_\Delta\| \, \|\Delta W_O\|_2.
\end{align*}
% \[
% \| \sigma(\cT_{\textbf{W}_0+\Delta\textbf{W}}(X))W^{\textbf{W}_0+\Delta\textbf{W}}_O - \sigma(\cT_{\textbf{W}_0}(X))W^{\textbf{W}_0}_O\|
% \overset{\rm(\star)~and~(\triangle)}{\le}
% L_\sigma \|X\| \, \|\Delta W_{QKV}\|_2 \, \|W_{O,0}\|_2
% +
% \|Z_\Delta\| \, \|\Delta W_O\|_2.
% \]

Hence, the main difference for Theorem~\ref{theorem:nonlinearGenBound} between transformers and deep neural networks appears in the MHA block, where subtracting the fine-tuned and pretrained models produces an additional term $\|Z_\Delta\|\|\Delta W_O\|_2$ from the output projection update. This term is controlled under our setup since $X$ is bounded and $V=X(W_V+\Delta W_V)$ is linear, we have $\|V\|\le \|X\|\|W_V+\Delta W_V\|_2$. Thus $\|Z_\Delta\|=\|\rm Softmax(\cdot)V\|\le C_Z$ for some constant $C_Z$ depending on the input bound and $\|W_V+\Delta W_V\|_2$.

If $W_O$ is frozen ($\Delta W_O=0$), the additional term vanishes, and the bound simplifies to $L_\sigma\|X\|\,\|\Delta W_{QKV}\|_2\,\|W_{O,0}\|_2$. This has the same proof structure as in Theorem~\ref{theorem:nonlinearGenBound}, with $\|W_{O,0}\|_2$ contributing as an additional spectral factor. The attention mask $M$ is fixed, so it does not contribute to the parameter difference. Also, with $W_{QKV}=[W_Q\,W_K\,W_V]$ where we concatenate the projection matrices, we have $\|\Delta W_{QKV}\|_2 \le \sqrt{\|\Delta W_Q\|^2_2 + \|\Delta W_K\|^2_2 + \|\Delta W_V\|^2_2}$ by properties of block matrices. This leads to the development of Theorem~\ref{thm:attn_head}.
\setcounter{theorem}{3}
\begin{theorem}\textbf{(Upperbound on self-attention fine-tuned update)}
\label{thm:attn_head}
Let an input sequence $X$ be bounded, such that $\|X\|\le C$. Consider one attention head as
$\mathcal V_{\mathbf W}(X) \;=\; (\sigma_{(QKV)}\circ \mathcal T_{\mathbf W}(X))W_O,
\mathcal T_{\mathbf W}(X) := [XW_Q \mid XW_K \mid XW_V],
$ where $\sigma_{(QKV)}(\cdot)=\mathrm{Softmax}(\frac{QK^\top}{\sqrt{d_k}}+M)V$ and $M$ is a fixed mask, let $X \in \mathbb{R}^{\hat{n} \times \hat{m}}$.
Then the difference in attention-head outputs satisfies
\[
\|\mathcal V_{\mathbf W_0+\Delta\mathbf W}(X)-\mathcal V_{\mathbf W_0}(X)\|
\le
C(L_\sigma\|\Delta W_{QKV}\|_2\|W_{O,0}\|_2
+
\sqrt{\hat{n}}\|W_{V,0}+\Delta W_{V}\|_2\|\Delta W_O\|_2),
\]
where $\Delta W_{QKV}$ denotes a concatenated update $\Delta W_{QKV}=[\Delta W_Q\mid \Delta W_K\mid \Delta W_V]$, $L_\sigma$ represents the Lipschitz constant of $\rm Softmax(\frac{QK^{\top}+M}{\sqrt{d_k}})V.$
\end{theorem}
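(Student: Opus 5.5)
We follow the computation carried out just before the statement, now making every constant explicit. Write $Z_\Delta=\sigma_{(QKV)}(\mathcal T_{\mathbf W_0+\Delta\mathbf W}(X))$ and $Z_0=\sigma_{(QKV)}(\mathcal T_{\mathbf W_0}(X))$. Split the output projection as $W_O^{\mathbf W_0+\Delta\mathbf W}=W_{O,0}+\Delta W_O$. Adding and subtracting $Z_\Delta W_{O,0}$ gives
\[
\mathcal V_{\mathbf W_0+\Delta\mathbf W}(X)-\mathcal V_{\mathbf W_0}(X)=(Z_\Delta-Z_0)W_{O,0}+Z_\Delta\,\Delta W_O .
\]
We then apply the triangle inequality \eqref{eq:triangle_inequality} and the operator-norm bound \eqref{eq:norm_bound_inequality}. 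Throughout we use the norm $\|\cdot\|$ on sequences (Frobenius), for which $\|YA\|\le\|Y\|\,\|A\|_2$ holds. This reproduces $(\star)$:
\[
\|\mathcal V_{\mathbf W_0+\Delta\mathbf W}(X)-\mathcal V_{\mathbf W_0}(X)\|\le \|Z_\Delta-Z_0\|\,\|W_{O,0}\|_2+\|Z_\Delta\|\,\|\Delta W_O\|_2 .
\]

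\textbf{First term.} Treat $\sigma_{(QKV)}$ as an $L_\sigma$-Lipschitz operator on the (bounded) range of $\mathcal T$, as in Assumption \ref{ass:activationlipschitz}. This gives $\|Z_\Delta-Z_0\|\le L_\sigma\|\mathcal T_{\mathbf W_0+\Delta\mathbf W}(X)-\mathcal T_{\mathbf W_0}(X)\|$. By linearity of $\mathcal T$ in the weights, the difference equals $X\Delta W_{QKV}$. Hence, by $(\triangle)$ and $\|X\|\le C$, it is bounded by $C\|\Delta W_{QKV}\|_2$. The first term therefore contributes $CL_\sigma\|\Delta W_{QKV}\|_2\|W_{O,0}\|_2$.

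\textbf{Second term.} We need the explicit constant $C_Z=\sqrt{\hat n}\,C\,\|W_{V,0}+\Delta W_V\|_2$. Set $S=\mathrm{Softmax}(\cdot)\in\mathbb R^{\hat n\times\hat n}$, which is row-stochastic, and $V=X(W_{V,0}+\Delta W_V)$. Then $\|SV\|\le\|S\|_2\|V\|$. Each row of $S$ is a probability vector, so $\|S\|_2\le\|S\|_F\le\sqrt{\hat n}$, since every row has Euclidean norm at most $1$. (One could also use $\|S\|_2\le\sqrt{\|S\|_1\|S\|_\infty}$; however, the column sums are not controlled, so the crude $\sqrt{\hat n}$ is the robust choice and matches the statement.) Combining this with $\|V\|\le\|X\|\,\|W_{V,0}+\Delta W_V\|_2\le C\|W_{V,0}+\Delta W_V\|_2$ yields $\|Z_\Delta\|\le\sqrt{\hat n}\,C\|W_{V,0}+\Delta W_V\|_2$. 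Substituting both bounds into $(\star)$ and factoring out $C$ gives the claimed inequality.

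\textbf{Main obstacle.} The delicate point is the Lipschitz treatment of $\sigma_{(QKV)}$. The map $(Q,K,V)\mapsto\mathrm{Softmax}(QK^\top/\sqrt{d_k}+M)V$ is not globally Lipschitz, because it is bilinear in $Q,K$ and multiplies by $V$. Our plan is to take $L_\sigma$, as the statement does, to be its Lipschitz constant on the bounded set of inputs reachable under $\|X\|\le C$ and the given weight norms. If justification is required, we would establish this on that compact set by a mean-value/Jacobian bound. That argument would combine the $\tfrac12$-Lipschitzness of Softmax with the boundedness of $Q$, $K$, $V$ there.

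A secondary bookkeeping issue is the choice of norms on sequences. Frobenius norm for $X$ and $Z$ together with spectral norm for weights must be used consistently, so that both \eqref{eq:norm_bound_inequality} and the $\sqrt{\hat n}$ factor hold simultaneously.
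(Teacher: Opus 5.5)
Your proposal is correct and follows the paper's proof essentially step for step: the same split $(Z_\Delta-Z_0)W_{O,0}+Z_\Delta\,\Delta W_O$, the same Lipschitz bound on the first term via $X\Delta W_{QKV}$, and the same estimate $\|Z_\Delta\|\le\|S\|_2\|V\|\le\sqrt{\hat n}\,C\|W_{V,0}+\Delta W_V\|_2$ on the second term. The only difference is how you get $\|S\|_2\le\sqrt{\hat n}$: you use $\|S\|_2\le\|S\|_F$, whereas the paper uses $\|S\|_2\le\sqrt{\|S\|_1\|S\|_\infty}$ with $\|S\|_\infty=1$ and $\|S\|_1\le\hat n$. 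Your parenthetical dismissal of that route is mistaken: every column sum is trivially at most $\hat n$ because the entries lie in $[0,1]$, and that is exactly the control the paper uses.
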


\begin{proof}
We have already shown that $$\|\mathcal V_{\mathbf W_0+\Delta\mathbf W}(X)-\mathcal V_{\mathbf W_0}(X)\| \leq L_\sigma \|X\| \, \|\Delta W_{QKV}\|_2 \, \|W_{O,0}\|_2
+
\|Z_\Delta\| \, \|\Delta W_O\|_2.$$
We assume $\|X\| \le C$, hence
$$\|\mathcal V_{\mathbf W_0+\Delta\mathbf W}(X)-\mathcal V_{\mathbf W_0}(X)\| \leq CL_\sigma \, \|\Delta W_{QKV}\|_2 \, \|W_{O,0}\|_2
+
\|Z_\Delta\| \, \|\Delta W_O\|_2.$$
We aim to upper bound $Z_\Delta$; $Z_\Delta = \sigma_{(QKV)}\circ(\cT_{\textbf{W}_0+\Delta \textbf{W}}(X))=\text{Softmax}(\frac{QK^\top}{\sqrt{d_k}})V$ which gives us the relationship
$$\|Z_\Delta\| = \| \text{Softmax}(\frac{QK^\top}{\sqrt{d_k}}+M)V\| \leq \| \text{Softmax}(\frac{QK^\top}{\sqrt{d_k}}+M)\|_2 \|V\| \le \sqrt{\hat{n}}\|V\|$$
$$\sqrt{\hat{n}}\|V\| = \sqrt{\hat{n}}\|X(W_{V, 0}+\Delta W_V)\| \le \sqrt{\hat{n}}\|X\|\|W_{V, 0}+\Delta W_V\|_2\le C\sqrt{\hat{n}}\|W_{V,0}+\Delta W_V\|_2.$$
To justify the $\sqrt{\hat n}$ factor, define the attention weight matrix
\[
S := \mathrm{Softmax}(\frac{QK^\top}{\sqrt{d_k}}+M)\in\mathbb R^{\hat n\times \hat n},
\]
where Softmax is applied row-wise. Each row of $S$ is a probability vector, hence $S_{ij}\ge 0$ and
$\sum_{j=1}^{\hat n} S_{ij}=1$ for all $i$. Therefore $\|S\|_\infty = 1$.
Moreover, since each column sum satisfies $\sum_{i=1}^{\hat n} S_{ij}\le \hat n$, we have $\|S\|_1\le \hat n$.
Using the inequality $\|A\|_2 \le \sqrt{\|A\|_1\|A\|_\infty}$ gives
$
\|S\|_2 \le \sqrt{\|S\|_1\|S\|_\infty}\le \sqrt{\hat n}.
$
Consequently,
\[
\|Z_\Delta\| = \|SV\| \le \|S\|_2\|V\| \le \sqrt{\hat n}\|V\|.
\] 
Hence, we conclude with
\[
\|\mathcal V_{\mathbf W_0+\Delta\mathbf W}(X)-\mathcal V_{\mathbf W_0}(X)\|
\le
C(L_\sigma\|\Delta W_{QKV}\|_2\|W_{O,0}\|_2
+
\sqrt{\hat{n}}\|W_{V,0}+\Delta W_{V}\|_2\|\Delta W_O\|_2).
\]
\end{proof}

\subsubsection{Adapting Theorem \ref{theorem:nonlinearGenBound} under special cases}\label{subsubsec:genba}
To adapt Theorem \ref{theorem:nonlinearGenBound} under special cases, we need the following general assumptions.  
\begin{assumption}
\label{assumption:lipchitz_1}
The loss function, $\ell(\cdot):\mathbb{R}^d\to\mathbb{R}$, is 1-Lipschitz, i.e, $|\ell(f_{\textbf{W}}(x), y)-\ell(f_{\textbf{W}'}(x),y)| \leq \|f_{\textbf{W}}(x)-f_{\textbf{W}'}(x)\|$ for all \textbf{W}, \textbf{W}' $\in \mathbb{R}^d$ and $(x,y) \in \cX \times \cY.$
\end{assumption}

\begin{assumption}
\label{assumption:bounded_func}
    The loss function, $\ell(\cdot):\mathbb{R}^d\to\mathbb{R}$, is bounded, i.e., there exists a constant $C_2\geq 0$ such that $\|\ell(f_{\textbf{W}}(x), y)\| \leq C_2,$ for all $\textbf{W} \in \mathbb{R}^d$ and $(x,y) \in \cX \times \cY.$
\end{assumption}

\myNum{i}\smartparagraph{Perturbing around $\mathcal{G}(\textbf{W}_0)$.}
First, we adapt Theorem 4.1 in \cite{regularizationfinetuning} into our notation and quote it below. 

\begin{theorem}\label{theorem:PAC-Bayes}
    {\textbf{(PAC-Bayes generalization bound for fine-tuning)}[\cite{regularizationfinetuning}, Theorem 4.1]}  Let Assumption~\ref{ass:bounded feature vector} hold with the requirement that $C \geq 1$. Let the loss function, $\cL$, follow Assumptions~\ref{assumption:lipchitz_1} and \ref{assumption:bounded_func}. Let 
 $\|W_0^{(i)}\|_2 \leq$ $\cA_i$ with fixed $\cA_i>1$, $\|\Delta W^{(i)}\| \leq Q_i$, for all $i \in [L]$ and $V=\max_{i \in [L]} \{m_i,n_i\}$. Let $\epsilon$ and $\delta$ be arbitrary small values. Then with probability $1-2\delta$, the following inequality holds: 
$$\mathcal{G}(\mathbf{W}_0 + \Delta \mathbf{W}) \leq \epsilon + C_2\sqrt{\frac{ \frac{36}{\epsilon^2}C^2V\log(4LVC_2)(\sum_{i=1}^L \frac{\prod_{j=1}^L(\cA_j+Q_j)}{\cA_i+Q_i})^2(\sum_{i=1}^LQ_i^2) +3 \ln\frac{|N|}{\delta}+8 }{|N|}}.$$
\end{theorem}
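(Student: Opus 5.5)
Since Theorem~\ref{theorem:PAC-Bayes} is a restatement of Theorem~4.1 in \cite{regularizationfinetuning}, my first task is to check that our hypotheses match theirs. Then I would reconstruct their PAC-Bayes argument in our notation rather than cite it blindly. The standing assumptions map as follows: the input bound $\|x\|\le C$ with $C\ge1$ (Assumption~\ref{ass:bounded feature vector}), the 1-Lipschitz and $C_2$-bounded loss (Assumptions~\ref{assumption:lipchitz_1} and \ref{assumption:bounded_func}), and the spectral bounds $\|W_0^{(i)}\|_2\le\cA_i$, $\|\Delta W^{(i)}\|\le Q_i$. Together these give exactly the setting of \cite{regularizationfinetuning}: an $L$-layer feedforward network with 1-Lipschitz activations. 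Throughout, $\mathbf{W}_0$ is the data-independent pretrained point and $\Delta\mathbf{W}$ is learned on $N$.

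The argument then runs in four steps.

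\emph{Step 1 (PAC-Bayes inequality).} Take the prior $\pi=\mathcal N(\mathbf{W}_0,\sigma^2 I)$, which is legitimate because $\mathbf{W}_0$ does not depend on $N$. Take the posterior $\rho=\mathcal N(\mathbf{W}_0+\Delta\mathbf{W},\sigma^2 I)$. With probability $1-\delta$, the McAllester bound for a $C_2$-bounded loss gives that $\mathbb E_\rho[\hat\cL_{\rm global}-\cL]$ is at most $C_2\sqrt{(\mathrm{KL}(\rho\|\pi)+\ln(|N|/\delta)+O(1))/|N|}$. Here
\[
\mathrm{KL}(\rho\|\pi)=\sum_i\|\Delta W^{(i)}\|_F^2/(2\sigma^2)\le V\sum_i Q_i^2/(2\sigma^2).
\]

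\emph{Step 2 (derandomization).} I would replace $\mathbb E_\rho$ by the deterministic point $\mathbf{W}_0+\Delta\mathbf{W}$ using a layerwise perturbation bound. Expanding the difference of the perturbed and unperturbed networks exactly as in the telescoping of Theorem~\ref{theorem:nonlinearGenBound}, a perturbation $U^{(i)}$ of each layer changes the output by at most
\[
C\Big(\sum_i \tfrac{\prod_j(\cA_j+Q_j)}{\cA_i+Q_i}\|U^{(i)}\|_2\Big),
\]
up to higher-order terms, which remain controlled while $\|U^{(i)}\|_2\le \cA_i+Q_i$. A Gaussian matrix concentration bound gives $\|U^{(i)}\|_2\lesssim\sigma\sqrt{V\log(LV)}$ with high probability.

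\emph{Step 3 (choice of $\sigma$).} Choose $\sigma$ so that this output perturbation is at most $\epsilon/2$. By Assumption~\ref{assumption:lipchitz_1}, the losses then move by at most $\epsilon/2$ on both the population and empirical sides. This fixes
\[
\sigma^{-2}\asymp \epsilon^{-2}C^2V\log(4LVC_2)\Big(\sum_i \tfrac{\prod_j(\cA_j+Q_j)}{\cA_i+Q_i}\Big)^2.
\]
Substituting into the KL term produces the displayed factor $\tfrac{36}{\epsilon^2}(\cdots)(\sum_iQ_i^2)$. The additive $8$ and the second $\delta$ (hence probability $1-2\delta$) come from the event on which the perturbation bound holds.

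\emph{Step 4 (uniformity over the radii).} Because $Q_i$ may be chosen after seeing data, I would cover a geometric grid of radii and take a union bound. This is absorbed into the $\log$ and constant terms.

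The main obstacle is Step~2. The perturbation bound must hold uniformly enough to pass from a random to a deterministic predictor. It must also keep the $\log(4LVC_2)$ factor rather than a worse dimension dependence, and track the exact constant $36$. If that bookkeeping is not clean, I would defer to the explicit computation in \cite{regularizationfinetuning} for the constants and retain only the structural argument above.
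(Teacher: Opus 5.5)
The paper gives no proof of this statement. It quotes Theorem~4.1 of \cite{regularizationfinetuning} and only translates notation, so your reconstruction has to stand on its own. Its skeleton is the right one: Gaussian prior at the data-independent $\mathbf{W}_0$, posterior at $\mathbf{W}_0+\Delta\mathbf{W}$, a layerwise perturbation bound, and a choice of $\sigma$. The derandomization, however, has a genuine gap. A Gaussian $U$ is unbounded, so the output perturbation is at most $\epsilon/2$ only on an event $S=\{\max_i\|U^{(i)}\|_2\le t\}$, with $\Pr(S^c)\le 2LVe^{-t^2/(2V\sigma^2)}$. Off $S$, your Step~3 claim that both risks move by at most $\epsilon/2$ is false, so $\mathcal{G}(\mathbf{W}_0+\Delta\mathbf{W})\le\mathbb{E}_\rho[\mathcal{G}]+\epsilon$ does not follow. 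Charging $S^c$ to ``the second $\delta$'' does not repair this. $S$ is an event over the posterior noise, independent of $N$, and its probability is unrelated to $\delta$, so it cannot consume confidence over the sample. The missing step is to absorb $S^c$ inside the PAC-Bayes bound. One way is to apply it to the posterior $\tilde\rho$ obtained by conditioning $\rho$ on $S$. This is legitimate because the PAC-Bayes inequality holds simultaneously for all posteriors, and it costs only $\mathrm{KL}(\tilde\rho\|\pi)\le(\mathrm{KL}(\rho\|\pi)+\ln 2)/\Pr(S)$. On the support of $\tilde\rho$ the $\epsilon/2$ bound then holds deterministically. As a side remark, the tail level giving $\Pr(S^c)\le 1/(2C_2)$ is $t=\sigma\sqrt{2V\ln(4LVC_2)}$, which reproduces the logarithm in the display.

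Four further points need fixing.
\begin{enumerate}
\item Your linearized bound $eC\sum_i\frac{\prod_j(\cA_j+Q_j)}{\cA_i+Q_i}\|U^{(i)}\|_2$ comes from bounding $\prod_{j>i}\|W_0^{(j)}+\Delta W^{(j)}+U^{(j)}\|_2$ by $e\prod_{j>i}(\cA_j+Q_j)$. That requires $\|U^{(j)}\|_2\le(\cA_j+Q_j)/L$, not $\|U^{(j)}\|_2\le\cA_j+Q_j$; the latter only gives a factor up to $2^{L-1}$. The condition must also be checked for the chosen $\sigma$. On $S$, $t\le\epsilon/\bigl(2eC\sum_k\prod_{j\neq k}(\cA_j+Q_j)\bigr)\le\epsilon/(2eL)$, using $C\ge1$ and $\cA_j+Q_j>1$. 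This is below $(\cA_i+Q_i)/L$ once $\epsilon\le 2e$. That is what the hypotheses $C\ge1$, $\cA_i>1$ and ``$\epsilon$ small'' are for, and your outline never uses them.
\item In this paper $\|\cdot\|$ is the Frobenius norm, so $\|\Delta W^{(i)}\|\le Q_i$ gives $\mathrm{KL}(\rho\|\pi)\le\sum_iQ_i^2/(2\sigma^2)$ directly. Your extra factor $V$ would produce $V^2$ after substituting $\sigma^{-2}\propto V$, which contradicts your claimed match with the display. The spectral bound $\|W_0^{(i)}+\Delta W^{(i)}\|_2\le\cA_i+Q_i$ still holds, since $\|\cdot\|_2\le\|\cdot\|_F$.
\item The display has no $L_{\sigma_i}$ factors and no $\|\sigma_i(0)\|$ offsets, so it needs 1-Lipschitz activations with $\sigma_i(0)=0$. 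These are not among the listed hypotheses. Running the telescoping of Theorem~\ref{theorem:nonlinearGenBound} under Assumption~\ref{ass:activationlipschitz} would reintroduce exactly those terms, so the hypothesis match you assert is not exact.
\item If the $Q_i$ are fixed radii, as the statement reads, then $\sigma$ is chosen before seeing $N$ and the grid in Step~4 is unnecessary.
\end{enumerate}
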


We now use Theorem \ref{theorem:PAC-Bayes} to obtain a bound for $\cG(\textbf{W}_0).$ The following Theorem gives that. 

\begin{theorem}\label{theorem:PAC-Extension}
    Using the Assumptions made for Theorem \ref{theorem:nonlinearGenBound} and Theorem \ref{theorem:PAC-Bayes}, the following inequality holds with probability at least $1-2\delta:$
    $$\cG(\textbf{W}_0+\Delta \textbf{W}) \le \epsilon + C_2\sqrt{\frac{3\ln \frac{|N|}{\delta}+8}{|N|}}+2L_{\cL}\big(C\prod_{i=1}^L L_{\sigma_i}\sum_{i=1}^{2^L-1}\prod_{j=1}^L P(i,j)+\sum_{i\neq2^a-1:a\in[L]}^{2^L-2}F(i)\big).$$
\end{theorem}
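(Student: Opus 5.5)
The plan is to combine the first branch of Theorem~\ref{theorem:nonlinearGenBound} with Theorem~\ref{theorem:PAC-Bayes}, applied to the pretrained model alone. The key observation is that $\textbf{W}_0$ can be written as a fine-tuned model $\textbf{W}_0 + \mathbf{0}$ with zero update. Theorem~\ref{theorem:PAC-Bayes} allows any admissible bounds $Q_i$ on $\|\Delta W^{(i)}\|$, so for this degenerate update we may take $Q_i = 0$ for all $i\in[L]$. This is legitimate because the hypotheses only ask that $\|W_0^{(i)}\|_2\le \cA_i$ with $\cA_i>1$, and we keep those unchanged. The bounded-input condition with $C\ge 1$ and Assumptions~\ref{assumption:lipchitz_1}--\ref{assumption:bounded_func} are inherited from the hypotheses of the present statement.

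\textbf{Step 1.} Substitute $Q_i=0$ into the bound of Theorem~\ref{theorem:PAC-Bayes}. The first term under the square root carries the factor $\sum_{i=1}^L Q_i^2 = 0$. Its other factors are finite: $\log(4LVC_2)$ is fixed, and $\big(\sum_i \prod_j(\cA_j+Q_j)/(\cA_i+Q_i)\big)^2$ is finite since $\cA_i>1$. Hence the whole term vanishes. Therefore, with probability at least $1-2\delta$,
\[
\mathcal{G}(\textbf{W}_0) \le \epsilon + C_2\sqrt{\tfrac{3\ln(|N|/\delta)+8}{|N|}}.
\]
A subtle point is that $\epsilon$ also appears as $1/\epsilon^2$ inside the root. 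This causes no trouble, because it multiplies zero.

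\textbf{Step 2.} Invoke the deterministic, pointwise inequality from Theorem~\ref{theorem:nonlinearGenBound}:
\[
\mathcal{G}(\textbf{W}_0+\Delta\textbf{W}) \le \mathcal{G}(\textbf{W}_0)+\Phi_{\Delta\textbf{W}}.
\]
This holds surely under Assumptions~\ref{ass:bounded feature vector}--\ref{ass:activationlipschitz}. Chaining it with Step~1 on the same event of probability at least $1-2\delta$ gives the claimed bound, once $\Phi_{\Delta\textbf{W}}$ is written out explicitly as
\[
2L_{\cL}\Big(C\prod_i L_{\sigma_i}\sum_{i=1}^{2^L-1}\prod_j P(i,j)+\sum_{i\neq 2^a-1}F(i)\Big).
\]
No union bound is needed, since the only random event comes from Step~1.

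The main obstacle is justifying Step~1 rigorously. Theorem~\ref{theorem:PAC-Bayes} bounds the generalization error of a network via its distance from an initialization, and we must make sure that setting that distance, and hence the $Q_i$, to zero is within its hypotheses rather than a boundary case. If the cited result required $Q_i>0$, I would apply it with $Q_i=\eta$ for arbitrary $\eta>0$. The first term under the root is continuous in $\eta$ and tends to $0$, so it can be absorbed by slightly enlarging $\epsilon$, or one can take a limit. A second point to check is compatibility of assumptions. Theorem~\ref{theorem:nonlinearGenBound} uses a general $L_{\cL}$, while Theorem~\ref{theorem:PAC-Bayes} needs $1$-Lipschitzness. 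Since the statement assumes both sets of hypotheses, one simply has $L_{\cL}\le 1$ and may keep $L_{\cL}$ symbolically.
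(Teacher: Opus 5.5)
Your proposal is correct and follows the paper's proof exactly: substitute $Q_i=0$ into Theorem~\ref{theorem:PAC-Bayes} so the complexity term vanishes and you get $\mathcal{G}(\textbf{W}_0)\le \epsilon + C_2\sqrt{(3\ln(|N|/\delta)+8)/|N|}$. Then chain this with the deterministic first branch $\mathcal{G}(\textbf{W}_0+\Delta\textbf{W})\le\mathcal{G}(\textbf{W}_0)+\Phi_{\Delta\textbf{W}}$ of Theorem~\ref{theorem:nonlinearGenBound}; your extra remarks (no union bound is needed, and the $Q_i=\eta\downarrow 0$ fallback in case the degenerate case were excluded) are sound refinements that the paper leaves implicit.
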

%\smartparagraph{Proof of Theorem \ref{theorem:PAC-Extension}.}
\begin{proof}
    We wish to find $\cG (\textbf{W}_0)$, and note that if we never train the model, we obtain the expression $\|W_{0}^{i}-W_{0}^{i}\|_2=0$. Thus, we can use $Q_i=0$ for all $i \in [L]$ and obtain:
    \begin{align*}
        \cG(\textbf{W}_0) &\overset{\rm Theorem~\ref{theorem:PAC-Bayes}}{\leq} \epsilon + C_2\sqrt{\frac{ \frac{36}{\epsilon^2}C^2V\log(4LVC_2)(\sum_{i=1}^L \frac{\prod_{j=1}^L(\cA_j+Q_j)}{\cA_i+Q_i})^2(\sum_{i=1}^LQ_i^2) +3 \ln\frac{|N|}{\delta}+8 }{|N|}}\\
        &\overset{Q_i=0;i~\in~[L]}{=} \epsilon + C_2\sqrt{\frac{ \frac{36}{\epsilon^2}C^2V\log(4LVC_2)(\sum_{i=1}^L \frac{\prod_{j=1}^L(\cA_j+0)}{\cA_i+0})^2(\sum_{i=1}^L0^2) +3 \ln\frac{|N|}{\delta}+8 }{|N|}}\\
        &= \epsilon + C_2\sqrt{\frac{3\ln \frac{|N|}{\delta}+8}{|N|}}.
    \end{align*}
    Now that we have an upper bound for $\cG(\textbf{W}_0)$, we can apply Theorem \ref{theorem:nonlinearGenBound} and obtain the following:
    \begin{align*}
        \cG(\textbf{W}_0+\Delta \textbf{W}) &\overset{\rm Theorem~\ref{theorem:nonlinearGenBound}}{\leq} \cG(\textbf{W}_0)+\Phi_{\Delta \textbf{W}}\\
        &\overset{}{\leq} \epsilon + C_2\sqrt{\frac{3\ln \frac{|N|}{\delta}+8}{|N|}}+\Phi_{\Delta\textbf{W}}. 
    \end{align*}
    By substituting the expression for $\Phi_{\Delta\textbf{W}},$ in the above expression we have:
    $$\cG(\textbf{W}_0+\Delta \textbf{W}) \le \epsilon + C_2\sqrt{\frac{3\ln \frac{|N|}{\delta}+8}{|N|}}+2L_{\cL}\big(C\prod_{i=1}^L L_{\sigma_i}\sum_{i=1}^{2^L-1}\prod_{j=1}^L P(i,j)+\sum_{i\neq2^a-1:a\in[L]}^{2^L-2}F(i)\big).$$
    This concludes the proof. 
\end{proof}

 % \myNum{v} $I(\{\Delta W_i\}_{i \in L}| \textbf{W}) \leq \hat{H}(\{\Delta W_i\}_{i \in L}|\textbf{W}) \leq \log2^{qp}=qp\log 2$, where q is the number of quantized bits, and p is the number of directly trainable parameters[\cite{xu2017information}]
 
 % \myNum{vi} $\cG(\Delta \textbf{W}) \leq \sqrt{\frac{2\sigma^2\mathbf{I}( \{\Delta W_i \}_{i\in[L]}|\textbf{W})}{|N|}}$ [\cite{xu2017information}, Lemma 1]

\myNum{i}\smartparagraph{Perturbing around $\mathcal{G} 
(\mathbf{\mathcal{A}})$.} First, we make another assumption on the loss function and then adapt Theorem 1 in \cite{xu2017information} to our notation. 

\begin{assumption}\label{assumption:sigma_subgaussian} 
The loss function, $\ell(\cdot):\mathbb{R}^d\to\mathbb{R}$, is $\sigma$-sub-gaussian, i.e., $\mathbb{E}(e^{\lambda[\ell(f_{\mathbf{W}}(X),Y)-\mathbb{E}(\ell(f_{\mathbf{W}}(X),Y))]})\leq e^{\frac{\lambda^2\sigma^2}{2}}$ for all $\lambda \in \mathbb{R}$, $\mathbf{W} \in \mathbb{R}^d.$
\end{assumption}

\begin{theorem}\label{Theorem:Information_Generalization_Error} \textbf{(Upper bound on generalization error using mutual information)}[Theorem 1~\cite{xu2017information}] Let $\mathbf{\cA}$ denote a LoRA-based algorithm that outputs $\{\mathbf{\Delta W}_i\}_{i \in [L]}$ on a fine-tuning dataset, $N$. By $\nu$ we denote the underlying distribution of the input space, $\cX$, of which the elements of the fine-tuning dataset $N$ are chosen following i.i.d. Let Assumption~\ref{assumption:sigma_subgaussian} hold. Then we have the following:
$$\cG (\mathbf{\cA})_{\nu} \leq \sqrt{\frac{2\sigma^2\mathbf{I}(\{\mathbf{\Delta W}_i\}_{i\in[L]};N|\mathbf{\cA};\mathbf{W})}{|N|}}.$$ 
\end{theorem}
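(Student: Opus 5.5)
This is the Xu--Raginsky mutual-information bound, specialised to the conditional setting where the backbone $\mathbf{W}$ (i.e., $\mathbf{W}_0$) and the algorithm $\cA$ are fixed. The plan is to regard the LoRA-based algorithm as a randomized map $P_{\{\Delta W_i\}\mid N}$ from the fine-tuning sample $N=\{(x_j,y_j)\}_{j=1}^{|N|}$ to the layerwise updates. Everything is then conditioned on the pretrained weights and the frozen parts of the parameterization (e.g., the fixed $A_0$ in Asym-LoRA or cLA), so the only randomness is in $N$ and in the internal randomness of $\cA$. The quantity to bound is $\cG(\cA)_\nu=\mathbb{E}[\hat{\cL}_{\rm global}(\mathbf{W}_0+\Delta\mathbf{W})-\cL(\mathbf{W}_0+\Delta\mathbf{W})]$, where the expectation is over the joint law of $(N,\{\Delta W_i\})$.

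\textbf{Step 1 (decoupling).} Write the expected empirical risk as $\mathbb{E}_{P_{N,\Delta}}[\cL]$. Write the expected true risk as $\mathbb{E}_{P_N\otimes P_\Delta}[\cL]$, with $N'$ an independent copy of $N$ drawn from $\nu^{|N|}$, which has the same marginal on $\Delta$. This holds because, for a fixed $\Delta$, the empirical risk on a fresh i.i.d. sample has mean $\hat{\cL}_{\rm global}$. Hence $\cG(\cA)_\nu=\mathbb{E}_{P_N\otimes P_\Delta}[F]-\mathbb{E}_{P_{N,\Delta}}[F]$ with $F(N,\Delta):=\cL(\mathbf{W}_0+\Delta\mathbf{W})$ evaluated on $N$.

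\textbf{Step 2 (sub-Gaussianity of $F$).} By Assumption~\ref{assumption:sigma_subgaussian}, each $\ell(f_{\mathbf{W}}(X),Y)$ is $\sigma$-sub-Gaussian under $\nu$ for every fixed parameter. Therefore the average of $|N|$ independent copies is $\sigma/\sqrt{|N|}$-sub-Gaussian under $P_N$, uniformly in $\Delta$. Since the bound is uniform in $\Delta$, it also holds under the product measure $P_N\otimes P_\Delta$.

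\textbf{Step 3 (Donsker--Varadhan).} Apply the variational formula quoted in the \emph{inequalities used} list, $\mathbf{I}=\sup_F\{\int F\,dP_{XY}-\log\int e^{F}d(P_X\otimes P_Y)\}$, to the test function $\lambda F$. Combined with the sub-Gaussian moment bound from Step 2, this gives, for all $\lambda\in\mathbb{R}$,
\[
\lambda\big(\mathbb{E}_{P_{N,\Delta}}[F]-\mathbb{E}_{P_N\otimes P_\Delta}[F]\big)\le \mathbf{I}(\{\Delta W_i\};N\mid\cA;\mathbf{W})+\tfrac{\lambda^2\sigma^2}{2|N|}.
\]
This is a nonnegative quadratic inequality in $\lambda$. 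Its discriminant condition yields $|\cG(\cA)_\nu|\le\sqrt{2\sigma^2\mathbf{I}/|N|}$.

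\textbf{Main obstacle.} The delicate point is Step 3's use of Donsker--Varadhan: $F$ is not bounded, whereas the quoted sup is stated over bounded measurable $F$. I would handle this by truncating, $F_M=\max(\min(F,M),-M)$, applying the formula to $F_M$, and passing $M\to\infty$ by monotone or dominated convergence. Sub-Gaussianity supplies the integrability needed for this limit. A second, lighter point is that all measures must be taken conditionally on $(\cA,\mathbf{W})$. I would carry this conditioning throughout, prove the bound pointwise in $(\cA,\mathbf{W})$, and note that the sub-Gaussian constant does not depend on the conditioning.
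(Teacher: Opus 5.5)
The paper gives no proof of this statement; it imports it as Theorem~1 of \cite{xu2017information}. Your outline is the standard proof of that result: decoupling via an independent copy of $N$, then Donsker--Varadhan, then optimizing over $\lambda$. The route is therefore the right one, but one step fails as written.

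In Step~2 you pass from ``$F(\cdot,\Delta)$ is $\sigma/\sqrt{|N|}$-sub-Gaussian under $P_N$ for each fixed $\Delta$'' to ``it also holds under $P_N\otimes P_\Delta$''. The per-$\Delta$ bound is centered at $\hat{\cL}_{\rm global}(\mathbf{W}_0+\Delta\mathbf{W})$, and that center moves with $\Delta$. A mixture of such laws need not be $\sigma/\sqrt{|N|}$-sub-Gaussian about its overall mean, since its variance picks up the extra term $\mathrm{Var}_{\Delta}\,\hat{\cL}_{\rm global}(\mathbf{W}_0+\Delta\mathbf{W})$. With the test function $\lambda F$, all you can actually derive is
\[
\log \mathbb{E}_{P_N\otimes P_\Delta}\big[e^{\lambda F}\big]\le \frac{\lambda^2\sigma^2}{2|N|}+\log \mathbb{E}_{P_\Delta}\big[e^{\lambda\hat{\cL}_{\rm global}(\mathbf{W}_0+\Delta\mathbf{W})}\big].
\]
Jensen bounds the last term from \emph{below} by $\lambda\,\mathbb{E}_{P_N\otimes P_\Delta}[F]$, not from above, so the inequality you state in Step~3 does not follow.

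The repair is to center conditionally. Apply Donsker--Varadhan to $\lambda G$, where $G(N,\Delta):=F(N,\Delta)-\hat{\cL}_{\rm global}(\mathbf{W}_0+\Delta\mathbf{W})$.
\begin{itemize}
\item Integrating the per-$\Delta$ moment bound gives exactly $\mathbb{E}_{P_N\otimes P_\Delta}[e^{\lambda G}]\le e^{\lambda^2\sigma^2/(2|N|)}$.
\item The $\Delta$-marginal is the same under both measures, so $\mathbb{E}_{P_{N,\Delta}}[G]=-\cG(\cA)_\nu$.
\item Hence $-\lambda\,\cG(\cA)_\nu\le \mathbf{I}+\lambda^2\sigma^2/(2|N|)$ for every $\lambda\in\mathbb{R}$, and your discriminant argument finishes.
\end{itemize}
Equivalently, write $\mathbf{I}=\mathbb{E}_{\Delta}[D(P_{N\mid\Delta}\|P_N)]$, apply Donsker--Varadhan for each fixed $\Delta$ with a common $\lambda$, and average. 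This conditional centering is what the ``for all $\mathbf{W}$'' form of Assumption~\ref{assumption:sigma_subgaussian} is built for.

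Two smaller points.
\begin{itemize}
\item Conditioning on $(\cA,\mathbf{W})$, which may include a random frozen factor such as the column choice in r-cLA, needs two things. The pointwise bound requires $N$ to remain i.i.d.\ $\nu$ given these variables, i.e., they must be drawn independently of $N$. Reaching the conditional mutual information in the statement then requires averaging the pointwise bounds and using $\mathbb{E}\sqrt{X}\le\sqrt{\mathbb{E}X}$.
\item Your truncation plan is fine, but sub-Gaussianity only gives integrability under the product measure. Integrability of $G$ under the joint law comes from Donsker--Varadhan itself: apply it to $\lambda\min(|G|,M)$ and let $M\to\infty$ by monotone convergence.
\end{itemize}
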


Let the loss function $\cL$ follow Assumption~\ref{assumption:sigma_subgaussian}. We present the generalization error upper bounds of the LoRA variants in Table~\ref{tab:upperbounds}. For this, we use the inequality $\mathcal{G}(\mathbf{W}_0 + \mathbf{\cA}) \leq \mathcal{G}(\mathbf{\cA}) + \Phi_{\mathbf{W}_0}$, where $\mathcal{G}(\mathbf{\cA})$ is upper bounded by the use of Lemma~\ref{lemma:information_inequality} quoted below.

\begin{lemma} \label{lemma:information_inequality}(Upperbound on mutual-information)[\cite{xu2017information}] Let $\{\mathbf{\Delta W}_i\}_{i\in [L]}$ be an update to a learning algorithm. Then the mutual information is upper bounded by the uniform distribution over an updated support set, i.e., 
    $\mathbf{I}(\Delta \{\mathbf{W}_i\}_{i\in [L]}; N| \mathbf{\cA}; \mathbf{W}) \leq \ln2^{qp}=qp\ln 2,$ where $q$ represents the number of bits the learning algorithm is designed on, and $p$ is the number of trainable parameters. Thus, with the use of Theorem~\ref{Theorem:Information_Generalization_Error}, if Assumption~\ref{assumption:sigma_subgaussian} holds, then
    $\cG(\mathbf{\cA}) \leq \sqrt{\frac{2\sigma^2qp\ln2}{|N|}}$.
\end{lemma}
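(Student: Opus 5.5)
The plan is to chain Theorem~\ref{Theorem:Information_Generalization_Error} with a counting bound on the mutual information. The latter comes from the entropy of a discrete random variable.

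\textbf{Step 1: bound the mutual information by entropy.} Conditioned on the pretrained weights $\mathbf{W}$ and the choice of algorithm $\mathbf{\cA}$, the output $\{\Delta \mathbf{W}_i\}_{i\in[L]}$ is a function of the trainable parameters only. For LoRA these are $A,B$; for Asym-LoRA, cLA and PaCA they are $B$ alone; for chains they are the $k$ successive factors. In every case the update is obtained from these parameters by a deterministic map. By the data processing inequality \eqref{eq:information_inequality}, $\mathbf{I}(\{\Delta\mathbf{W}_i\};N\mid\mathbf{\cA};\mathbf{W}) \le \mathbf{I}(\theta;N\mid\mathbf{\cA};\mathbf{W})$, where $\theta\in\R^p$ collects the trainable parameters. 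Next, since mutual information is at most entropy, $\mathbf{I}(\theta;N\mid\cdot)\le H(\theta\mid\cdot)$.

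\textbf{Step 2: count the support.} Training is carried out in $q$-bit arithmetic, so each of the $p$ coordinates of $\theta$ takes at most $2^q$ values. Hence $\theta$ ranges over a finite set of cardinality at most $2^{qp}$. Entropy is maximized by the uniform distribution on the support, which gives $H(\theta\mid\cdot)\le \ln 2^{qp}=qp\ln 2$ (in nats, consistent with Theorem~\ref{Theorem:Information_Generalization_Error}). This proves the first claim.

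\textbf{Step 3: plug in.} Under Assumption~\ref{assumption:sigma_subgaussian}, Theorem~\ref{Theorem:Information_Generalization_Error} gives $\cG(\mathbf{\cA})\le\sqrt{2\sigma^2 \mathbf{I}/|N|}$. Substituting the bound from Step~2 yields $\sqrt{2\sigma^2 qp\ln2/|N|}$. As a sanity check, inserting $p=r\sum_i(m_i+n_i)$, $p=r\sum_i n_i$, and the $k$-fold versions for chains recovers the rows of Table~\ref{tab:upperbounds}.

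\textbf{Main obstacle.} The delicate point is justifying Step~1 for the chained and randomized variants. There the map from parameters to $\Delta\mathbf{W}$ involves the random frozen factors $A_0^j$. I would handle this by including the randomness of $A_0$ in the conditioning, alongside $\mathbf{\cA}$ and $\mathbf{W}$. Conditional on it, the map is deterministic and the counting argument goes through unchanged. For chains, one further observes that the total number of trained quantized parameters across the $k$ links is $kp$.
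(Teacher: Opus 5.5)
Your proposal is correct and takes essentially the same route as the paper, which imports this lemma from \cite{xu2017information} and the Asymmetric-LoRA analysis \cite{fixA}: bound the conditional mutual information by the entropy of the $q$-bit quantized trainable parameters, i.e., by the logarithm of their support size, at most $\ln 2^{qp}=qp\ln 2$, and substitute this into Theorem~\ref{Theorem:Information_Generalization_Error}. Your data-processing reduction from $\{\Delta\mathbf{W}_i\}$ to the trainable factors, and your conditioning on the data-independent frozen factors $A_0$, are the same devices the paper uses when it instantiates $p$ for each variant in Table~\ref{tab:upperbounds}.
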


\smartparagraph{How do we arrive at the bounds of different LoRA variants?}

\myNum{a}\smartparagraph{LoRA+} has $\cG(\mathbf{\cA})$ upper bounded by $\sqrt{\frac{2rq\sigma^2\ln2\sum_{i=1}^L(m_i+n_i)}{|N|}}.$ The learning rate does not alter the number of trainable parameters, which leads LoRA+ to possess the same upper bounds as LoRA. We note a unique observation regarding this claim, as $\gamma_A\to0$, LoRA+ takes the lowered generalization error bound of Asymmetric LoRA since the adapter matrix, $A$, is no longer trainable.

\myNum{b}\smartparagraph{cLA} has the fine-tuned update $ B[I_r|\mathbf{0}_{m_i-r}]$, where $[I_r|\mathbf{0}_{m_i-r}]$ is a fixed constant orthogonal matrix. Thus, by using data processing inequality~\eqref{eq:information_inequality}, the mutual information between the two is preserved, i.e, $$\mathbf{I}(\{B_i[I_r|\mathbf{0}_{m_i-r}]_i \}_{i\in[L]}; N|\mathbf{\cA};\mathbf{W}) = \mathbf{I}(\{B_i\}_{i\in[L]}; N|\mathbf{\cA};\mathbf{W}).$$ Similar to~\cite{fixA}, we upper bound mutual information by the uniform distribution of a model's support; particularly $\mathbf{I}(\{B_i\}_{i\in[L]}; N|\Delta\mathbf{W};\mathbf{W}) \leq qr\ln 2\sum_{i=1}^Ln_i$, by Lemma~\ref{lemma:information_inequality}. Finally, by  Theorem~\ref{Theorem:Information_Generalization_Error}, we obtain the result $\cG(\cA) \leq \sqrt{\frac{2rq\sigma^2\ln 2\sum_{i=1}^Ln_i}{|N|}}$.

\myNum{c}\smartparagraph{c$^3$LA} has the fine-tuned update $B_1[I_r|\mathbf{0}_{m_i-r}] + B_2[\mathbf{0}_r|I_r|\mathbf{0}_{m_i-2r}] + \cdots + B_k[\mathbf{0}_{r(k-1)}|I_r|\mathbf{0}_{m_i-kr}].$ This expansion can be simplified by $\sum_{j=1}^k B_j[0_{r\times r(j-1)} \mid I_r \mid 0_{r\times (m_i-rj)}]
= [B_1 \mid B_2 \mid \cdots \mid B_k|\mathbf{0}_{n_i(m_i-kr)}]$. Using~\eqref{eq:rank_inequality}, we can upper bound the rank of $ \sum_{j=1}^k B^j[0_{r\times r(j-1)} \mid I_r \mid 0_{r\times (m_i-rj)}]$ by $kr$. Thus, the mapping $[B_1|\cdots|B_k]\to \Delta\mathbf{W}$ is injective and can be inverted by slicing the last $n_i-kr$ columns. Using DPI, this leads to the expression $$\mathbf{I}(\{\sum_{j=1}^k B_i^j[0_{r\times r(j-1)} \mid I_r \mid 0_{r\times (m_i-rj)}]\}_{i \in [L]};N|\mathbf{\cA};\mathbf{W})=\mathbf{I}(\{ [B_1|\cdots|B_k]_i\}_{i \in [L]};N|\mathbf{\cA};\mathbf{W}).$$ 
We upper bound $\mathbf{I}(\{ [B_1|\cdots|B_k]_i\}_{i \in [L]};N|\mathbf{\cA};\mathbf{W})$ by $qrk\ln2\sum_{i=1}^Ln_i$, using Lemma~\ref{lemma:information_inequality}. Hence, by Theorem~\ref{Theorem:Information_Generalization_Error}, we obtain: $\cG(\mathbf{\cA}) \leq \sqrt{\frac{2rq\sigma^2k\ln 2\sum_{i=1}^Ln _i}{|N|}}.$

\myNum{d}\smartparagraph{CoLA} has the update structure $\Delta\textbf{W}=\sum_{j=1}^kB^jA^j$. Using inequality~\eqref{eq:rank_inequality}, we upper bound the rank of each layer's update by $kr.$ By Lemma~\ref{lemma:information_inequality},  we upper bound $\mathbf{I}(\{ \sum_{j=1}^LB_i^jA_i^j\}_{i \in [L]};N|\mathbf{\cA};\textbf{W})$ by $qrk\ln2\sum_{i=1}^L(m_i+n_i).$ Hence, we obtain $\cG(\mathbf{\cA}) \leq \sqrt{\tfrac{2rq\sigma^2k\ln2\sum_{i=1}^L(m_i + n_i)}{|N|}}$, by Theorem~\ref{Theorem:Information_Generalization_Error}. 

\myNum{e}\smartparagraph{RAC-LoRA} has the fine-tuned update $\sum_{j=1}^kB^jQ^j,$ where we consider each $Q^j$ to be a frozen orthogonal matrix. This update can be represented by $\sum_{j=1}^kB^jQ^j = [B^1|B^2|\cdots|B^k][Q^1|Q^2|\cdots|Q^k]^T,$ where we can invert $[B^1|B^2|\cdots|B^k][Q^1|Q^2|\cdots|Q^L]^T]$ to $[B^1|B^2|\cdots|B^L].$ Thus by using inequality~\eqref{eq:rank_inequality}, DPI, and Lemma~\ref{lemma:information_inequality} we have $$\mathbf{I}(\{[B^1|B^2|\cdots|B^L]_i[Q^1|Q^2|\cdots|Q^L]_i^T]\}_{i \in [L]};N|\mathbf{\cA};\textbf{W})=\mathbf{I}(\{B^1|B^2|\cdots|B^L]_i]\}_{i \in [L]};N|\mathbf{\cA}; \textbf{W}),$$ which is 
$$
\mathbf{I}(\{B^1|B^2|\cdots|B^L]_i]\}_{i \in [L]};N|\mathbf{\cA}; \textbf{W}) \leq  qrk\ln2\sum_{i=1}^{L} n_i.
$$
Hence, by Theorem~\ref{Theorem:Information_Generalization_Error}, we have the result: $\cG (\mathbf{\cA}) \leq \sqrt{\tfrac{2rq\sigma^2k\ln2\sum_{i=1}^Ln_i}{|N|}}$. 

\smartparagraph{Generalization Upperbound on PaCA}

\myNum{f}\smartparagraph{PaCA} has an identical update to cLA if the fine-tuned columns are first $r$ columns of the pretrained backbone. This suggests that any generalization guarantee driven by the update motivates the same generalization behavior for PaCA as cLA. Particularly, since PaCA updates $r\sum_{i=1}^L n_i$ trainable entries (the same degrees of freedom as cLA’s $B$ matrices), Lemma~\ref{lemma:information_inequality} yields $I(\{\Delta W_i\}_{i\in[L]};N|\cA; \textbf{W})\le qr\ln 2\sum_{i=1}^L n_i.$ Plugging into Theorem~\ref{Theorem:Information_Generalization_Error}, we get $\cG(\cA)\le \dfrac{2rq\sigma^2\ln 2\sum_{i=1}^L n_i}{|N|}$, matching the generalization upper bound of cLA from Table~\ref{tab:upperbounds}.

\section{Addendum to Benchmarking and Evaluation}\label{appendix:benchmarking}

In~\S\ref{subsec:implementation}, we summarize the quality metrics and trainable parameters used for training the models in Table~\ref{tab:full-accuracy-table} and provide the specific hyperparameters for fine-tuning each model for each dataset in Table~\ref{tab:task-hyperparams}. In ~\S\ref{subsec:ablation-studies}, we present ablation studies on the effects of learning rate ($\gamma)$, scaling factor $(\alpha)$, and chain reset indices on the resulting test accuracy and test loss for varying ranks. In~\S\ref{subsec:efficiency}, we comment on the potential of our methods by naively leveraging the sparsity of our $A$ matrices. In~\S\ref{appendix:Cosine Similarity} and~\S\ref{appendix:loss-landscape}, we extend~\ref{sec:performance analysis} with the implementation details of the loss landscapes and provide additional loss landscapes and intruder dimension results.  In~\S\ref{subsec:genconnect}, we extend section~\S\ref{sec:generalization} with empirical results on generalization.

\subsection{Implementation Details}\label{subsec:implementation}

We implement the framework in Python using PyTorch~\cite{paszke2019pytorch}. We train all models with the ADAM optimizer~\cite{adam}. Training (of most models) was performed on one 80GB NVIDIA H100 GPU. The ablation studies on ViT-Tiny in Tables~\ref{tab:vittinyablation},~\ref{tab:vit-scaling-factor}, and ~\ref{tab:vit-chain_reset} were trained using one NVIDIA V100 GPU. We provide the hyperparameter settings, i.e., learning rates, learning rate scheduler, chain reset frequency, weight decay, batch size, training epochs, maximum token length or image resolution, and random seeds for the experiments in Table~\ref{tab:task-hyperparams}.

\begin{table*}
  \centering
  \tiny
  \setlength{\tabcolsep}{2pt}
  \renewcommand{\arraystretch}{0.8}
  \caption{\small{\textbf{Summary of the hyperparameters.} We used the same learning rate for LoRA methods that train $B, A$, and Asymmetric LoRA methods that only train $B$; we write (FFT, LoRA, Asym) to indicate those three sets. We selected the best model out of all epochs based on the lowest validation loss, except for the CoLA dataset, where we used the lowest Matthews Correlation Coefficient. We used rank $r=16$ and scaling factor $\alpha=2r$ for all LoRA PEFT methods. For all models, we used the ADAM optimizer~\cite{adam} with $(\beta_1,\beta_2,\epsilon)=(0.9,0.999,1e^{-8}).$ For ViT, RoBERTa, and GPT2, we used gradient clipping on global $L_2$ norm with a max of 1, and did not otherwise. For LoRA+, the learning rate for our $B$ matrix is 16 times that of $A.$}}
  \label{tab:task-hyperparams}
\scalebox{0.9}{\begin{tabular}{@{} l l c c c c c c c c c @{} }
  \toprule
   \textbf{Model} & \textbf{Dataset} & \makecell[l]{\textbf{Scheduler}\\\textbf{(Warmup LR, Ratio)}} & \makecell[l]{\textbf{Learning Rates}\\\textbf{(FFT,LoRA,Asym)}} & \makecell[l]{\textbf{Chain reset}\\\textbf{frequency}} & \makecell[l]{\textbf{Weight decay}\\\textbf{(FFT,LoRA)}} & \textbf{Batch size} & \textbf{Epochs} & \makecell[l]{\textbf{Max length or}\\\textbf{Image size}} & \textbf{Seeds}\\
   \midrule
   \multirow{2}{*}{RoBERTa-Base} &
        MRPC & Linear$(1e^{-6},0.1)$ & $(1e^{-5},3e^{-4},3e^{-4})$ & 3 & $(0.01,0)$ & 32 & 20 & 128 & (12,22,32)\\
        \cmidrule(lr){2-10}
       &CoLA & Linear$(1e^{-6},0.1)$ & $(1e^{-5},3e^{-4},3e^{-4})$ & 3 & $(0.01,0)$ & 32 & 20 & 128 & (12,22,32)\\
    \midrule
    \multirow{2}{*}{RoBERTa-Large} &
        MRPC & Linear$(1e^{-6},0.1)$ & $(1e^{-5},3e^{-4},3e^{-4})$ & 3 & $(0.01,0)$ & 32 & 20 & 128 & (12,22,32)\\
        \cmidrule(lr){2-10}
       &CoLA & Linear$(1e^{-6},0.1)$ & $(1e^{-5},3e^{-4},3e^{-4})$ & 3 & $(0.01,0)$ & 32 & 20 & 128 & (12,22,32)\\
    \midrule
    \multirow{3}{*}{DeBERTa v2 XXL} &
        MRPC & Constant & ( $1e^{-5.5}$,$1e^{-4.5}$,$1e^{-4}$) & 5 & 0 & 8 & 25 & 512 & (100,101,102)\\
        \cmidrule(lr){2-10}
       &TREC-50 & Constant & ( $1e^{-5.5}$,$1e^{-4.5}$,$1e^{-4}$) & 5 & 0 & 8 & 25 & 512 & (100,101,102)\\
       \cmidrule(lr){2-10}
       &PAWS & Constant & (1e$^{-6.5}$, $1e^{-4.5}$, $1e^{-4}$) & 5 & 0 & 8 & 10 & 512 & (100,101,102)\\
    \midrule
    \multirow{5}{*}{DeBERTa v3 Base} &
        MRPC & Constant & ($1e^{-5}$, $1e^{-3.5}$, $1e^{-3}$) & 5 & 0 & 8 & 40 & 512 & (100,101,102)\\
        \cmidrule(lr){2-10}
       &RTE  & Constant & ($1e^{-4.75}$, $1e^{-3.5}$, $1e^{-3}$) & 5 & 0 & 8 & 40 & 512 & (100,101,102)\\
       \cmidrule(lr){2-10}
       &STS-B  & Constant & ($1e^{-4.75}$, $1e^{-3.5}$, $1e^{-3}$) & 5 & 0 & 8 & 40 & 512 & (100,101,102)\\
       \cmidrule(lr){2-10}
       &TREC-50 & Constant & ($1e^{-4.75}$, $1e^{-3.25}$, $1e^{-3}$) & 5 & 0 & 8 & 40 & 512 & (100,101,102)\\
       \cmidrule(lr){2-10}
       &PAWS & Constant & ($1e^{-5}$, $1e^{-3.5}$, $1e^{-3}$) & 5 & 0 & 8 & 20 & 512 & (100,101,102)\\
    \midrule
    GPT2-Small &
        E2E & Linear$(1e^{-6},0.1)$ & $(5e^{-5},3e^{-4},3e^{-4})$ & 1 & (0.01,0) & 16 & 30 & 64 & (12,22,32)\\
    \midrule
    \multirow{2}{*}{ViT-Tiny} &
        OfficeHome & Cosine$(1e^{-6},0.05)$ & $(3e^{-4},1e^{-3},1e^{-3})$ & 5 & (0.05,0) & 64 & 30 & 224 & (12,22,32)\\
        \cmidrule(lr){2-10}
        &CIFAR-10 & Cosine$(1e^{-6},0.05)$ & $(3e^{-4},1e^{-3},1e^{-3})$ & 5 & (0.05,0) & 64 & 30 & 224 & (12,22,32)\\
    \midrule
    \multirow{2}{*}{ViT-Base} &
        OfficeHome & Cosine$(1e^{-6},0.05)$ & $(3e^{-4},1e^{-3},1e^{-3})$ & 5 & (0.05,0) & 64 & 30 & 224 & (12,22,32)\\
        \cmidrule(lr){2-10}
        &CIFAR-10 & Cosine$(1e^{-6},0.05)$ & $(3e^{-4},1e^{-3},1e^{-3})$ & 5 & (0.05,0) & 64 & 30 & 224 & (12,22,32)\\
    \midrule
    DeepSeek-Coder Base &
        DJANGO & Constant & ($1e^{-5.5}$,$1e^{-4.5}$,$1e^{-4}$) & 1 & 0 & 8 & 5 & 512 & (100,101,102)\\
    \midrule
    \multirow{4}{*}{TinyLlama} &
        OpenBookQA & Constant & ($1e^{-6.25}$,$1e^{-3.75}$,$1e^{-3.25}$) & 2 & 0 & 8 & 10 & 512 & (100,101,102)\\
        \cmidrule(lr){2-10}
        &FOLIO & Constant & ($1e^{-5}$,$1e^{-3.75}$,$1e^{-3.5}$) & 2 & 0 & 8 & 10 & 512 & (100,101,102)\\
        \cmidrule(lr){2-10}
        &LogiQA & Constant & ($1e^{-5.75}$,$1e^{-4}$,$1e^{-3.25}$) & 2 & 0 & 8 & 10 & 512 & (100,101,102)\\
        \cmidrule(lr){2-10}
        &CLUTRR & Constant & ($1e^{-6.25}$,$1e^{-5.25}$,$1e^{-4.75}$) & 2 & 0 & 8 & 10 & 512 & (100,101,102)\\  
    \midrule
    \multirow{2}{*}{Llama 3} & OpenBookQA & Constant & ($1e^{-5.25}$, $1e^{-3.75}$, $1e^{-3.5}$) & 2 & 0 & 4 & 5 & 384 & (100,101,102)\\
    \cmidrule(lr){2-10}
    & CLUTRR & Constant & ($1e^{-5.25}$, $1e^{-4.25}$, $1e^{-3.25}$) & 2 & 0 & 4 & 5 & 384 & (100,101,102)\\
    % ELI - I removed Table 10 (Configuration on DeepSeekCoder and TinyLlama regarding chain reset, num epochs, and max length) since I transferred it here. I'm noting this so you know they're accurate.
    % FOR CRISTIAN - REVIEW FOR HYPERPARAM ASSIST
    % On DeBERTa v2 XXL we use $r=16, \alpha=32,$ batch size = 8, (16 for FFT), chain reset $=5$ (2 on PAWS). We report the results for all DeBERTa v2 XXL datasets on 25 epochs with the exception of PAWS which we trained on with 10 epochs. We report the learning rate configuration for DeBERTa v2 XXL in Table~\ref{tab:deberta_v2xxl}. Similarly, we average our results over the seeds 100, 101, and 102.
  \bottomrule
\end{tabular}}
\vspace{-3mm}
\end{table*}

\begin{figure}
    \centering
    \includegraphics[width=0.5\linewidth]{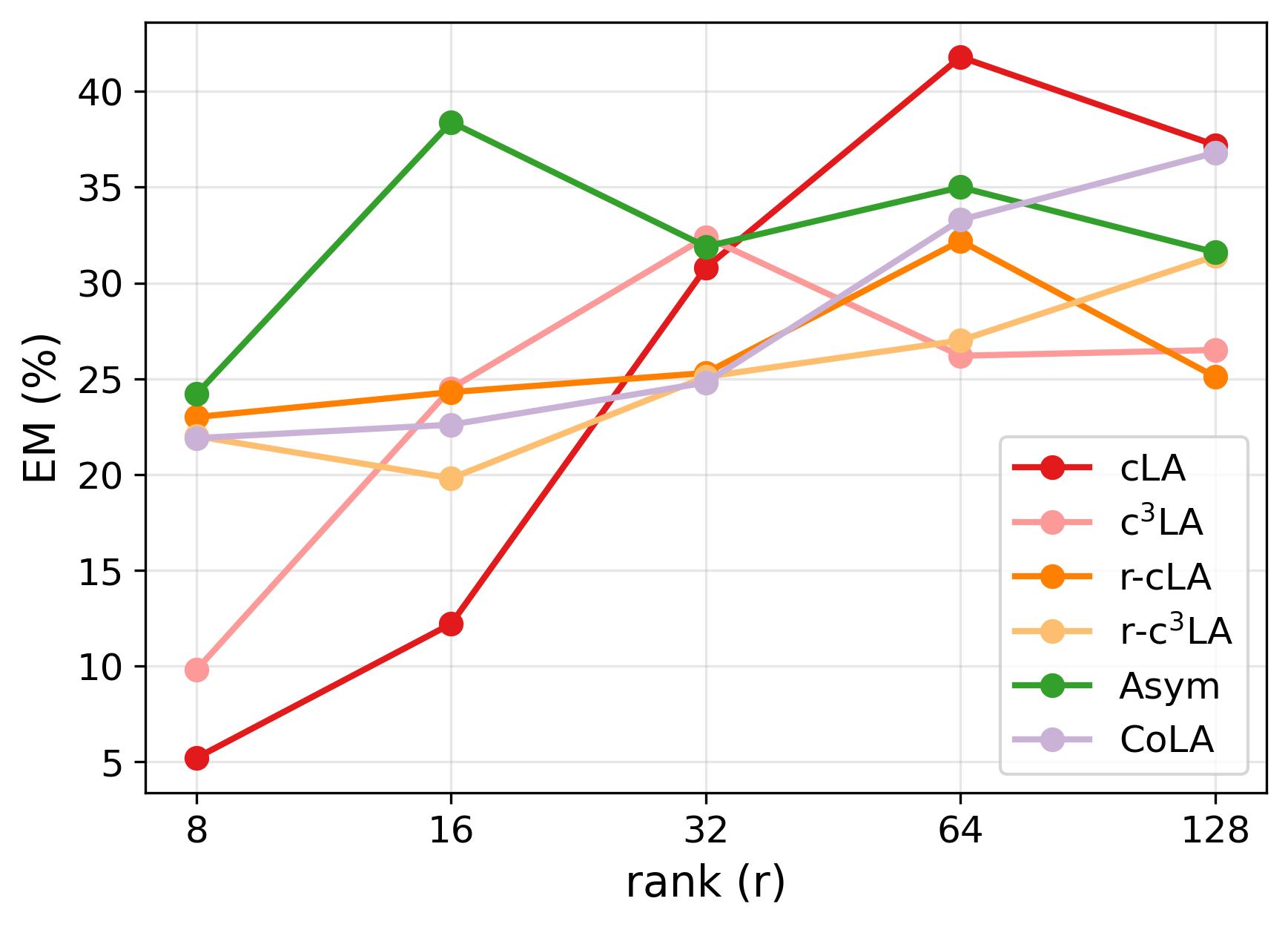}
    \caption{Performance of cLA, c$^3$La, r-cLA, r-c$^3$LA, CoLA, and Asymmetric LoRA on DeepseekCoder fine-tuned on the DJANGO Dataset by Exact Match (EM). We vary the target rank. The \emph{epoch selection} is based on the highest validation accuracy instead of the lowest validation loss. The figure demonstrates that the difference between the sparsity-induced LoRA variants cLA and their non-sparse counterparts tends to decrease with increasing rank.}
    \label{fig:cla_deepseek}
\end{figure}

\subsection{The Effects of Learning Rate, Scaling Factor, and Chain Reset Frequency on Quality Metric Over Various Ranks}\label{subsec:ablation-studies}

The ideal learning rate of an LLM tends to scale inversely with its size~\cite{scalelaw}. Many papers suggest a default scaling factor of $2r$~\cite{bidermanlora,loravfft}.~\cite{rac} suggests that for sufficiently low learning rates, performing a chain reset every epoch is optimal. We validate the first claim under LoRA fine-tuning methods via ablation studies over learning rates presented in Tables~\ref{tab:deberta3ablation}-\ref{tab:vittinyablation}. Similarly, we assess the scaling factor baseline choice in Tables~\ref{tab:deberta-scaling-factor} and~\ref{tab:vit-scaling-factor}, and the optimal chain reset frequency in Table~\ref{tab:chain_reset}. For the ablation studies, we fine-tuned DeBERTaV3-Base on the MRPC, TREC-50, and PAWS for learning rate, MRPC and TREC-50 for scaling factor,and MRPC, CoLA, RTE, and TREC-50 for chain reset over various ranks. We then re-ran the same experiments on ViT-Tiny fine-tuned on the OfficeHome and CIFAR-10 datasets. We ran for 30 epochs.

As shown in Tables~\ref{tab:deberta3ablation} and~\ref{tab:vittinyablation}, Asymmetric LoRA methods are more sensitive to varying learning rates than methods that train both matrices $B,A.$ We notice that the cLA has a wide variety of acceptable learning rates. Furthermore, across varying ranks, cLA and $c^3$LA often underperform compared to other LoRA variants. As rank increases, this gap tends to narrow. This is a byproduct of their structure, limiting how much of the pretrained weights they can update at any one time.

For our ablation study on scaling factor shown in Tables~\ref{tab:deberta-scaling-factor} and~\ref{tab:vit-scaling-factor}, the use of $\alpha=2r$ works as a baseline given how often it was the best choice. With Asymmetric methods, the ideal scaling factor tends to be larger; this follows from the number of trainable parameters decreasing, requiring a larger effective learning rate, as the scaling factor can be interpreted as a scale on the learning rate.

Our ablation study on chain reset frequency, shown in Tables~\ref{tab:chain_reset}, revealed no clear correlation between the frequency of chain resets.

\subsubsection{DeepseekCoder Performance Analysis}\label{subsec:deepseekcoder_perf}

Figure~\ref{fig:cla_deepseek} displays the performance of DeepSeekCoder fine-tuned on the DJANGO dataset using Asymmetric LoRA, CoLA, and our sparsity-induced LoRA variants using Exact Match. We vary rank over $r\in\{8,16,32,64,128\}$, otherwise we use the same hyperparameters in Table~\ref{tab:task-hyperparams}. While cLA at $r=16$ performs substantially worse than other variants, its EM accuracy increases relatively greatly until $r=64$, outperforming its non-sparse derivative variant Asymmetric LoRA. A similar trend in EM increasing with rank is observed in c$^3$LA, but peaks at $r=32$. These trends are far weaker in r-cLA and r-c$^3$LA, suggesting that this is a consequence of applying contiguous column updates to each layer rather than a random permutation.

As the rank increases, the largest difference between any sparsity-induced LoRA variant and Asymmetric LoRA or CoLA decreases. Combined with Table~\ref{tab:extendedMo-accuracy-table}, these results indicate that restricting adaptation to a structured $r$-column subspace (as in cLA) can still be effective for code generation, but the loss of column space expressivity is most noticeable at low ranks. In our DJANGO setup, larger ranks substantially improve performance and can even surpass alternative PEFT scores, suggesting that restricted column-space fine-tuning remains a viable strategy for code tasks, provided that enough columns are made available for adaptation. These trends suggest that code generation may tolerate less aggressive restriction than other tasks.

\begin{table*}
\centering
\captionsetup[subtable]{labelformat=simple, labelsep=colon}
\renewcommand\thesubtable{\alph{subtable}}
\caption{Test accuracies obtained by fine-tuning DeBERTa v3 on MRPC, TREC-50, and PAWS varying learning rates (columns), ranks (rows), and LoRA PEFT methods. We center our search at $1e^{-4}$. The learning rate for all methods decreases with increasing rank; the relationship between learning rate and model size observed in LLMs~\cite{scalelaw} persists when fine-tuning via LoRA methods. Chain methods and their non-chain counterparts produce the best results in similar learning rate ranges; therefore, chain resets do not influence the optimal learning rate. We repeated the experiment with ViT-Tiny on Table~\ref{tab:vittinyablation}.}
\label{tab:deberta3ablation}
\footnotesize
\setlength{\tabcolsep}{3pt}
\renewcommand{\arraystretch}{0.9}

\begin{tabular}{@{}ccc@{}}

% ===================== ROW 1: LoRA (filled) =====================
\begin{subtable}[t]{0.32\linewidth}
\centering
\caption*{DeBERTa v3 LoRA MRPC}
{\setlength{\tabcolsep}{2pt}\renewcommand{\arraystretch}{0.9}
\resizebox{\linewidth}{!}{%
\begin{tabular}{@{}lccccccccc@{}}
\toprule
Rank/LR& 1e-6 & 1e-5.5 & 1e-5 & 1e-4.5 & 1e-4 & 1e-3.5 & 1e-3 & 1e-2.5 & 1e-2 \\
\midrule
2   & 66.4 & 66.4 & 79.9 & 84.2 & \textcolor{blue}{85.5} & \textcolor{red}{87.3} & \textcolor{codegreen}{88.1} & 66.4 & 66.4 \\
4   & 66.4 & 74.4 & 81.8 & 84.6 & \textcolor{blue}{85.6} & \textcolor{codegreen}{87.9} & \textcolor{red}{87.7} & 66.4 & 66.4 \\
8   & 66.4 & 76.1 & 83.0 & 85.1 & \textcolor{blue}{86.9} & \textcolor{codegreen}{87.3} & \textcolor{red}{87.1} & 66.4 & 66.4 \\
16  & 66.4 & 78.1 & 83.1 & \textcolor{blue}{85.0} & \textcolor{red}{86.8} & \textcolor{codegreen}{87.9} & 72.9 & 66.4 & 66.4 \\
32  & 66.4 & 80.1 & 83.9 & \textcolor{blue}{84.9} & \textcolor{red}{87.1} & \textcolor{codegreen}{87.9} & 66.4 & 66.4 & 66.4 \\
64  & 76.1 & 81.7 & \textcolor{blue}{84.6} & \textcolor{red}{85.1} & \textcolor{codegreen}{88.0} & 81.5 & 66.4 & 66.4 & 66.4 \\
128 & 77.5 & 81.9 & 84.9 & \textcolor{blue}{86.2} & \textcolor{codegreen}{88.5} & \textcolor{red}{87.6} & 66.4 & 66.4 & 66.4 \\
\bottomrule
\end{tabular}}}
\end{subtable}
&
\begin{subtable}[t]{0.32\linewidth}
\centering
\caption*{DeBERTa v3 CoLA MRPC}
{\setlength{\tabcolsep}{2pt}\renewcommand{\arraystretch}{0.9}
\resizebox{\linewidth}{!}{%
\begin{tabular}{@{}lccccccccc@{}}
\toprule
Rank/LR& 1e-6 & 1e-5.5 & 1e-5 & 1e-4.5 & 1e-4 & 1e-3.5 & 1e-3 & 1e-2.5 & 1e-2 \\
\midrule
2   & 66.5 & 66.5 & 75.9 & 82.2 & \textcolor{blue}{85.7} & \textcolor{red}{87.7} & \textcolor{codegreen}{88.3} & 66.5 & 66.5 \\
4   & 66.5 & 66.5 & 76.7 & 82.7 & \textcolor{blue}{85.1} & \textcolor{codegreen}{87.8} & \textcolor{red}{87.2} & 66.5 & 66.5 \\
8   & 66.5 & 66.5 & 79.1 & 84.0 & \textcolor{blue}{86.8} & \textcolor{codegreen}{88.3} & \textcolor{red}{86.9} & 66.5 & 66.5 \\
16  & 66.5 & 75.3 & 80.8 & \textcolor{blue}{84.0} & \textcolor{red}{86.4} & \textcolor{codegreen}{88.9} & 79.6 & 66.5 & 66.5 \\
32  & 66.5 & 76.8 & 82.3 & \textcolor{blue}{83.9} & \textcolor{red}{86.5} & \textcolor{codegreen}{88.0} & 66.5 & 66.5 & 66.5 \\
64  & 69.3 & 79.6 & 83.3 & \textcolor{blue}{85.7} & \textcolor{red}{87.1} & \textcolor{codegreen}{88.5} & 66.5 & 66.5 & 66.5 \\
128 & 75.8 & 80.7 & 83.1 & \textcolor{blue}{85.9} & \textcolor{codegreen}{88.2} & \textcolor{red}{88.0} & 66.5 & 66.5 & 66.5 \\
\bottomrule
\end{tabular}}}
\end{subtable}
&
\begin{subtable}[t]{0.32\linewidth}
\centering
\caption*{DeBERTa v3 Asym MRPC}
\resizebox{\linewidth}{!}{%
\begin{tabular}{@{}lccccccccc@{}}
\toprule
Rank/LR& 1e-6 & 1e-5.5 & 1e-5 & 1e-4.5 & 1e-4 & 1e-3.5 & 1e-3 & 1e-2.5 & 1e-2 \\
\midrule
2   & 66.4 & 66.4 & 66.4 & 68.1 & 81.4 & 85.3 & \textcolor{red}{86.1} & \textcolor{codegreen}{86.9} & \textcolor{blue}{86.0} \\
4   & 66.4 & 66.4 & 66.4 & 76.0 & 82.9 & 84.2 & \textcolor{red}{85.7} & \textcolor{codegreen}{86.8} & \textcolor{blue}{85.6} \\
8   & 66.4 & 66.4 & 66.5 & 80.6 & 84.2 & \textcolor{blue}{85.2} & \textcolor{codegreen}{86.5} & \textcolor{red}{86.3} & 72.3 \\
16  & 66.4 & 66.4 & 76.7 & 82.6 & 84.3 & \textcolor{blue}{86.2} & \textcolor{red}{86.8} & \textcolor{codegreen}{87.3} & 66.4 \\
32  & 66.4 & 66.6 & 79.6 & 83.1 & \textcolor{blue}{84.6} & \textcolor{red}{86.1} & \textcolor{codegreen}{87.3} & 75.0 & 66.4 \\
64  & 66.4 & 77.7 & 81.7 & 82.5 & \textcolor{blue}{84.7} & \textcolor{red}{86.1} & \textcolor{codegreen}{87.4} & 66.4 & 66.4 \\
128 & 69.0 & 79.6 & \textcolor{blue}{82.2} & \textcolor{red}{84.3} & \textcolor{codegreen}{86.1} & 80.1 & 66.4 & 66.4 & 66.4 \\

\bottomrule
\end{tabular}}
\end{subtable}
\\

% ===================== ROW 2: CoLA (placeholders) =====================
\begin{subtable}[t]{0.32\linewidth}
\centering
\caption*{DeBERTa v3 RAC MRPC}
\resizebox{\linewidth}{!}{%
\begin{tabular}{@{}lccccccccc@{}}
\toprule
Rank/LR& 1e-6 & 1e-5.5 & 1e-5 & 1e-4.5 & 1e-4 & 1e-3.5 & 1e-3 & 1e-2.5 & 1e-2 \\
\midrule
2   & 66.5 & 66.5 & 66.5 & 66.5 & 66.5 & 76.5 & \textcolor{blue}{82.0} & \textcolor{codegreen}{87.0} & \textcolor{red}{86.2} \\
4   & 66.5 & 66.5 & 66.5 & 66.5 & 74.0 & 79.4 & \textcolor{blue}{84.6} & \textcolor{red}{85.4} & \textcolor{codegreen}{85.5} \\
8   & 66.5 & 66.5 & 66.5 & 66.5 & 76.9 & \textcolor{blue}{82.8} & \textcolor{red}{86.3} & \textcolor{codegreen}{87.5} & 76.7 \\
16  & 66.5 & 66.5 & 66.5 & 74.1 & 78.8 & \textcolor{blue}{83.0} & \textcolor{red}{86.9} & \textcolor{codegreen}{87.2} & 66.5 \\
32  & 66.5 & 66.5 & 66.5 & 76.7 & \textcolor{blue}{83.9} & \textcolor{red}{86.3} & \textcolor{codegreen}{87.7} & 72.4 & 66.5 \\
64  & 66.5 & 66.5 & 74.2 & 80.4 & \textcolor{blue}{84.1} & \textcolor{codegreen}{87.9} & \textcolor{red}{87.3} & 66.5 & 66.5 \\
128 & 66.5 & 66.5 & 76.1 & \textcolor{blue}{82.3} & \textcolor{red}{86.4} & \textcolor{codegreen}{87.6} & 72.0 & 66.5 & 66.5 \\
\bottomrule
\end{tabular}}
\end{subtable}
&
\begin{subtable}[t]{0.32\linewidth}
\centering
\caption*{DeBERTa v3 cLA MRPC}
\resizebox{\linewidth}{!}{%
\begin{tabular}{@{}lccccccccc@{}}
\toprule
Rank/LR& 1e-6 & 1e-5.5 & 1e-5 & 1e-4.5 & 1e-4 & 1e-3.5 & 1e-3 & 1e-2.5 & 1e-2 \\
\midrule
2   & 66.4 & 66.4 & 66.4 & 71.0 & 82.4 & \textcolor{red}{84.0} & \textcolor{codegreen}{85.0} & \textcolor{blue}{83.1} & 80.4 \\
4   & 66.4 & 66.4 & 70.2 & 79.0 & \textcolor{blue}{84.9} & \textcolor{red}{85.5} & \textcolor{codegreen}{85.9} & 85.5 & 66.4 \\
8   & 66.4 & 66.4 & 73.7 & 81.5 & \textcolor{blue}{84.6} & 84.5 & \textcolor{red}{85.2} & \textcolor{codegreen}{85.5} & 66.4 \\
16  & 66.4 & 67.0 & 76.1 & 81.9 & \textcolor{blue}{83.3} & \textcolor{red}{84.4} & \textcolor{codegreen}{85.3} & 78.8 & 66.4 \\
32  & 66.4 & 74.4 & 79.6 & 80.8 & \textcolor{blue}{83.2} & \textcolor{red}{85.0} & \textcolor{codegreen}{85.2} & 66.4 & 66.4 \\
64  & 66.4 & 77.0 & 80.8 & 81.8 & \textcolor{blue}{84.5} & \textcolor{red}{85.1} & \textcolor{codegreen}{86.7} & 66.4 & 66.4 \\
128 & 71.3 & 78.6 & 79.5 & 82.0 & \textcolor{red}{85.5} & \textcolor{codegreen}{87.1} & \textcolor{blue}{84.8} & 66.4 & 66.4 \\

\bottomrule
\end{tabular}}
\end{subtable}
&
\begin{subtable}[t]{0.32\linewidth}
\centering
\caption*{DeBERTa v3 $c^3$LA MRPC}
\resizebox{\linewidth}{!}{%
\begin{tabular}{@{}lccccccccc@{}}
\toprule
Rank/LR& 1e-6 & 1e-5.5 & 1e-5 & 1e-4.5 & 1e-4 & 1e-3.5 & 1e-3 & 1e-2.5 & 1e-2 \\
\midrule
2   & 66.5 & 66.5 & 66.5 & 66.5 & 73.0 & \textcolor{blue}{80.1} & \textcolor{red}{85.6} & \textcolor{codegreen}{86.4} & 72.3 \\
4   & 66.5 & 66.5 & 66.5 & 66.8 & 74.6 & \textcolor{blue}{83.1} & \textcolor{red}{85.7} & \textcolor{codegreen}{87.2} & 66.5 \\
8   & 66.5 & 66.5 & 66.5 & 72.6 & 79.5 & \textcolor{red}{86.2} & \textcolor{codegreen}{86.8} & \textcolor{blue}{85.5} & 66.5 \\
16  & 66.5 & 66.5 & 66.5 & 75.6 & \textcolor{blue}{82.8} & \textcolor{red}{86.0} & \textcolor{codegreen}{86.5} & 78.6 & 66.5 \\
32  & 66.5 & 66.5 & 71.9 & \textcolor{blue}{79.1} & \textcolor{red}{84.5} & \textcolor{codegreen}{86.8} & 86.8 & 66.5 & 66.5 \\
64  & 66.5 & 66.5 & 73.9 & 82.4 & \textcolor{blue}{86.3} & \textcolor{codegreen}{87.2} & \textcolor{red}{86.4} & 66.5 & 66.5 \\
128 & 66.5 & 66.5 & 81.9 & \textcolor{blue}{86.3} & \textcolor{codegreen}{87.9} & \textcolor{red}{86.7} & 72.3 & 66.5 & 66.5 \\
\bottomrule
\end{tabular}}
\end{subtable}
\\

\begin{subtable}[t]{0.32\linewidth}
\centering
\caption*{DeBERTa v3 LoRA TREC-50}
\resizebox{\linewidth}{!}{%
\begin{tabular}{@{}lccccccccc@{}}
\toprule
Rank/LR& 1e-6 & 1e-5.5 & 1e-5 & 1e-4.5 & 1e-4 & 1e-3.5 & 1e-3 & 1e-2.5 & 1e-2 \\
\midrule
2 & 3.2 & 10.9 & 10.9 & 39.1 & \textcolor{blue}{59.5} & \textcolor{red}{76.6} & \textcolor{codegreen}{86.9} & 10.9 & 10.9 \\
4 & 10.1 & 10.9 & 10.9 & 42.3 & \textcolor{blue}{70.6} & \textcolor{red}{82.3} & \textcolor{codegreen}{87.7} & 10.9 & 10.9 \\
8 & 10.9 & 10.9 & 10.9 & 50.0 & \textcolor{blue}{70.6} & \textcolor{red}{84.7} & \textcolor{codegreen}{90.1} & 10.9 & 10.9 \\
16 & 1.4 & 10.9 & 10.9 & 50.0 & \textcolor{blue}{73.0} & \textcolor{codegreen}{89.3} & \textcolor{red}{88.3} & 10.9 & 10.9 \\
32 & 1.4 & 10.9 & 42.9 & \textcolor{blue}{59.5} & \textcolor{red}{76.4} & \textcolor{codegreen}{89.1} & 10.9 & 10.9 & 10.9 \\
64 & 10.9 & 10.9 & 48.2 & \textcolor{blue}{66.1} & \textcolor{red}{82.9} & \textcolor{codegreen}{87.1} & 10.9 & 10.9 & 10.9 \\
128 & 10.9 & 10.9 & \textcolor{blue}{58.1} & \textcolor{red}{71.6} & \textcolor{codegreen}{86.1} & 10.9 & 10.9 & 10.9 & 10.9 \\

\bottomrule
\end{tabular}}
\end{subtable}
&
\begin{subtable}[t]{0.32\linewidth}
\centering
\caption*{DeBERTa v3 CoLA TREC-50}
{\setlength{\tabcolsep}{2pt}\renewcommand{\arraystretch}{0.9}
\resizebox{\linewidth}{!}{%
\begin{tabular}{@{}lccccccccc@{}}
\toprule
Rank/LR& 1e-6 & 1e-5.5 & 1e-5 & 1e-4.5 & 1e-4 & 1e-3.5 & 1e-3 & 1e-2.5 & 1e-2 \\
\midrule
2 & 10.9 & 10.9 & 42.1 & 54.4 & \textcolor{blue}{71.8} & \textcolor{red}{88.1} & \textcolor{codegreen}{89.1} & 10.9 & 10.9 \\
4 & 10.9 & 10.9 & 42.7 & 58.3 & \textcolor{blue}{81.7} & \textcolor{red}{84.5} & \textcolor{codegreen}{88.3} & 10.9 & 10.9 \\
8 & 10.9 & 10.9 & 42.9 & 65.5 & \textcolor{blue}{82.3} & \textcolor{red}{87.1} & \textcolor{codegreen}{90.9} & 10.9 & 10.9 \\
16 & 10.9 & 10.9 & 39.9 & 66.7 & \textcolor{red}{84.9} & \textcolor{codegreen}{87.1} & \textcolor{blue}{68.1} & 10.9 & 10.9 \\
32 & 10.9 & 26.8 & 53.2 & \textcolor{blue}{71.4} & \textcolor{red}{85.3} & \textcolor{codegreen}{86.7} & 10.9 & 10.9 & 10.9 \\
64 & 10.9 & 37.5 & \textcolor{blue}{58.5} & \textcolor{red}{75.6} & \textcolor{codegreen}{86.9} & 10.9 & 10.9 & 10.9 & 10.9 \\
128 & 10.9 & 43.5 & \textcolor{blue}{66.1} & \textcolor{red}{82.7} & \textcolor{codegreen}{86.7} & 10.9 & 10.9 & 10.9 & 10.9 \\
\bottomrule
\end{tabular}}}
\end{subtable}
&
\begin{subtable}[t]{0.32\linewidth}
\centering
\caption*{DeBERTa v3 Asym TREC-50}
\resizebox{\linewidth}{!}{%
\begin{tabular}{@{}lccccccccc@{}}
\toprule
 Rank/LR& 1e-6 & 1e-5.5 & 1e-5 & 1e-4.5 & 1e-4 & 1e-3.5 & 1e-3 & 1e-2.5 & 1e-2 \\
\midrule
2 & 10.9 & 10.9 & 10.9 & 32.5 & 46.2 & \textcolor{blue}{80.6} & \textcolor{codegreen}{86.5} & \textcolor{red}{82.5} & 26.6 \\
4 & 10.9 & 10.9 & 10.9 & 33.3 & 58.9 & \textcolor{blue}{85.1} & \textcolor{codegreen}{88.1} & \textcolor{red}{85.7} & 10.9 \\
8 & 10.9 & 10.9 & 10.9 & 40.1 & 73.6 & \textcolor{codegreen}{87.3} & \textcolor{red}{86.7} & \textcolor{blue}{84.5} & 10.9 \\
16 & 10.9 & 10.9 & 10.9 & 42.9 & 78.8 & \textcolor{codegreen}{89.1} & \textcolor{red}{86.9} & \textcolor{blue}{82.7} & 10.9 \\
32 & 10.9 & 10.9 & 10.9 & 57.5 & \textcolor{blue}{83.1} & \textcolor{red}{90.9} & \textcolor{codegreen}{91.7} & 56.5 & 10.9 \\
64 & 10.9 & 10.9 & 41.9 & 73.0 & \textcolor{red}{88.5} & \textcolor{codegreen}{90.7} & \textcolor{blue}{87.5} & 10.9 & 10.9 \\
128 & 10.9 & 10.9 & 52.2 & \textcolor{blue}{78.8} & \textcolor{red}{89.9} & \textcolor{codegreen}{90.9} & 10.9 & 10.9 & 10.9 \\

\bottomrule
\end{tabular}}
\end{subtable}
\\

% ===================== ROW 4: RAC (placeholders) =====================
\begin{subtable}[t]{0.32\linewidth}
\centering
\caption*{DeBERTa v3 RAC TREC-50}
\resizebox{\linewidth}{!}{%
\begin{tabular}{@{}lccccccccc@{}}
\toprule
 Rank/LR& 1e-6 & 1e-5.5 & 1e-5 & 1e-4.5 & 1e-4 & 1e-3.5 & 1e-3 & 1e-2.5 & 1e-2 \\
\midrule
2 & 10.9 & 10.9 & 10.9 & 10.9 & 34.9 & 46.6 & \textcolor{blue}{72.8} & \textcolor{red}{85.9} & \textcolor{codegreen}{87.5} \\
4 & 1.2 & 10.9 & 10.9 & 10.9 & 38.5 & 59.7 & \textcolor{blue}{82.9} & \textcolor{red}{88.3} & \textcolor{codegreen}{89.5} \\
8 & 10.9 & 10.9 & 10.9 & 10.9 & 43.8 & 68.1 & \textcolor{red}{82.9} & \textcolor{codegreen}{88.1} & \textcolor{blue}{72.4} \\
16 & 10.1 & 10.9 & 10.9 & 13.3 & 59.5 & \textcolor{blue}{78.0} & \textcolor{red}{87.1} & \textcolor{codegreen}{90.3} & 10.9 \\
32 & 10.9 & 10.9 & 10.9 & 45.6 & 70.6 & \textcolor{blue}{84.9} & \textcolor{codegreen}{88.5} & \textcolor{red}{88.1} & 10.9 \\
64 & 10.9 & 10.9 & 34.9 & 50.6 & \textcolor{blue}{74.2} & \textcolor{red}{86.7} & \textcolor{codegreen}{87.7} & 10.9 & 10.9 \\
128 & 2.0 & 10.9 & 42.9 & 62.3 & \textcolor{blue}{84.5} & \textcolor{codegreen}{88.9} & \textcolor{red}{87.3} & 10.9 & 10.9 \\

\bottomrule
\end{tabular}}
\end{subtable}
&
\begin{subtable}[t]{0.32\linewidth}
\centering
\caption*{DeBERTa v3 cLA TREC-50}
\resizebox{\linewidth}{!}{%
\begin{tabular}{@{}lccccccccc@{}}
\toprule
 Rank/LR& 1e-6 & 1e-5.5 & 1e-5 & 1e-4.5 & 1e-4 & 1e-3.5 & 1e-3 & 1e-2.5 & 1e-2 \\
\midrule
2 & 10.9 & 10.9 & 10.9 & 10.9 & 10.9 & \textcolor{blue}{40.9} & \textcolor{red}{71.2} & \textcolor{codegreen}{81.3} & 34.5 \\
4 & 9.5 & 10.9 & 10.9 & 10.9 & 35.3 & \textcolor{blue}{61.9} & \textcolor{red}{79.6} & \textcolor{codegreen}{82.7} & 10.9 \\
8 & 0.4 & 10.9 & 10.9 & 10.9 & 53.0 & \textcolor{blue}{71.8} & \textcolor{red}{83.1} & \textcolor{codegreen}{86.3} & 10.9 \\
16 & 10.9 & 10.9 & 10.9 & 40.7 & 60.7 & \textcolor{blue}{84.3} & \textcolor{red}{85.5} & \textcolor{codegreen}{87.5} & 10.9 \\
32 & 10.1 & 10.1 & 10.9 & 47.0 & \textcolor{blue}{70.0} & \textcolor{red}{86.1} & \textcolor{codegreen}{88.9} & 10.9 & 10.9 \\
64 & 10.9 & 10.9 & 42.7 & 62.3 & \textcolor{blue}{76.2} & \textcolor{red}{86.9} & \textcolor{codegreen}{89.1} & 52.4 & 10.9 \\
128 & 3.6 & 10.9 & 50.2 & \textcolor{blue}{67.1} & \textcolor{red}{85.1} & \textcolor{codegreen}{86.7} & 66.7 & 10.9 & 10.9 \\

\bottomrule
\end{tabular}}
\end{subtable}
&
\begin{subtable}[t]{0.32\linewidth}
\centering
\caption*{DeBERTa v3 $c^3$LA TREC-50}
\resizebox{\linewidth}{!}{%
\begin{tabular}{@{}lccccccccc@{}}
\toprule
Rank/LR& 1e-6 & 1e-5.5 & 1e-5 & 1e-4.5 & 1e-4 & 1e-3.5 & 1e-3 & 1e-2.5 & 1e-2 \\
\midrule
2 & 10.9 & 10.9 & 10.9 & 34.7 & 42.3 & \textcolor{codegreen}{79.2} & \textcolor{red}{66.7} & \textcolor{blue}{62.1} & 10.9 \\
4 & 10.9 & 10.9 & 19.4 & 34.7 & 56.9 & \textcolor{red}{86.5} & \textcolor{codegreen}{87.7} & \textcolor{blue}{73.8} & 10.9 \\
8 & 10.9 & 10.9 & 20.6 & 36.9 & \textcolor{blue}{68.7} & \textcolor{codegreen}{87.3} & \textcolor{red}{78.8} & 66.7 & 10.9 \\
16 & 10.9 & 10.9 & 10.9 & 43.8 & \textcolor{blue}{76.8} & \textcolor{codegreen}{88.5} & \textcolor{red}{83.9} & 71.0 & 10.9 \\
32 & 10.9 & 10.9 & 38.7 & 58.1 & \textcolor{red}{84.3} & \textcolor{codegreen}{88.1} & \textcolor{blue}{80.4} & 34.3 & 10.9 \\
64 & 10.9 & 10.9 & 45.8 & 74.2 & \textcolor{red}{85.9} & \textcolor{codegreen}{89.1} & \textcolor{blue}{80.4} & 10.9 & 10.9 \\
128 & 10.9 & 35.1 & 56.5 & \textcolor{blue}{79.2} & \textcolor{red}{85.9} & \textcolor{codegreen}{90.9} & 10.9 & 10.9 & 10.9 \\
\bottomrule
\end{tabular}}
\end{subtable}
\\

% ===================== ROW 5: cLA (placeholders) =====================
\begin{subtable}[t]{0.32\linewidth}
\centering
\caption*{DeBERTa v3 LoRA PAWS}
\resizebox{\linewidth}{!}{%
\begin{tabular}{@{}lccccccccc@{}}
\toprule
 Rank/LR& 1e-6 & 1e-5.5 & 1e-5 & 1e-4.5 & 1e-4 & 1e-3.5 & 1e-3 & 1e-2.5 & 1e-2 \\
\midrule
2 & 92.1 & 93.7 & 94.0 & \textcolor{blue}{94.2} & \textcolor{codegreen}{94.7} & \textcolor{red}{94.5} & 94.0 & 55.8 & 55.8 \\
4 & 92.3 & 94.1 & 94.0 & \textcolor{red}{94.4} & \textcolor{blue}{94.2} & \textcolor{codegreen}{94.8} & 94.0 & 55.8 & 55.8 \\
8 & 92.4 & 93.5 & \textcolor{blue}{94.3} & \textcolor{blue}{94.3} & \textcolor{codegreen}{94.7} & \textcolor{red}{94.5} & 93.5 & 55.8 & 55.8 \\
16 & 93.1 & \textcolor{blue}{94.0} & \textcolor{red}{94.4} & \textcolor{codegreen}{94.6} & \textcolor{codegreen}{94.6} & 93.6 & 55.8 & 55.8 & 55.8 \\
32 & 93.8 & \textcolor{blue}{93.9} & \textcolor{codegreen}{94.7} & \textcolor{red}{94.5} & \textcolor{codegreen}{94.7} & 55.8 & 55.8 & 55.8 & 55.8 \\
64 & \textcolor{blue}{93.6} & \textcolor{red}{94.1} & \textcolor{codegreen}{94.6} & \textcolor{codegreen}{94.6} & \textcolor{codegreen}{94.6} & 93.5 & 55.8 & 55.8 & 50.0 \\
128 & 94.0 & \textcolor{blue}{94.2} & \textcolor{red}{94.4} & \textcolor{codegreen}{94.7} & \textcolor{codegreen}{94.7} & 55.8 & 55.8 & 55.8 & 50.0 \\
\bottomrule
\end{tabular}}
\end{subtable}
&
\begin{subtable}[t]{0.32\linewidth}
\centering
\caption*{DeBERTa v3 CoLA PAWS}
{\setlength{\tabcolsep}{2pt}\renewcommand{\arraystretch}{0.9}
\resizebox{\linewidth}{!}{%
\begin{tabular}{@{}lccccccccc@{}}
\toprule
Rank/LR & 1e-6 & 1e-5.5 & 1e-5 & 1e-4.5 & 1e-4 & 1e-3.5 & 1e-3 & 1e-2.5 & 1e-2 \\
\midrule
2 & 55.8 & 89.8 & 92.5 & 93.9 & \textcolor{red}{94.6} & \textcolor{codegreen}{94.7} & \textcolor{blue}{94.2} & 55.8 & 55.8 \\
4 & 55.8 & 90.6 & \textcolor{blue}{92.9} & \textcolor{red}{94.0} & \textcolor{codegreen}{94.8} & \textcolor{codegreen}{94.8} & \textcolor{red}{94.0} & 55.8 & 55.8 \\
8 & 55.8 & 89.3 & \textcolor{blue}{93.2} & \textcolor{red}{94.1} & \textcolor{codegreen}{94.3} & \textcolor{red}{94.1} & 93.0 & 55.8 & 55.8 \\
16 & 55.8 & 91.8 & 93.7 & \textcolor{blue}{94.3} & \textcolor{red}{94.5} & \textcolor{codegreen}{94.7} & 55.8 & 55.8 & 55.8 \\
32 & 89.9 & 92.9 & \textcolor{blue}{94.3} & \textcolor{red}{94.5} & \textcolor{blue}{94.3} & \textcolor{codegreen}{94.8} & 55.8 & 55.8 & 55.8 \\
64 & 90.5 & \textcolor{blue}{93.3} & \textcolor{red}{94.5} & \textcolor{red}{94.5} & \textcolor{codegreen}{94.8} & 93.2 & 55.8 & 55.8 & 55.8 \\
128 & 91.8 & 93.8 & \textcolor{blue}{94.7} & \textcolor{codegreen}{95.1} & \textcolor{red}{95.0} & 92.7 & 55.8 & 55.8 & 44.2 \\
\bottomrule
\end{tabular}}}
\end{subtable}
&
\begin{subtable}[t]{0.32\linewidth}
\centering
\caption*{DeBERTa v3 Asym PAWS}
\resizebox{\linewidth}{!}{%
\begin{tabular}{@{}lccccccccc@{}}
\toprule
 Rank/LR& 1e-6 & 1e-5.5 & 1e-5 & 1e-4.5 & 1e-4 & 1e-3.5 & 1e-3 & 1e-2.5 & 1e-2 \\
\midrule
2 & 55.8 & 55.8 & 86.9 & 92.3 & \textcolor{blue}{93.7} & \textcolor{red}{93.9} & \textcolor{red}{93.9} & \textcolor{codegreen}{94.4} & 93.4 \\
4 & 55.8 & 55.8 & 90.3 & 92.5 & 93.2 & \textcolor{blue}{94.0} & \textcolor{codegreen}{94.4} & \textcolor{red}{94.2} & 92.7 \\
8 & 55.8 & 55.8 & 92.3 & 93.1 & \textcolor{blue}{94.1} & \textcolor{codegreen}{94.7} & \textcolor{codegreen}{94.7} & \textcolor{red}{94.2} & 55.8 \\
16 & 55.8 & 55.8 & 92.4 & \textcolor{blue}{94.1} & 94.0 & \textcolor{red}{94.5} & \textcolor{codegreen}{94.6} & 94.0 & 55.8 \\
32 & 55.8 & 92.9 & 93.6 & \textcolor{red}{94.4} & \textcolor{red}{94.4} & \textcolor{codegreen}{94.6} & \textcolor{blue}{94.1} & 92.9 & 55.8 \\
64 & 55.8 & 92.5 & \textcolor{red}{94.1} & \textcolor{red}{94.1} & \textcolor{codegreen}{94.8} & \textcolor{codegreen}{94.8} & \textcolor{blue}{93.4} & 55.8 & 55.8 \\
128 & 91.7 & 92.9 & 94.1 & \textcolor{blue}{94.4} & \textcolor{red}{94.6} & \textcolor{codegreen}{94.7} & 91.8 & 55.0 & 44.2 \\
\bottomrule
\end{tabular}}
\end{subtable}
\\

% ===================== ROW 6: c^3LA (placeholders) =====================
\begin{subtable}[t]{0.32\linewidth}
\centering
\caption*{DeBERTa v3 RAC PAWS}
\resizebox{\linewidth}{!}{%
\begin{tabular}{@{}lccccccccc@{}}
\toprule
 Rank/LR& 1e-6 & 1e-5.5 & 1e-5 & 1e-4.5 & 1e-4 & 1e-3.5 & 1e-3 & 1e-2.5 & 1e-2 \\
\midrule
2 & 55.8 & 55.8 & 55.8 & 89.0 & 93.3 & \textcolor{red}{94.1} & \textcolor{blue}{93.8} & \textcolor{codegreen}{94.4} & 93.4 \\
4 & 55.8 & 55.8 & 91.0 & 93.0 & 93.5 & \textcolor{red}{93.9} & \textcolor{codegreen}{94.4} & \textcolor{blue}{93.8} & 90.6 \\
8 & 55.8 & 55.8 & 89.4 & 93.4 & 93.8 & \textcolor{codegreen}{94.5} & \textcolor{blue}{94.2} & \textcolor{red}{94.3} & 88.9 \\
16 & 55.8 & 55.8 & 92.6 & 92.8 & \textcolor{blue}{94.2} & \textcolor{codegreen}{95.1} & \textcolor{red}{94.7} & 93.5 & 55.8 \\
32 & 55.8 & 91.0 & 92.6 & 93.8 & \textcolor{red}{94.2} & \textcolor{blue}{94.0} & \textcolor{codegreen}{94.7} & 55.8 & 55.8 \\
64 & 55.8 & 92.7 & 93.5 & \textcolor{blue}{94.3} & \textcolor{codegreen}{94.8} & \textcolor{red}{94.5} & 93.8 & 55.8 & 55.8 \\
128 & 91.8 & 93.1 & \textcolor{blue}{94.2} & \textcolor{red}{94.3} & \textcolor{codegreen}{94.6} & \textcolor{codegreen}{94.6} & 55.8 & 55.8 & 55.8 \\
\bottomrule
\end{tabular}}
\end{subtable}
&
\begin{subtable}[t]{0.32\linewidth}
\centering
\caption*{DeBERTa v3 cLA PAWS}
\resizebox{\linewidth}{!}{%
\begin{tabular}{@{}lccccccccc@{}}
\toprule
Rank/LR& 1e-6 & 1e-5.5 & 1e-5 & 1e-4.5 & 1e-4 & 1e-3.5 & 1e-3 & 1e-2.5 & 1e-2 \\
\midrule
2 & 55.8 & 55.8 & 55.8 & 90.5 & 92.5 & \textcolor{codegreen}{94.0} & \textcolor{red}{93.8} & \textcolor{blue}{93.7} & 55.8 \\
4 & 55.8 & 55.8 & 89.3 & 90.9 & 93.1 & \textcolor{red}{93.8} & \textcolor{codegreen}{94.0} & \textcolor{blue}{93.7} & 55.8 \\
8 & 55.8 & 55.8 & 89.7 & 92.6 & 92.9 & \textcolor{codegreen}{94.3} & \textcolor{red}{93.9} & \textcolor{blue}{93.7} & 55.8 \\
16 & 55.8 & 55.8 & 91.5 & 93.0 & \textcolor{blue}{93.7} & \textcolor{codegreen}{94.5} & \textcolor{red}{94.1} & 55.8 & 55.8 \\
32 & 55.8 & 89.5 & 91.6 & 93.4 & \textcolor{blue}{93.8} & \textcolor{codegreen}{94.2} & \textcolor{red}{94.1} & 55.8 & 55.8 \\
64 & 55.8 & 90.0 & 93.2 & \textcolor{blue}{93.8} & \textcolor{red}{93.9} & \textcolor{codegreen}{94.3} & 93.5 & 55.8 & 55.8 \\
128 & 87.2 & 92.5 & 93.6 & \textcolor{blue}{93.9} & \textcolor{codegreen}{94.7} & \textcolor{red}{94.2} & 55.8 & 55.8 & 55.8 \\
\bottomrule
\end{tabular}}
\end{subtable}
&
\begin{subtable}[t]{0.32\linewidth}
\centering
\caption*{DeBERTa v3 $c^3$LA PAWS}
\resizebox{\linewidth}{!}{%
\begin{tabular}{@{}lccccccccc@{}}
\toprule
 Rank/LR& 1e-6 & 1e-5.5 & 1e-5 & 1e-4.5 & 1e-4 & 1e-3.5 & 1e-3 & 1e-2.5 & 1e-2 \\
\midrule
2 & 55.8 & 55.8 & 55.8 & 91.4 & \textcolor{blue}{93.3} & \textcolor{red}{93.8} & \textcolor{codegreen}{93.9} & \textcolor{codegreen}{93.9} & 55.8 \\
4 & 55.8 & 55.8 & 89.8 & 91.2 & \textcolor{red}{93.7} & \textcolor{codegreen}{94.0} & \textcolor{codegreen}{94.0} & \textcolor{blue}{93.3} & 55.8 \\
8 & 55.8 & 55.8 & 91.8 & \textcolor{blue}{93.7} & \textcolor{red}{93.9} & \textcolor{codegreen}{94.7} & 93.5 & 93.3 & 55.8 \\
16 & 55.8 & 55.8 & 92.6 & \textcolor{blue}{93.5} & \textcolor{codegreen}{94.1} & \textcolor{codegreen}{94.1} & \textcolor{red}{93.7} & 55.8 & 55.8 \\
32 & 55.8 & 92.0 & 93.2 & \textcolor{blue}{94.0} & \textcolor{red}{94.1} & \textcolor{codegreen}{94.4} & 93.9 & 55.8 & 55.8 \\
64 & 91.3 & 92.2 & 93.8 & \textcolor{red}{94.1} & \textcolor{blue}{93.9} & \textcolor{codegreen}{94.2} & 93.7 & 55.8 & 55.8 \\
128 & 92.5 & 93.3 & \textcolor{blue}{94.0} & \textcolor{red}{94.3} & \textcolor{codegreen}{94.8} & 55.8 & 55.8 & 55.8 & 55.8 \\
\bottomrule
\end{tabular}}
\end{subtable}

\end{tabular}
\end{table*}

\begin{table*}
\centering
\captionsetup[subtable]{labelformat=simple, labelsep=colon}
\renewcommand\thesubtable{\alph{subtable}}
\caption{Test accuracies obtained by fine-tuning ViT-Tiny on CIFAR-10 and OfficeHome over varying learning rates (columns), ranks (rows), and LoRA PEFT methods. We center our search at $1e^{-3}.$ Consistent with the results of~\ref{tab:deberta3ablation}, the learning rate for all methods decreases with increasing rank. Chain methods and their non-chain counterparts produce the best results in similar learning rate ranges.
}
\label{tab:vittinyablation}
\footnotesize
\setlength{\tabcolsep}{3pt}
\renewcommand{\arraystretch}{0.9}

\begin{tabular}{@{}ccc@{}}

% ===================== ROW 1: LoRA =====================
\begin{subtable}[t]{0.32\linewidth}
\centering
\caption*{ViT-Tiny LoRA CIFAR-10}
{\setlength{\tabcolsep}{2pt}\renewcommand{\arraystretch}{0.9}
\resizebox{\linewidth}{!}{%
\begin{tabular}{@{}lccccccccc@{}}
\toprule
 & 1e-5 & 1e-4.5 & 1e-4 & 1e-3.5 & 1e-3 & 1e-2.5 & 1e-2 & 1e-1.5 & 1e-1 \\
\midrule
2   & 89.08 & 90.91 & \textcolor{blue}{92.85} & \textcolor{red}{92.92} & \textcolor{codegreen}{93.56} & 91.76 & 87.13 & 49.75 & 10.70 \\
4   & 89.38 & 91.55 & 93.49 & \textcolor{red}{94.50} & \textcolor{blue}{94.18} & \textcolor{codegreen}{95.11} & 81.16 & 17.82 & 11.15 \\
8   & 89.56 & 92.04 & 94.00 & \textcolor{blue}{95.20} & \textcolor{codegreen}{95.68} & \textcolor{red}{95.65} & 77.10 & 11.85 & 10.00 \\
16   & 90.03 & 92.69 & \textcolor{blue}{94.36} & \textcolor{red}{95.69} & \textcolor{codegreen}{95.91} & 91.27 & 59.09 & 13.52 & 10.08 \\
32   & 90.85 & 93.05 & \textcolor{blue}{95.14} & \textcolor{red}{96.14} & \textcolor{codegreen}{96.26} & 87.87 & 17.79 & 18.07 & 10.67 \\
64   & 91.79 & 94.00 & \textcolor{blue}{95.30} & \textcolor{codegreen}{96.44} & \textcolor{red}{96.43} & 81.73 & 14.33 & 11.30 & 13.21 \\
128   & 92.47 & 94.71 & \textcolor{blue}{96.03} & \textcolor{codegreen}{96.50} & \textcolor{red}{96.17} & 64.03 & 11.16 & 12.00 & 11.41 \\
\bottomrule
\end{tabular}}}
\end{subtable}
&
\begin{subtable}[t]{0.32\linewidth}
\centering
\caption*{ViT-Tiny CoLA CIFAR-10}
{\setlength{\tabcolsep}{2pt}\renewcommand{\arraystretch}{0.9}
\resizebox{\linewidth}{!}{%
\begin{tabular}{@{}lccccccccc@{}}
\toprule
 & 1e-5 & 1e-4.5 & 1e-4 & 1e-3.5 & 1e-3 & 1e-2.5 & 1e-2 & 1e-1.5 & 1e-1 \\
\midrule
2   & 87.90 & 90.43 & 92.13 & \textcolor{codegreen}{93.79} & \textcolor{red}{93.77} & \textcolor{blue}{93.57} & 87.13 & 20.52 & 12.61 \\
4   & 88.43 & 90.77 & 92.73 & \textcolor{blue}{94.30} & \textcolor{codegreen}{95.21} & \textcolor{red}{94.49} & 83.74 & 17.82 & 11.15 \\
8   & 88.96 & 91.26 & 93.37 & \textcolor{red}{94.95} & \textcolor{codegreen}{95.39} & \textcolor{blue}{94.74} & 77.09 & 11.85 & 10.00 \\
16   & 89.50 & 92.00 & \textcolor{blue}{93.86} & \textcolor{red}{95.26} & \textcolor{codegreen}{95.72} & 91.27 & 62.12 & 15.42 & 11.25 \\
32   & 90.02 & 92.63 & \textcolor{blue}{94.54} & \textcolor{red}{95.72} & \textcolor{codegreen}{95.96} & 87.87 & 19.10 & 18.07 & 12.41 \\
64   & 91.18 & 93.51 & \textcolor{blue}{95.13} & \textcolor{codegreen}{96.07} & \textcolor{red}{96.01} & 81.73 & 14.33 & 17.70 & 13.23 \\
128   & 92.06 & 94.20 & \textcolor{red}{95.56} & \textcolor{codegreen}{96.17} & \textcolor{blue}{95.54} & 65.78 & 17.45 & 10.30 & 11.03 \\
\bottomrule
\end{tabular}}}
\end{subtable}
&
\begin{subtable}[t]{0.32\linewidth}
\centering
\caption*{ViT-Tiny Asym CIFAR-10}
{\setlength{\tabcolsep}{2pt}\renewcommand{\arraystretch}{0.9}
\resizebox{\linewidth}{!}{%
\begin{tabular}{@{}lccccccccc@{}}
\toprule
 & 1e-5 & 1e-4.5 & 1e-4 & 1e-3.5 & 1e-3 & 1e-2.5 & 1e-2 & 1e-1.5 & 1e-1 \\
\midrule
2   & 85.34 & 89.63 & 90.86 & \textcolor{blue}{91.64} & \textcolor{codegreen}{92.03} & \textcolor{red}{91.88} & 90.85 & 90.18 & 80.81 \\
4   & 86.82 & 90.29 & 91.66 & \textcolor{blue}{92.58} & \textcolor{codegreen}{93.32} & \textcolor{red}{92.73} & 91.63 & 87.88 & 78.91 \\
8   & 88.34 & 90.87 & 92.35 & 93.34 & \textcolor{blue}{93.71} & \textcolor{codegreen}{93.81} & \textcolor{red}{93.74} & 86.88 & 64.43 \\
16   & 89.23 & 91.61 & 93.18 & \textcolor{blue}{94.23} & \textcolor{red}{94.65} & \textcolor{codegreen}{95.08} & 90.39 & 82.52 & 50.44 \\
32   & 90.12 & 92.17 & 93.81 & \textcolor{red}{95.20} & \textcolor{blue}{95.15} & \textcolor{codegreen}{95.36} & 93.94 & 73.55 & 34.13 \\
64   & 91.20 & 93.11 & \textcolor{blue}{94.66} & \textcolor{red}{95.77} & \textcolor{codegreen}{96.08} & 92.99 & 92.18 & 53.07 & 24.52 \\
128   & 92.27 & 94.03 & \textcolor{red}{95.36} & \textcolor{codegreen}{96.09} & 94.56 & \textcolor{blue}{94.97} & 69.81 & 27.77 & 16.58 \\
\bottomrule
\end{tabular}}}
\end{subtable}
\\

\begin{subtable}[t]{0.32\linewidth}
\centering
\caption*{ViT-Tiny RAC CIFAR-10}
{\setlength{\tabcolsep}{2pt}\renewcommand{\arraystretch}{0.9}
\resizebox{\linewidth}{!}{%
\begin{tabular}{@{}lccccccccc@{}}
\toprule
 & 1e-5 & 1e-4.5 & 1e-4 & 1e-3.5 & 1e-3 & 1e-2.5 & 1e-2 & 1e-1.5 & 1e-1 \\
\midrule
2   & 85.61 & 89.68 & 90.98 & \textcolor{blue}{91.87} & \textcolor{codegreen}{92.72} & \textcolor{red}{91.94} & 91.47 & 89.51 & 80.23 \\
4   & 86.64 & 90.35 & 91.89 & 92.96 & \textcolor{codegreen}{93.71} & \textcolor{blue}{93.46} & \textcolor{red}{93.49} & 87.88 & 77.56 \\
8   & 88.26 & 90.87 & 92.40 & \textcolor{blue}{93.65} & \textcolor{red}{94.09} & \textcolor{codegreen}{94.33} & 90.69 & 86.88 & 64.43 \\
16   & 89.26 & 91.75 & 93.26 & \textcolor{red}{94.40} & \textcolor{codegreen}{95.31} & \textcolor{blue}{94.24} & 89.56 & 82.52 & 50.44 \\
32   & 90.27 & 92.28 & 94.05 & \textcolor{blue}{95.43} & \textcolor{red}{95.54} & \textcolor{codegreen}{95.68} & 93.14 & 73.55 & 24.92 \\
64   & 91.12 & 93.20 & \textcolor{blue}{94.86} & \textcolor{codegreen}{95.91} & 94.78 & \textcolor{red}{95.35} & 82.86 & 53.07 & 24.52 \\
128   & 92.19 & 94.03 & \textcolor{blue}{95.56} & \textcolor{codegreen}{96.11} & \textcolor{red}{96.10} & 94.21 & 69.81 & 27.77 & 16.58 \\
\bottomrule
\end{tabular}}}
\end{subtable}
&
\begin{subtable}[t]{0.32\linewidth}
\centering
\caption*{ViT-Tiny cLA CIFAR-10}
{\setlength{\tabcolsep}{2pt}\renewcommand{\arraystretch}{0.9}
\resizebox{\linewidth}{!}{%
\begin{tabular}{@{}lccccccccc@{}}
\toprule
 & 1e-5 & 1e-4.5 & 1e-4 & 1e-3.5 & 1e-3 & 1e-2.5 & 1e-2 & 1e-1.5 & 1e-1 \\
\midrule
2   & 87.30 & 89.65 & 91.13 & \textcolor{red}{92.12} & \textcolor{codegreen}{92.49} & \textcolor{blue}{91.70} & 88.92 & 82.43 & 51.06 \\
4   & 88.06 & 90.47 & 91.91 & \textcolor{blue}{92.00} & \textcolor{codegreen}{93.45} & \textcolor{red}{92.05} & 88.78 & 73.27 & 11.47 \\
8   & 89.14 & 91.38 & \textcolor{blue}{93.13} & \textcolor{codegreen}{93.36} & \textcolor{red}{93.22} & 90.72 & 85.78 & 68.40 & 13.39 \\
16   & 90.43 & 92.27 & 93.91 & \textcolor{codegreen}{94.86} & \textcolor{red}{94.53} & \textcolor{blue}{94.11} & 78.83 & 44.03 & 17.11 \\
32   & 91.32 & 93.37 & \textcolor{blue}{94.91} & \textcolor{codegreen}{95.63} & \textcolor{red}{95.33} & 89.55 & 71.95 & 30.90 & 19.23 \\
64   & 92.50 & 94.17 & \textcolor{red}{95.82} & \textcolor{codegreen}{96.26} & \textcolor{blue}{95.54} & 83.63 & 50.91 & 21.61 & 15.50 \\
128   & 93.86 & \textcolor{blue}{95.39} & \textcolor{codegreen}{96.50} & \textcolor{red}{96.45} & 94.85 & 75.52 & 28.65 & 19.95 & 31.06 \\
\bottomrule
\end{tabular}}}
\end{subtable}
&
\begin{subtable}[t]{0.32\linewidth}
\centering
\caption*{ViT-Tiny $c^3$LA CIFAR-10}
{\setlength{\tabcolsep}{2pt}\renewcommand{\arraystretch}{0.9}
\resizebox{\linewidth}{!}{%
\begin{tabular}{@{}lccccccccc@{}}
\toprule
 & 1e-5 & 1e-4.5 & 1e-4 & 1e-3.5 & 1e-3 & 1e-2.5 & 1e-2 & 1e-1.5 & 1e-1 \\
\midrule
2   & 87.29 & 89.83 & \textcolor{blue}{91.49} & \textcolor{red}{92.48} & \textcolor{codegreen}{93.04} & 91.35 & 88.65 & 80.39 & 53.28 \\
4   & 88.57 & 90.76 & \textcolor{blue}{92.50} & \textcolor{red}{93.36} & \textcolor{codegreen}{93.98} & 91.55 & 87.25 & 73.27 & 11.47 \\
8   & 89.69 & 91.84 & \textcolor{blue}{93.52} & \textcolor{red}{94.68} & \textcolor{codegreen}{94.80} & 90.72 & 85.78 & 68.40 & 27.84 \\
16   & 90.63 & 92.69 & \textcolor{blue}{94.33} & \textcolor{codegreen}{95.35} & \textcolor{red}{95.30} & 90.04 & 81.13 & 44.03 & 17.11 \\
32   & 91.39 & 93.68 & \textcolor{blue}{95.12} & \textcolor{codegreen}{95.57} & \textcolor{red}{95.30} & 89.55 & 71.95 & 21.94 & 17.60 \\
64   & 92.86 & 94.71 & \textcolor{red}{95.92} & \textcolor{codegreen}{96.41} & \textcolor{blue}{95.07} & 83.63 & 41.60 & 21.61 & 14.03 \\
128   & 93.83 & \textcolor{blue}{95.38} & \textcolor{codegreen}{96.43} & \textcolor{red}{96.22} & 94.89 & 75.52 & 27.84 & 19.95 & 14.48 \\
\bottomrule
\end{tabular}}}
\end{subtable}
\\

\begin{subtable}[t]{0.32\linewidth}
\centering
\caption*{ViT-Tiny LoRA OfficeHome}
{\setlength{\tabcolsep}{2pt}\renewcommand{\arraystretch}{0.9}
\resizebox{\linewidth}{!}{%
\begin{tabular}{@{}lccccccccc@{}}
\toprule
 & 1e-5 & 1e-4.5 & 1e-4 & 1e-3.5 & 1e-3 & 1e-2.5 & 1e-2 & 1e-1.5 & 1e-1 \\
\midrule
2   & 47.33 & 65.03 & 70.80 & 75.55 & \textcolor{codegreen}{77.81} & \textcolor{blue}{75.93} & \textcolor{red}{77.08} & 40.57 & 1.80 \\
4   & 48.57 & 64.90 & 71.36 & 75.33 & \textcolor{red}{77.85} & \textcolor{codegreen}{78.50} & \textcolor{blue}{77.55} & 26.51 & 2.01 \\
8   & 49.42 & 64.81 & 72.08 & \textcolor{blue}{75.93} & \textcolor{codegreen}{79.39} & \textcolor{red}{78.88} & 59.09 & 4.28 & 2.61 \\
16   & 50.41 & 65.33 & 72.30 & \textcolor{blue}{77.21} & \textcolor{red}{79.26} & \textcolor{codegreen}{79.69} & 49.17 & 1.97 & 4.53 \\
32   & 50.53 & 65.50 & \textcolor{blue}{73.54} & \textcolor{red}{79.09} & \textcolor{codegreen}{79.56} & 66.01 & 36.43 & 2.44 & 2.05 \\
64   & 51.86 & 66.35 & \textcolor{blue}{74.99} & \textcolor{red}{78.97} & \textcolor{codegreen}{79.86} & 61.48 & 10.77 & 1.75 & 2.09 \\
128   & 54.68 & 66.82 & \textcolor{blue}{76.70} & \textcolor{red}{79.91} & \textcolor{codegreen}{80.33} & 53.53 & 2.78 & 1.75 & 2.91 \\
\bottomrule
\end{tabular}}}
\end{subtable}
&
\begin{subtable}[t]{0.32\linewidth}
\centering
\caption*{ViT-Tiny CoLA OfficeHome}
{\setlength{\tabcolsep}{2pt}\renewcommand{\arraystretch}{0.9}
\resizebox{\linewidth}{!}{%
\begin{tabular}{@{}lccccccccc@{}}
\toprule
 & 1e-5 & 1e-4.5 & 1e-4 & 1e-3.5 & 1e-3 & 1e-2.5 & 1e-2 & 1e-1.5 & 1e-1 \\
\midrule
2   & 45.28 & 64.64 & 70.50 & \textcolor{blue}{74.48} & \textcolor{codegreen}{76.96} & \textcolor{red}{76.70} & 74.05 & 40.57 & 2.01 \\
4   & 46.43 & 64.94 & 70.63 & \textcolor{blue}{74.39} & \textcolor{codegreen}{76.87} & \textcolor{red}{76.61} & 72.60 & 26.51 & 2.01 \\
8   & 47.71 & 65.11 & 70.97 & \textcolor{blue}{75.12} & \textcolor{codegreen}{77.73} & \textcolor{red}{76.96} & 59.09 & 3.04 & 5.77 \\
16   & 49.47 & 64.51 & 71.40 & \textcolor{blue}{75.63} & \textcolor{codegreen}{78.71} & \textcolor{red}{77.68} & 49.17 & 1.97 & 2.22 \\
32   & 50.32 & 65.20 & \textcolor{blue}{72.12} & \textcolor{red}{77.38} & \textcolor{codegreen}{80.38} & 66.01 & 36.43 & 2.44 & 2.05 \\
64   & 51.05 & 65.63 & \textcolor{blue}{73.79} & \textcolor{red}{78.32} & \textcolor{codegreen}{79.35} & 61.48 & 10.77 & 1.75 & 2.69 \\
128   & 52.29 & 66.35 & \textcolor{blue}{75.29} & \textcolor{red}{79.48} & \textcolor{codegreen}{79.52} & 52.29 & 2.78 & 1.75 & 4.10 \\
\bottomrule
\end{tabular}}}
\end{subtable}
&
\begin{subtable}[t]{0.32\linewidth}
\centering
\caption*{ViT-Tiny Asym OfficeHome}
{\setlength{\tabcolsep}{2pt}\renewcommand{\arraystretch}{0.9}
\resizebox{\linewidth}{!}{%
\begin{tabular}{@{}lccccccccc@{}}
\toprule
 & 1e-5 & 1e-4.5 & 1e-4 & 1e-3.5 & 1e-3 & 1e-2.5 & 1e-2 & 1e-1.5 & 1e-1 \\
\midrule
2   & 43.91 & 63.06 & 70.20 & 73.45 & \textcolor{red}{74.65} & \textcolor{blue}{74.48} & 74.22 & \textcolor{codegreen}{75.29} & 74.09 \\
4   & 44.72 & 63.83 & 70.59 & 73.71 & \textcolor{red}{75.76} & 73.92 & \textcolor{codegreen}{76.14} & \textcolor{blue}{75.37} & 51.69 \\
8   & 45.79 & 64.47 & 71.31 & 74.65 & \textcolor{codegreen}{75.93} & 75.50 & \textcolor{red}{75.84} & \textcolor{blue}{75.72} & 40.49 \\
16   & 47.20 & 65.80 & 72.21 & 75.12 & \textcolor{codegreen}{77.13} & \textcolor{blue}{75.67} & \textcolor{red}{76.87} & 54.77 & 21.42 \\
32   & 49.68 & 66.27 & 72.47 & 76.66 & \textcolor{codegreen}{78.50} & \textcolor{red}{77.94} & \textcolor{blue}{77.04} & 45.28 & 13.85 \\
64   & 51.52 & 67.04 & 74.13 & \textcolor{blue}{77.81} & \textcolor{codegreen}{79.31} & \textcolor{red}{78.37} & 57.46 & 23.56 & 8.85 \\
128   & 53.53 & 68.06 & 75.25 & \textcolor{red}{79.35} & \textcolor{codegreen}{80.72} & \textcolor{blue}{77.85} & 46.81 & 10.65 & 2.05 \\
\bottomrule
\end{tabular}}}
\end{subtable}
\\

\begin{subtable}[t]{0.32\linewidth}
\centering
\caption*{ViT-Tiny RAC OfficeHome}
{\setlength{\tabcolsep}{2pt}\renewcommand{\arraystretch}{0.9}
\resizebox{\linewidth}{!}{%
\begin{tabular}{@{}lccccccccc@{}}
\toprule
 & 1e-5 & 1e-4.5 & 1e-4 & 1e-3.5 & 1e-3 & 1e-2.5 & 1e-2 & 1e-1.5 & 1e-1 \\
\midrule
2   & 44.38 & 63.75 & 70.54 & 73.54 & \textcolor{red}{75.29} & \textcolor{codegreen}{76.14} & \textcolor{blue}{75.16} & 70.97 & 54.85 \\
4   & 44.76 & 64.04 & 70.71 & \textcolor{blue}{74.01} & \textcolor{codegreen}{76.36} & \textcolor{red}{74.22} & 72.85 & 66.44 & 51.69 \\
8   & 45.83 & 65.07 & 71.65 & 74.78 & \textcolor{red}{76.31} & \textcolor{codegreen}{77.04} & \textcolor{blue}{75.97} & 64.73 & 40.49 \\
16   & 47.29 & 65.71 & 72.55 & 75.29 & \textcolor{codegreen}{77.94} & \textcolor{red}{77.47} & \textcolor{blue}{75.46} & 67.38 & 21.42 \\
32   & 49.64 & 66.10 & 72.60 & \textcolor{blue}{77.47} & \textcolor{codegreen}{79.26} & \textcolor{red}{78.79} & 75.07 & 45.28 & 13.85 \\
64   & 51.60 & 67.12 & 74.39 & \textcolor{red}{78.32} & \textcolor{codegreen}{79.91} & \textcolor{blue}{75.33} & 57.46 & 23.56 & 9.49 \\
128   & 53.53 & 67.76 & 75.72 & \textcolor{red}{79.26} & \textcolor{codegreen}{80.63} & \textcolor{blue}{76.74} & 46.81 & 10.65 & 2.05 \\
\bottomrule
\end{tabular}}}
\end{subtable}
&
\begin{subtable}[t]{0.32\linewidth}
\centering
\caption*{ViT-Tiny cLA OfficeHome}
{\setlength{\tabcolsep}{2pt}\renewcommand{\arraystretch}{0.9}
\resizebox{\linewidth}{!}{%
\begin{tabular}{@{}lccccccccc@{}}
\toprule
 & 1e-5 & 1e-4.5 & 1e-4 & 1e-3.5 & 1e-3 & 1e-2.5 & 1e-2 & 1e-1.5 & 1e-1 \\
\midrule
2   & 44.04 & 64.04 & 70.12 & 73.45 & \textcolor{codegreen}{75.89} & \textcolor{red}{75.72} & \textcolor{blue}{74.35} & 73.19 & 30.14 \\
4   & 46.09 & 64.86 & 70.84 & 74.90 & \textcolor{codegreen}{76.53} & \textcolor{red}{75.37} & \textcolor{blue}{74.99} & 54.81 & 2.01 \\
8   & 47.50 & 65.16 & 72.12 & 75.16 & \textcolor{red}{76.57} & \textcolor{codegreen}{76.87} & \textcolor{blue}{75.25} & 45.10 & 2.86 \\
16   & 50.75 & 65.63 & 72.98 & \textcolor{blue}{77.00} & \textcolor{codegreen}{78.37} & \textcolor{red}{77.17} & 59.30 & 36.55 & 3.42 \\
32   & 52.93 & 66.99 & \textcolor{blue}{74.39} & \textcolor{codegreen}{77.85} & \textcolor{red}{76.83} & \textcolor{red}{76.83} & 51.13 & 12.61 & 2.35 \\
64   & 55.96 & 68.53 & \textcolor{blue}{75.67} & \textcolor{red}{79.14} & \textcolor{codegreen}{79.26} & 63.06 & 35.49 & 4.02 & 2.69 \\
128   & 61.22 & 71.95 & \textcolor{blue}{77.34} & \textcolor{codegreen}{79.78} & \textcolor{red}{77.77} & 53.91 & 17.44 & 3.42 & 3.42 \\
\bottomrule
\end{tabular}}}
\end{subtable}
&
\begin{subtable}[t]{0.32\linewidth}
\centering
\caption*{ViT-Tiny $c^3$LA OfficeHome}
{\setlength{\tabcolsep}{2pt}\renewcommand{\arraystretch}{0.9}
\resizebox{\linewidth}{!}{%
\begin{tabular}{@{}lccccccccc@{}}
\toprule
 & 1e-5 & 1e-4.5 & 1e-4 & 1e-3.5 & 1e-3 & 1e-2.5 & 1e-2 & 1e-1.5 & 1e-1 \\
\midrule
2   & 45.02 & 64.51 & 70.29 & \textcolor{blue}{73.88} & \textcolor{codegreen}{76.36} & \textcolor{red}{75.93} & 73.07 & 57.46 & 30.14 \\
4   & 46.30 & 64.94 & 70.80 & \textcolor{blue}{74.31} & \textcolor{codegreen}{77.17} & \textcolor{red}{76.19} & 66.65 & 54.81 & 2.01 \\
8   & 49.21 & 65.37 & \textcolor{blue}{72.68} & \textcolor{red}{75.93} & \textcolor{codegreen}{77.51} & 72.60 & 64.73 & 45.10 & 3.04 \\
16   & 51.13 & 66.44 & \textcolor{blue}{73.15} & \textcolor{red}{77.55} & \textcolor{codegreen}{78.71} & 71.23 & 59.30 & 36.55 & 2.48 \\
32   & 53.31 & 67.55 & \textcolor{blue}{74.22} & \textcolor{codegreen}{79.26} & \textcolor{red}{79.22} & 67.38 & 51.13 & 12.01 & 2.35 \\
64   & 56.18 & 69.22 & \textcolor{blue}{76.06} & \textcolor{codegreen}{79.52} & \textcolor{red}{78.24} & 63.06 & 35.49 & 6.50 & 3.59 \\
128   & 61.18 & 72.00 & \textcolor{blue}{77.38} & \textcolor{codegreen}{79.91} & \textcolor{red}{78.45} & 53.91 & 17.44 & 8.38 & 3.42 \\
\bottomrule
\end{tabular}}}
\end{subtable}
\end{tabular}
\end{table*}

\begin{table*}
\centering
\captionsetup[subtable]{labelformat=simple, labelsep=colon}
\renewcommand\thesubtable{\alph{subtable}}
\caption{Test accuracies of fine-tuning DeBERTa v3 on MRPC and TREC-50 over varying scaling factors (columns), ranks (rows), and LoRA PEFT methods. The standard baseline $2r$ often was the best, and asymmetric methods preferred higher scaling factors.
% Consistent with the results of~\ref{tab:deberta3ablation}, the learning rate for all methods decreases with increasing rank. Chain methods and their non chain counterparts produce the best results in simlar learning rat ranges. %
}
\label{tab:deberta-scaling-factor}
\footnotesize
\setlength{\tabcolsep}{3pt}
\renewcommand{\arraystretch}{0.9}
\begin{tabular}{@{}ccc@{}}
% ===================== ROW 1: LoRA =====================
\begin{subtable}[t]{0.32\linewidth}
\centering
\caption*{DeBERTa v3 LoRA MRPC}
{\setlength{\tabcolsep}{2pt}\renewcommand{\arraystretch}{0.9}
\resizebox{\linewidth}{!}{%
\begin{tabular}{@{}lccccc@{}}
\toprule
 Rank/$\alpha$ & $\frac{r}{4}$ & $\frac{r}{2}$ & $r$ & $2r$ & $4r$\\
\midrule
4 & 87.2 & \textcolor{red}{88.3} & \textcolor{codegreen}{88.5} & \textcolor{blue}{88.1} & 87.4 \\
8 & \textcolor{blue}{86.9} & 86.1 & \textcolor{codegreen}{89.2} & \textcolor{red}{87.0} & 66.5 \\
16 & \textcolor{blue}{87.8} & \textcolor{red}{88.9} & \textcolor{codegreen}{89.1} & 66.5 & 66.5 \\
\bottomrule
\end{tabular}}}
\end{subtable}
&
\begin{subtable}[t]{0.32\linewidth}
\centering
\caption*{DeBERTa v3 CoLA MRPC}
{\setlength{\tabcolsep}{2pt}\renewcommand{\arraystretch}{0.9}
\resizebox{\linewidth}{!}{%
\begin{tabular}{@{}lccccc@{}}
\toprule
 Rank/$\alpha$ & $\frac{r}{4}$ & $\frac{r}{2}$ & $r$ & $2r$ & $4r$\\
\midrule
4 & 87.8 & \textcolor{red}{88.9} & \textcolor{blue}{88.5} & \textcolor{codegreen}{89.2} & 87.1 \\
8 & \textcolor{codegreen}{89.6} & \textcolor{blue}{87.4} & \textcolor{red}{88.7} & 87.2 & 86.3 \\
16 & \textcolor{codegreen}{89.2} & \textcolor{red}{87.6} & 86.9 & \textcolor{blue}{87.2} & 66.5 \\
\bottomrule
\end{tabular}}}
\end{subtable}
&
\begin{subtable}[t]{0.32\linewidth}
\centering
\caption*{DeBERTa v3 Asym MRPC}
{\setlength{\tabcolsep}{2pt}\renewcommand{\arraystretch}{0.9}
\resizebox{\linewidth}{!}{%
\begin{tabular}{@{}lccccc@{}}
\toprule
 Rank/$\alpha$ & $\frac{r}{4}$ & $\frac{r}{2}$ & $r$ & $2r$ & $4r$\\
\midrule
4 & 75.5 & 79.9 & \textcolor{blue}{80.4} & \textcolor{codegreen}{85.0} & \textcolor{red}{84.2} \\
8 & 76.7 & 82.1 & \textcolor{blue}{83.6} & \textcolor{red}{86.1} & \textcolor{codegreen}{86.9} \\
16 & 79.2 & \textcolor{blue}{81.4} & \textcolor{red}{84.8} & \textcolor{red}{84.8} & \textcolor{codegreen}{86.1} \\
\bottomrule
\end{tabular}}}
\end{subtable}
\\
\begin{subtable}[t]{0.32\linewidth}
\centering
\caption*{DeBERTa v3 RAC MRPC}
{\setlength{\tabcolsep}{2pt}\renewcommand{\arraystretch}{0.9}
\resizebox{\linewidth}{!}{%
\begin{tabular}{@{}lccccc@{}}
\toprule
 Rank/$\alpha$ & $\frac{r}{4}$ & $\frac{r}{2}$ & $r$ & $2r$ & $4r$\\
\midrule
4 & 75.6 & 79.4 & \textcolor{blue}{82.2} & \textcolor{red}{85.0} & \textcolor{codegreen}{85.7} \\
8 & 77.8 & 81.6 & \textcolor{blue}{84.6} & \textcolor{red}{85.7} & \textcolor{codegreen}{87.2} \\
16 & 80.4 & 84.5 & \textcolor{blue}{85.0} & \textcolor{red}{85.6} & \textcolor{codegreen}{87.0} \\
\bottomrule
\end{tabular}}}
\end{subtable}
&
\begin{subtable}[t]{0.32\linewidth}
\centering
\caption*{DeBERTa v3 cLA MRPC}
{\setlength{\tabcolsep}{2pt}\renewcommand{\arraystretch}{0.9}
\resizebox{\linewidth}{!}{%
\begin{tabular}{@{}lccccc@{}}
\toprule
 Rank/$\alpha$ & $\frac{r}{4}$ & $\frac{r}{2}$ & $r$ & $2r$ & $4r$\\
\midrule
4 & \textcolor{blue}{86.2} & 86.0 & \textcolor{red}{86.3} & \textcolor{codegreen}{86.4} & \textcolor{codegreen}{86.4} \\
8 & \textcolor{codegreen}{86.6} & 84.8 & 85.4 & \textcolor{blue}{85.5} & \textcolor{red}{85.9} \\
16 & \textcolor{red}{86.8} & \textcolor{codegreen}{86.9} & 86.2 & 86.2 & \textcolor{blue}{86.4} \\
\bottomrule
\end{tabular}}}
\end{subtable}
&
\begin{subtable}[t]{0.32\linewidth}
\centering
\caption*{DeBERTa v3 $c^3$LA MRPC}
{\setlength{\tabcolsep}{2pt}\renewcommand{\arraystretch}{0.9}
\resizebox{\linewidth}{!}{%
\begin{tabular}{@{}lccccc@{}}
\toprule
 Rank/$\alpha$ & $\frac{r}{4}$ & $\frac{r}{2}$ & $r$ & $2r$ & $4r$\\
\midrule
4 & 79.3 & 83.3 & \textcolor{red}{86.5} & \textcolor{codegreen}{88.5} & \textcolor{blue}{86.1} \\
8 & 78.3 & \textcolor{blue}{84.9} & \textcolor{red}{86.9} & \textcolor{codegreen}{87.6} & \textcolor{red}{86.9} \\
16 & 85.0 & \textcolor{blue}{85.7} & \textcolor{codegreen}{87.3} & \textcolor{red}{85.8} & 66.5 \\
\bottomrule
\end{tabular}}}
\end{subtable}
\\
\begin{subtable}[t]{0.32\linewidth}
\centering
\caption*{DeBERTa v3 LoRA TREC-50}
{\setlength{\tabcolsep}{2pt}\renewcommand{\arraystretch}{0.9}
\resizebox{\linewidth}{!}{%
\begin{tabular}{@{}lccccc@{}}
\toprule
 Rank/$\alpha$ & $\frac{r}{4}$ & $\frac{r}{2}$ & $r$ & $2r$ & $4r$\\
\midrule
4 & 88.9 & \textcolor{blue}{89.7} & \textcolor{codegreen}{90.7} & 83.1 & \textcolor{red}{90.3} \\
8 & \textcolor{blue}{88.7} & \textcolor{red}{90.7} & \textcolor{codegreen}{91.3} & 85.3 & 75.4 \\
16 & \textcolor{red}{91.1} & \textcolor{codegreen}{91.5} & \textcolor{blue}{90.7} & 88.5 & 10.9 \\
\bottomrule
\end{tabular}}}
\end{subtable}
&
\begin{subtable}[t]{0.32\linewidth}
\centering
\caption*{DeBERTa v3 CoLA TREC-50}
{\setlength{\tabcolsep}{2pt}\renewcommand{\arraystretch}{0.9}
\resizebox{\linewidth}{!}{%
\begin{tabular}{@{}lccccc@{}}
\toprule
 Rank/$\alpha$ & $\frac{r}{4}$ & $\frac{r}{2}$ & $r$ & $2r$ & $4r$\\
\midrule
4 & \textcolor{red}{92.1} & \textcolor{blue}{91.9} & \textcolor{codegreen}{92.5} & 90.7 & 91.1 \\
8 & \textcolor{codegreen}{91.7} & 89.7 & \textcolor{red}{90.9} & \textcolor{blue}{90.3} & 85.5 \\
16 & \textcolor{red}{91.9} & \textcolor{codegreen}{92.3} & 86.1 & \textcolor{blue}{87.3} & 10.9 \\
\bottomrule
\end{tabular}}}
\end{subtable}
&
\begin{subtable}[t]{0.32\linewidth}
\centering
\caption*{DeBERTa v3 Asym TREC-50}
{\setlength{\tabcolsep}{2pt}\renewcommand{\arraystretch}{0.9}
\resizebox{\linewidth}{!}{%
\begin{tabular}{@{}lccccc@{}}
\toprule
 Rank/$\alpha$ & $\frac{r}{4}$ & $\frac{r}{2}$ & $r$ & $2r$ & $4r$\\
\midrule
4 & 79.8 & 84.7 & \textcolor{blue}{87.7} & \textcolor{codegreen}{90.5} & \textcolor{red}{89.9} \\
8 & 82.9 & \textcolor{blue}{87.7} & 84.5 & \textcolor{red}{89.3} & \textcolor{codegreen}{90.7} \\
16 & 89.3 & 86.7 & \textcolor{red}{90.7} & \textcolor{codegreen}{91.3} & \textcolor{blue}{89.7} \\
\bottomrule
\end{tabular}}}
\end{subtable}
\\
\begin{subtable}[t]{0.32\linewidth}
\centering
\caption*{DeBERTa v3 RAC TREC-50}
{\setlength{\tabcolsep}{2pt}\renewcommand{\arraystretch}{0.9}
\resizebox{\linewidth}{!}{%
\begin{tabular}{@{}lccccc@{}}
\toprule
 Rank/$\alpha$ & $\frac{r}{4}$ & $\frac{r}{2}$ & $r$ & $2r$ & $4r$\\
\midrule
4 & 60.1 & 72.6 & \textcolor{blue}{81.0} & \textcolor{red}{85.5} & \textcolor{codegreen}{88.7} \\
8 & 75.2 & 81.5 & \textcolor{blue}{85.7} & \textcolor{red}{88.1} & \textcolor{codegreen}{89.7} \\
16 & 83.1 & \textcolor{blue}{86.3} & \textcolor{red}{87.7} & \textcolor{codegreen}{90.3} & 78.0 \\
\bottomrule
\end{tabular}}}
\end{subtable}
&
\begin{subtable}[t]{0.32\linewidth}
\centering
\caption*{DeBERTa v3 cLA TREC-50}
{\setlength{\tabcolsep}{2pt}\renewcommand{\arraystretch}{0.9}
\resizebox{\linewidth}{!}{%
\begin{tabular}{@{}lccccc@{}}
\toprule
 Rank/$\alpha$ & $\frac{r}{4}$ & $\frac{r}{2}$ & $r$ & $2r$ & $4r$\\
\midrule
4 & 57.9 & 74.8 & \textcolor{blue}{80.0} & \textcolor{red}{82.9} & \textcolor{codegreen}{86.3} \\
8 & 73.6 & 76.6 & \textcolor{red}{83.7} & \textcolor{blue}{82.5} & \textcolor{codegreen}{87.3} \\
16 & 79.6 & 80.2 & \textcolor{red}{87.9} & \textcolor{codegreen}{88.1} & \textcolor{blue}{86.1} \\
\bottomrule
\end{tabular}}}
\end{subtable}
&
\begin{subtable}[t]{0.32\linewidth}
\centering
\caption*{DeBERTa v3 $c^3$LA TREC-50}
{\setlength{\tabcolsep}{2pt}\renewcommand{\arraystretch}{0.9}
\resizebox{\linewidth}{!}{%
\begin{tabular}{@{}lccccc@{}}
\toprule
 Rank/$\alpha$ & $\frac{r}{4}$ & $\frac{r}{2}$ & $r$ & $2r$ & $4r$\\
\midrule
4 & 73.8 & 81.5 & \textcolor{blue}{83.1} & \textcolor{codegreen}{89.3} & \textcolor{red}{88.7} \\
8 & 78.2 & \textcolor{blue}{82.3} & \textcolor{red}{83.9} & \textcolor{codegreen}{84.7} & 81.7 \\
16 & 83.1 & \textcolor{blue}{85.5} & 85.3 & \textcolor{codegreen}{87.5} & \textcolor{red}{86.3} \\
\bottomrule
\end{tabular}}}
\end{subtable}

\end{tabular}
\end{table*}

\begin{table*}
\centering
\captionsetup[subtable]{labelformat=simple, labelsep=colon}
\renewcommand\thesubtable{\alph{subtable}}
\caption{Test accuracies obtained by fine-tuning ViT-Tiny on OfficeHome and CIFAR-10 over varying scaling factors (columns), ranks (rows), and LoRA PEFT methods. The standard baseline $2r$ often was the best, and asymmetric methods preferred higher scaling factors.
% Consistent with the results of~\ref{tab:deberta3ablation}, the learning rate for all methods decreases with increasing rank. Chain methods and their non chain counterparts produce the best results in simlar learning rat ranges. %
}
\label{tab:vit-scaling-factor}
\footnotesize
\setlength{\tabcolsep}{3pt}
\renewcommand{\arraystretch}{0.9}

\begin{tabular}{@{}ccc@{}}

% ===================== ROW 1: LoRA =====================
\begin{subtable}[t]{0.32\linewidth}
\centering
\caption*{ViT-Tiny LoRA CIFAR-10}
{\setlength{\tabcolsep}{2pt}\renewcommand{\arraystretch}{0.9}
\resizebox{\linewidth}{!}{%
\begin{tabular}{@{}lcccc@{}}
\toprule
 Rank/$\alpha$ & $\frac{r}{4}$ & $\frac{r}{2}$ & r & 2r\\
\midrule
4 & 93.9 & \textcolor{codegreen}{94.1} & \textcolor{red}{94.0} & \textcolor{red}{94.0}\\
8 & 94.8 & \textcolor{red}{95.7} & \textcolor{red}{95.7} & \textcolor{codegreen}{95.8}\\
16 & 95.8 & \textcolor{codegreen}{96.1} & \textcolor{codegreen}{96.1} & \textcolor{red}{95.2} \\
\bottomrule
\end{tabular}}}
\end{subtable} &
\begin{subtable}[t]{0.32\linewidth}
\centering
\caption*{ViT-Tiny CoLA CIFAR-10}
{\setlength{\tabcolsep}{2pt}\renewcommand{\arraystretch}{0.9}
\resizebox{\linewidth}{!}{%
\begin{tabular}{@{}lccccc@{}}
\toprule
 Rank/$\alpha$ & $\frac{r}{4}$ & $\frac{r}{2}$ & r & 2r\\
\midrule
4 & 94.3 & \textcolor{red}{94.5} & \textcolor{red}{94.5} & \textcolor{codegreen}{95.3}\\
8 & 94.7 & \textcolor{blue}{94.9} & \textcolor{codegreen}{95.3} & \textcolor{red}{95.1}\\
16 & 95.0 & \textcolor{blue}{95.5} & \textcolor{red}{95.7} & \textcolor{codegreen}{96.2}\\
\bottomrule
\end{tabular}}}
\end{subtable} &
\begin{subtable}[t]{0.32\linewidth}
\centering
\caption*{ViT-Tiny Asym CIFAR-10}
{\setlength{\tabcolsep}{2pt}\renewcommand{\arraystretch}{0.9}
\resizebox{\linewidth}{!}{%
\begin{tabular}{@{}lccccc@{}}
\toprule
 Rank/$\alpha$ & $\frac{r}{4}$ & $\frac{r}{2}$ & r & 2r\\
\midrule
4 & 91.7 & \textcolor{blue}{92.2} & \textcolor{codegreen}{92.8} & \textcolor{red}{92.4}\\
8 & 92.6 & \textcolor{blue}{93.1} & \textcolor{red}{93.7} & \textcolor{codegreen}{94}.\\
16 & 93.1 & \textcolor{red}{94.0} & \textcolor{codegreen}{94.4} & \textcolor{codegreen}{94.4}\\
\bottomrule
\end{tabular}}}
\end{subtable}\\
\\
\begin{subtable}[t]{0.32\linewidth}
\centering
\caption*{ViT-Tiny RAC CIFAR-10}
{\setlength{\tabcolsep}{2pt}\renewcommand{\arraystretch}{0.9}
\resizebox{\linewidth}{!}{%
\begin{tabular}{@{}lccccc@{}}
\toprule
 Rank/$\alpha$ & $\frac{r}{4}$ & $\frac{r}{2}$ & r & 2r\\
\midrule
4 & 91.8 & \textcolor{blue}{92.6} & \textcolor{red}{93.2} & \textcolor{codegreen}{94.2}\\
8 & 92.8 & \textcolor{blue}{93.4} & \textcolor{red}{94.0} & \textcolor{codegreen}{94.8}\\
16 & 93.6 & \textcolor{blue}{94.3} & \textcolor{red}{94.8} & \textcolor{codegreen}{95.6}\\
\bottomrule
\end{tabular}}}
\end{subtable} &
\begin{subtable}[t]{0.32\linewidth}
\centering
\caption*{ViT-Tiny cLA CIFAR-10}
{\setlength{\tabcolsep}{2pt}\renewcommand{\arraystretch}{0.9}
\resizebox{\linewidth}{!}{%
\begin{tabular}{@{}lccccc@{}}
\toprule
 Rank/$\alpha$ & $\frac{r}{4}$ & $\frac{r}{2}$ & r & 2r\\
\midrule
4 & 92.0 & \textcolor{blue}{92.6} & \textcolor{red}{93.1} & \textcolor{codegreen}{93.5}\\
8 & 93.4 & \textcolor{red}{93.4} & \textcolor{codegreen}{93.5} & \textcolor{red}{93.4}\\
16 & 94.3 & \textcolor{codegreen}{94.5} & \textcolor{codegreen}{94.5} & \textcolor{codegreen}{94.5}\\
\bottomrule
\end{tabular}}}
\end{subtable} &
\begin{subtable}[t]{0.32\linewidth}
\centering
\caption*{ViT-Tiny $c^3$LA CIFAR-10}
{\setlength{\tabcolsep}{2pt}\renewcommand{\arraystretch}{0.9}
\resizebox{\linewidth}{!}{%
\begin{tabular}{@{}lccccc@{}}
\toprule
 Rank/$\alpha$ & $\frac{r}{4}$ & $\frac{r}{2}$ & r & 2r\\
\midrule
4 & 92.7 & \textcolor{red}{93.4} & \textcolor{red}{93.4} & \textcolor{codegreen}{94.4}\\
8 & 93.8 & \textcolor{blue}{94.4} & \textcolor{red}{94.6} & \textcolor{codegreen}{94.8}\\
16 & 94.8 & \textcolor{codegreen}{95.3} & \textcolor{red}{95.2} & \textcolor{codegreen}{95.3}\\
\bottomrule
\end{tabular}}}
\end{subtable}
\\
\begin{subtable}[t]{0.32\linewidth}
\centering
\caption*{ViT-Tiny LoRA OfficeHome}
{\setlength{\tabcolsep}{2pt}\renewcommand{\arraystretch}{0.9}
\resizebox{\linewidth}{!}{%
\begin{tabular}{@{}lccccc@{}}
\toprule
 Rank/$\alpha$ & $\frac{r}{4}$ & $\frac{r}{2}$ & r & 2r\\
\midrule
4 & 76.8 & \textcolor{blue}{77.1} & \textcolor{red}{77.8} & \textcolor{codegreen}{77.9}\\
8 & 76.9 & \textcolor{blue}{77.9} & \textcolor{red}{78.5} & \textcolor{codegreen}{79.4}\\
16 & 77.9 & \textcolor{blue}{78.4} & \textcolor{red}{79.2} & \textcolor{codegreen}{79.4}\\
\bottomrule
\end{tabular}}}
\end{subtable} &
\begin{subtable}[t]{0.32\linewidth}
\centering
\caption*{ViT-Tiny CoLA OfficeHome}
{\setlength{\tabcolsep}{2pt}\renewcommand{\arraystretch}{0.9}
\resizebox{\linewidth}{!}{%
\begin{tabular}{@{}lccccc@{}}
\toprule
 Rank/$\alpha$ & $\frac{r}{4}$ & $\frac{r}{2}$ & r & 2r\\
\midrule
4 & 75.5 & \textcolor{blue}{75.9} & \textcolor{red}{76.6} & \textcolor{codegreen}{77.8}\\
8 & 76.0 & \textcolor{blue}{76.4} & \textcolor{red}{77.4} & \textcolor{codegreen}{79.6}\\
16 & 76.3 & \textcolor{blue}{77.2} & \textcolor{red}{78.2} & \textcolor{codegreen}{79.4}\\
\bottomrule
\end{tabular}}}
\end{subtable} &
\begin{subtable}[t]{0.32\linewidth}
\centering
\caption*{ViT-Tiny Asym OfficeHome}
{\setlength{\tabcolsep}{2pt}\renewcommand{\arraystretch}{0.9}
\resizebox{\linewidth}{!}{%
\begin{tabular}{@{}lccccc@{}}
\toprule
 Rank/$\alpha$ & $\frac{r}{4}$ & $\frac{r}{2}$ & r & 2r\\
\midrule
4 & 74.0 & \textcolor{blue}{74.5} & \textcolor{red}{75.2} & \textcolor{codegreen}{75.6}\\
8 & 74.5 & \textcolor{blue}{75.1} & \textcolor{red}{75.6} & \textcolor{codegreen}{76.2}\\
16 & 75.2 & \textcolor{blue}{75.9} & \textcolor{red}{76.4} & \textcolor{codegreen}{76.9}\\
\bottomrule
\end{tabular}}}
\end{subtable}\\
\\
\begin{subtable}[t]{0.32\linewidth}
\centering
\caption*{ViT-Tiny RAC OfficeHome}
{\setlength{\tabcolsep}{2pt}\renewcommand{\arraystretch}{0.9}
\resizebox{\linewidth}{!}{%
\begin{tabular}{@{}lccccc@{}}
\toprule
 Rank/$\alpha$ & $\frac{r}{4}$ & $\frac{r}{2}$ & r & 2r\\
\midrule
4 & 74.3 & \textcolor{blue}{74.6} & \textcolor{red}{75.6} & \textcolor{codegreen}{76.0}\\
8 & 74.8 & \textcolor{blue}{75.2} & \textcolor{red}{75.9} & \textcolor{codegreen}{77.1}\\
16 & 75.2 & \textcolor{blue}{75.5} & \textcolor{red}{76.4} & \textcolor{codegreen}{77.7}\\
\bottomrule
\end{tabular}}}
\end{subtable} &
\begin{subtable}[t]{0.32\linewidth}
\centering
\caption*{ViT-Tiny cLA OfficeHome}
{\setlength{\tabcolsep}{2pt}\renewcommand{\arraystretch}{0.9}
\resizebox{\linewidth}{!}{%
\begin{tabular}{@{}lccccc@{}}
\toprule
 Rank/$\alpha$ & $\frac{r}{4}$ & $\frac{r}{2}$ & r & 2r\\
\midrule
4 & 75.3 & \textcolor{red}{75.9} & \textcolor{codegreen}{76.5} & \textcolor{codegreen}{76.5}\\
8 & 76.3 & \textcolor{red}{76.5} & \textcolor{codegreen}{76.6} & \textcolor{red}{76.5}\\
16 & 76.4 & \textcolor{blue}{76.9} & \textcolor{red}{77.3} & \textcolor{codegreen}{78.4}\\
\bottomrule
\end{tabular}}}
\end{subtable} &
\begin{subtable}[t]{0.32\linewidth}
\centering
\caption*{ViT-Tiny $c^3$LA OfficeHome}
{\setlength{\tabcolsep}{2pt}\renewcommand{\arraystretch}{0.9}
\resizebox{\linewidth}{!}{%
\begin{tabular}{@{}lccccc@{}}
\toprule
 Rank/$\alpha$ & $\frac{r}{4}$ & $\frac{r}{2}$ & r & 2r\\
\midrule
4 & 75.6 & \textcolor{blue}{75.9} & \textcolor{red}{76.3} & \textcolor{codegreen}{77.3}\\
8 & 76.4 & \textcolor{red}{77.0} & \textcolor{blue}{76.9} & \textcolor{codegreen}{77.5}\\
16 & 76.6 & \textcolor{blue}{77.7} & \textcolor{red}{78.4} & \textcolor{codegreen}{78.5}\\
\bottomrule
\end{tabular}}}
\end{subtable}
\end{tabular}
\end{table*}

\begin{table*}
\centering
\caption{Test accuracies obtained by fine-tuning DeBERTa v3 on MRPC, CoLA, TREC-50, and RTE using chain LoRA methods CoLA, RAC, and $c^3$LA over varying ranks and chain reset frequencies. We do not observe a clear correlation between the optimal chain reset frequency and rank.}
\label{tab:chain_reset}
\scriptsize
\setlength{\tabcolsep}{3.5pt}
\renewcommand{\arraystretch}{1.04}

% Helper macro: uniform header for inner tables
\newcommand{\CRHeader}{%
\toprule
\multicolumn{2}{c}{} & \multicolumn{6}{c}{Chain Reset Frequency} \\
\cmidrule(lr){3-8}
Variant & Rank 1 & 2 & 5 & 10 & 15 & 20 \\
\midrule
}

\begin{tabular}{@{}c c@{}}
%%%%%%%%%%%%%%%%%%%%%%%%%%%%%%%%%%%%
% MRPC
%%%%%%%%%%%%%%%%%%%%%%%%%%%%%%%%%%%%
\begin{minipage}{0.49\linewidth}
\centering
DeBERTa v3 MRPC\\[2pt]
\begin{tabular}{@{}llcccccc@{}}
\CRHeader
\multirow{3}{*}{CoLA}
  & 4  & \textcolor{blue}{88.0} & 86.8 & \textcolor{codegreen}{89.2} & \textcolor{red}{88.1} & 86.7 & 86.7 \\
  \cmidrule(lr){2-8}
  & 8  & \textcolor{red}{87.8} & \textcolor{codegreen}{88.0} & \textcolor{blue}{87.2} & \textcolor{blue}{87.2} & 86.7 & \textcolor{blue}{87.2} \\
  \cmidrule(lr){2-8}
  & 16  & \textcolor{red}{66.5} & \textcolor{codegreen}{87.2} & \textcolor{codegreen}{87.2} & \textcolor{codegreen}{87.2} & \textcolor{codegreen}{87.2} & \textcolor{codegreen}{87.2} \\
\midrule[\heavyrulewidth]
\multirow{3}{*}{RAC}
  & 4  & 68.3 & 77.4 & \textcolor{blue}{85.0} & \textcolor{red}{85.7} & \textcolor{red}{85.7} & \textcolor{codegreen}{86.6} \\
  \cmidrule(lr){2-8}
  & 8  & 68.1 & 82.0 & \textcolor{red}{85.7} & \textcolor{codegreen}{86.4} & \textcolor{red}{85.7} & \textcolor{blue}{85.6} \\
  \cmidrule(lr){2-8}
  & 16  & 69.1 & 84.2 & 85.6 & \textcolor{blue}{86.1} & \textcolor{codegreen}{86.5} & \textcolor{red}{86.3} \\
\midrule[\heavyrulewidth]
\multirow{3}{*}{c$^{3}$LA}
  & 4  & 84.8 & \textcolor{red}{86.7} & \textcolor{codegreen}{87.2} & 85.2 & \textcolor{blue}{85.8} & 85.2 \\
  \cmidrule(lr){2-8}
  & 8  & 85.2 & \textcolor{codegreen}{87.7} & 86.6 & \textcolor{blue}{86.7} & 85.3 & \textcolor{red}{86.9} \\
  \cmidrule(lr){2-8}
  & 16  & \textcolor{codegreen}{87.6} & \textcolor{red}{87.0} & \textcolor{blue}{86.7} & 86.6 & 86.6 & \textcolor{codegreen}{87.6} \\
\bottomrule
\end{tabular}
\end{minipage}
\vspace{2mm}
&
%%%%%%%%%%%%%%%%%%%%%%%%%%%%%%%%%%%%
% TREC
%%%%%%%%%%%%%%%%%%%%%%%%%%%%%%%%%%%%
\begin{minipage}{0.49\linewidth}
\centering
DeBERTa v3 TREC-50\\[2pt]
\begin{tabular}{@{}llcccccc@{}}
\CRHeader
\multirow{3}{*}{CoLA}
  & 4  & \textcolor{codegreen}{91.3} & \textcolor{red}{91.1} & 89.9 & 88.5 & \textcolor{blue}{90.5} & \textcolor{codegreen}{91.3} \\
  \cmidrule(lr){2-8}
  & 8  & \textcolor{codegreen}{92.7} & \textcolor{red}{91.1} & 85.3 & 10.9 & \textcolor{codegreen}{92.7} & \textcolor{blue}{90.5} \\
  \cmidrule(lr){2-8}
  & 16  & 10.9 & 10.9 & \textcolor{codegreen}{93.1} & \textcolor{blue}{91.7} & \textcolor{red}{92.1} & 65.1 \\
\midrule[\heavyrulewidth]
\multirow{3}{*}{RAC}
  & 4  & 84.3 & 84.1 & \textcolor{blue}{85.5} & 84.1 & \textcolor{red}{86.3} & \textcolor{codegreen}{86.5} \\
  \cmidrule(lr){2-8}
  & 8  & 88.3 & \textcolor{blue}{88.5} & 88.1 & \textcolor{red}{88.7} & 87.7 & \textcolor{codegreen}{88.9} \\
  \cmidrule(lr){2-8}
  & 16  & 87.7 & \textcolor{codegreen}{91.5} & \textcolor{red}{90.3} & \textcolor{blue}{89.9} & \textcolor{blue}{89.9} & 88.9 \\
\midrule[\heavyrulewidth]
\multirow{3}{*}{c$^{3}$LA}
  & 4  & 86.3 & \textcolor{blue}{88.1} & \textcolor{codegreen}{89.3} & 85.9 & \textcolor{red}{88.9} & \textcolor{red}{88.9} \\
  \cmidrule(lr){2-8}
  & 8  & \textcolor{blue}{86.1} & \textcolor{red}{89.3} & 84.7 & 83.7 & \textcolor{blue}{86.1} & \textcolor{codegreen}{90.7} \\
  \cmidrule(lr){2-8}
  & 16  & \textcolor{blue}{89.7} & \textcolor{red}{90.5} & 87.5 & \textcolor{codegreen}{91.1} & 87.3 & 88.1 \\
\bottomrule
\end{tabular}
\end{minipage}
\\[0.8em]
%%%%%%%%%%%%%%%%%%%%%%%%%%%%%%%%%%%%
% CoLA
%%%%%%%%%%%%%%%%%%%%%%%%%%%%%%%%%%%%
\begin{minipage}{0.49\linewidth}
\centering
DeBERTa v3 CoLA\\[2pt]
\begin{tabular}{@{}llcccccc@{}}
\CRHeader
\multirow{3}{*}{CoLA}
  & 4  & \textcolor{red}{86.9} & 86.5 & 86.2 & 86.6 & \textcolor{codegreen}{87.1} & \textcolor{blue}{86.7} \\
  \cmidrule(lr){2-8}
  & 8  & \textcolor{codegreen}{85.5} & \textcolor{red}{85.1} & \textcolor{red}{85.1} & \textcolor{red}{85.1} & \textcolor{red}{85.1} & \textcolor{red}{85.1} \\
  \cmidrule(lr){2-8}
  & 16  & \textcolor{codegreen}{84.2} & \textcolor{red}{69.1} & \textcolor{red}{69.1} & \textcolor{red}{69.1} & \textcolor{red}{69.1} & \textcolor{red}{69.1} \\
\midrule[\heavyrulewidth]
\multirow{3}{*}{RAC}
  & 4  & 87.0 & 86.7 & \textcolor{red}{87.7} & \textcolor{blue}{87.4} & \textcolor{codegreen}{88.0} & \textcolor{red}{87.7} \\
  \cmidrule(lr){2-8}
  & 8  & \textcolor{red}{87.5} & \textcolor{codegreen}{87.8} & \textcolor{codegreen}{87.8} & \textcolor{red}{87.5} & \textcolor{blue}{86.6} & \textcolor{blue}{86.6} \\
  \cmidrule(lr){2-8}
  & 16  & 86.7 & 86.9 & \textcolor{red}{87.3} & \textcolor{blue}{87.0} & \textcolor{blue}{87.0} & \textcolor{codegreen}{87.6} \\
\midrule[\heavyrulewidth]
\multirow{3}{*}{c$^{3}$LA}
  & 4  & \textcolor{red}{86.4} & \textcolor{codegreen}{86.6} & 86.1 & 85.8 & 86.0 & \textcolor{blue}{86.3} \\
  \cmidrule(lr){2-8}
  & 8  & 86.0 & \textcolor{blue}{86.1} & \textcolor{blue}{86.1} & \textcolor{red}{86.2} & \textcolor{codegreen}{86.3} & \textcolor{codegreen}{86.3} \\
  \cmidrule(lr){2-8}
  & 16  & \textcolor{blue}{86.2} & 85.7 & \textcolor{red}{86.3} & 85.6 & 85.4 & \textcolor{codegreen}{86.7} \\
\bottomrule
\end{tabular}
\end{minipage}
&
%%%%%%%%%%%%%%%%%%%%%%%%%%%%%%%%%%%%
% RTE
%%%%%%%%%%%%%%%%%%%%%%%%%%%%%%%%%%%%
\begin{minipage}{0.49\linewidth}
\centering
DeBERTa v3 RTE\\[2pt]
\begin{tabular}{@{}llcccccc@{}}
\CRHeader
\multirow{3}{*}{CoLA}
    & 4  & 82.9 & 84.4 & \textcolor{blue}{85.1} & 83.7 & \textcolor{red}{85.4} & \textcolor{codegreen}{86.2} \\
    \cmidrule(lr){2-8}
  & 8  & \textcolor{codegreen}{88.2} & 84.6 & 84.8 & \textcolor{red}{87.1} & \textcolor{red}{87.1} & \textcolor{blue}{86.7} \\
  \cmidrule(lr){2-8}
  & 16  & \textcolor{codegreen}{85.1} & 52.6 & 81.4 & \textcolor{red}{84.8} & \textcolor{blue}{84.3} & 73.5 \\
\midrule[\heavyrulewidth]
\multirow{3}{*}{RAC}
  & 4  & \textcolor{blue}{82.3} & \textcolor{codegreen}{83.0} & 81.6 & 82.1 & \textcolor{red}{82.4} & \textcolor{red}{82.4} \\
  \cmidrule(lr){2-8}
  & 8  & 85.5 & \textcolor{red}{86.8} & \textcolor{blue}{86.4} & \textcolor{codegreen}{87.5} & \textcolor{codegreen}{87.5} & \textcolor{codegreen}{87.5} \\
  \cmidrule(lr){2-8}
  & 16  & \textcolor{blue}{84.2} & \textcolor{codegreen}{84.4} & 84.1 & 83.5 & 83.7 & \textcolor{red}{84.3} \\
\midrule[\heavyrulewidth]
\multirow{3}{*}{c$^{3}$LA}
  & 4  & \textcolor{red}{79.0} & 77.9 & 72.6 & 71.9 & 74.0 & 72.4 \\
  \cmidrule(lr){2-8}
  & 8  & \textcolor{red}{80.0} & \textcolor{codegreen}{80.3} & \textcolor{blue}{76.6} & 73.9 & 76.2 & 75.4 \\
  \cmidrule(lr){2-8}
  & 16  & \textcolor{codegreen}{85.0} & 82.5 & \textcolor{red}{83.6} & \textcolor{blue}{83.0} & 82.9 & 82.4 \\
\bottomrule
\end{tabular}
\end{minipage}
\end{tabular}

\vspace{2pt}
\end{table*}

\begin{table*}
\centering
\caption{Test accuracies obtained by fine-tuning ViT-Tiny on OfficeHome and CIFAR-10 using chain LoRA methods CoLA, RAC, and $c^3$LA over varying ranks and chain reset frequencies. 
%No clear correlation between optinal chain reset frequency and rank is observed.
}
\label{tab:vit-chain_reset}
\scriptsize
\setlength{\tabcolsep}{3.5pt}
\renewcommand{\arraystretch}{1.04}

% Helper macro: uniform header for inner tables
\newcommand{\CRHeader}{%
\toprule
\multicolumn{2}{c}{} & \multicolumn{6}{c}{Chain Reset Frequency} \\
\cmidrule(lr){3-8}
Variant & Rank & 1 & 2 & 5 & 10 & 15 & 20 \\
\midrule
}

\begin{tabular}{@{}c c@{}}
%%%%%%%%%%%%%%%%%%%%%%%%%%%%%%%%%%%%
% OfficeHome
%%%%%%%%%%%%%%%%%%%%%%%%%%%%%%%%%%%%
\begin{minipage}{0.49\linewidth}
\centering
ViT-Tiny OfficeHome\\[2pt]
\begin{tabular}{@{}llcccccc@{}}
\CRHeader
\multirow{3}{*}{CoLA}
  &  4  & 76.4 & 76.5 & \textcolor{blue}{77.2} & \textcolor{blue}{77.2} & \textcolor{red}{77.6} & \textcolor{codegreen}{77.8}\\
  
  \cmidrule(lr){2-8}
  &  8  & 77.6 & 77.1 & \textcolor{blue}{78.3} & 77.3 & \textcolor{red}{78.7} & \textcolor{codegreen}{79.6}\\
  
  \cmidrule(lr){2-8}
  & 16  & 77.9 & 77.9 & \textcolor{blue}{78.8} & 78.6 & \textcolor{codegreen}{79.4} & \textcolor{red}{79.1}\\
\midrule[\heavyrulewidth]
\multirow{3}{*}{RAC}
  &  4  & 75.5 & \textcolor{red}{75.8} & \textcolor{codegreen}{76.0} & \textcolor{blue}{75.7} & \textcolor{blue}{75.7} & \textcolor{blue}{75.7}\\
  
  \cmidrule(lr){2-8}
  &  8  & \textcolor{codegreen}{77.1} & 76.1 & 76.3 & \textcolor{red}{76.6} & 76.1 & \textcolor{blue}{76.4}\\
  
  \cmidrule(lr){2-8}
  & 16  & \textcolor{red}{77.4} & \textcolor{codegreen}{77.7} & \textcolor{codegreen}{77.7} & 77.0 & \textcolor{blue}{77.3} & 77.0\\
\midrule[\heavyrulewidth]
\multirow{3}{*}{c$^{3}$LA}
  &  4  & \textcolor{red}{76.7} & \textcolor{codegreen}{77.3} & \textcolor{codegreen}{77.3} & 76.3 & 76.5 & \textcolor{blue}{76.6}\\
  
  \cmidrule(lr){2-8}
  &  8  & \textcolor{codegreen}{77.5} & \textcolor{blue}{76.9} & \textcolor{red}{77.2} & 76.7 & 76.8 & 76.8\\
  
  \cmidrule(lr){2-8}
  & 16  & 77.5 & \textcolor{blue}{78.1} & \textcolor{red}{78.4} & \textcolor{codegreen}{78.5} & \textcolor{blue}{78.1} & \textcolor{red}{78.4}\\
\bottomrule
\end{tabular}
\end{minipage}
&
%%%%%%%%%%%%%%%%%%%%%%%%%%%%%%%%%%%%
% CIFAR-10
%%%%%%%%%%%%%%%%%%%%%%%%%%%%%%%%%%%%
\begin{minipage}{0.49\linewidth}
\centering
ViT-Tiny CIFAR-10\\[2pt]
\begin{tabular}{@{}llcccccc@{}}
\CRHeader
\multirow{3}{*}{CoLA}
  &  4  & 94.5 & \textcolor{red}{94.8} & \textcolor{blue}{94.7} & \textcolor{codegreen}{95.3} & 94.0 & 94.0\\
  
  \cmidrule(lr){2-8}
  &  8  & \textcolor{red}{95.1} & \textcolor{red}{95.1} & \textcolor{codegreen}{95.3} & \textcolor{blue}{94.9} & 94.7 & 94.5\\
  
  \cmidrule(lr){2-8}
  & 16  & 95.5 & 95.5 & \textcolor{blue}{95.7} & \textcolor{red}{96.0} & \textcolor{red}{96.0} & \textcolor{codegreen}{96.2}\\
\midrule[\heavyrulewidth]
\multirow{3}{*}{RAC}
  &  4  & \textcolor{codegreen}{94.2} & \textcolor{red}{94.0} & \textcolor{red}{94.0} & \textcolor{blue}{93.4} & 92.4 & 92.5\\
  
  \cmidrule(lr){2-8}
  &  8  & \textcolor{red}{94.5} & \textcolor{codegreen}{94.8} & \textcolor{red}{94.5} & \textcolor{blue}{94.2} & 94.1 & 94.0\\
  
  \cmidrule(lr){2-8}
  & 16  & \textcolor{codegreen}{95.6} & \textcolor{blue}{95.2} & \textcolor{red}{95.3} & 95.1 & 95.0 & 94.3\\
\midrule[\heavyrulewidth]
\multirow{3}{*}{c$^{3}$LA}
  &  4  & \textcolor{codegreen}{94.4} & \textcolor{red}{94.2} & \textcolor{blue}{94.0} & 93.9 & 92.7 & 92.7\\
  
  \cmidrule(lr){2-8}
  &  8  & 93.6 & \textcolor{blue}{94.2} & \textcolor{codegreen}{94.8} & \textcolor{red}{94.5} & 93.4 & 93.4\\
  
  \cmidrule(lr){2-8}
  & 16  & 94.0 & 93.6 & \textcolor{codegreen}{95.3} & \textcolor{red}{95.1} & 94.8 & \textcolor{blue}{95.0}\\
\bottomrule
\end{tabular}
\end{minipage}
\end{tabular}
\end{table*}

\subsection{Computational Cost, Memory, and Efficiency}\label{subsec:efficiency}

In this Section, we discuss the computational cost, peak memory, and throughput of our sparsity-induced LoRA variants. We also contrast these metrics with PaCA and its variants. We start by describing our naïve sparse implementation.

\subsubsection{Naïve sparse implementation.} To highlight the computational benefit of cLA, $c^3$LA, and their random variants' inherently sparse structure, we introduce PEFT methods s-cLA and s-r-$c^3$LA. The only difference is in how we compute the forward pass for each adapted layer. Consider a single layer $W^i\in\mathbb{R}^{n_i\times m_i}$ adapted via r-cLA to $W^i + BA$, where $B\in\mathbb{R}^{n_i\times r}$ and $A$ is constructed as follows: sample $r$ random numbers without replacement from $[m_i]$, $\{c_1, c_2,\cdots,c_r\}\subset [m_i].$ For row $j$ in $A$, the $c_j^{\rm th}$ element is one, and all other elements are zero, thus $Ax = [x_{c_1},\cdots,x_{c_r}]^\top$. For cLA choose $\{c_1,\cdots,c_r\} = \{1,\cdots,r\}.$

In each forward pass, we compute $(W^i)(x) + B(A(x))$ where $x\in\mathbb{R}^{m_i}.$ For cLA, we directly calculate $A(x)$ as a general matrix-matrix multiplication (GEMM) resulting in $r(2m_i - 1)$ floating point operations (FLOPs). For s-cLA, we store the columns $\{c_1,\cdots,c_r\}$ in memory when $A$ is constructed, then directly construct $[x_{c_1},\cdots,x_{c_r}]^\top.$ See Figure~\ref{fig:naive-sparse-design} for a comparative visualization of the two methods.

\subsubsection{Experiments} To showcase the benefit of our sparse implementations, we fine-tune ViT-Tiny and ViT-Base on the OfficeHome and CIFAR-10 datasets, and RoBERTa-Base and RoBERTa-Large on the MRPC and CoLA datasets using full fine-tuning, some base LoRA variants, the sparsity-induced variants and their more optimized counterparts, and PaCA with rank $r = 16$ for $30$ epochs, averaged over three seeds on a single NVIDIA H100 PCIe GPU. To align our experiments with those done in PaCA~\cite{paca}, we adapt all layers of the models, and fully fine-tune the classification heads, then report the throughput, runtime, and trainable parameters in Table~\ref{tab:combined_metrics_all}. To align with the other experiments in our paper, as well as LoRA~\cite{lora}, we adapt only the query and value matrices of each attention head, fully fine-tune the classification heads of the models, and report those results in Table~\ref{tab:combined_metrics_qv}.

\smartparagraph{Discussion of experiments.} When adapting all layers of the models, PaCA is substantially faster than LoRA. In Table~\ref{tab:combined_metrics_all}, the runtime for fine-tuning via PaCA, when normalized to LoRA's time, often is 24 to 25\% faster than LoRA and reduces peak GPU memory by 15-38\%, while the optimized for sparsity LoRA variants (columns s-cLA and s-r-c$^3$LA) are 10 to 15\% faster and reduce peak GPU memory by 15-40\% compared to LoRA. However, due to the overhead of adapting this many layers, FFT was consistently faster than all other methods. In contrast, when adapting far fewer of the model's layers, full fine-tuning was far less competitive. In Table~\ref{tab:combined_metrics_qv}, PaCA is often the fastest method at around 10-11\% faster than LoRA while often reducing peak GPU memory the most by 5-15\%, with the optimized for sparsity LoRA variants generally at 8-10\% faster than LoRA while reducing peak GPU memory almost as much as PaCA, ranging from 5-15\% as well.

\smartparagraph{Key Takeaway.} Given the minimal accuracy gaps between each PEFT method for adapting all of the layers in Table~\ref{tab:vit_acc_all} and adapting only query, value, and classification head in Table~\ref{tab:vit_acc_qv}, from a speed perspective, it is optimal to use sparsity-induced LoRA variants (and thus their PaCA counterparts) for fine-tuning models as shown by Table~\ref{tab:combined_metrics_qv}. However, this is only if the chosen LoRA rank is sufficiently high to ensure the model can adapt to the dataset. For example, when adapting DeepSeekCoder to the DJANGO dataset, we witness in Table~\ref{tab:full-accuracy-table} that $r = 16$ was insufficient for the sparsity-induced methods (the future) to perform comparably to their non-sparse counterparts (the present). This difference disappears as the rank is increased from $r = 16$ to $r = 64$; see Figure~\ref{fig:cla_deepseek}.

\begin{figure*}
     \scalebox{1}{\includegraphics[width=\textwidth]{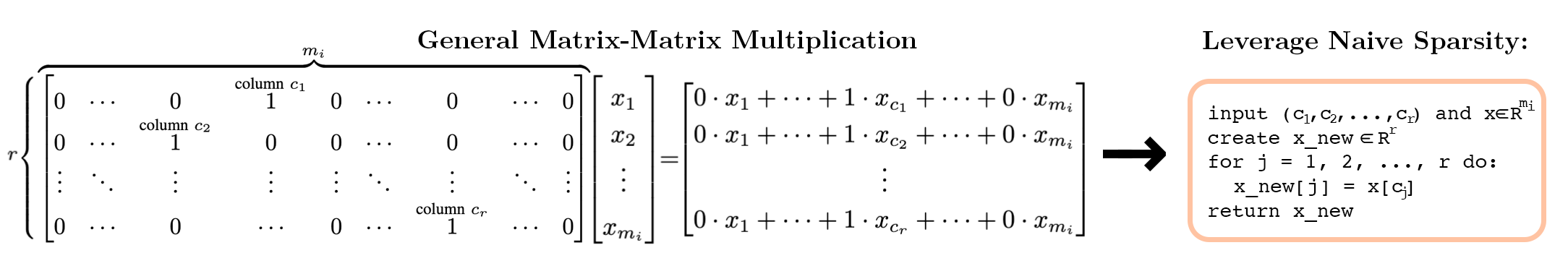}}
  \caption{{\textbf{A comparative visualization of the general matrix-matrix multiplication in r-cLA (left) and the \texttt{gather} (indexing) operation used in s-r-cLA (right).} The \texttt{gather} operation~\cite{gatheroperation} saves $r(2m_i - 1)$ FLOPs for each input $x$ in each adapted layer of the forward pass, at the cost of creating minor memory overhead.}}\label{fig:naive-sparse-design}
  \end{figure*}

\begin{table*}
  \centering
  \tiny
  \setlength{\tabcolsep}{5pt}
  \renewcommand{\arraystretch}{1.25}
\caption{\small{Throughput, runtime, and trainable parameter count, and peak allocated GPU memory for fine-tuning various text and vision models on various datasets using various PEFT methods, including LoRA, PaCA, and their connecting variants, averaged over three seeds (0,1,2). We adapt the token and map embeddings, query, key, value, and output matrices of the attention layers, and both fully connected layers. PaCA reduces peak GPU memory when training ViT-Base on CIFAR-10 by 31\% compared to training via LoRA. This is similar to the results in PaCA~\cite{paca}. While not specifically designed for reducing peak GPU memory, our sparsity-induced LoRA variants produce similar results, reducing the peak GPU memory by 33\% compared to LoRA. }}
\label{tab:combined_metrics_all}
  \vspace{-2mm}
  \scalebox{0.65}{
  \begin{tabular}{@{} l l l c c c c c c c c c c @{}}
    \toprule
    \multirow{2}{*}{\textbf{Datatype}} & \multirow{2}{*}{\textbf{Model}} & \multirow{2}{*}{\textbf{Dataset}} & \multicolumn{1}{c}{} & \multicolumn{3}{c}{\textbf{LoRA Variants}} & \multicolumn{4}{c}{\textbf{Sparsity-Induced Variants}} & \multicolumn{2}{c}{\textbf{PaCA Variants}} \\
    \cmidrule(lr){5-7} \cmidrule(lr){8-11} \cmidrule(lr){12-13}
     &  &  & FFT & LoRA & CoLA & Asym & r-cLA & r-c$^3$LA & s-cLA & s-r-c$^3$LA & PaCA & C-PaCA \\
    \midrule
    \multirow{8}{*}{\shortstack{Throughput\\(Samples\textbackslash s)}} & \multirow{2}{*}{ViT-Tiny} & OfficeHome & 344.4 (1.068) & 322.3 (1.000) & 343.7 (1.066) & 345.3 (1.071) & 346.8 (1.076) & 344.5 (1.069) & \textcolor{blue}{348.6 (1.081)} & 336.7 (1.044) & \textcolor{red}{351.2 (1.090)} & \textcolor{codegreen}{353.0 (1.095)} \\
    \cmidrule(lr){3-13}
     &  & CIFAR-10 & \textcolor{codegreen}{1064.2 (1.502)} & 708.3 (1.000) & 713.5 (1.007) & 713.7 (1.008) & 737.3 (1.041) & 747.2 (1.055) & 782.2 (1.104) & 812.2 (1.147) & \textcolor{blue}{939.2 (1.326)} & \textcolor{red}{944.0 (1.333)} \\
    \cmidrule(lr){2-13}
     & \multirow{2}{*}{ViT-Base} & OfficeHome & \textcolor{codegreen}{362.2 (1.081)} & 335.0 (1.000) & 335.5 (1.001) & 343.5 (1.025) & 341.6 (1.020) & 345.9 (1.032) & 349.0 (1.042) & 346.7 (1.035) & \textcolor{red}{358.0 (1.069)} & \textcolor{blue}{354.2 (1.057)} \\
    \cmidrule(lr){3-13}
     &  & CIFAR-10 & \textcolor{codegreen}{1056.3 (1.530)} & 690.3 (1.000) & 689.0 (0.998) & 703.9 (1.020) & 750.1 (1.087) & 744.2 (1.078) & 788.4 (1.142) & 769.7 (1.115) & \textcolor{red}{930.4 (1.348)} & \textcolor{blue}{859.2 (1.245)} \\
    \cmidrule(lr){2-13}
     & \multirow{2}{*}{RoBERTa-Base} & MRPC & \textcolor{codegreen}{808.4 (1.636)} & 494.2 (1.000) & 493.1 (0.998) & 546.7 (1.106) & 554.3 (1.121) & 556.3 (1.126) & 588.0 (1.190) & 585.5 (1.185) & \textcolor{red}{648.1 (1.311)} & \textcolor{blue}{643.7 (1.302)} \\
    \cmidrule(lr){3-13}
     &  & CoLA & \textcolor{codegreen}{817.6 (1.710)} & 478.0 (1.000) & 498.8 (1.043) & 554.3 (1.160) & 559.1 (1.169) & 556.1 (1.163) & 587.9 (1.230) & 595.2 (1.245) & \textcolor{blue}{632.4 (1.323)} & \textcolor{red}{648.5 (1.357)} \\
    \cmidrule(lr){2-13}
     & \multirow{2}{*}{RoBERTa-Large} & MRPC & \textcolor{codegreen}{391.3 (1.536)} & 254.8 (1.000) & 254.1 (0.997) & 270.0 (1.060) & 271.7 (1.066) & 283.6 (1.113) & 305.6 (1.199) & 305.4 (1.199) & \textcolor{red}{336.7 (1.322)} & \textcolor{blue}{332.0 (1.303)} \\
    \cmidrule(lr){3-13}
     &  & CoLA & \textcolor{codegreen}{394.3 (1.533)} & 257.2 (1.000) & 255.4 (0.993) & 287.5 (1.118) & 288.3 (1.121) & 288.2 (1.121) & 306.4 (1.192) & 309.3 (1.203) & \textcolor{blue}{342.1 (1.330)} & \textcolor{red}{343.8 (1.337)} \\
    \midrule
    \multirow{8}{*}{\shortstack{Runtime\\(Minutes)}} & \multirow{2}{*}{ViT-Tiny} & OfficeHome & 53.2 (0.933) & 57.1 (1.000) & 53.3 (0.935) & 52.9 (0.927) & 52.9 (0.927) & 53.0 (0.928) & \textcolor{red}{52.4 (0.919)} & 54.5 (0.954) & \textcolor{blue}{52.7 (0.924)} & \textcolor{codegreen}{51.9 (0.910)} \\
    \cmidrule(lr){3-13}
     &  & CIFAR-10 & \textcolor{codegreen}{23.1 (0.669)} & 34.5 (1.000) & 34.5 (0.999) & 34.6 (1.003) & 32.8 (0.951) & 32.9 (0.952) & 31.5 (0.913) & 30.4 (0.880) & \textcolor{blue}{26.1 (0.757)} & \textcolor{red}{26.0 (0.753)} \\
    \cmidrule(lr){2-13}
     & \multirow{2}{*}{ViT-Base} & OfficeHome & \textcolor{codegreen}{50.6 (0.932)} & 54.3 (1.000) & 54.2 (0.999) & 53.1 (0.978) & 53.2 (0.980) & 52.8 (0.972) & 52.3 (0.963) & 52.6 (0.970) & \textcolor{red}{50.9 (0.937)} & \textcolor{blue}{51.4 (0.947)} \\
    \cmidrule(lr){3-13}
     &  & CIFAR-10 & \textcolor{codegreen}{23.3 (0.664)} & 35.1 (1.000) & 35.4 (1.008) & 34.5 (0.983) & 32.3 (0.921) & 32.7 (0.932) & 31.1 (0.886) & 31.7 (0.903) & \textcolor{red}{26.3 (0.751)} & \textcolor{blue}{28.7 (0.817)} \\
    \cmidrule(lr){2-13}
     & \multirow{2}{*}{RoBERTa-Base} & MRPC & \textcolor{codegreen}{2.5 (0.619)} & 4.0 (1.000) & 4.0 (1.000) & 3.6 (0.907) & 3.6 (0.905) & 3.6 (0.901) & \textcolor{blue}{3.4 (0.849)} & \textcolor{blue}{3.4 (0.854)} & \textcolor{red}{3.1 (0.768)} & \textcolor{red}{3.1 (0.769)} \\
    \cmidrule(lr){3-13}
     &  & CoLA & \textcolor{codegreen}{5.5 (0.588)} & 9.3 (1.000) & 8.9 (0.964) & 8.0 (0.867) & 8.1 (0.873) & 8.0 (0.864) & 7.7 (0.826) & 7.5 (0.812) & \textcolor{blue}{7.0 (0.760)} & \textcolor{red}{6.9 (0.749)} \\
    \cmidrule(lr){2-13}
     & \multirow{2}{*}{RoBERTa-Large} & MRPC & \textcolor{codegreen}{5.1 (0.652)} & 7.8 (1.000) & 7.8 (1.003) & 7.4 (0.954) & 7.4 (0.949) & 7.1 (0.906) & 6.6 (0.842) & 6.6 (0.844) & \textcolor{red}{5.9 (0.759)} & \textcolor{blue}{6.0 (0.766)} \\
    \cmidrule(lr){3-13}
     &  & CoLA & \textcolor{codegreen}{11.3 (0.653)} & 17.3 (1.000) & 17.4 (1.004) & 15.6 (0.898) & 15.5 (0.895) & 15.6 (0.898) & \textcolor{blue}{14.5 (0.839)} & \textcolor{blue}{14.5 (0.837)} & \textcolor{red}{13.0 (0.752)} & \textcolor{red}{13.0 (0.750)} \\
    \midrule
    \multirow{8}{*}{\shortstack{Trainable\\Parameters}} & \multirow{2}{*}{ViT-Tiny} & OfficeHome & \ensuremath{5.7e^{6}} (6.675) & \ensuremath{8.6e^{5}} (1.000) & \ensuremath{8.6e^{5}} (1.000) & \ensuremath{5.2e^{5}} (0.613) & \ensuremath{5.2e^{5}} (0.613) & \ensuremath{5.2e^{5}} (0.613) & \ensuremath{5.2e^{5}} (0.613) & \ensuremath{5.2e^{5}} (0.613) & \ensuremath{5.2e^{5}} (0.613) & \ensuremath{5.2e^{5}} (0.613) \\
    \cmidrule(lr){3-13}
     &  & CIFAR-10 & \ensuremath{5.5e^{6}} (8.304) & \ensuremath{6.7e^{5}} (1.000) & \ensuremath{6.7e^{5}} (1.000) & \ensuremath{3.3e^{5}} (0.501) & \ensuremath{3.3e^{5}} (0.501) & \ensuremath{3.3e^{5}} (0.501) & \ensuremath{3.3e^{5}} (0.501) & \ensuremath{3.3e^{5}} (0.501) & \ensuremath{3.3e^{5}} (0.501) & \ensuremath{3.3e^{5}} (0.501) \\
    \cmidrule(lr){2-13}
     & \multirow{2}{*}{ViT-Base} & OfficeHome & \ensuremath{8.7e^{7}} (25.288) & \ensuremath{3.4e^{6}} (1.000) & \ensuremath{3.4e^{6}} (1.000) & \ensuremath{2.1e^{6}} (0.612) & \ensuremath{2.1e^{6}} (0.612) & \ensuremath{2.1e^{6}} (0.612) & \ensuremath{2.1e^{6}} (0.612) & \ensuremath{2.1e^{6}} (0.612) & \ensuremath{2.1e^{6}} (0.612) & \ensuremath{2.1e^{6}} (0.612) \\
    \cmidrule(lr){3-13}
     &  & CIFAR-10 & \ensuremath{8.6e^{7}} (32.235) & \ensuremath{2.7e^{6}} (1.000) & \ensuremath{2.7e^{6}} (1.000) & \ensuremath{1.3e^{6}} (0.501) & \ensuremath{1.3e^{6}} (0.501) & \ensuremath{1.3e^{6}} (0.501) & \ensuremath{1.3e^{6}} (0.501) & \ensuremath{1.3e^{6}} (0.501) & \ensuremath{1.3e^{6}} (0.501) & \ensuremath{1.3e^{6}} (0.501) \\
    \cmidrule(lr){2-13}
     & \multirow{2}{*}{RoBERTa-Base} & MRPC & \ensuremath{1.2e^{8}} (38.396) & \ensuremath{3.2e^{6}} (1.000) & \ensuremath{3.2e^{6}} (1.000) & \ensuremath{1.9e^{6}} (0.591) & \ensuremath{1.9e^{6}} (0.591) & \ensuremath{1.9e^{6}} (0.591) & \ensuremath{1.9e^{6}} (0.591) & \ensuremath{1.9e^{6}} (0.591) & \ensuremath{1.9e^{6}} (0.591) & \ensuremath{1.9e^{6}} (0.591) \\
    \cmidrule(lr){3-13}
     &  & CoLA & \ensuremath{1.2e^{8}} (38.396) & \ensuremath{3.2e^{6}} (1.000) & \ensuremath{3.2e^{6}} (1.000) & \ensuremath{1.9e^{6}} (0.591) & \ensuremath{1.9e^{6}} (0.591) & \ensuremath{1.9e^{6}} (0.591) & \ensuremath{1.9e^{6}} (0.591) & \ensuremath{1.9e^{6}} (0.591) & \ensuremath{1.9e^{6}} (0.591) & \ensuremath{1.9e^{6}} (0.591) \\
    \cmidrule(lr){2-13}
     & \multirow{2}{*}{RoBERTa-Large} & MRPC & \ensuremath{3.6e^{8}} (43.712) & \ensuremath{8.1e^{6}} (1.000) & \ensuremath{8.1e^{6}} (1.000) & \ensuremath{4.6e^{6}} (0.565) & \ensuremath{4.6e^{6}} (0.565) & \ensuremath{4.6e^{6}} (0.565) & \ensuremath{4.6e^{6}} (0.565) & \ensuremath{4.6e^{6}} (0.565) & \ensuremath{4.6e^{6}} (0.565) & \ensuremath{4.6e^{6}} (0.565) \\
    \cmidrule(lr){3-13}
     &  & CoLA & \ensuremath{3.6e^{8}} (43.712) & \ensuremath{8.1e^{6}} (1.000) & \ensuremath{8.1e^{6}} (1.000) & \ensuremath{4.6e^{6}} (0.565) & \ensuremath{4.6e^{6}} (0.565) & \ensuremath{4.6e^{6}} (0.565) & \ensuremath{4.6e^{6}} (0.565) & \ensuremath{4.6e^{6}} (0.565) & \ensuremath{4.6e^{6}} (0.565) & \ensuremath{4.6e^{6}} (0.565) \\
    \midrule
    \multirow{8}{*}{\shortstack{Peak GPU\\Memory (GB)}} & \multirow{2}{*}{ViT-Tiny} & OfficeHome & 2.22 (1.015) & \textcolor{blue}{2.19 (1.000)} & \textcolor{blue}{2.19 (1.000)} & \textcolor{red}{1.72 (0.786)} & \textcolor{red}{1.72 (0.786)} & \textcolor{red}{1.72 (0.786)} & \textcolor{codegreen}{1.68 (0.769)} & \textcolor{codegreen}{1.68 (0.769)} & \textcolor{codegreen}{1.68 (0.770)} & \textcolor{codegreen}{1.68 (0.770)} \\
    \cmidrule(lr){3-13}
     &  & CIFAR-10 & 2.29 (1.060) & 2.16 (1.000) & 2.16 (1.000) & \textcolor{red}{1.59 (0.738)} & \textcolor{red}{1.59 (0.738)} & \textcolor{red}{1.59 (0.738)} & \textcolor{codegreen}{1.57 (0.728)} & \textcolor{codegreen}{1.57 (0.729)} & \textcolor{blue}{1.60 (0.744)} & \textcolor{blue}{1.60 (0.744)} \\
    \cmidrule(lr){2-13}
     & \multirow{2}{*}{ViT-Base} & OfficeHome & 7.29 (0.887) & 8.22 (1.000) & 8.22 (1.000) & \textcolor{red}{5.02 (0.611)} & \textcolor{red}{5.02 (0.611)} & \textcolor{red}{5.02 (0.611)} & \textcolor{codegreen}{4.88 (0.594)} & \textcolor{codegreen}{4.88 (0.594)} & \textcolor{blue}{5.15 (0.627)} & \textcolor{blue}{5.15 (0.627)} \\
    \cmidrule(lr){3-13}
     &  & CIFAR-10 & 7.05 (1.061) & 6.65 (1.000) & 6.65 (1.000) & \textcolor{red}{4.49 (0.676)} & \textcolor{red}{4.49 (0.676)} & \textcolor{red}{4.49 (0.676)} & \textcolor{codegreen}{4.48 (0.674)} & \textcolor{codegreen}{4.48 (0.674)} & \textcolor{blue}{4.61 (0.694)} & 4.62 (0.695) \\
    \cmidrule(lr){2-13}
     & \multirow{2}{*}{RoBERTa-Base} & MRPC & 3.82 (1.433) & 2.66 (1.000) & 2.76 (1.036) & \textcolor{blue}{2.22 (0.835)} & \textcolor{blue}{2.22 (0.835)} & \textcolor{blue}{2.22 (0.835)} & \textcolor{red}{2.20 (0.827)} & \textcolor{red}{2.20 (0.827)} & \textcolor{codegreen}{2.08 (0.783)} & \textcolor{codegreen}{2.08 (0.783)} \\
    \cmidrule(lr){3-13}
     &  & CoLA & 3.34 (1.632) & 2.05 (1.000) & 2.05 (1.000) & 1.89 (0.924) & 1.80 (0.877) & 1.80 (0.877) & \textcolor{blue}{1.77 (0.866)} & 1.82 (0.891) & \textcolor{red}{1.73 (0.847)} & \textcolor{codegreen}{1.64 (0.800)} \\
    \cmidrule(lr){2-13}
     & \multirow{2}{*}{RoBERTa-Large} & MRPC & 9.34 (1.508) & 6.19 (1.000) & 6.19 (1.000) & 4.98 (0.804) & \textcolor{blue}{4.85 (0.784)} & 4.91 (0.794) & \textcolor{red}{4.84 (0.782)} & \textcolor{red}{4.84 (0.782)} & \textcolor{codegreen}{4.30 (0.694)} & \textcolor{codegreen}{4.30 (0.694)} \\
    \cmidrule(lr){3-13}
     &  & CoLA & 7.93 (1.782) & 4.45 (1.000) & 4.45 (1.000) & \textcolor{blue}{3.76 (0.846)} & \textcolor{blue}{3.76 (0.846)} & \textcolor{blue}{3.76 (0.846)} & 3.81 (0.857) & 3.85 (0.865) & \textcolor{codegreen}{3.19 (0.718)} & \textcolor{red}{3.34 (0.751)} \\
    \bottomrule
  \end{tabular}}
  \vspace{-2mm}
\end{table*}

\begin{table*}
  \centering
  \tiny
  \setlength{\tabcolsep}{5pt}
  \renewcommand{\arraystretch}{1.25}
\caption{\small{Throughput, runtime, and trainable parameter count, and peak allocated GPU memory for fine-tuning various text and vision models on various datasets using various PEFT methods, including LoRA, PaCA, and their connecting variants, averaged over three seeds (0,1,2). We adapt the query and value matrices of the attention layers as well as the classification head. When adapting a small enough number of layers such that the PEFT methods are faster than FFT, PaCA reduces peak GPU memory by around 5--15\% compared to LoRA, which is only a 1.4\% greater reduction on average than the sparsity-induced LoRA variants.}}
\label{tab:combined_metrics_qv}
  \vspace{0mm}
  \scalebox{0.65}{
  \begin{tabular}{@{} l l l c c c c c c c c c c @{}}
    \toprule
    \multirow{2}{*}{\textbf{Datatype}} & \multirow{2}{*}{\textbf{Model}} & \multirow{2}{*}{\textbf{Dataset}} & \multicolumn{1}{c}{} & \multicolumn{3}{c}{\textbf{LoRA Variants}} & \multicolumn{4}{c}{\textbf{Sparsity-Induced Variants}} & \multicolumn{2}{c}{\textbf{PaCA Variants}} \\
    \cmidrule(lr){5-7} \cmidrule(lr){8-11} \cmidrule(lr){12-13}
     &  &  & FFT & LoRA & CoLA & Asym & r-cLA & r-c$^3$LA & s-cLA & s-r-c$^3$LA & PaCA & C-PaCA \\
    \midrule
    \multirow{8}{*}{\shortstack{Throughput\\(Samples\textbackslash s)}} & \multirow{2}{*}{ViT-Tiny} & OfficeHome & 360.4 (1.004) & 358.8 (1.000) & \textcolor{blue}{361.6 (1.008)} & 361.5 (1.007) & 359.9 (1.003) & 359.4 (1.002) & 358.3 (0.998) & \textcolor{red}{362.1 (1.009)} & 344.3 (0.959) & \textcolor{codegreen}{364.1 (1.015)} \\
    \cmidrule(lr){3-13}
     &  & CIFAR-10 & 1068.6 (1.006) & 1062.4 (1.000) & 1004.0 (0.945) & 1094.2 (1.030) & 1083.4 (1.020) & 1081.9 (1.018) & \textcolor{blue}{1111.8 (1.046)} & 1109.5 (1.044) & \textcolor{codegreen}{1159.5 (1.091)} & \textcolor{red}{1139.6 (1.073)} \\
    \cmidrule(lr){2-13}
     & \multirow{2}{*}{ViT-Base} & OfficeHome & \textcolor{codegreen}{362.6 (1.004)} & \textcolor{blue}{361.3 (1.000)} & 357.5 (0.990) & 359.3 (0.994) & 361.2 (1.000) & 360.4 (0.997) & 360.8 (0.999) & \textcolor{red}{362.5 (1.003)} & 351.4 (0.973) & 342.2 (0.947) \\
    \cmidrule(lr){3-13}
     &  & CIFAR-10 & 1060.4 (1.013) & 1047.2 (1.000) & 1031.7 (0.985) & 1077.7 (1.029) & 1070.3 (1.022) & 1061.7 (1.014) & 1038.9 (0.992) & \textcolor{blue}{1104.3 (1.054)} & \textcolor{red}{1132.6 (1.082)} & \textcolor{codegreen}{1139.5 (1.088)} \\
    \cmidrule(lr){2-13}
     & \multirow{2}{*}{RoBERTa-Base} & MRPC & 806.5 (0.974) & 828.3 (1.000) & 725.5 (0.876) & 875.6 (1.057) & 829.3 (1.001) & 880.3 (1.063) & \textcolor{blue}{920.9 (1.112)} & 910.6 (1.099) & \textcolor{codegreen}{942.0 (1.137)} & \textcolor{red}{930.7 (1.124)} \\
    \cmidrule(lr){3-13}
     &  & CoLA & 816.1 (1.125) & 725.5 (1.000) & 834.8 (1.151) & \textcolor{blue}{894.2 (1.233)} & 829.1 (1.143) & 834.3 (1.150) & \textcolor{red}{929.0 (1.281)} & \textcolor{codegreen}{929.8 (1.282)} & 765.3 (1.055) & 875.3 (1.206) \\
    \cmidrule(lr){2-13}
     & \multirow{2}{*}{RoBERTa-Large} & MRPC & 387.7 (0.888) & 436.6 (1.000) & 437.8 (1.003) & 466.9 (1.069) & 468.9 (1.074) & 439.3 (1.006) & \textcolor{red}{483.6 (1.108)} & 453.9 (1.040) & \textcolor{codegreen}{499.7 (1.144)} & \textcolor{blue}{480.8 (1.101)} \\
    \cmidrule(lr){3-13}
     &  & CoLA & 394.7 (0.893) & 441.9 (1.000) & 417.9 (0.946) & 468.1 (1.059) & 468.1 (1.059) & \textcolor{blue}{471.2 (1.066)} & 463.1 (1.048) & 454.3 (1.028) & \textcolor{red}{480.3 (1.087)} & \textcolor{codegreen}{506.0 (1.145)} \\
    \midrule
    \multirow{8}{*}{\shortstack{Runtime\\(Minutes)}} & \multirow{2}{*}{ViT-Tiny} & OfficeHome & 50.6 (0.989) & 51.2 (1.000) & \textcolor{blue}{50.5 (0.986)} & \textcolor{red}{50.4 (0.984)} & 51.0 (0.996) & 50.6 (0.989) & 50.8 (0.992) & \textcolor{codegreen}{50.3 (0.983)} & 53.8 (1.050) & \textcolor{codegreen}{50.3 (0.983)} \\
    \cmidrule(lr){3-13}
     &  & CIFAR-10 & 23.0 (0.992) & 23.2 (1.000) & 24.7 (1.067) & 22.4 (0.969) & 22.7 (0.979) & 22.7 (0.980) & \textcolor{blue}{22.2 (0.956)} & \textcolor{blue}{22.2 (0.959)} & \textcolor{codegreen}{21.3 (0.919)} & \textcolor{red}{21.5 (0.928)} \\
    \cmidrule(lr){2-13}
     & \multirow{2}{*}{ViT-Base} & OfficeHome & \textcolor{codegreen}{50.3 (0.999)} & \textcolor{red}{50.4 (1.000)} & 51.0 (1.012) & 50.8 (1.008) & \textcolor{blue}{50.6 (1.004)} & \textcolor{blue}{50.6 (1.005)} & \textcolor{red}{50.4 (1.000)} & \textcolor{red}{50.4 (1.000)} & 52.7 (1.047) & 53.3 (1.058) \\
    \cmidrule(lr){3-13}
     &  & CIFAR-10 & 23.2 (0.987) & 23.5 (1.000) & 23.5 (0.999) & 22.8 (0.969) & 22.9 (0.976) & 22.9 (0.974) & 23.8 (1.011) & \textcolor{blue}{22.2 (0.946)} & \textcolor{red}{21.7 (0.922)} & \textcolor{codegreen}{21.6 (0.918)} \\
    \cmidrule(lr){2-13}
     & \multirow{2}{*}{RoBERTa-Base} & MRPC & \textcolor{blue}{2.5 (1.016)} & \textcolor{blue}{2.5 (1.000)} & 2.9 (1.155) & \textcolor{red}{2.3 (0.941)} & \textcolor{blue}{2.5 (1.016)} & \textcolor{red}{2.3 (0.950)} & \textcolor{red}{2.3 (0.926)} & \textcolor{red}{2.3 (0.919)} & \textcolor{codegreen}{2.2 (0.890)} & \textcolor{codegreen}{2.2 (0.885)} \\
    \cmidrule(lr){3-13}
     &  & CoLA & 5.5 (0.878) & 6.2 (1.000) & 5.4 (0.864) & \textcolor{red}{5.1 (0.813)} & 5.5 (0.885) & 5.5 (0.878) & \textcolor{codegreen}{4.9 (0.791)} & \textcolor{codegreen}{4.9 (0.786)} & 5.9 (0.951) & \textcolor{blue}{5.2 (0.833)} \\
    \cmidrule(lr){2-13}
     & \multirow{2}{*}{RoBERTa-Large} & MRPC & 5.1 (1.100) & 4.7 (1.000) & 4.7 (1.004) & \textcolor{blue}{4.5 (0.961)} & \textcolor{red}{4.4 (0.952)} & 4.8 (1.037) & \textcolor{codegreen}{4.2 (0.909)} & 4.7 (1.001) & \textcolor{codegreen}{4.2 (0.902)} & \textcolor{red}{4.4 (0.939)} \\
    \cmidrule(lr){3-13}
     &  & CoLA & 11.3 (1.110) & 10.1 (1.000) & 10.8 (1.068) & 9.7 (0.951) & 9.7 (0.953) & \textcolor{blue}{9.6 (0.949)} & 9.9 (0.972) & 9.9 (0.977) & \textcolor{red}{9.5 (0.938)} & \textcolor{codegreen}{9.0 (0.888)} \\
    \midrule
    \multirow{8}{*}{\shortstack{Trainable\\Parameters}} & \multirow{2}{*}{ViT-Tiny} & OfficeHome & \ensuremath{5.7e^{6}} (16.793) & \ensuremath{3.4e^{5}} (1.000) & \ensuremath{3.4e^{5}} (1.000) & \ensuremath{2.7e^{5}} (0.783) & \ensuremath{2.7e^{5}} (0.783) & \ensuremath{2.7e^{5}} (0.783) & \ensuremath{2.7e^{5}} (0.783) & \ensuremath{2.7e^{5}} (0.783) & \ensuremath{2.7e^{5}} (0.783) & \ensuremath{2.7e^{5}} (0.783) \\
    \cmidrule(lr){3-13}
     &  & CIFAR-10 & \ensuremath{5.5e^{6}} (36.994) & \ensuremath{1.5e^{5}} (1.000) & \ensuremath{1.5e^{5}} (1.000) & \ensuremath{7.6e^{4}} (0.506) & \ensuremath{7.6e^{4}} (0.506) & \ensuremath{7.6e^{4}} (0.506) & \ensuremath{7.6e^{4}} (0.506) & \ensuremath{7.6e^{4}} (0.506) & \ensuremath{7.6e^{4}} (0.506) & \ensuremath{7.6e^{4}} (0.506) \\
    \cmidrule(lr){2-13}
     & \multirow{2}{*}{ViT-Base} & OfficeHome & \ensuremath{8.7e^{7}} (63.708) & \ensuremath{1.4e^{6}} (1.000) & \ensuremath{1.4e^{6}} (1.000) & \ensuremath{1.1e^{6}} (0.783) & \ensuremath{1.1e^{6}} (0.783) & \ensuremath{1.1e^{6}} (0.783) & \ensuremath{1.1e^{6}} (0.783) & \ensuremath{1.1e^{6}} (0.783) & \ensuremath{1.1e^{6}} (0.783) & \ensuremath{1.1e^{6}} (0.783) \\
    \cmidrule(lr){3-13}
     &  & CIFAR-10 & \ensuremath{8.6e^{7}} (143.606) & \ensuremath{6e^{5}} (1.000) & \ensuremath{6e^{5}} (1.000) & \ensuremath{3e^{5}} (0.506) & \ensuremath{3e^{5}} (0.506) & \ensuremath{3e^{5}} (0.506) & \ensuremath{3e^{5}} (0.506) & \ensuremath{3e^{5}} (0.506) & \ensuremath{3e^{5}} (0.506) & \ensuremath{3e^{5}} (0.506) \\
    \cmidrule(lr){2-13}
     & \multirow{2}{*}{RoBERTa-Base} & MRPC & \ensuremath{1.2e^{8}} (105.459) & \ensuremath{1.2e^{6}} (1.000) & \ensuremath{1.2e^{6}} (1.000) & \ensuremath{8.9e^{5}} (0.750) & \ensuremath{8.9e^{5}} (0.750) & \ensuremath{8.9e^{5}} (0.750) & \ensuremath{8.9e^{5}} (0.750) & \ensuremath{8.9e^{5}} (0.750) & \ensuremath{8.9e^{5}} (0.750) & \ensuremath{8.9e^{5}} (0.750) \\
    \cmidrule(lr){3-13}
     &  & CoLA & \ensuremath{1.2e^{8}} (105.459) & \ensuremath{1.2e^{6}} (1.000) & \ensuremath{1.2e^{6}} (1.000) & \ensuremath{8.9e^{5}} (0.750) & \ensuremath{8.9e^{5}} (0.750) & \ensuremath{8.9e^{5}} (0.750) & \ensuremath{8.9e^{5}} (0.750) & \ensuremath{8.9e^{5}} (0.750) & \ensuremath{8.9e^{5}} (0.750) & \ensuremath{8.9e^{5}} (0.750) \\
    \cmidrule(lr){2-13}
     & \multirow{2}{*}{RoBERTa-Large} & MRPC & \ensuremath{3.6e^{8}} (135.401) & \ensuremath{2.6e^{6}} (1.000) & \ensuremath{2.6e^{6}} (1.000) & \ensuremath{1.8e^{6}} (0.700) & \ensuremath{1.8e^{6}} (0.700) & \ensuremath{1.8e^{6}} (0.700) & \ensuremath{1.8e^{6}} (0.700) & \ensuremath{1.8e^{6}} (0.700) & \ensuremath{1.8e^{6}} (0.700) & \ensuremath{1.8e^{6}} (0.700) \\
    \cmidrule(lr){3-13}
     &  & CoLA & \ensuremath{3.6e^{8}} (135.401) & \ensuremath{2.6e^{6}} (1.000) & \ensuremath{2.6e^{6}} (1.000) & \ensuremath{1.8e^{6}} (0.700) & \ensuremath{1.8e^{6}} (0.700) & \ensuremath{1.8e^{6}} (0.700) & \ensuremath{1.8e^{6}} (0.700) & \ensuremath{1.8e^{6}} (0.700) & \ensuremath{1.8e^{6}} (0.700) & \ensuremath{1.8e^{6}} (0.700) \\
    \midrule
    \multirow{8}{*}{\shortstack{Peak GPU\\ Memory (GB)}} & \multirow{2}{*}{ViT-Tiny} & OfficeHome & 2.22 (1.280) & \textcolor{blue}{1.74 (1.000)} & \textcolor{blue}{1.74 (1.000)} & \textcolor{codegreen}{1.58 (0.911)} & \textcolor{codegreen}{1.58 (0.911)} & \textcolor{codegreen}{1.58 (0.911)} & \textcolor{codegreen}{1.58 (0.912)} & \textcolor{codegreen}{1.58 (0.912)} & \textcolor{red}{1.61 (0.927)} & \textcolor{red}{1.61 (0.927)} \\
    \cmidrule(lr){3-13}
     &  & CIFAR-10 & \textcolor{blue}{2.29 (1.340)} & \textcolor{red}{1.71 (1.000)} & \textcolor{red}{1.71 (1.000)} & \textcolor{codegreen}{1.57 (0.918)} & \textcolor{codegreen}{1.57 (0.918)} & \textcolor{codegreen}{1.57 (0.918)} & \textcolor{codegreen}{1.57 (0.918)} & \textcolor{codegreen}{1.57 (0.918)} & \textcolor{codegreen}{1.57 (0.918)} & \textcolor{codegreen}{1.57 (0.918)} \\
    \cmidrule(lr){2-13}
     & \multirow{2}{*}{ViT-Base} & OfficeHome & 7.29 (1.221) & \textcolor{blue}{5.97 (1.000)} & \textcolor{blue}{5.97 (1.000)} & \textcolor{red}{5.12 (0.857)} & \textcolor{red}{5.12 (0.857)} & \textcolor{red}{5.12 (0.857)} & \textcolor{red}{5.12 (0.857)} & \textcolor{red}{5.12 (0.857)} & \textcolor{codegreen}{5.11 (0.856)} & \textcolor{codegreen}{5.11 (0.856)} \\
    \cmidrule(lr){3-13}
     &  & CIFAR-10 & 7.05 (1.437) & \textcolor{blue}{4.91 (1.000)} & \textcolor{blue}{4.91 (1.000)} & \textcolor{red}{4.44 (0.904)} & \textcolor{red}{4.44 (0.904)} & \textcolor{red}{4.44 (0.904)} & \textcolor{red}{4.44 (0.904)} & \textcolor{red}{4.44 (0.905)} & \textcolor{codegreen}{4.40 (0.896)} & \textcolor{codegreen}{4.40 (0.896)} \\
    \cmidrule(lr){2-13}
     & \multirow{2}{*}{RoBERTa-Base} & MRPC & 3.82 (1.658) & 2.30 (1.000) & 2.30 (1.000) & \textcolor{red}{2.18 (0.948)} & \textcolor{red}{2.18 (0.948)} & \textcolor{red}{2.18 (0.948)} & \textcolor{blue}{2.28 (0.990)} & \textcolor{blue}{2.28 (0.990)} & \textcolor{codegreen}{2.16 (0.940)} & \textcolor{codegreen}{2.16 (0.940)} \\
    \cmidrule(lr){3-13}
     &  & CoLA & 3.34 (1.812) & \textcolor{blue}{1.84 (1.000)} & \textcolor{blue}{1.84 (1.000)} & \textcolor{red}{1.76 (0.953)} & \textcolor{red}{1.76 (0.953)} & \textcolor{red}{1.76 (0.954)} & \textcolor{red}{1.76 (0.953)} & \textcolor{red}{1.76 (0.953)} & \textcolor{codegreen}{1.74 (0.943)} & \textcolor{codegreen}{1.74 (0.943)} \\
    \cmidrule(lr){2-13}
     & \multirow{2}{*}{RoBERTa-Large} & MRPC & 9.34 (1.783) & 5.24 (1.000) & 5.17 (0.988) & \textcolor{blue}{4.81 (0.919)} & \textcolor{blue}{4.81 (0.919)} & 4.94 (0.944) & \textcolor{blue}{4.81 (0.919)} & \textcolor{blue}{4.81 (0.919)} & \textcolor{codegreen}{4.69 (0.895)} & \textcolor{red}{4.77 (0.910)} \\
    \cmidrule(lr){3-13}
     &  & CoLA & 7.93 (2.026) & 3.91 (1.000) & 3.91 (1.000) & \textcolor{blue}{3.71 (0.949)} & \textcolor{blue}{3.71 (0.949)} & \textcolor{blue}{3.71 (0.949)} & \textcolor{blue}{3.71 (0.948)} & \textcolor{blue}{3.71 (0.948)} & \textcolor{codegreen}{3.59 (0.917)} & \textcolor{red}{3.66 (0.934)} \\
    \bottomrule
  \end{tabular}}
  \vspace{-2mm}
\end{table*}

\subsection{Performance Analysis---Continued}\label{subsec:geom}
We extend \S\ref{sec:performance analysis} by reporting additional empirical results regarding PEFT methods, including prediction capacity and model behaviors.

\subsubsection{Loss Landscape---Continued}\label{appendix:loss-landscape}

\smartparagraph{3D landscapes.} We obtained the top two principle directions of the model's update path via PCA of the update matrix $[\mathbf{W}^0-\mathbf{W}^T;...;\mathbf{W}^{T-1}-\mathbf{W}^T],$ where $\{\mathbf{W}^t\}_{t=0}^T.$ are the model weight's update steps. Let $\delta,\eta$ be those two directions. For random directions, we generate them via a Gaussian distribution. For LoRA methods, we merged the adapters into the base weights before calculating. We normalize the directions similar to the methods of~\cite{losslandscape}. We plot the function $f(\alpha, \beta):= \cL (\textbf{W} + \alpha\delta + \beta \eta)$ over a $51^2$ grid of $\alpha,\beta$ values uniformly distributed over $[-2,2]\times[-2,2]$, we use mini-batches of size $12$ when finding the values for $\mathcal{L}.$ 

\smartparagraph{Comparison between using random or PCA directions.} To understand the differences between the loss landscapes of the models in the PCA directions compared to random directions, we plotted the loss landscape of ViT-Base fine-tuned on CIFAR-10 in both PCA directions (top) and random directions (bottom) in Figure~\ref{fig:Implementation-comparison}. For random directions, the FFT landscape is substantially smoother; this is consistent with~\cite{losslandscape}, but this is inconsistent with the loss landscapes of RoBERTa-Base with random direction in Figure~\ref{fig:direct-chain-method-comparison}, where chain methods produce spikier landscapes with no substantial change in generalizability.

\smartparagraph{2D landscapes.} The initial setup is identical to the 3D landscape. We obtain the same principal directions and plot the same function. For 2D landscapes, when generating our $\alpha,\beta$ grid of values, we uniformly distribute over $[-m,m]\times[-m,m]$ where $m$ is chosen to ensure the optimizer trajectory (blue arrows) is entirely contained in the image. As shown in Figure~\ref{fig:roberta-2D-landscapes}, chain methods have more diverse loss landscapes than their non-chain counterparts due to their overall update to the pre-trained weights having a higher effective rank~\cite{cola}.

\begin{figure*}
    \includegraphics[width=\textwidth, height=3in]{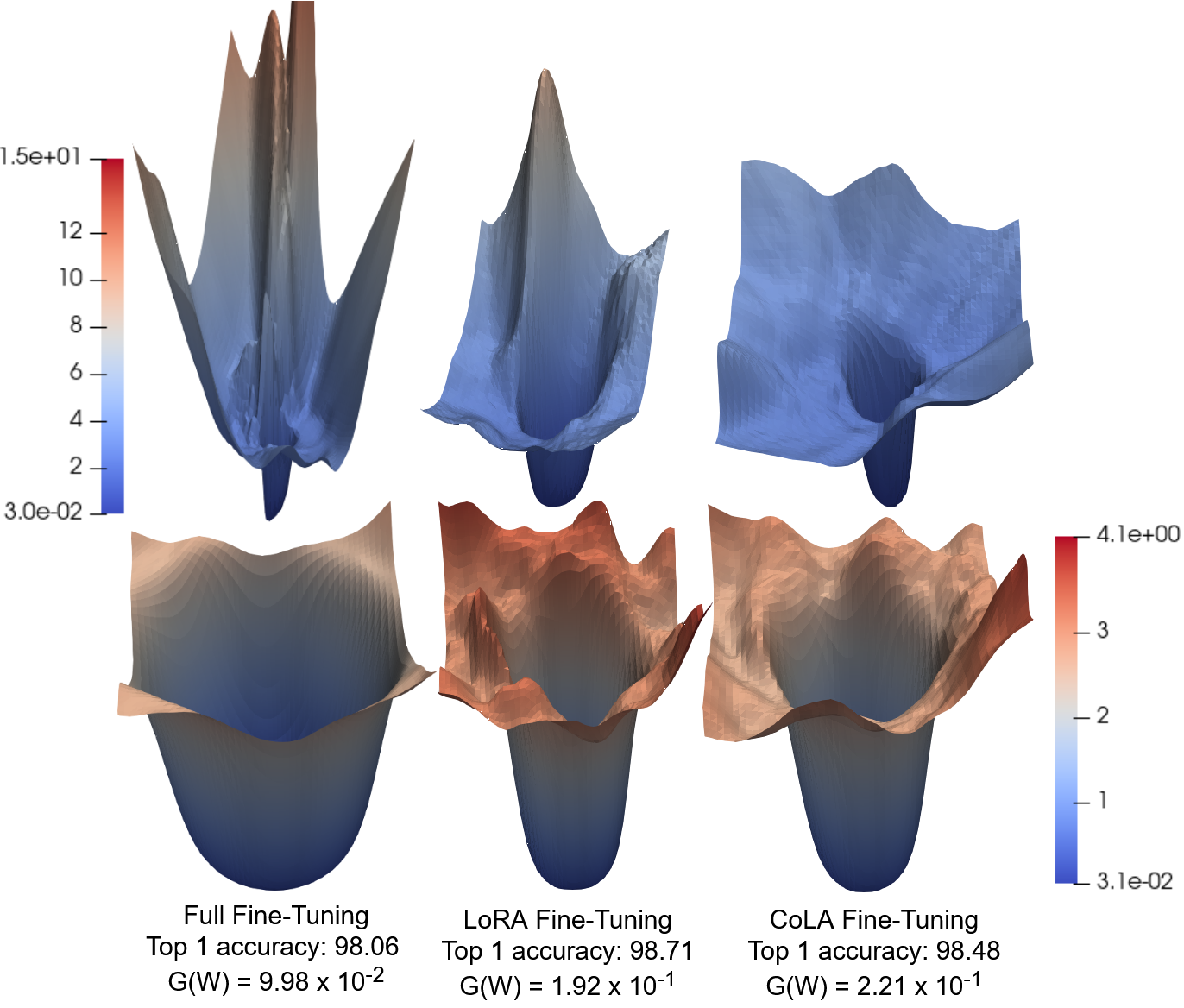}
    \caption{\small{3D loss landscapes of ViT-Base~\cite{dosovitskiy2020vit} pretrained on ImageNet-1K~\cite{deng2009imagenet} and fine-tuned on CIFAR-10~\cite{cifar10-dataset} using the PCA directions of the model's weights updates (top) and random directions (bottom).}}
    \label{fig:Implementation-comparison}
\end{figure*}

\begin{figure*}
    \includegraphics[width=\textwidth, height=3in]{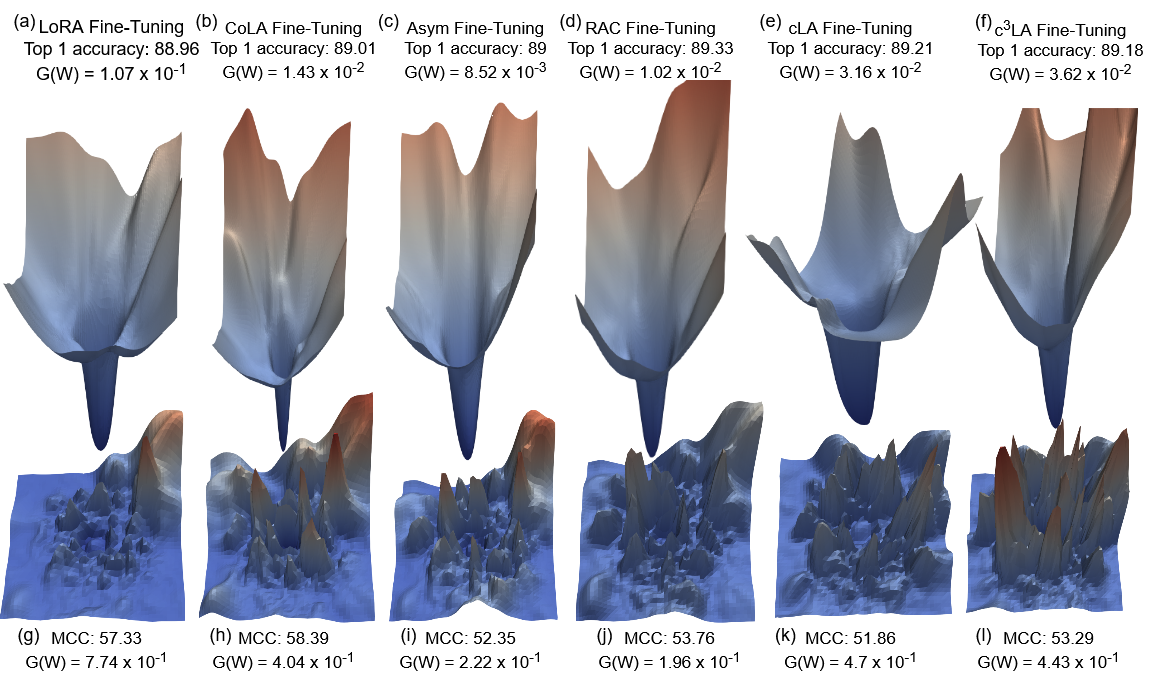}
    \caption{\small{3D loss landscapes of ViT-Base~\cite{dosovitskiy2020vit} pretrained on ImageNet-1K~\cite{deng2009imagenet} and fine-tuned on OfficeHome~\cite{officehome-dataset} (top) and RoBERTa-Base~\cite{roberta} pretrained on a corpus of English text fine-tuned on CoLA~\cite{wang2018glue} (bottom) using the non-chain then chain variants of each LoRA method. The chain variants consistently produce sharper landscapes than the non-chain variants. In asymmetric LoRA methods, this often correlates to worse generalizability, but not in symmetric methods where $B$,$A$ are both trained as shown in~\ref{tab:combined}.}}
    \label{fig:direct-chain-method-comparison}
\end{figure*}

\begin{figure*}
%\vspace{-2ex}
\centering
      \scalebox{0.90}{\centering
	\begin{subfigure}{0.3\linewidth}
		\includegraphics[width=\linewidth]{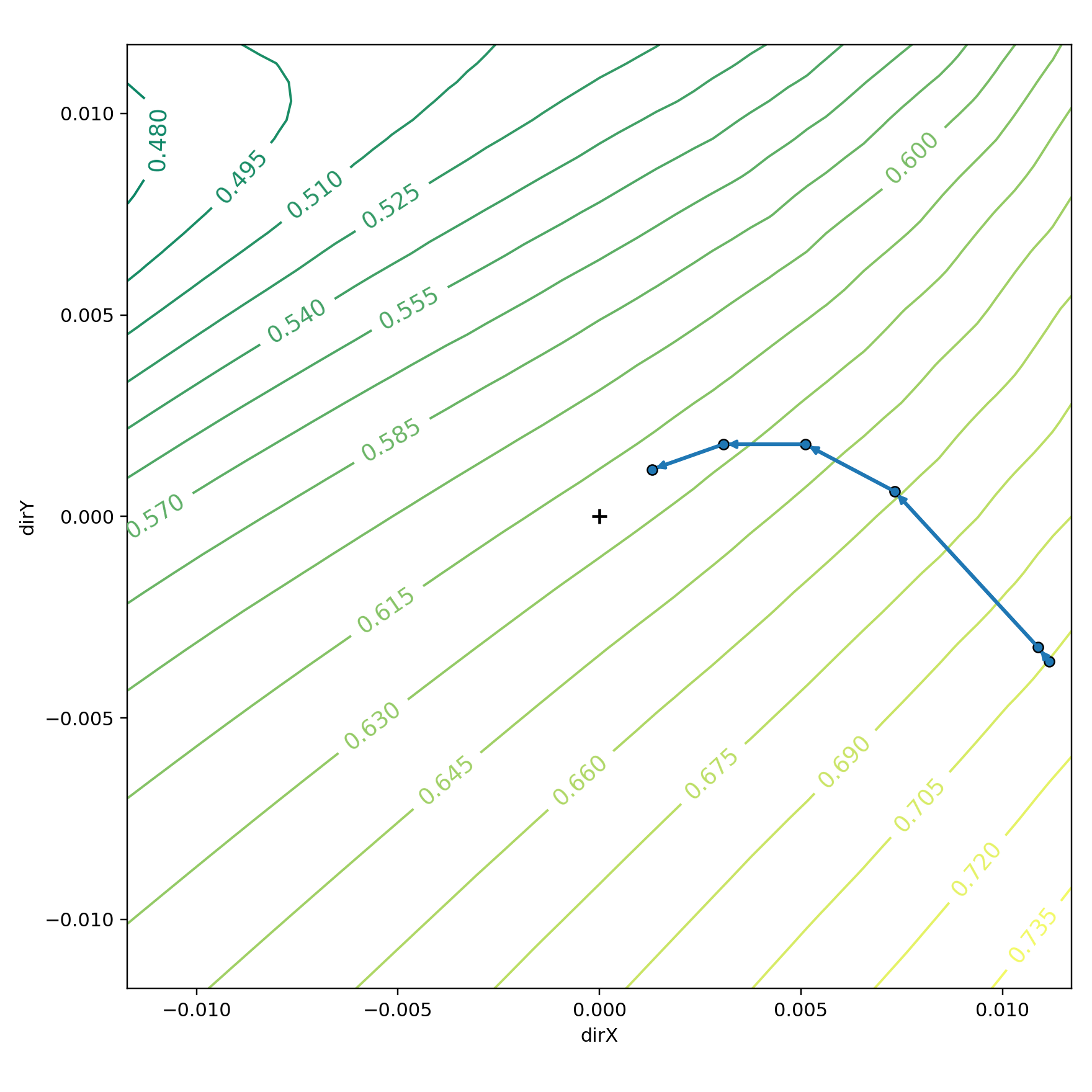}\caption{LoRA}\label{fig:lora}
	\end{subfigure}
	\begin{subfigure}{0.3\linewidth}
		\includegraphics[width=\linewidth]{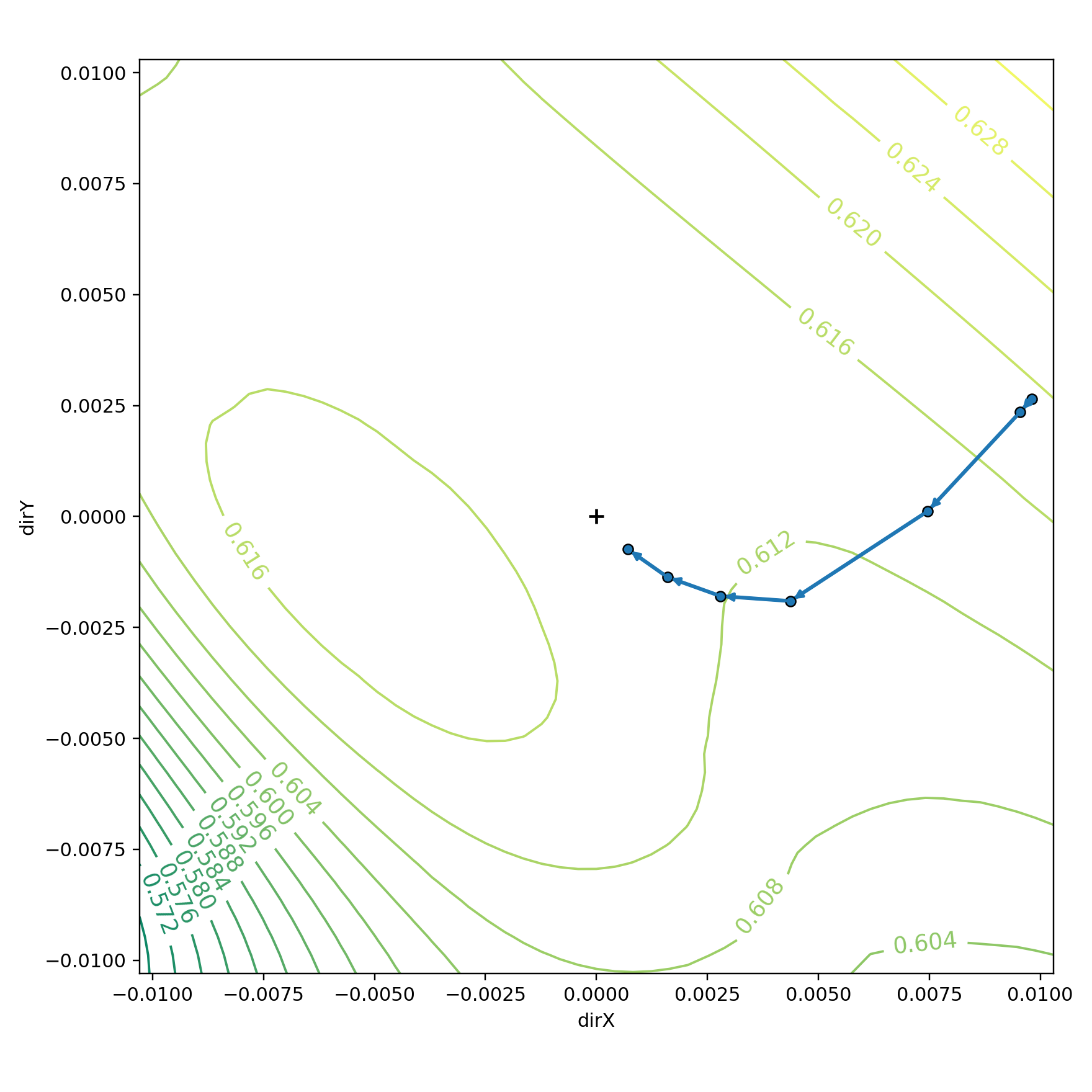}\caption{Asym LoRA}\label{fig:asym-b}
	\end{subfigure}
    \begin{subfigure}{0.3\linewidth}
		\includegraphics[width=\linewidth]{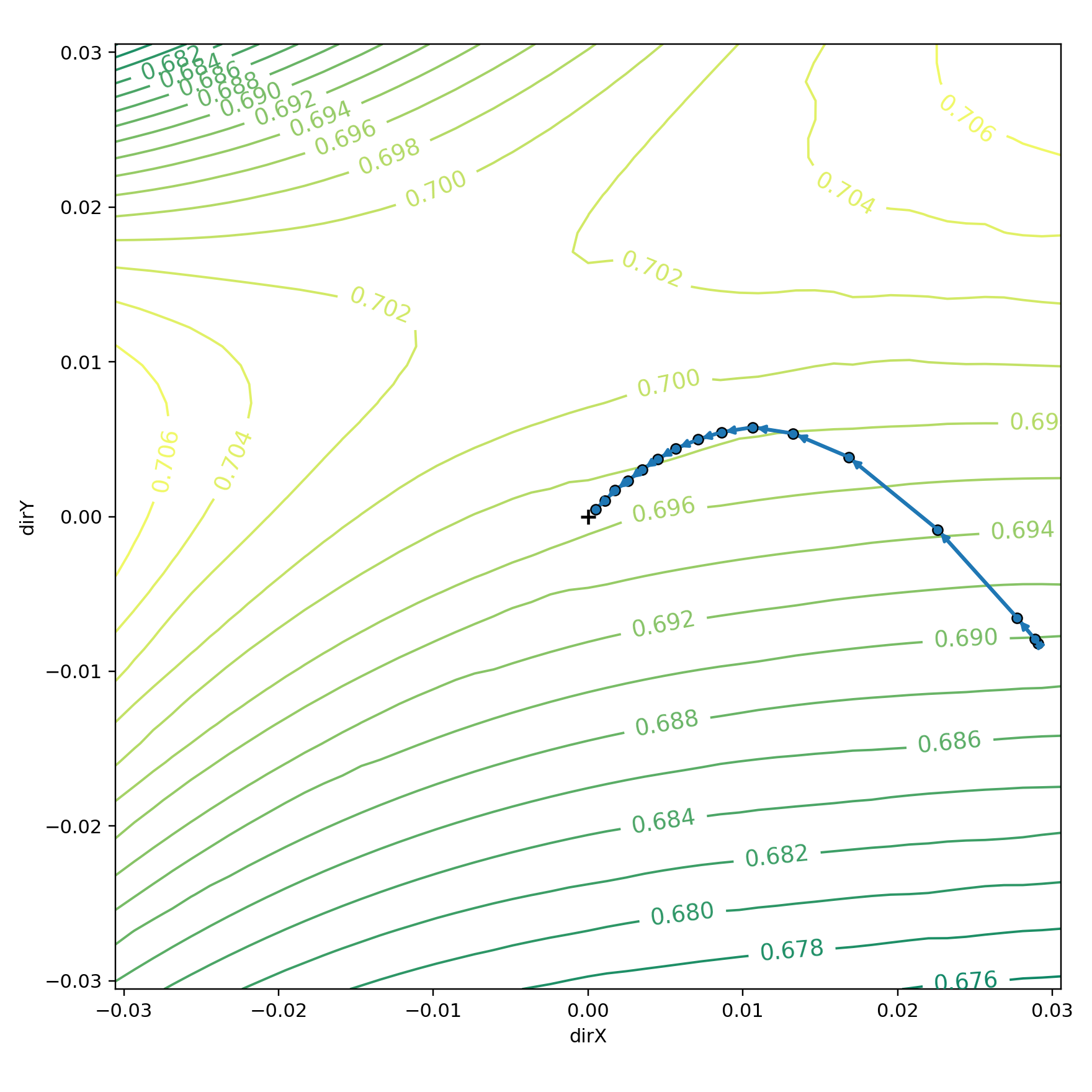}\caption{cLA}\label{fig:cheap}
	\end{subfigure}}
    \scalebox{0.90}{\begin{subfigure}{0.3\linewidth}
		\includegraphics[width=\linewidth]{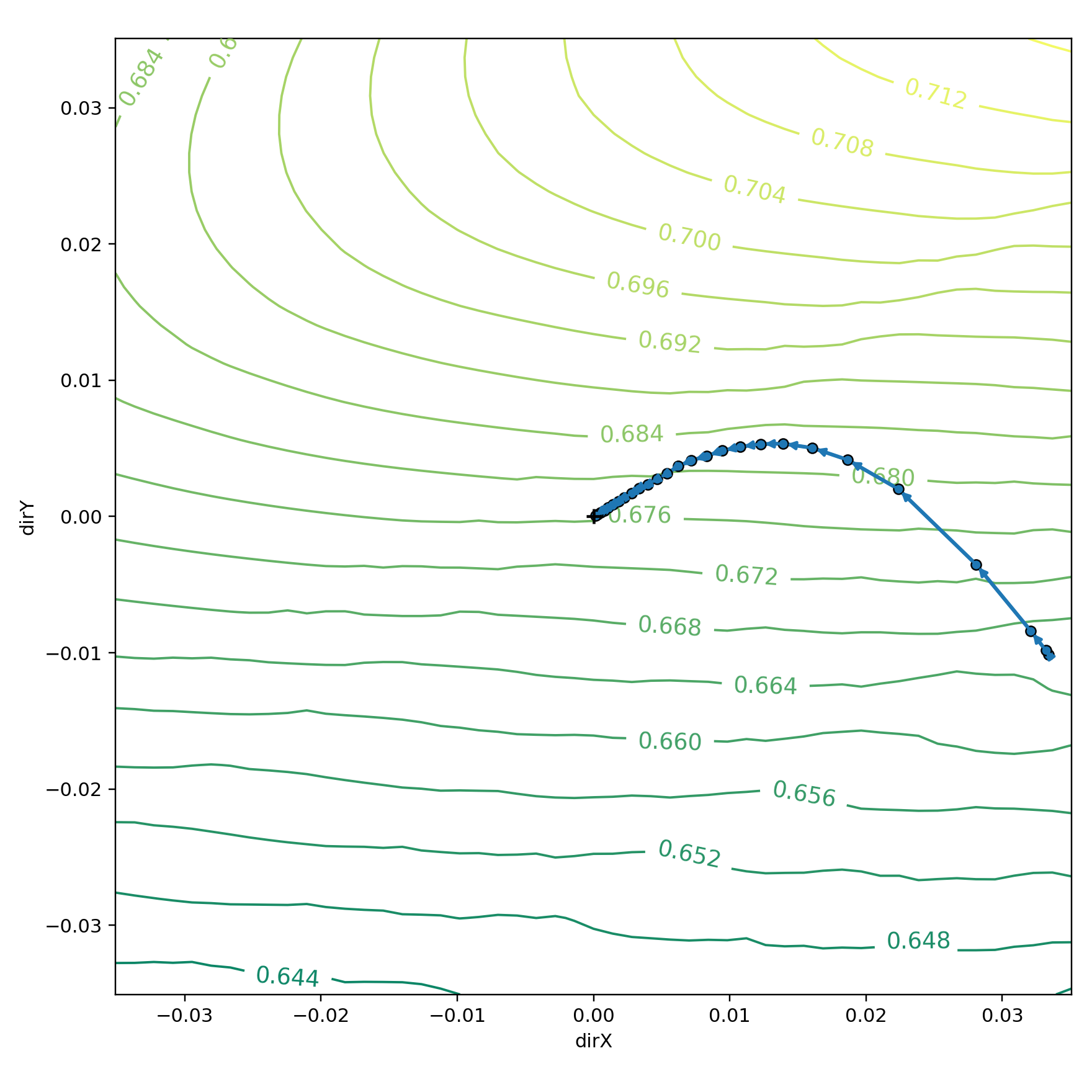}\caption{r-cLA}\label{fig:rac-lora}
	\end{subfigure}
	\begin{subfigure}{0.3\linewidth}
		\includegraphics[width=\linewidth]{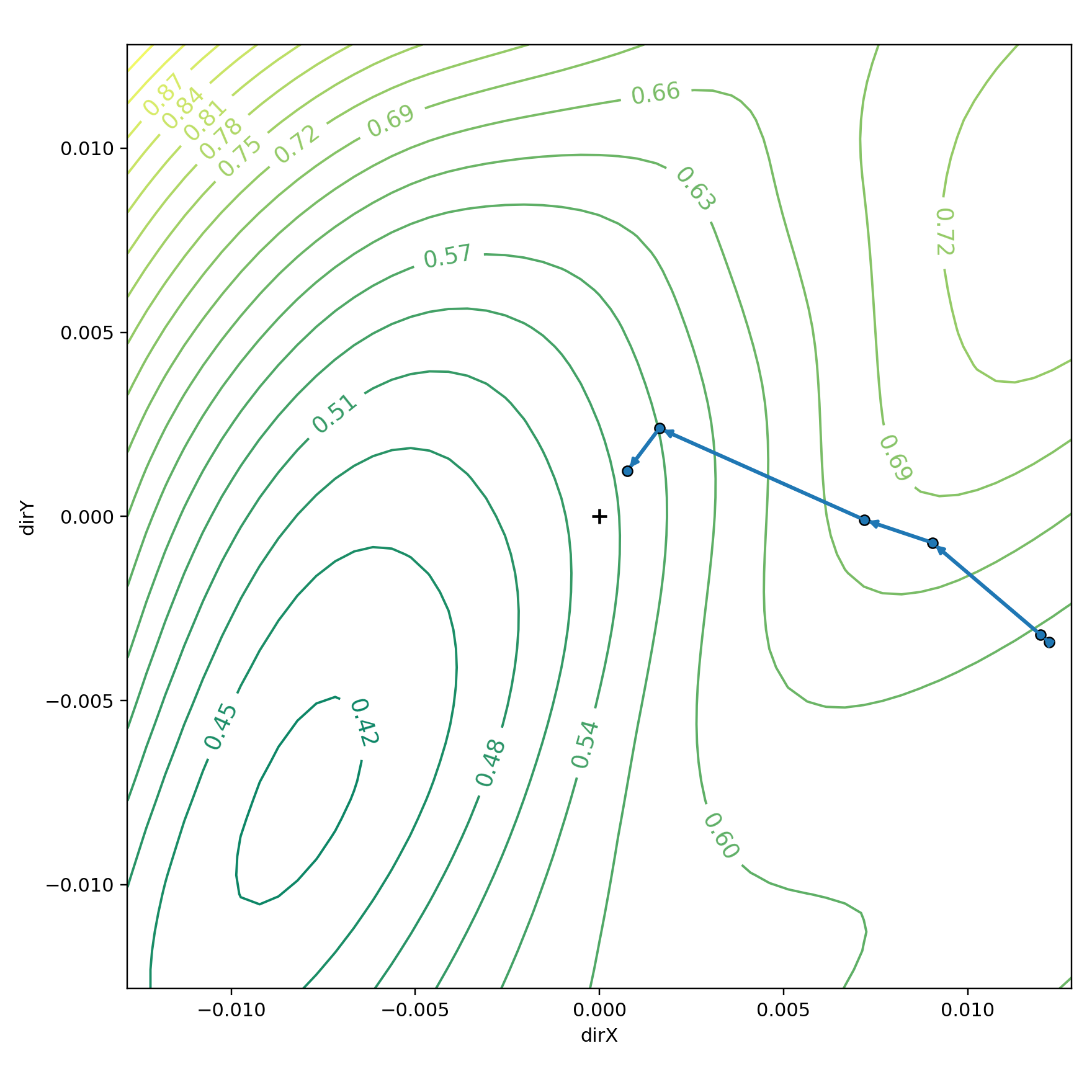}\caption{CoLA}\label{fig:cola}
	\end{subfigure}
    \begin{subfigure}{0.3\linewidth}
		\includegraphics[width=\linewidth]{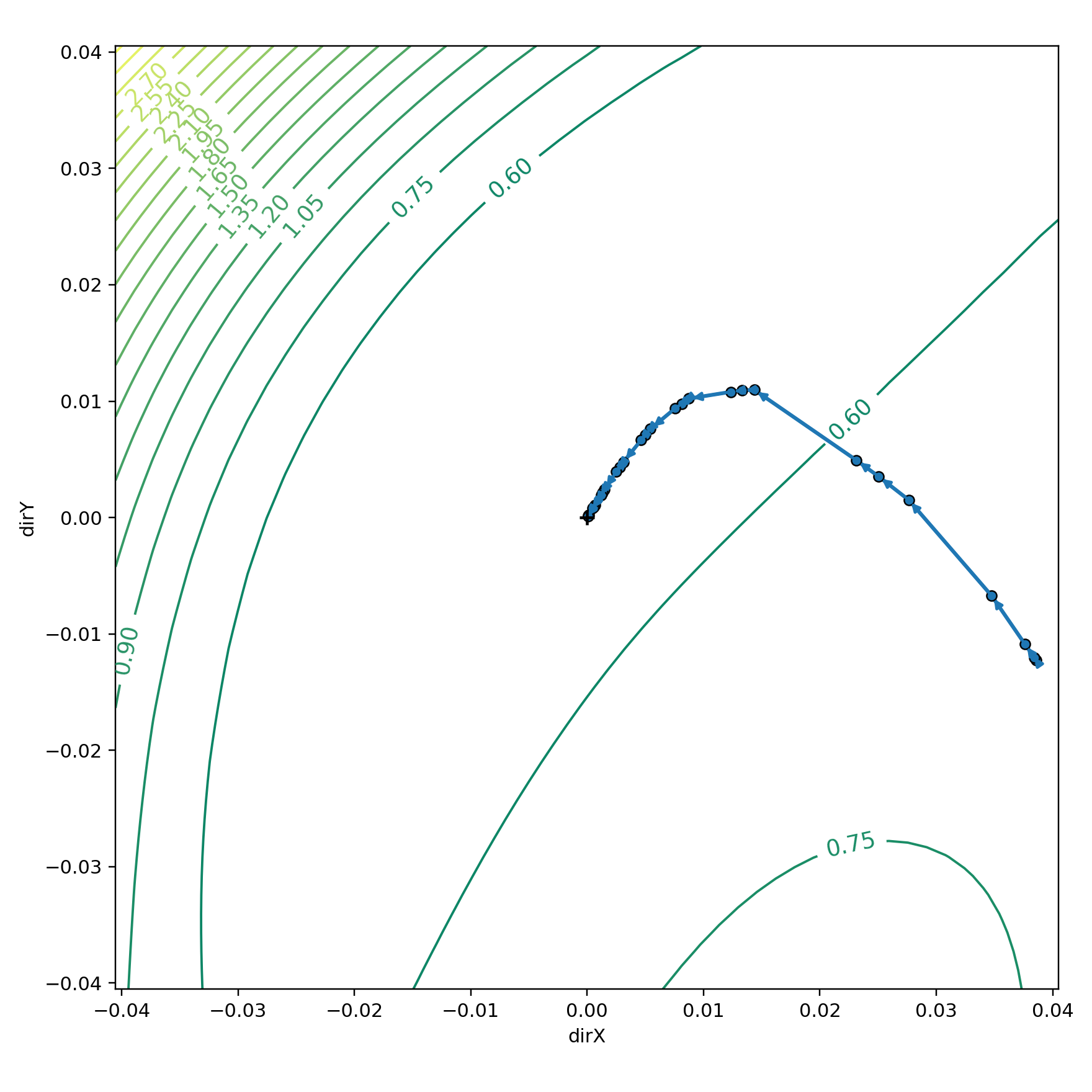}\caption{$c^3$LA}\label{fig:c3la}
	\end{subfigure}}
   \scalebox{0.90}{ \begin{subfigure}{0.3\linewidth}
		\includegraphics[width=\linewidth]{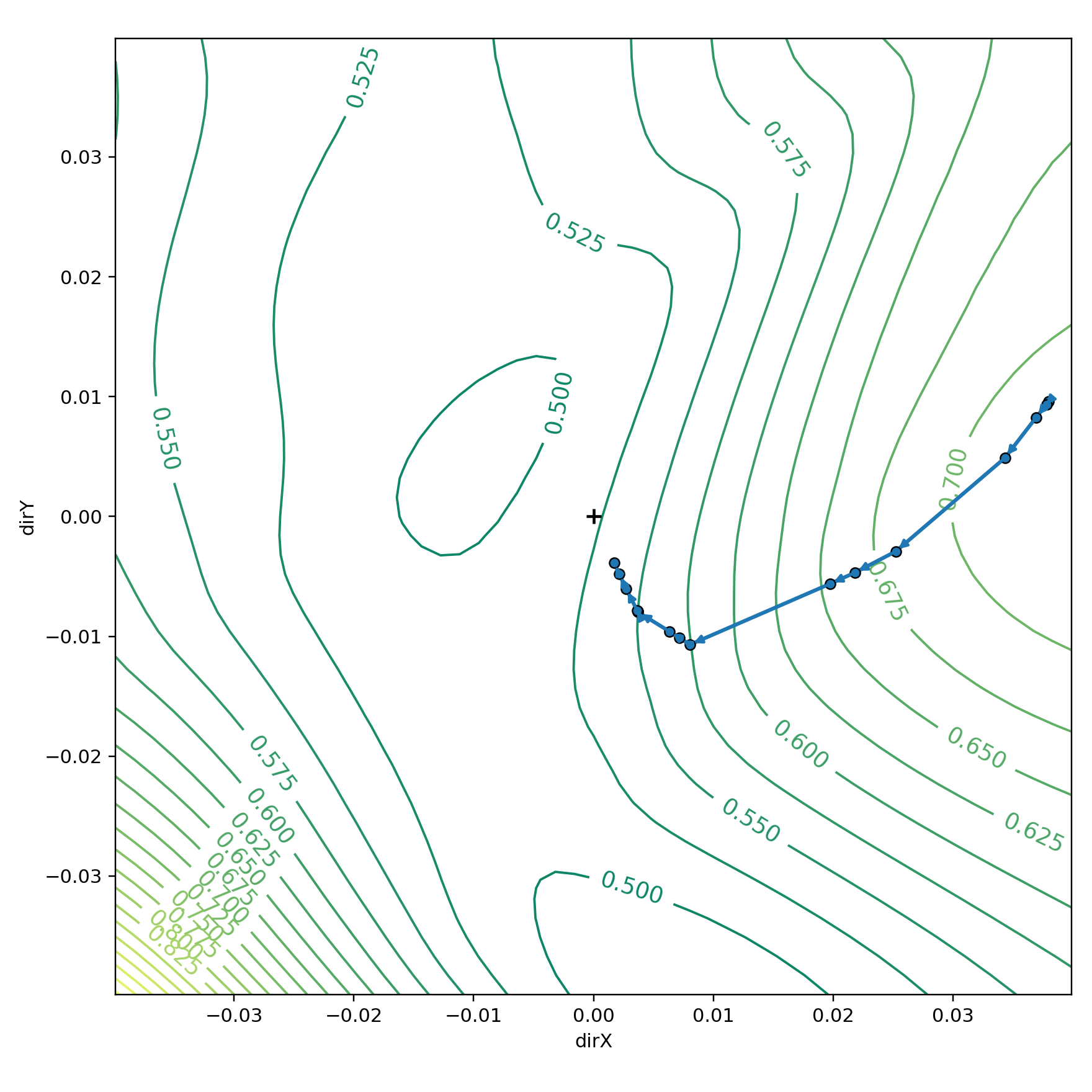}\caption{r-$c^3$LA}\label{fig:r-c3la-lora}
	\end{subfigure}
	\begin{subfigure}{0.3\linewidth}
		\includegraphics[width=\linewidth]{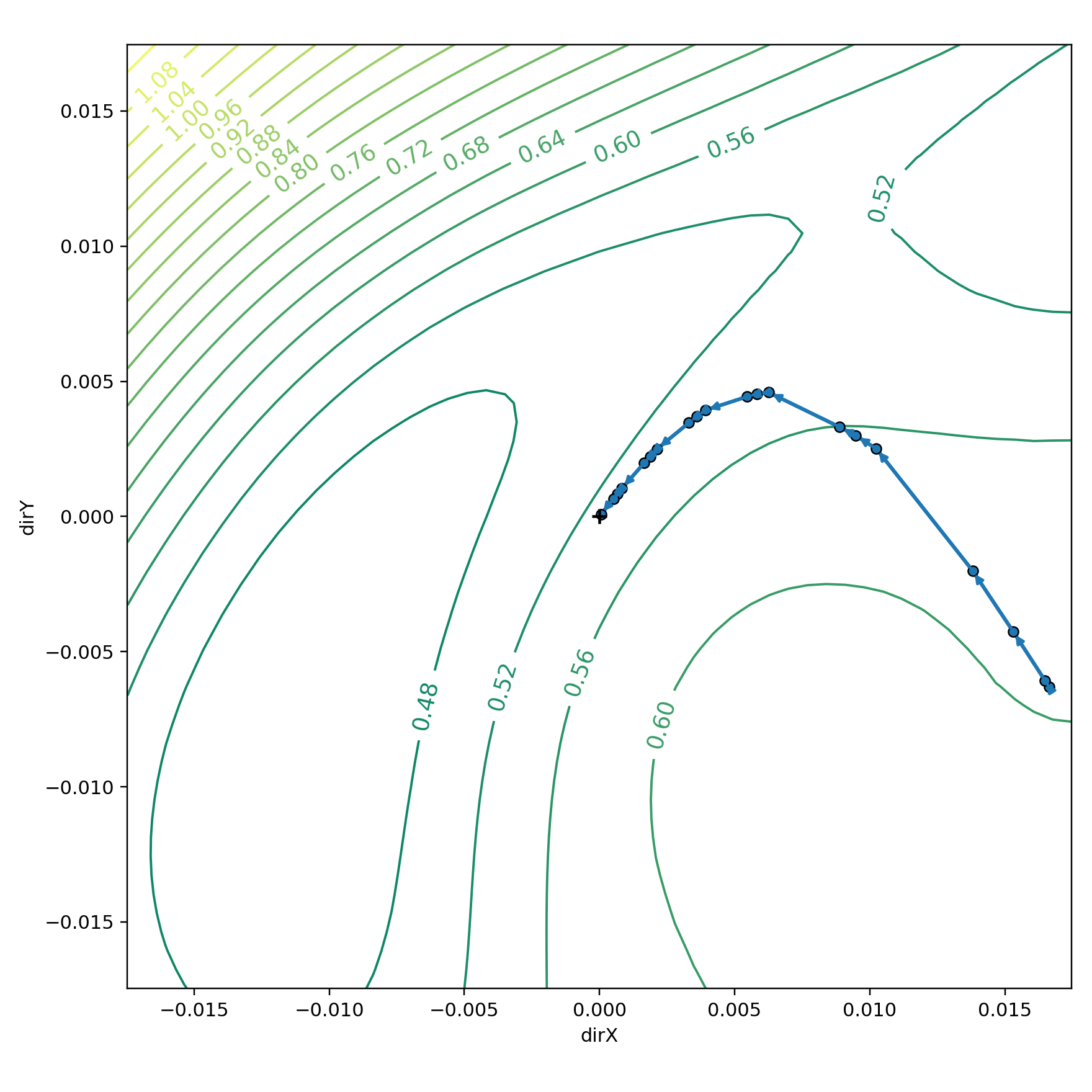}\caption{RAC}\label{fig:rac}
	\end{subfigure}
    \begin{subfigure}{0.3\linewidth}
		\includegraphics[width=\linewidth]{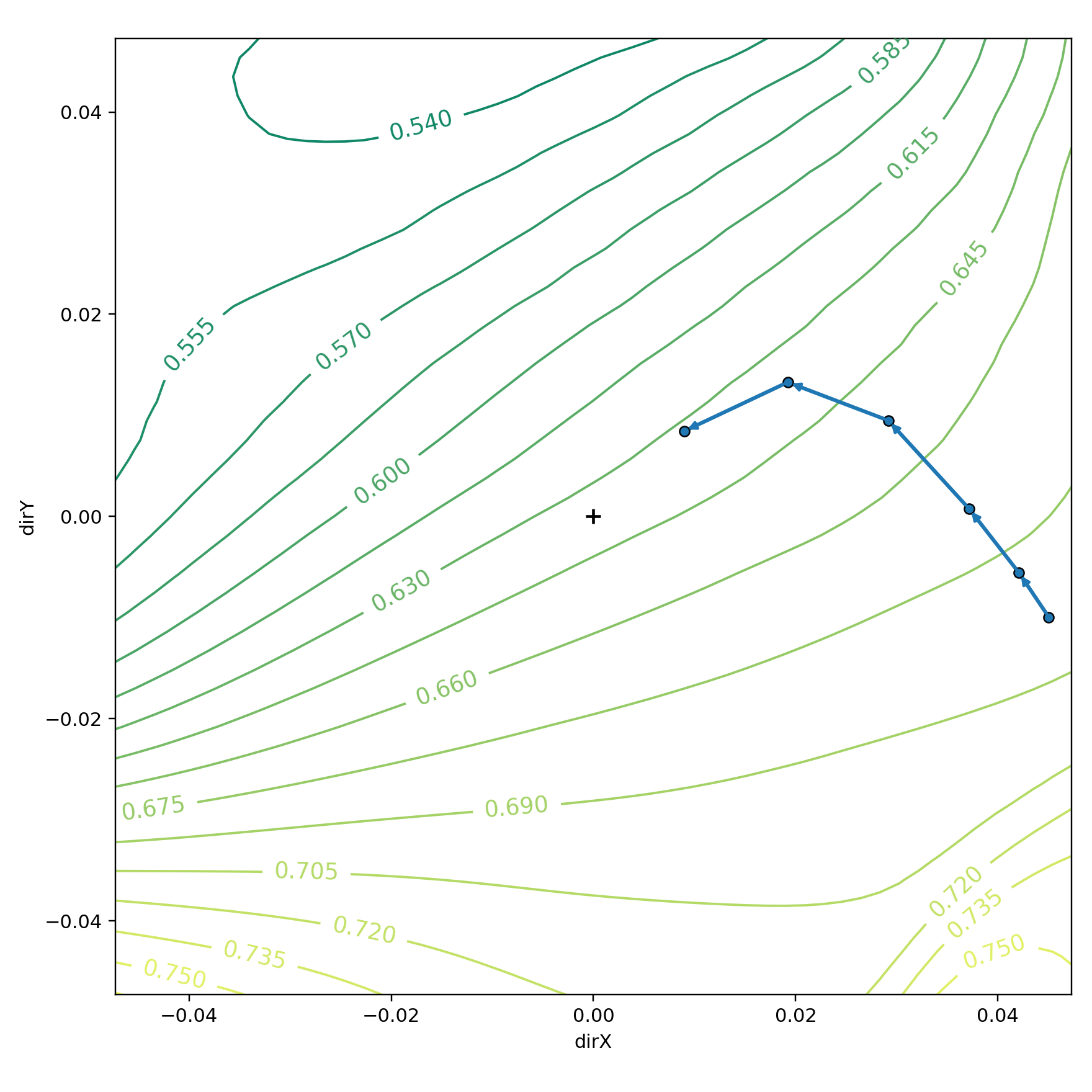}\caption{LoRA+}\label{fig:lora-plus}
	\end{subfigure}}
   \begin{subfigure}{0.3\linewidth}
		\includegraphics[width=\linewidth]{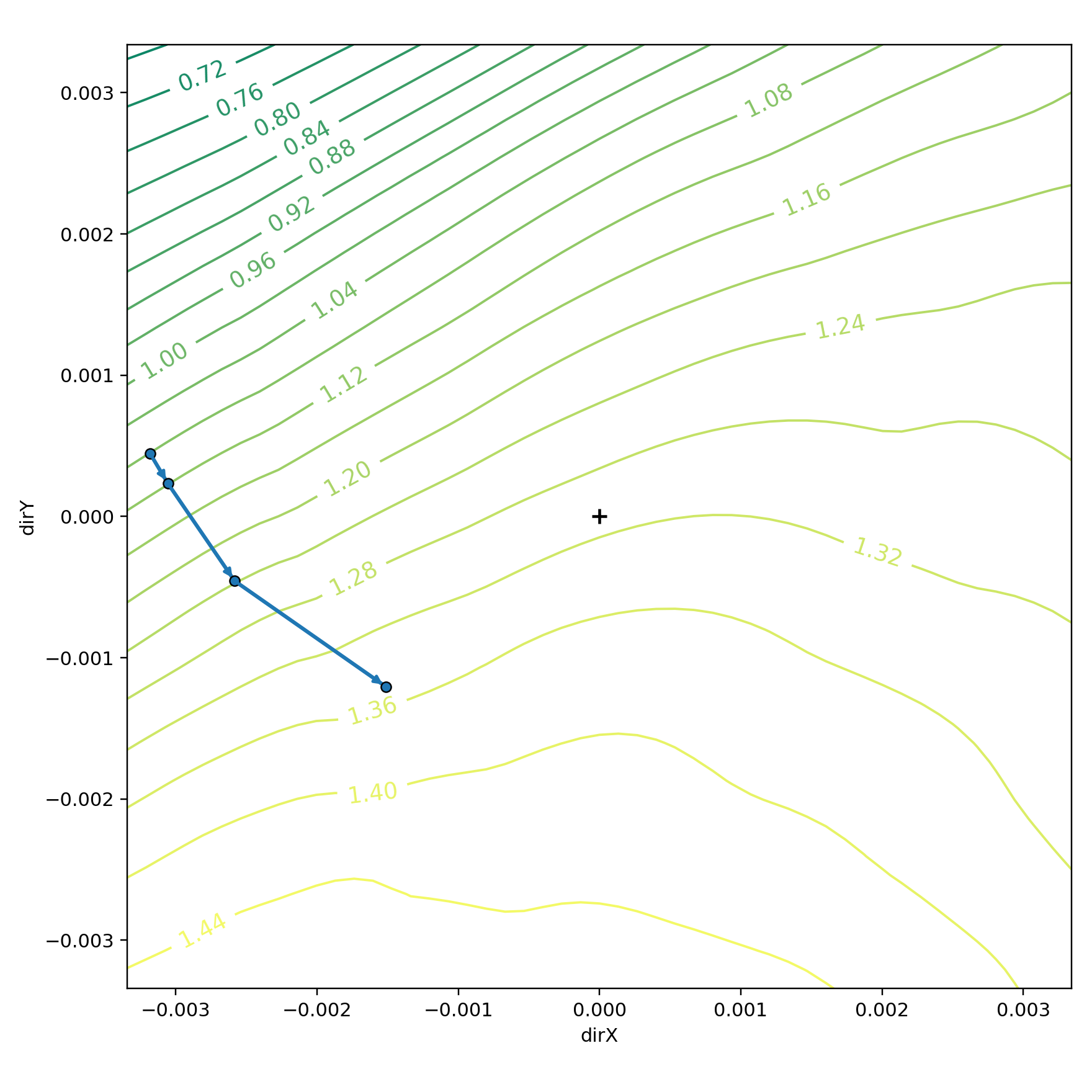}\caption{Full Fine-Tuning}\label{fig:fft}
	\end{subfigure}
	\caption{2D loss landscapes of RoBERTa-Base fine-tuned on CoLA for FFT and other PEFT methods. The axes dirX and dirY are the constants we scale the top two PCA components of the weight displacement matrix. The range was chosen to contain the entire gradient path. The top row is the non-chain variant of the bottom row, save for the last column. The center is marked with a cross for visibility; the arrows indicate the direction of the model's updates.}\label{fig:roberta-2D-landscapes}
\end{figure*}

\subsubsection{Intruder Dimension implementation}\label{appendix:Cosine Similarity}

\begin{table*}
  \centering
  \tiny
  \setlength{\tabcolsep}{2pt}
  \renewcommand{\arraystretch}{1}
  \caption{\small{\textbf{Generalization error approximations} (${\cG}(\textbf{W})\approx\mathbb{E}(\mathcal{L_{\rm test}}) - \mathcal{L_{\rm train}}$) on the past (FFT, LoRA), the present (CoLA, Asymmetric LoRA, RAC, LoRA+), and the future (cLA, $c^3$LA, r-cLA, r-$c^3$LA) fine-tuning methods over various models and datasets. The color \textcolor{codegreen}{green} indicates the best result for each particular model and dataset combination, \textcolor{red}{red} is the second-best result, and \textcolor{blue}{blue} the third.}}
  \label{tab:combined}

  \begin{tabular}{@{} ll *{2}{r} *{4}{r} *{4}{r} @{} }
  \toprule
  \multirow{2}{*}{\textbf{Model}} & \multirow{2}{*}{\textbf{Dataset}} & \multicolumn{2}{c}{\textbf{The Past}} & \multicolumn{4}{c}{\textbf{The Present}} & \multicolumn{4}{c}{\textbf{The Future}} \\
  \cmidrule(lr){3-4} \cmidrule(lr){5-8} \cmidrule(lr){9-12}
   &  & FFT & LoRA & CoLA & Asymm & RAC & LoRA+ & cLA & $c^3$LA & r-cLA & r-$c^3$LA \\
  \midrule
    ViT-Tiny \cite{dosovitskiy2020vit} & OfficeHome & 4.85$e^{-1}$ & 6.96$e^{-2}$ & \textcolor{codegreen}{9.55$e^{-3}$} & 7.22$e^{-2}$ & 6.17$e^{-2}$ & 7.39$e^{-2}$ & \textcolor{red}{1.98$e^{-2}$} & 3.40$e^{-2}$ & \textcolor{blue}{2.16$e^{-2}$} & 3.51$e^{-2}$ \\
    & CIFAR-10 & \textcolor{codegreen}{1.42$e^{-1}$} & \textcolor{red}{2.64$e^{-1}$} & 2.87$e^{-1}$ & 3.36$e^{-1}$ & 3.18$e^{-1}$ & \textcolor{blue}{2.80$e^{-1}$} & 3.13$e^{-1}$ & 3.03$e^{-1}$ & 3.12$e^{-1}$ & 2.92$e^{-1}$ \\
    \midrule
    ViT-Base \cite{dosovitskiy2020vit} & OfficeHome & 3.66$e^{-1}$ & 1.07$e^{-1}$ & \textcolor{blue}{1.43$e^{-2}$} & \textcolor{codegreen}{8.52$e^{-3}$} & \textcolor{red}{1.02$e^{-2}$} & 1.41$e^{-1}$ & 3.16$e^{-2}$ & 3.62$e^{-2}$ & 5.53$e^{-2}$ & 3.00$e^{-2}$ \\
    & CIFAR-10 & \textcolor{codegreen}{9.98$e^{-2}$} & \textcolor{blue}{1.92$e^{-1}$} & 2.21$e^{-1}$ & 2.38$e^{-1}$ & 2.30$e^{-1}$ & \textcolor{red}{1.84$e^{-1}$} & 2.33$e^{-1}$ & 2.34$e^{-1}$ & 2.26$e^{-1}$ & 2.15$e^{-1}$ \\
    \midrule
    DeBERTa v2 XXL \cite{deberta} & MRPC & 8.15$e^{-2}$ & \textcolor{red}{6.89$e^{-2}$} & \textcolor{codegreen}{6.53$e^{-2}$} & 8.09$e^{-2}$ & \textcolor{blue}{8.02$e^{-2}$} & 9.08$e^{-2}$ & 9.31$e^{-2}$ & 1.10$e^{-1}$ & 9.47$e^{-2}$ & 1.22$e^{-1}$\\
    & TREC50 & 3.38$e^{-1}$& 2.36$e^{-1}$& \color{codegreen}{7.04$e^{-2}$} & \textcolor{blue}{1.53$e^{-1}$} & 2.24$e^{-1}$& \color{red}{1.36$e^{-1}$}& 1.85$e^{-1}$& 2.22$e^{-1}$& 1.93$e^{-1}$& 1.92$e^{-1}$\\
    & PAWS & 6.07$e^{-2}$& \textcolor{red}{1.99$e^{-2}$} & 3.63$e^{-2}$ & \textcolor{blue}{3.26$e^{-2}$} & 3.95$e^{-2}$ & 5.41$e^{-2}$ & 6.68$e^{-2}$ & 5.11$e^{-2}$ & \textcolor{codegreen}{1.98$e^{-2}$} & 6.99 $e^{-2}$\\
    \midrule
    DeBERTa v3 Base \cite{debertav3} & MRPC & 1.06$e^{-1}$ & 8.90$e^{-2}$ & {2.59$e^{-2}$} & 7.28$e^{-2}$ & 9.86$e^{-2}$ & \textcolor{red}{1.52$e^{-2}$} & {2.58$e^{-2}$} & \textcolor{codegreen}{8.52$e^{-3}$} & 1.16$e^{-1}$ & \textcolor{blue}{2.57$e^{-2}$}\\
    & TREC50 & 4.56$e^{-1}$ & 2.73$e^{-1}$ & 3.99$e^{-1}$ & \textcolor{blue}{2.16$e^{-1}$} & 2.67$e^{-1}$ & \textcolor{red}{2.61$e^{-2}$} & 2.25$e^{-1}$ & {3.70$e^{-1}$} & 3.36$e^{-1}$ & \textcolor{codegreen}{2.63$e^{-2}$}\\
    & PAWS & 2.62$e^{-2}$ & 6.43$e^{-2}$ & \textcolor{codegreen}{2.40$e^{-2}$} & 6.27$e^{-2}$ & 8.17$e^{-2}$ & \textcolor{red}{5.55$e^{-2}$} & 7.39$e^{-2}$ & \textcolor{blue}{5.77$e^{-2}$} & 1.01$e^{-1}$ &  5.82$e^{-2}$\\
    \midrule
    RoBERTa-Base \cite{roberta} & MRPC & 9.48$e^{-1}$ & 6.01$e^{-1}$ & \textcolor{red}{2.05$e^{-1}$} & \textcolor{codegreen}{1.64$e^{-1}$} & \textcolor{blue}{2.20$e^{-1}$} & 5.33$e^{-1}$ & 4.37$e^{-1}$ & 3.78$e^{-1}$ & 3.35$e^{-1}$ & 3.21$e^{-1}$ \\
    & CoLA & 1.39 & 7.74$e^{-1}$ & 4.04$e^{-1}$ & \textcolor{red}{2.22$e^{-1}$} & \textcolor{codegreen}{1.96$e^{-1}$} & 8.10$e^{-1}$ & 4.70$e^{-1}$ & 4.43$e^{-1}$ & 4.38$e^{-1}$ & \textcolor{blue}{4.01$e^{-1}$} \\
    \midrule
    RoBERTa-Large \cite{roberta} & MRPC & 7.29$e^{-1}$ & 4.64$e^{-1}$ & 4.71$e^{-1}$ & \textcolor{blue}{2.77$e^{-1}$} & \textcolor{red}{2.68$e^{-1}$} & \textcolor{codegreen}{2.64$e^{-1}$} & 6.54$e^{-1}$ & 5.57$e^{-1}$ & 5.27$e^{-1}$ & 3.84$e^{-1}$ \\
    & CoLA & 8.06$e^{-1}$ & 4.25$e^{-1}$ & 4.18$e^{-1}$ & \textcolor{blue}{2.36$e^{-1}$} & \textcolor{codegreen}{1.75$e^{-1}$} & \textcolor{red}{2.28$e^{-1}$} & 4.96$e^{-1}$ & 4.56$e^{-1}$ & 6.14$e^{-1}$ & 4.05$e^{-1}$ \\
    \midrule
    TinyLlama  \cite{zhang2024tinyllama} & OpenBookQA & \textcolor{blue}{1.78$e^{-1}$} & 2.82$e^{-1}$ & 3.41$e^{-1}$ & {2.15$e^{-1}$} & 1.86$e^{-1}$ & 2.07$e^{-1}$ & \textcolor{red}{1.51$e^{-1}$} & 2.20$e^{-1}$ & 3.16$e^{-1}$ & \textcolor{codegreen}{7.59$e^{-2}$} \\
    & FOLIO & {1.82$e^{-1}$} & 2.37$e^{-1}$ & {2.17$e^{-1}$} &  \textcolor{blue}{1.75$e^{-1}$} & 1.93$e^{-1}$ & \textcolor{codegreen}{5.11$e^{-2}$} & 2.35$e^{-1}$ & 1.91$e^{-1}$ & \textcolor{red}{1.05$e^{-1}$} & {2.49$e^{-1}$}\\
    & LogiQA & 3.61$e^{-1}$ & \textcolor{codegreen}{6.12$e^{-3}$} & 1.45$e^{-1}$ & \textcolor{red}{1.16$e^{-2}$} & 1.75$e^{-1}$ & 2.37$e^{-1}$ & 8.60$e^{-2}$ & 1.1$e^{-1}$ & 6.64$e^{-2}$ & \textcolor{blue}{6.25$e^{-2}$}\\
    & CLUTRR & 4.29 & 2.25 & \textcolor{codegreen}{1.55} & 2.34 & 2.27 & 5.48 & \textcolor{red}{2.16} & \textcolor{blue}{2.19} & 2.59 & 4.23\\
    \midrule
    Llama 3~\cite{grattafiori2024llama} & OpenBookQA & 2.65$e^{-1}$ & 2.54$e^{-1}$ & \textcolor{blue}{2.32$e^{-1}$} & 2.63$e^{-1}$ & \textcolor{codegreen}{1.67$e^{-1}$} & \textcolor{red}{1.92$e^{-1}$} & 3.55$e^{-1}$ & 2.69$e^{-1}$ & 2.79$e^{-1}$ & 3.61 \\
    & CLUTRR & \textcolor{red}{2.53} & 2.66 & 2.97 & 2.9 & 5.49 & \textcolor{blue}{2.65} & 2.69 & 5.02 & \textcolor{codegreen}{2.51} & 4.33 \\
    \midrule
    DeepseekCoder \cite{guo2024deepseekcoder} & DJANGO & \textcolor{red}{3.48$e^{-2}$} & {4.65$e^{-2}$} & \textcolor{codegreen}{3.4$e^{-2}$} & 5.16$e^{-2}$ & 4.64$e^{-2}$ & {3.87$e^{-2}$} & {4.19$e^{-2}$} & {3.89$e^{-2}$} & {3.64$e^{-2}$} & \textcolor{blue}{3.62$e^{-2}$}\\
    \midrule
    GPT2-Small \cite{gpt2} & E2E & \textcolor{codegreen}{1.65$e^{-1}$} & 1.93$e^{-1}$ & 1.85$e^{-1}$ & 1.83$e^{-1}$ & 1.85$e^{-1}$ & 1.87$e^{-1}$ & \textcolor{red}{1.77$e^{-1}$} & \textcolor{blue}{1.82$e^{-1}$} & 1.88$e^{-1}$ & \textcolor{blue}{1.82$e^{-1}$} \\
  %\midrule
  \bottomrule
  \end{tabular}
\end{table*}

Given the pretrained and fine-tuned models, $\textbf{W}_0$ and $\textbf{W}_0 + \Delta \textbf{W}$ we find intruder dimensions as follows: first, we decompose each layer of $\textbf{W}_0$ and $\textbf{W}_0 + \Delta \textbf{W}$ into their corresponding SVDs, $U^i\Sigma^i {V^i}^T_{(\textbf{W}_0)^i}$ and $U^i\Sigma^i {V^i}^T_{(\textbf{W}_0+\Delta \textbf{W})^i},i\in[L]$, respectively. Then, given a threshold $\varepsilon\in(0, 1)$, a singular vector $u^{j,i}_{(\textbf{W}_0+\Delta \textbf{W})}$ in $U^i_{(\textbf{W}_0+\Delta \textbf{W})}$ is an intruder dimension if for all $u^{k,i}_{(\textbf{W}_0)}$ in $U^i_{(\textbf{W}_0)}$, the expression, $\tfrac{|\langle u^{j,i}_{(\textbf{W}_0+\Delta \textbf{W]})},u^{k,i}_{(\textbf{W}_0)}\rangle|}{\|u^{j,i}_{(\textbf{W}_0+\Delta \textbf{W})}\|\|u^{k,i}_{(\textbf{W}_0)}\|}|<\varepsilon$. For $\varepsilon$ small enough, this indicates the vector $u^{j,i}_{(\textbf{W}_0+\Delta \textbf{W})}$ is almost orthogonal to all vectors in $U^i_{(\textbf{W}_0)}$. We denote these vectors as \emph{intruder dimensions.}

\subsection{Generalization Error---Continued}
\label{subsec:genconnect}

Let $\mathcal{X} \times \mathcal{Y}$ be our input space and label space with $\nu$ distribution of pairs $(x, y) \in \mathcal{X} \times     \mathcal{Y}$, our dataset $N= \{ (x_1,y_1), ..., (x_n, y_n) \}$ where each $(x_i, y_i)$ is i.i.d. from $\nu$ distribution of $\mathcal{X} \times \mathcal{Y}$, thus the distribution over our dataset does not represent the true distribution of input-output pairs from our instance space. Let $\mathcal{H}$ be our hypothesis space, where $w \in \mathcal{H}; w(x_i)=\hat{y}_i$ thus, we are concerned with how accurately $w$ can adapt to the true distribution $\nu$ of $\mathcal{X} \times \mathcal{Y}$. This can be addressed by the generalization error of our hypothesis $w\in \mathcal{H}$ given our loss function $\ell$. The true risk of $w$ over $\mathcal{X} \times \mathcal{Y}$ given $\ell$ is $\cL_{\rm global}(w):=\mathbb{E}_{\mathcal{X}, \mathcal{Y}}[\ell(w(x), y)]=\int_{\mathcal{X \times \mathcal{Y}}} \ell(w(x), y)d\nu$, while empirical risk is $\cL :=\frac{1}{n}\sum^{n}\ell(w(x_i), y_i); (x_i, y_i) \in N$. Let $M$ denote the full dataset, where $M = N \cup T$, $N$ being the train dataset, and $T$ being the test dataset. In practice, the empirical risk can be computed based on $N$, and the test dataset, $T$, can be used to show how well the model has generalized. $N$ and $T$ are independent samples from $\nu$; their distributions approximate $\nu$ but differ due to random and finite sampling. Although $\mathcal{L}_{\rm test} - \mathcal{L}_{\rm train}$ is not a true testament for calculating the generalization error of a model, it can be used as a heuristic for determining generalization. Understanding how stable these models are to small weight perturbations provides insight into their reliability and reputability for practical use.

{
\subsubsection{Normalized Generalization Results}
\label{subsubsec:gen_norm}
We normalize the generalization gaps with respect to LoRA, reporting the ratio $\frac{\cG(\cdot)}{\cG(\mathrm{LoRA})}$ in Table~\ref{tab:gen_normed}. Under this normalization, many PEFT methods exhibit similar generalization behavior relative to LoRA. Throughout all models, the effect of chaining behavior tended to coincide; if CoLA generalized better than LoRA, then RAC often generalizes better than Asymmetric LoRA, cLA than c$^3$LA, r-cLA, and r-c$^3$LA. 
}

\begin{table*}[b]
  \centering
  \scriptsize
  \setlength{\tabcolsep}{2pt}
  \renewcommand{\arraystretch}{0.8}
  \caption{\small{\textbf{Normalized generalization error approximations with respect to LoRA ($\frac{\cG(\cdot)}{\cG(\rm LoRA)}$)}, on the past (FFT, LoRA), the present (CoLA, Asymmetric LoRA, RAC, LoRA+), and the future (cLA, $c^3$LA, r-cLA, r-$c^3$LA) fine-tuning methods over various models and datasets. The color \textcolor{codegreen}{green} indicates the best result for each particular model and dataset combination, \textcolor{red}{red} is the second-best result, and \textcolor{blue}{blue} the third.}}
  \label{tab:gen_normed}
  \scalebox{1.2}{
  \begin{tabular}{@{} ll *{2}{r} *{4}{r} *{4}{r} @{} }
  \toprule
  \multirow{2}{*}{\textbf{Model}} & \multirow{2}{*}{\textbf{Dataset}} & \multicolumn{2}{c}{\textbf{The Past}} & \multicolumn{4}{c}{\textbf{The Present}} & \multicolumn{4}{c}{\textbf{The Future}} \\
  \cmidrule(lr){3-4} \cmidrule(lr){5-8} \cmidrule(lr){9-12}
   &  & FFT & LoRA & CoLA & Asymm & RAC & LoRA+ & cLA & $c^3$LA & r-cLA & r-$c^3$LA \\
  \midrule
    ViT-Tiny \cite{dosovitskiy2020vit} & OfficeHome & 6.97 & 1.00 & \textcolor{codegreen}{0.14} & 1.04 & 0.89 & 1.06 & \textcolor{red}{0.28} & 0.49 & \textcolor{blue}{0.31} & 0.50 \\
    & CIFAR-10 & \textcolor{codegreen}{0.54} & \textcolor{red}{1.00} & 1.09 & 1.27 & 1.20 & \textcolor{blue}{1.06} & 1.19 & 1.15 & 1.18 & 1.11 \\
    \midrule
    ViT-Base \cite{dosovitskiy2020vit} & OfficeHome & 3.42 & 1.00 & \textcolor{blue}{0.13} & \textcolor{codegreen}{0.08} & \textcolor{red}{0.10} & 1.32 & 0.30 & 0.34 & 0.52 & 0.28 \\
    & CIFAR-10 & \textcolor{codegreen}{0.52} & \textcolor{blue}{1.00} & 1.15 & 1.24 & 1.20 & \textcolor{red}{0.96} & 1.21 & 1.22 & 1.18 & 1.12 \\
    \midrule
    DeBERTa v2 XXL \cite{deberta} & MRPC & 1.18 & \textcolor{red}{1.00} & \textcolor{codegreen}{0.95} & 1.17 & \textcolor{blue}{1.16} & 1.32 & 1.35 & 1.60 & 1.37 & 1.77\\
    & TREC50 & 1.43& 1.00& \color{codegreen}{0.30} & \textcolor{blue}{0.65} & 0.95& \color{red}{0.58}& 0.78& 0.94& 0.82& 0.81\\
    & PAWS & 3.05& \textcolor{red}{1.00} & 1.82 & \textcolor{blue}{1.64} & 1.98 & 2.72 & 3.36 & 2.57 & \textcolor{codegreen}{0.99} & 3.51\\
    \midrule
    DeBERTa v3 Base \cite{debertav3} & MRPC & 1.19 & 1.00 & {0.29} & 0.82 & 1.11 & \textcolor{red}{0.17} & {0.29} & \textcolor{codegreen}{0.10} & 1.30 & \textcolor{blue}{0.29}\\
    & TREC50 & 1.67 & 1.00 & 1.46 & \textcolor{blue}{0.79} & 0.98 & \textcolor{red}{0.10} & 0.82 & {1.36} & 1.23 & \textcolor{codegreen}{0.10}\\
    & PAWS & 0.41 & 1.00 & \textcolor{codegreen}{0.37} & 0.98 & 1.27 & \textcolor{red}{0.86} & 1.15 & \textcolor{blue}{0.90} & 1.57 &  0.91\\
    \midrule
    RoBERTa-Base \cite{roberta} & MRPC & 1.58 & 1.00 & \textcolor{red}{0.34} & \textcolor{codegreen}{0.27} & \textcolor{blue}{0.37} & 0.89 & 0.73 & 0.63 & 0.56 & 0.53 \\
    & CoLA & 1.80 & 1.00 & 0.52 & \textcolor{red}{0.29} & \textcolor{codegreen}{0.25} & 1.05 & 0.61 & 0.57 & 0.57 & \textcolor{blue}{0.52} \\
    \midrule
    RoBERTa-Large \cite{roberta} & MRPC & 1.57 & 1.00 & 1.02 & \textcolor{blue}{0.60} & \textcolor{red}{0.58} & \textcolor{codegreen}{0.57} & 1.41 & 1.20 & 1.14 & 0.83 \\
    & CoLA & 1.90 & 1.00 & 0.98 & \textcolor{blue}{0.56} & \textcolor{codegreen}{0.41} & \textcolor{red}{0.54} & 1.17 & 1.07 & 1.44 & 0.95 \\
    \midrule
    TinyLlama  \cite{zhang2024tinyllama} & OpenBookQA & \textcolor{blue}{0.63} & 1.00 & 1.21 & {0.76} & 0.66 & 0.73 & \textcolor{red}{0.54} & 0.78 & 1.12 & \textcolor{codegreen}{0.27} \\
    & FOLIO & {0.77} & 1.00 & {0.92} &  \textcolor{blue}{0.74} & 0.81 & \textcolor{codegreen}{0.22} & 0.99 & 0.81 & \textcolor{red}{0.44} & {1.05}\\
    & LogiQA & 58.99 & \textcolor{codegreen}{1.00} & 23.69 & \textcolor{red}{1.90} & 28.59 & 38.73 & 14.05 & 17.97 & 10.85 & \textcolor{blue}{10.21}\\
    & CLUTRR & 1.91 & 1.00 & \textcolor{codegreen}{0.69} & 1.04 & 1.01 & 2.44 & \textcolor{red}{0.96} & \textcolor{blue}{0.97} & 1.15 & 1.88\\
    \midrule
    Llama 3~\cite{grattafiori2024llama} & OpenBookQA
      & 1.04 & 1.00 & \textcolor{blue}{0.91} & 1.04 & \textcolor{codegreen}{0.66} & \textcolor{red}{0.76} & 1.40 & 1.06 & 1.10 & 14.21 \\
        & CLUTRR
      & \textcolor{red}{0.95} & 1.00 & 1.12 & 1.09 & 2.06 & \textcolor{blue}{1.00} & 1.01 & 1.89 & \textcolor{codegreen}{0.94} & 1.63 \\
    \midrule
    DeepseekCoder \cite{guo2024deepseekcoder} & DJANGO & \textcolor{red}{0.75} & {1.00} & \textcolor{codegreen}{0.73} & 1.11 & 1.00 & {0.83} & {0.90} & {0.84} & {0.78} & \textcolor{blue}{0.78}\\
    \midrule
    GPT2-Small \cite{gpt2} & E2E & \textcolor{codegreen}{0.85} & 1.00 & 0.96 & 0.95 & 0.96 & 0.97 & \textcolor{red}{0.92} & \textcolor{blue}{0.94} & 0.97 & \textcolor{blue}{0.94} \\
  %\midrule
  \bottomrule
  \end{tabular}}
\end{table*}

\begin{table*}[t]
  \centering
  \tiny
  \setlength{\tabcolsep}{2pt}
  \renewcommand{\arraystretch}{0.8}
  \caption{\small{Extended Table 2, performance of fine-tuned models with adapter rank $r=16$.~We use \textcolor{codegreen}{green}, \textcolor{red}{red}, and \textcolor{blue}{blue} to indicate the best, second best, and third best result. For the sparse variants, \reddown~indicates the accuracy drop percentage compared to the best.}}
  \label{tab:extendedMo-accuracy-table}
  \vspace{-2mm}
\begin{tabular}{@{} ll *{2}{r} *{4}{r} | *{4}{r} @{} }
  \toprule
  \multirow{2}{*}{\textbf{Model}} & \multirow{2}{*}{\textbf{Dataset}} & \multicolumn{2}{c}{\textbf{The Past}} & \multicolumn{4}{c}{\textbf{The Present}} & \multicolumn{4}{c}{\textbf{The Future}} \\
  \cmidrule(lr){3-4} \cmidrule(lr){5-8} \cmidrule(lr){9-12}
   &  & FFT & LoRA & CoLA & Asym & RAC & LoRA+ & cLA & $c^3$LA & r-cLA & r-$c^3$LA \\
  \midrule
    ViT-Tiny \cite{dosovitskiy2020vit} & OfficeHome \cite{officehome-dataset} & \textcolor{red}{79.68} & \textcolor{codegreen}{80.13} & \textcolor{blue}{79.54} & 78.02 & 78.55 & 77.87 & 78.01 (\reddown2.65\%) & 78.69 (\reddown1.80\%) & 78.01 (\reddown2.65\%) & 79.32 (\reddown1.01\%) \\
    & CIFAR10 \cite{cifar10-dataset} & \textcolor{codegreen}{96.59} & \textcolor{red}{96.17} & \textcolor{blue}{95.85} & 94.80 & 95.36 & 95.29 & 94.94 (\reddown1.71\%) & 95.23 (\reddown1.41\%) & 95.12 (\reddown1.52\%) & 95.22 (\reddown1.42\%) \\
    \midrule
    ViT-Base \cite{dosovitskiy2020vit} & OfficeHome
      & 86.42 & 88.96 & 89.01 & 89.00 & \textcolor{codegreen}{89.33} & 87.87 & \textcolor{red}{89.21} & \textcolor{blue}{89.18} & 88.83 & 89.17 \\
    & CIFAR10
      & 98.06 & 98.71 & 98.48 & 98.68 & \textcolor{red}{98.73} & 98.36 & 98.63 & 98.54 & \textcolor{codegreen}{98.78} & \textcolor{blue}{98.72} \\
    \midrule
    DeBERTa v2 XXL \cite{deberta} & MRPC \cite{wang2018glue} & \textcolor{blue}{87.49} & \textcolor{codegreen}{88.28} & 87.47 & 87.03 & 86.97 & \textcolor{red}{87.53} & 86.13 (\reddown2.44\%) & 85.11 (\reddown3.59\%) & 85.55 (\reddown3.09\%) & 85.15 (\reddown3.55\%) \\
    & TREC-50 \cite{li2006learning} & \textcolor{blue}{91.99} & 91.47 & 85.65 & \textcolor{codegreen}{92.26} & \textcolor{red}{92.02} & 84.92 & 91.73 (\reddown0.57\%) & 90.87 (\reddown1.51\%) & 91.67 (\reddown0.64\%) & 91.07 (\reddown1.29\%) \\
    & PAWS \cite{zhang2019paws} & 94.69 & \textcolor{blue}{94.97} & \textcolor{codegreen}{95.22} & {94.95} & 94.66 & \textcolor{red}{95.20} & 94.77 (\reddown0.47\%) & 94.90 (\reddown0.34\%) & 94.38 (\reddown0.88\%) & 94.71 (\reddown0.54\%) \\
    \midrule
    DeBERTa v3 Base \cite{debertav3} & MRPC & {85.80} & \textcolor{codegreen}{88.33} & \textcolor{red}{87.91} & \textcolor{blue}{86.40} & {86.34} & 84.51 & 84.43 (\reddown4.42\%) & {80.22} (\reddown9.18\%) & 85.42 (\reddown3.29\%) & 84.17 (\reddown4.71\%) \\
    & RTE \cite{wang2018glue} & 82.47 & \textcolor{codegreen}{86.34} & \textcolor{blue}{83.80} & 78.94 & 79.40 & \textcolor{red}{84.72} & 76.00 (\reddown11.98\%) & 75.08 (\reddown13.04\%) & 79.40 (\reddown8.04\%) & 79.40 (\reddown8.04\%) \\
    & STS-B \cite{wang2018glue} & \textcolor{codegreen}{89.52} & {89.09} & \textcolor{red}{89.34} & 89.04 & 88.71 & \textcolor{blue}{89.15} & 87.56 (\reddown2.19\%) & 87.90 (\reddown1.81\%) & 88.05 (\reddown1.64\%) & 87.90 (\reddown1.81\%) \\
    & TREC-50 & \textcolor{red}{90.15} & {89.29} & \textcolor{blue}{89.88} & \textcolor{codegreen}{90.67} & {89.22} & 85.52 & 86.04 (\reddown5.11\%) & 87.96 (\reddown2.99\%) & 86.04 (\reddown5.11\%) & {87.70} (\reddown3.28\%) \\
    & PAWS & \textcolor{codegreen}{94.76} & \textcolor{red}{94.62} & {94.40} & {94.48} & {94.45} & {94.44} & 94.23 & \textcolor{blue}{94.60} & 94.36 & 94.42\\
    \midrule
    RoBERTa-Base \cite{roberta} & MRPC & \textcolor{codegreen}{87.40} & 86.34 & \textcolor{red}{86.76} & 86.40 & \textcolor{blue}{86.67} & 84.29 & 84.83 (\reddown2.94\%) & 84.39 (\reddown3.44\%) & 85.08 (\reddown2.65\%) & 85.33 (\reddown2.37\%) \\
    & CoLA \cite{wang2018glue} & \textcolor{blue}{56.08} & \textcolor{red}{57.33} & \textcolor{codegreen}{58.39} & 52.35 & 53.76 & 50.40 & 51.86 (\reddown11.18\%) & 53.29 (\reddown8.73\%) & 52.56 (\reddown9.98\%) & 53.10 (\reddown9.06\%) \\
    \midrule
    RoBERTa-Large \cite{roberta} & MRPC
      & 87.57 & \textcolor{codegreen}{88.46} & \textcolor{red}{88.43} & 87.56 & 87.69 & 72.91 & \textcolor{blue}{87.81} & 86.36 & 86.24 & 86.59 \\
   & CoLA & \textcolor{codegreen}{64.58} & \textcolor{blue}{62.42} & 60.03 & \textcolor{red}{63.42} & 59.84 & 28.80 & 59.47 (\reddown7.91\%) & 59.60 (\reddown7.71\%) & 58.60 (\reddown9.26\%) & 60.24 (\reddown6.72\%) \\
    \midrule
    TinyLlama \cite{zhang2024tinyllama} 
    & OpenBookQA \cite{openbookQA} & \textcolor{codegreen}{55.47} & 52.41 & \textcolor{blue}{52.47} & 45.96 & 47.59 & \textcolor{red}{53.26} & 44.92(\reddown19.02\%) & 45.12(\reddown18.66\%) & 47.07(\reddown15.14\%) & 27.34(\reddown50.71\%) \\
    & FOLIO \cite{han2022folio} & \textcolor{codegreen}{60.71} & 57.59 & {59.40} & {58.33} & 55.45 & 54.17 & \textcolor{blue}{58.97} & 58.01 & 54.81 & \textcolor{red}{59.82}\\
    & LogiQA \cite{liu2020logiqa} & \textcolor{codegreen}{47.54} & 41.54 & \textcolor{blue}{43.70} & 41.50 & 40.86 & \textcolor{red}{45.83} & 39.09 (\reddown17.77\%) & 39.30 (\reddown17.33\%) & 39.09 (\reddown17.77\%) & 39.31 (\reddown17.31\%) \\
    & CLUTRR \cite{sinha2019clutrr} & \textcolor{codegreen}{42.01} & 37.44 & \textcolor{red}{39.38} & 37.98 & 37.98 & 38.10 & \textcolor{blue}{39.12} & 37.79 & 36.23 & 37.03\\
    \midrule
    Llama3-8B~\cite{grattafiori2024llama} & OpenBookQA & \textcolor{codegreen}{88.80} & \textcolor{blue}{87.53} & 86.47 & \textcolor{red}{88.47} & 87.33 & 86.87 & 87.33(\reddown1.65\%) & 85.07(\reddown4.20\%) & 86.07(\reddown3.07\%) & 53.69(\reddown39.54\%) \\
    & CLUTRR & 50.29 & 48.7 & 47.65 & 51.69 & 49.65 & \textcolor{blue}{52.89} & \textcolor{codegreen}{55.53} & {52.04} & \textcolor{red}{54.9} & 49.94\\
    \midrule
    DeepseekCoder \cite{guo2024deepseekcoder} & DJANGO \cite{oda2015learning} & 22.73 & 23.60 & 19.79 & \textcolor{codegreen}{35.12} & \textcolor{red}{30.27} & \textcolor{blue}{27.27} & 7.83 (\reddown77.71\%) & 19.48 (\reddown44.53\%) & {19.36} (\reddown44.87\%) & 15.34 (\reddown56.32\%) \\
    \midrule
    GPT2-Small \cite{gpt2} & E2E \cite{novikova2017e2e} & \textcolor{codegreen}{2.98} & \textcolor{blue}{3.18} & 3.29 & 3.36 & 3.34 & \textcolor{red}{3.23} & 3.34(\textcolor{red}{$\uparrow$}12.08\%) & 3.29(\textcolor{red}{$\uparrow$}10.4\%) & 3.30(\textcolor{red}{$\uparrow$}10.7\%) & 3.29(\textcolor{red}{$\uparrow$}10.4\%) \\
  %\midrule
  \bottomrule
  \end{tabular}
  \vspace{-3mm}
\end{table*}

\section{Limitations and Discussion}\label{app:limitations}
cLA and c$^3$LA particularly train only a small subsection of our pretrained model at a time, leading to underperformance on lower ranks in comparison to alternate LoRA variants. Often, we observed that cLA and c$^3$LA performed nearly as well as their non-sparse counterparts, Asymmetric LoRA and RAC, while being less expensive. The nature of the methods they were inspired by already had a frozen matrix component; we leave it up to researchers to study more potential identity-based LoRA variants to save computational resources.

% We emphasize that some of the analytical tools in \S\ref{sec:performance analysis} are not necessarily strong indicators of a model's performance. However, they tell us about some of the fine-tuned model's subspace properties, such as changes in direction and magnitude. Particularly, they depict how a fine-tuned model deviates from a pretrained model. This is relevant if the preservation of structure is important for alternative purposes such as cross-training, hybrid fine-tuning, or preservation of historical datasets.

% \section{Impact Statement}\label{app:impact_statement}
% This paper analyzes the capability of SOTA LoRA variants under structured sparsity restrictions from generalization, compute efficiency, and performance-metric perspectives. This paper primarily provides theoretical insights and connects them through empirical validation. As authors, we do not foresee any societal harm that this work may pose. Also, we do not foresee any high risk for misuse.

% We leave it up to future researchers to see if similar trends and analysis results come from alternate LoRA methods. Further research can be done on the use of any reparameterization fine-tuning technique \cite{PEFT}.

\clearpage
\section{Table of Notations}\label{app:notation}

\begin{center}
\small
  \captionof{table}{Table of notations.}\label{tab:notation}
  \renewcommand{\arraystretch}{1.06}% local only
  \begin{tabularx}{0.92\linewidth}{
    @{} p{0.24\linewidth} @{\hspace{1em}} X @{}}
    \toprule
    \textbf{Notation} & \textbf{Definition} \\
    \midrule
     $\|x\|$ & The $\ell_2$ norm of a vector, $x$\\
     \midrule
     $\|A\|$ & The Frobenius norm of a matrix, $A$\\
     \midrule
     $\|A\|_2$ & The spectral norm of a matrix, $A$\\
     \midrule
     $L$ & Number of layers in a deep neural network\\
    \midrule
    $W^i$ & $i^{\rm th}$ layer of network\\
    \midrule
    $\mathbf{W}$ & ($W^1,...,W^{L})$ \\
    \midrule
    $x$  & Input to the network \\
    \midrule
    $f_{\mathbf{W}}(x)$ & $\sigma_L(W^L\cdots\sigma_3(W^3\sigma_2(W^2\sigma_1(W^1(x))...)))$\\
    \midrule
    $\sigma_i(\cdot)$ & $i^{\rm th}$ layer non-linear activation function\\
    \midrule
    $N_{\rm pre}$ & pre-training dataset $(x_i,y_i)_{i=1}^{|N_{\rm pre}|}$\\
    \midrule
    $\ell_{\rm pre}(\cdot)$ & pre-training loss function\\
    \midrule
    $\mathbf{W}_0$ & pre-training weights\\
    \midrule
    $\Delta \mathbf{W}$ & FFT weight-update\\
    \midrule
    $\Delta \hat {\mathbf{W}}$ & FFT argmin update\\
    \midrule
    $\ell(\cdot)$ & fine-tuning loss function\\
    \midrule
    $\mathbf{BA}$ & LoRA weight-update\\
    \midrule
    $\hat {\mathbf{B}} \hat {\mathbf{A}}$ & LoRA argmin weight update\\
    \midrule
    $k$ & Chain-length of chain methods (CoLA, RAC, $c^3$LA)\\
    \midrule
    ${\mathbf{B}}^j\mathbf{A}^j$ & CoLA $j^{\rm th}$ chain weight update\\
    \midrule
    $\hat {\mathbf{B}}^j\hat{\mathbf{A}}^j$ & CoLA $j^{\rm th}$ chain argmin weight update\\
    \midrule
    $\mathbf{W}_0^{(k,BA)}$ & $k$ chains of CoLA updates, where $\mathbf{W}_0^{(k,BA)} := \mathbf{W}_0 + \sum_{j=1}^{k}\hat{\mathbf{B}}^j\hat{\mathbf{A}}^j$\\
    \midrule
    $\mathbf{A}_0$ & Frozen $A$ layers\\
    \midrule
    $\mathbf{BA}_0$ & Assymetric LoRA weight update\\
    \midrule
    $\hat{\mathbf{B}}\mathbf{A}_0$ & Assymetric LoRA argmin weight update\\
    \midrule
    $\mathbf{B}^j\mathbf{A}_0^j$ & RAC-LoRA $j^{\rm th}$ chain weight update\\
    \midrule
    $\hat{\mathbf{B}}^j\mathbf{A}_0^j$ & RAC-LoRA $j^{\rm th}$ chain argmin weight update\\
    \midrule
    $\mathbf{W}_0^{(k,B)}$ & $k$ chains of RAC-LoRA updates, where $\mathbf{W}_0^{(k,B)} := \mathbf{W}_0 + \sum_{j=1}^{k}\hat{\mathbf{B}}^j\mathbf{A}_0^j$\\
    \midrule
    $\mathbf{B}^{c}$ & Cheap LoRA (cLA) weight update\\
    \midrule
    $\hat{\mathbf{B}}^{c}$ & cLA argmin weight update\\
    \midrule
    $\mathbf{B}^{c^{3},j}$ & Circulant chain of cheap LoRA's ($c^3$LA) $j^{\rm th}$ chain weight update\\
    \midrule
    $\hat{\mathbf{B}}^{c^3}$ & $c^3$LA $j^{\rm th}$ chain argmin weight update\\
    \midrule
    $\mathbf{W}_0^{(k,B^{c^3})}$ & $k$ chains of $c^3$LA updates, where $\mathbf{W}_0^{(k,B^{c^3})} := \mathbf{W}_0 + \sum_{j=1}^{k}\hat{\mathbf{B}}^{c^{3},j}$\\
    \midrule
    $L_G$ & Lipschitz constant for the gradient of the loss function.\\
    \midrule
    $\mathcal{X}$ & feature space of the network\\
    \midrule
    $\mathcal{Y}$ & label space of the network\\
    \midrule
    $\hat{\mathcal{L}}_{\rm global}(\cdot)$ & true risk of an input network\\ 
  \bottomrule
  \end{tabularx}
\end{center}

% \begin{figure}[t]
%     \centering
%         \centering
%         \includegraphics[width=\linewidth]{Images/updated_loss_difference.png}
%     \caption{DeBERTa v3 BASE MRPC and TREC (Test Loss - Train Loss).}
%     \label{fig:testloss-trainloss}
% \end{figure}
%\clearpage
%\newpage
%\input{checklist.tex}

\end{document}